\newtheorem{theorem}{Theorem}[section]
\newtheorem{problem}[theorem]{Problem}
\newtheorem{lemma}[theorem]{Lemma}
\newtheorem{informal theorem}[theorem]{Theorem (informal statement)}
\newtheorem{proposition}[theorem]{Proposition}
\newtheorem{corollary}[theorem]{Corollary}
\newtheorem{claim}[theorem]{Claim}
\newtheorem{fact}[theorem]{Fact}
\newtheorem{remark}[theorem]{Remark}
\newtheorem{definition}[theorem]{Definition}
\newcommand{\eqdef}{\coloneqq}
\crefname{question}{question}{questions}
\title{Robustly Learning Monotone Generalized Linear Models\\ 
via Data Augmentation\thanks{Conference version appeared in proceedings of COLT'25.  Minor changes in the initialization section (\Cref{app:subsec:initialization-for-monotone}) compared to previous Arxiv version.}}
\author{
Nikos Zarifis ‖\thanks{Supported in part by NSF Medium Award CCF-2107079.} \\
UW Madison\\
{\tt zarifis@wisc.edu}\\
\and Puqian Wang ‖\thanks{Supported in part by NSF Award DMS-2023239 and by the Air Force Office of Scientific Research under award number FA9550-24-1-0076.} \\
UW Madison\\
{\tt pwang333@wisc.edu}
\and Ilias Diakonikolas\thanks{Supported in part by NSF Medium Award CCF-2107079 and an H.I.\ Romnes Faculty Fellowship.}\\
		UW Madison\\		{\tt ilias@cs.wisc.edu}\\
\and Jelena Diakonikolas\thanks{Supported in part by the Air Force Office of Scientific Research under award number FA9550-24-1-0076, by the U.S.\ Office of Naval Research under contract number  N00014-22-1-2348, and by the NSF CAREER Award CCF-2440563. Any opinions, findings and conclusions or recommendations expressed in this material are those of the author(s) and do not necessarily reflect the views of the U.S. Department of Defense.}\\
UW Madison\\
{\tt jelena@cs.wisc.edu}\\
}
\date{}
\newcommand{\Pm}[1]{\mathrm{P}_{#1}}
\newcommand{\Tr}{\mathrm{T}_{\rho}}
\newcommand{\Tre}{\mathrm{T}}
\newcommand{\OU}{Ornstein–Uhlenbeck\xspace}
\newcommand{\Exx}{\E_{\x\sim \D_\x}}
\newcommand{\Ey}{\E_{(\x,y)\sim \D}}
\newcommand{\Et}{\E_{t\sim \normal}}
\newcommand{\CS}{Cauchy-Schwarz\xspace}
\newcommand{\lp}{\left}
\newcommand{\rp}{\right}
\renewcommand\vec[1]{\mathbf{#1}}
\DeclareMathOperator*{\pr}{\mathbf{Pr}}
\DeclareMathOperator*{\E}{\mathbf{E}}
\newcommand{\proj}{\mathrm{proj}}
\def\d{\mathrm{d}}
\newcommand{\normal}{\mathcal{N}}
\DeclareMathOperator*{\argmin}{argmin}
\newcommand{\bx}{\mathbf{x}}
\newcommand{\by}{\mathbf{y}}
\newcommand{\bv}{\mathbf{v}}
\newcommand{\bu}{\mathbf{u}}
\newcommand{\bz}{\mathbf{z}}
\newcommand{\bw}{\mathbf{w}}
\newcommand{\e}{\mathbf{e}}
\newcommand{\B}{\mathbb{B}}
\newcommand{\R}{\mathbb{R}}
\newcommand{\Z}{\mathbb{Z}}
\newcommand{\eps}{\epsilon}
\newcommand{\poly}{\mathrm{poly}}
\newcommand{\sgn}{\mathrm{sign}}
\newcommand{\sign}{\mathrm{sign}}
\newcommand{\calN}{\mathcal{N}}
\newcommand{\calL}{\mathcal{L}}
\newcommand{\calF}{\mathcal{F}_M}
\newcommand{\opt}{\mathrm{OPT}}
\newcommand{\D}{\mathcal{D}}
\newcommand{\Ind}{\mathds{1}}
\newcommand{\1}{\Ind}
\newcommand{\littlesum}{\mathop{\textstyle \sum}}
\newcommand{\wt}{\widetilde}
\newcommand{\wh}{\widehat}
\newcommand{\wstar}{\bw^{\ast}}
\newcommand{\x}{\vec x}
\newcommand{\w}{\vec w}
\newcommand{\ith}{^{(i)}}
\newcommand{\tth}{^{(t)}}
\newcommand{\Ex}{\E_{\x\sim\D_\x}}
\newcommand{\Exy}{\E_{(\x,y)\sim \D}}
\newcommand{\Ez}{\E_{z\sim\calN}}
\newcommand*\diff{\mathop{}\!\mathrm{d}}
\newcommand{\he}{\mathrm{He}}
\newcommand{\g}{\mathbf{g}}
\newcommand{\hep}{\mathrm{he}}
\newcommand{\whg}{\wh{\g}}
\newcommand{\evt}{\mathcal{E}}
\newcommand{\ty}{\tilde{y}}
\newcommand{\tx}{\tilde{\mathbf{x}}}
\newcommand{\tstrth}{^{(t^*)}}
\begin{document}
\maketitle
\def\thefootnote{‖}\footnotetext{Equal contribution.}
\def\thefootnote{*}
\renewcommand{\thefootnote}{\arabic{footnote}}

\begin{abstract}
We study the task of learning 
Generalized Linear models (GLMs) in the agnostic 
model under the Gaussian distribution. 
We give the first polynomial-time 
algorithm that achieves a constant-factor approximation 
for {\em any} monotone Lipschitz activation. 
Prior constant-factor GLM learners succeed for a substantially 
smaller class of activations. 
Our work resolves a well-known open problem, 
by developing a robust counterpart to the classical GLMtron
algorithm~\citep{kakade2011efficient}.  
Our robust learner applies more generally, encompassing all 
monotone activations with bounded $(2+\zeta)$-moments, 
for any fixed $\zeta>0$---a condition that is 
essentially necessary. 
To obtain our results, we leverage a novel data augmentation technique 
with decreasing Gaussian noise injection and prove a number 
of structural results that may be useful in other settings. 
\end{abstract}

\setcounter{page}{0}
\thispagestyle{empty}

\newpage

\section{Introduction}

A Generalized Linear Model (GLM) is any function of the form 
$\sigma(\vec w^{\ast} \cdot \vec x)$, where $\sigma: \R \to \R$ 
is a known activation function and $\vec{w}^{\ast}$ is a hidden vector.
GLMs constitute one of the most basic supervised learning models 
capturing hidden low-dimensional structure in high-dimensional 
labeled data. As such, GLMs have been studied 
over the course of several decades~\citep{NW72, 
DobB08}. Specifically, the special case where $\sigma$ is the sign function corresponds to Linear Threshold 
Functions (LTFs) whose study goes back to~\cite{R58}. 

In the realizable setting, the learning problem is as follows: 
given labeled examples $(\x,y) \in \R^d \times \R$ 
from an unknown distribution $\D$, whose labels are consistent 
with a GLM, i.e., $y = \sigma(\vec w^{\ast} \cdot \vec x)$ where $\sigma$ 
is known and $\w^{\ast}$ unknown, the goal is to approximate 
the underlying function (and/or the hidden direction $\w^{\ast}$) 
with respect to the square loss. 

A classical work~\citep{kakade2011efficient} gave 
a simple gradient-based algorithm (GLMtron) for this problem 
when the data is supported on the unit 
ball, under the assumption that the activation function is monotone and Lipschitz. 
The GLMtron algorithm also succeeds in the presence 
of zero-mean random label noise. 

We point out that 
{for GLM learning to even be information-theoretically solvable, 
some regularity assumptions on the activation $\sigma$ are necessary. 
Moreover, even if $\sigma$ is sufficiently well-behaved so that no 
information-theoretic impediment exists, 
computational hardness results rule out efficient algorithms 
even for Gaussian data and a small amount of random label 
noise~\citep{Song2021}.} 

Over the past five years, there has been a resurgence 
of research interest on learning GLMs 
in the more challenging  {\em agnostic} 
(or adversarial label noise) model~\citep{Haussler:92, KSS:94}, 
where no assumptions are made on the labels 
and the goal is to compute a hypothesis 
that is competitive with the {\em best-fit} function in the class. 
The ideal result in this setting would be 
an efficient agnostic learner 
that succeeds for all marginal distributions 
and achieves optimal error. Such a goal appears 
unattainable, due to known computational hardness.
Specifically, even for Gaussian marginals and a ReLU activation, 
there is strong evidence that any such algorithm requires 
super-polynomial time~\citep{DKZ20, GGK20, DKPZ21, DKR23}. 
Moreover, even if we relax our goal to any 
constant factor approximation,  
distributional assumptions are necessary~\citep{MR18, DKMR22}. 
Thus,  research has focused on 
constant factor approximate learners 
in the distribution-specific setting. 

\noindent Denoting $\calL(\w)\eqdef \Exy[(\sigma(\w\cdot\x) - y)^2]$, 
our agnostic learning problem is defined as follows.

\begin{problem}[Robustly Learning GLMs] \label{def:agnostic-learning}
Let $\sigma:\R\to\R$ be a known activation and 
$\D$ be a distribution of 
$(\x,y) \in \R^d \times \R$ such that its $\x$-marginal 
$\D_\x$ is the standard normal. We say that an algorithm 
is a {\em $C$-approximate proper GLM learner}, for some $C \geq 1$, 
if given $\eps>0$, $W>0$, and i.i.d.\ samples from $\D$, 
the algorithm outputs a vector $\widehat{\w}\in \R^d$ 
such that with high probability it holds 
\( \Ey[(\sigma(\widehat {\w}\cdot\x)-y)^2] \leq C \, \opt  +\eps \), where  
$\opt \triangleq \min_{\|\vec w\|_2 \leq W}\Ey[(\sigma(\w\cdot\x)-y)^2] $.  
\end{problem}

{Motivated by the setting introduced in~\citep{kakade2011efficient}, 
a major algorithmic goal in this area has been to obtain an efficient 
constant-factor approximate learner that succeeds for {\em any} 
monotone Lipschitz activation function.} A line of recent work
~\citep{DGKKS20,DKTZ22,ATV22, WZDD23, GGKS23, ZWDD2024, guo2024agnostic}
has made algorithmic progress on various special cases of this question.  
This progress notwithstanding, the general case remained open, prompting the following question:\begin{center}
{\em Is there an efficient {\em constant-factor} approximate learner 
for monotone Lipschitz GLMs\\ under Gaussian marginals?}
\end{center}
As a special case of our main result, we answer this question in the affirmative.

\begin{theorem}[Robustly Learning Monotone \& Lipschitz GLMs]\label{thm:main-monotone-b-lip} There exists an algorithm 
with the following performance guarantee: For any known monotone 
and $b$-Lipschitz activation $\sigma$, given $\eps>0$, {$W>0$,} 
and $N = \tilde{\Theta}({d (b{W})^2}/{\eps} + d/\eps^2)$ samples, 
the algorithm runs in $\poly(d,N)$ time  
and returns a vector $\wh{\w}$ such that with high probability,   
$\Exy[(\sigma(\wh{\w}\cdot\x) - y)^2] \leq C\opt + \eps$, 
where $C$ is an absolute constant independent of $\eps, d, b, {W}$.
\end{theorem}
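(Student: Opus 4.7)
The plan is to build a robust counterpart to the classical GLMtron algorithm by interleaving its gradient-style update with a data-augmentation step that injects Gaussian noise of decreasing scale across phases. At a high level, the algorithm I would design runs in $T = \tilde{O}(\log(bW/\eps))$ phases with noise levels $\tau_t$ that decrease geometrically toward $0$. In phase $t$, given the previous iterate as a warm start, I would perform $\poly(1/\eps, bW)$ augmented GLMtron-style updates
\[
\w^{(t,k+1)} = \w^{(t,k)} - \eta\,\wh{\E}\bigl[(\sigma(\w^{(t,k)}\cdot\tx) - y)\,\tx\bigr], \qquad \tx \eqdef \x + \tau_t \g,\quad \g \sim \normal(0, \I),
\]
where the augmenting noise $\g$ is sampled freshly for each step. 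This update is designed to exploit the monotonicity of $\sigma$ to make progress toward the constrained minimizer $\wstar$, while the noise injection smooths the objective enough that the Lipschitz structure suffices to control the adversarial label noise.

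The first key step is to establish a structural \emph{correlation lemma} for the augmented data: for any $\w$ with $\|\w\|_2 \le W$,
\[
\Exy \E_\g\bigl[(\sigma(\w\cdot\tx) - y)\,(\w - \wstar)\cdot\tx\bigr] \;\geq\; c\,\Delta_{\tau_t}(\w) \;-\; C'\sqrt{\opt}\,\|\w - \wstar\|_2,
\]
for absolute constants $c, C'$, where $\Delta_{\tau_t}(\w)$ is a progress measure such as the $\tau_t$-smoothed excess loss. Monotonicity of $\sigma$ supplies the non-negative identity $\la \sigma(\wstar\cdot\tx)-\sigma(\w\cdot\tx),\,(\wstar-\w)\cdot\tx\ra \geq 0$, and Gaussian integration over the augmenting direction $\g$ lets one lower bound this quantity quantitatively by $\Delta_{\tau_t}(\w)$ using only the monotonicity together with the Lipschitz constant; the residual label-noise term is handled by \CS and a sub-Gaussian moment bound on $\tx$.

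The second step is to feed this correlation lemma into a standard descent-lemma template to show that, within each phase, the iterates reach a point with $\tau_t$-smoothed loss at most $C\,\opt + \mathrm{poly}(\tau_t)$ after polynomially many steps. The homotopy analysis then glues phases together: starting from a large $\tau_0$, where the smoothed landscape is essentially convex and any bounded vector serves as a valid initialization, I would show that passing from $\tau_t$ to $\tau_{t+1} = \tau_t/2$ incurs only a constant-factor blow-up in the approximation ratio, absorbed by one further phase of updates. After $T$ phases, $\tau_T$ is small enough that the smoothed and original losses differ by at most $\eps$, and the final iterate satisfies $\calL(\wh{\w}) \leq C\,\opt + \eps$ for an absolute constant $C$.

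The last step is a routine sample-complexity and high-probability analysis: the augmented vectors $\tx$ are sub-Gaussian with $\|\tx\|_2 = \tilde{O}(\sqrt d)$, so uniform concentration of the empirical gradient over the ball of radius $W$ yields the claimed bound $\tilde{\Theta}(d(bW)^2/\eps + d/\eps^2)$, and the success probability is boosted by standard repetition. The main obstacle I foresee is the correlation lemma itself: in the agnostic model the residual $y - \sigma(\wstar\cdot\x)$ can be adversarially aligned with $(\w - \wstar)\cdot\x$, so proving that Gaussian augmentation effectively decorrelates this noise from the descent direction --- quantitatively, with constants independent of $\tau_t$, $b$, and $W$ --- is the crux of the argument, and is precisely what forces the particular schedule for $\tau_t$.
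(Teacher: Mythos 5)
There is a genuine gap, and it sits exactly where you flag it: the correlation lemma. With the GLMtron-style vector field $\wh{\E}[(\sigma(\w\cdot\tx)-y)\tx]$ (i.e., reweighting function $h\equiv 1$), the clean-label part of the correlation is, after Stein's lemma, proportional to $\E_{z}[\Tre_{\rho}\sigma'(z)]\approx\hat{\sigma}(1)$, while the adversarial part contributes $-\sqrt{\opt}\,\|\w-\wstar\|_2$. Monotonicity plus Lipschitzness gives no lower bound on $\hat{\sigma}(1)$ relative to the scale of $\sigma$, so the stationary points of this field only satisfy a bound like $\Delta\lesssim\sqrt{\opt}\cdot W/\hat{\sigma}(1)$; this is precisely why prior work (GLMtron-type analyses) obtains approximation factors scaling with $W$, $b$, or $\poly(b/a)$, and no amount of Gaussian smoothing of the \emph{data} alone fixes it. The paper's essential move is to change the vector field itself: it uses $\g(\w)=-\Exy[y\,\Tre_{\rho}\sigma'(\w\cdot\x)\,\x^{\perp\w}]$, i.e., the reweighting $h=\Tre_{\rho}\sigma'$ (which the augmentation realizes), projected onto the orthogonal complement of $\w$ on the unit sphere. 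This choice makes the clean signal $\|\Tre_{\sqrt{\rho\cos\theta}}\sigma'\|_{L_2}^2\sin^2\theta$ and the noise $\sqrt{\opt}\,\|\Tre_{\rho}\sigma'\|_{L_2}\sin\theta$, so the ratio is scale-free and yields a universal constant---but only if $\rho$ is kept below $\cos\theta$ at every step, which is what forces the paper's intricate inductive schedule for $\rho_t$. Your additive augmentation $\tx=\x+\tau_t\g$ also breaks the Gaussian marginal (variance $1+\tau_t^2$), which the paper avoids by using $\tx=\rho\x+\sqrt{1-\rho^2}\bz$ so that Stein's lemma and the Hermite/Ornstein--Uhlenbeck machinery apply exactly.

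The second gap is initialization. Your claim that at large $\tau_0$ the smoothed landscape is ``essentially convex and any bounded vector serves as a valid initialization'' is not justified and is not how the argument can go: even after the iterates reach the region where the gradient signal is drowned by noise, the residual error contains the high-degree Hermite tail $\|\Pm{>1/\theta^2}\sigma\|_{L_2}^2$, which for a general monotone Lipschitz $\sigma$ is only $O(\opt)+\eps$ if the final angle satisfies $\theta\lesssim 1/\sqrt{\log(B/\eps)}$---and reaching that regime requires \emph{starting} with $\theta_0\lesssim 1/\sqrt{\log(B/\eps)}$ so the whole trajectory stays there. The paper obtains this warm start by thresholding the labels, $\ty=\1\{y\geq t'\}$, reducing to agnostically learning a (biased) Gaussian halfspace $\1\{\wstar\cdot\x\geq M\}$, which accounts for the $d/\eps^2$ term in the sample complexity. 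Finally, your proposal does not address the unknown norm $\|\wstar\|_2$ (handled in the paper by a grid search plus rescaling to the unit sphere, which is also what makes the constant independent of $W$). Without these three ingredients---the $\Tre_\rho\sigma'$-reweighted projected gradient, the halfspace-based initialization, and the sphere rescaling---the plan as written would at best reproduce the $W$- and $b$-dependent guarantees of prior work rather than the claimed absolute-constant approximation.
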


\noindent {We emphasize that the approximation ratio of 
our algorithm is a universal constant---independent of the dimension, 
the desired accuracy, the Lipschitz constant, and the radius 
of the space.} 

The key qualitative difference between prior work 
and \Cref{thm:main-monotone-b-lip} is  
in the assumptions on the activation. 
Specifically, 
prior constant-factor GLM learners succeed for a much  
smaller subclass of activations. 
In fact, our main algorithmic result (\Cref{thm:main-monotone})
applies more generally, encompassing all   
monotone activations with bounded $(2 + \zeta)$ moment, for any $\zeta > 0$ 
(\Cref{cor:main-monotone-4mom}). 
This in particular implies that the case of LTFs 
fits in our setting. 
{We stress here that some assumption on top of 
monotonicity is information-theoretically 
necessary, even for realizable learning (\Cref{thm:lb}).}

\medskip

\noindent {\bf Comparison to Prior Work\;} 
\citet{GGKS23} gave an efficient GLM 
learner for monotone Lipschitz activations 
and marginal distribution with bounded second moment.
{However, the error of their algorithm 
scales linearly with $W$ and the Lipschitz constant.}
\cite{WZDD23,ZWDD2024} studied \Cref{def:agnostic-learning} under `well-behaved' 
distributions, where $\sigma$ is monotone and $(a,b)$ unbounded, 
meaning that $|\sigma'(z)|\leq b$ and $\sigma'(z)\geq a$ when $z\geq 0$. 
They provided an efficient algorithm with error $O(\poly(b/a))\opt + \eps$.
Note that when $a = 0$, this error guarantee is vacuous. 
More recently, \cite{WZDD2024sample} studied the same problem 
under Gaussian marginals for activations 
with bounded information-exponent. The approximation ratio of their method 
inherently scales polynomially with the radius $W$ of the space. 
{See \Cref{app:sec:compare} for more details.}

\medskip

\noindent {\bf Remark}  
In the sequel, we assume that the scale of the 
target vector $\wstar,$ $\|\wstar\|_2$, is known and, by, 
rescaling the space, we optimize $\w$ on the unit sphere. 
The unknown scale of $\wstar$ can be resolved by a 
simple grid search. For our 
approach, this rescaling is w.l.o.g.\ because---unlike in 
prior work \citep{WZDD23,WZDD2024sample,ZWDD2024}---the 
approximation ratio of our algorithm is \textbf{independent} 
of any problem parameters. 
For a formal justification, see \Cref{app:rmk:rescale-w} and \Cref{app:prop:wlog-konw-norm-wstr}. 

\medskip

\noindent {\bf Organization\;} 
In \Cref{sec:tech}, we summarize our algorithmic ideas and techniques. 
In \Cref{sec:augmentation-and-landscape}, we analyze the landscape of the augmented loss. 
Our main algorithm and its analysis for learning Gaussian GLMs of general activations is presented in \Cref{sec:learn-sim-general}. In \Cref{sec:learn-monotone}, we focus on monotone activations 
and show that our algorithm achieves error 
$O(\opt) + \eps$ under very mild assumptions. 
Due to space limitations, several proofs have been deferred to the Appendix. 

 \subsection{Technical Overview}\label{sec:tech}

Our work relies on three main technical ingredients: (1) data augmentation, which we use as a method to mitigate the effect of the adversarial label noise, (2) an optimization-theoretic local error bound, which in our work is a structural result that identifies the ``signal'' vector field that guides the algorithm toward the set of target solutions, (3) a suite of structural results for $(B, L)$-regular monotone activations (see \Cref{ass:activation}), leveraging their piecewise-constant approximations, smoothing through data augmentation, and representation via Hermite polynomials.  

\medskip

\noindent {\bf Data Augmentation\;} 
Data augmentation encompasses a broad set of techniques for modifying or artificially generating data to enhance learning and estimation tasks. In the context of our work, data augmentation refers to the injection of Gaussian noise into the data vectors $\x$ while retaining the same labels. In particular, given any labeled example $(\x, y)\sim \D$ and a parameter $\rho \in (0, 1)$, the considered data augmentation process generates labeled examples $(\Tilde{\x}, y),$ where $\Tilde{\x} = \rho \x + \sqrt{1 - \rho^2}\bz$ and $\bz$ is an independently generated sample from the standard normal distribution. While this type of data augmentation is a common empirical technique in machine learning, it is considered to be a wild card: although  sometimes helpful, it can also be detrimental to learning guarantees (see, e.g., \citet{yin2019fourier,lin2024good}). Thus, on a conceptual level, one of our contributions is showing that for the considered GLM learning task, data augmentation is provably beneficial. 

The effect of data augmentation on the considered GLM learning task is that it simulates the \OU semigroup $\Tre_\rho f(t)\eqdef \E_{z\sim \normal(0, 1)}[f(\rho t + \sqrt{1-\rho^2}z)]$ applied to any function $f(\w\cdot \x)$. This process smoothens the function $f$ and induces other regularity properties. Unlike the common use of smoothing in the optimization literature, where the key utilized properties are continuity and smoothness of the smoothed objective function (see, e.g., \citet{Nesterov2017,duchi2012randomized,bubeck2019complexity,diakonikolas2024optimization}), in our work, the key feature is the effect of injected noise on enhancing the signal in the data, as explained below.  

Suppose we were given a GLM learning task. Since the goal of a learning algorithm is to minimize the mean squared loss $\calL(\w) = \Exy[(\sigma(\w\cdot\x) - y)^2]$, a natural approach is to follow a gradient field associated with the error $\sigma(\w\cdot\x) - y$. Indeed, all prior work for this task proceeds by applying (stochastic) gradient-based algorithms to either the original squared loss $\calL(\w)$ or its surrogate $\mathcal L_{\mathrm{sur}}(\vec w)=\Exy\left[\int_{0}^{\vec w\cdot\x}(\sigma(t)-y) \d t\right]$. In either case, the associated gradient field can be represented by $\Exy[(\sigma(\w \cdot \x) - y)h(\w \cdot \x)\x]$ for some function $h$ (in particular,  for the squared loss $h(\w\cdot \x) = 2\sigma'(\w \cdot \x),$ while for the surrogate loss, $h \equiv 2$). Since we are considering optimizing $\w$ over the unit sphere (see \Cref{app:rmk:rescale-w}), the relevant information for updating $\w$ is in its orthogonal complement (as we are not changing its length), so it suffices to consider 
\(\g(\w, h) \eqdef \Exy[(\sigma(\w \cdot \x) - y)h(\w \cdot \x)\x^{\perp \w}].\)

\noindent Intuitively, if we can show that $-\g(\w, h)$ 
strongly correlates with 
$\wstar$, then this information can be used to update $\w$ to better 
align with $\wstar$, until we reach the target approximation error. 
Observe first that, as the Gaussian distribution is independent across 
orthogonal directions, we have 
$- \g(\w, h) \cdot \wstar = \Exy[y h(\w \cdot \x)\x^{\perp \w}\cdot \wstar]$. 
Writing $y = \sigma(\wstar\cdot \x) + y - \sigma(\wstar\cdot \x),$ 
the quantity $- \g(\w, h) \cdot \wstar$ can be decomposed into two 
parts: (i) corresponding to ``clean'' labels $\sigma(\wstar\cdot \x)$, 
and (ii) corresponding to label noise $y - \sigma(\wstar\cdot \x)$. Letting $\theta$ denote the angle between $\w$ and $\wstar,$ 
it is possible to argue (using Stein's lemma, see \Cref{fct:stein}) 
that the ``clean label'' portion of $- \g(\w, h) \cdot \wstar$ 
equals $\Exy[\sigma'(\wstar\cdot\x) h(\vec w\cdot\x) ]\sin^2\theta.$ 
For the ``noisy'' label portion, by the Cauchy-Schwarz inequality and the definition of $\opt,$ we can write 
\begin{align*}
    &\; -\Exy[(y-\sigma(\wstar\cdot\x)) h(\vec w\cdot\x) (\wstar\cdot\x^{\perp \vec w})]\leq  \sqrt{\opt}\|h\|_{L_2}\sin(\theta(\vec w,\wstar)),
\end{align*}

\noindent where $\|h\|_{L_2}:= \big(\E_{z\sim \normal(0, 1)}[h^2(z)]\big)^{1/2}$.
Since labels are adversarial, the inequality can in fact be made to 
hold with equality. Thus, summarizing the above discussion, we have
\begin{align}\label{eq:final-update}
    -\vec g(\vec w,h)\cdot{\wstar}&\geq \Exy[\sigma'(\wstar\cdot\x) h(\vec w\cdot\x) ]\sin^2\theta-\sqrt{\opt}\|h\|_{L_2}{\sin\theta}.
\end{align}

\noindent We can assume w.l.o.g.\ that $\|h\|_{L_2} = 1,$ 
since dividing both sides by $\|h\|_{L_2}$ would give us 
the same conclusion. For $-\vec g(\vec w,h)$ to contain a useful 
signal guiding the algorithm towards target solutions, 
we need that $-\vec g(\vec w,h)\cdot{\wstar} > 0,$ 
for which we ought to argue that 
$G(h):= \Exy[\sigma'(\wstar\cdot\x) h(\vec w\cdot\x) ] > 0$. 
It is possible to argue that $G(h)$ is maximized for the ``ideal'' 
choice of 
$h(\w \cdot \x) \propto \sigma'(\cos\theta \w\cdot \x + \sin \theta \bz \cdot \x)$ 
with independently sampled $\bz \sim \normal(\bf{0, I}).$ 
This can equivalently be seen as applying the \OU semi-group 
with parameter $\rho = \cos \theta$ to $\sigma'$, 
which motivates its use in our work. 
Of course, since $\cos\theta$ is not known to the algorithm, 
the smoothing parameter $\rho$ needs to be carefully chosen 
and adjusted between the algorithm updates. 

\medskip

\noindent {\bf Alignment and Optimization\;}
Local error bounds have long history in optimization and represent some of 
the most important technical tools for establishing iterate convergence to 
target solutions, especially in the context of gradient-based algorithms 
(see, e.g., \cite{Pang1997}). Broadly speaking, local error bounds are 
inequalities that bound below some measure of the problem ``residual'' or 
error by a measure of distance to the target solution set. ``Local'' in the 
name refers to such inequalities being valid only in a local region around 
the target solution set. Within learning theory and in the context of GLM 
learning, they have played a crucial role in the analysis of 
(stochastic) gradient-based algorithms \citep{mei2018landscape,WZDD23,ZWDD2024,WZDD2024sample}.  

Our main structural result, stated in \Cref{prop:structural}, is a local 
error bound for which the residual is $-\g(\w)\cdot\w^*$ for the gradient 
field $\g(\w)$ corresponding to the data augmented squared loss function, as 
discussed above. This residual has the meaning of the ``alignment'' between 
$-\g(\w)$ and $\wstar.$ Specifically, we prove that in a local region around 
a certain set ${\cal S}$, the following inequality holds for any 
$\rho \in (0, 1)$ and $\theta$ being the angle between $\w, \wstar:$ 
\begin{equation}\label{eq:alignment-sharpness}
    -\g(\w)\cdot\w^* \geq (2/3) \|\Tre_{\sqrt{\rho\, \cos\theta}}\sigma'\|_{L_2}^2 \sin^2\theta.
\end{equation}
Observe that, since we are optimizing over the unit sphere, 
$\|\w - \wstar\|_2^2 \approx \sin^2(\theta).$ 
This structural result allows us to argue that, 
provided an initial parameter vector $\w_0$ 
for which \eqref{eq:alignment-sharpness} holds, 
we can update iterates $\w$ to contract the angle $\theta,$ 
until the set ${\cal S}$ is reached. While this general idea 
seems relatively simple, making it work requires 
a rather technical argument to 
(i) ensure we can initialize the algorithm in the region where \eqref{eq:alignment-sharpness} holds, 
 (ii) adjust the value of $\rho$ between the algorithm updates 
to ensure we remain in the region where \eqref{eq:alignment-sharpness} applies, and (iii) argue that all parameter vectors in ${\cal S}$ 
are $O(\opt)$ approximate solutions. 
Part (ii) is handled using an intricate inductive argument. 
Parts (i) and (iii) are addressed by proving 
a series of structural results for the class 
of $(B, L)$-regular monotone activations, discussed below.

\medskip

\noindent {\bf Approximation and Regularity of Monotone Functions\;}
While handling arbitrary monotone functions {is provably impossible}, 
we show that fairly minimal assumptions suffice for our approach. 
In particular, we handle all $(B, L)$-regular monotone activations, 
which we show can be well-approximated by monotone piecewise-constant (staircase) functions. In more detail, instead of directly proving 
the desired properties of monotone $(B, L)$-regular activations, 
we consider the class of staircase functions, 
which only increase within a compact interval 
(and are constant outside it). For this class of staircase functions, 
we prove that the high-degree terms in their Hermite expansion 
(see \Cref{app:sec:prelims} for relevant definitions)---namely, terms with 
degree $>1/\theta^2$ for $\theta$ sufficiently small---are bounded 
by $\|\Tre_{\cos\theta}\sigma'\|_{L_2}^2 \sin^2\theta$, and, 
further, this result extends to all $(B, L)$-regular functions 
(\Cref{prop:error-bound-smoothing-tails}). Proving this structural result 
relies on auxiliary results relating \OU semigroups of activations 
and their derivatives that may be of independent interest. 
\Cref{prop:error-bound-smoothing-tails} is then used to argue 
that the target set ${\cal S}$ to which the iterates 
of the algorithm converge only contains vectors 
with $L_2^2$ error $O(\opt)$, addressing the aforementioned issue (iii). 

Since the result from \Cref{prop:error-bound-smoothing-tails} only applies for sufficiently small $\theta,$ we need to argue that the algorithm can be appropriately initialized. In particular, random initialization is insufficient since we need roughly that $\theta_0 \leq O((\log(1/\eps))^{-1/2}).$ To address this requirement, we apply a label transformation $\tilde{y} = \1\{y \geq t\}$ for a carefully chosen threshold $t,$ where $\1$ denotes the indicator function. In particular, to select $t,$ we leverage the staircase approximation of monotone functions discussed above. We argue that the problem reduces to learning $\sign(\sigma(\wstar\cdot\x)-t)$, which is an instance of learning halfspaces with adversarial noise.\footnote{Note here that $\sign(\sigma(\wstar\cdot\x)-t)$ being a halfspace crucially relies on $\sigma$ being monotone.} In particular, we argue that constant approximate solutions to this halfspace learning problem suffice for our initialization.

\subsection{Preliminaries}

For $n \in \Z_+$, let $[n] \eqdef \{1, \ldots, n\}$.  We use bold lowercase letters to denote vectors
and  bold uppercase letters for matrices.  For $\bx \in \R^d$ and $i \in [d]$, $\bx_i$ denotes the
$i^{\mathrm{th}}$ coordinate of $\bx$, and $\|\bx\|_2 \eqdef (\littlesum_{i=1}^d \bx_i^2)^{1/2}$ denotes the
$\ell_2$-norm of $\bx$.  We use $\bx \cdot \by $ for the dot product of $\bx, \by \in \R^d$
and $ \theta(\bx, \by)$ for the angle between $\bx, \by$.  We slightly abuse notation and denote by
$\vec e_i$ the $i$-th standard basis vector in $\R^d$.  We use $\1\{A\}$ to denote the
characteristic function of the set $A$. 
For unit vectors $\bu,\bv$, we use $\bu^{\perp \bv}$ to denote the component of $\bu$ that is orthogonal to $\bv$ i.e., $\bu^{\perp \bv} = (\vec I - \bv\bv^\top)\bu$.  
Finally, we use $\mathbb{S}^{d-1}$ to denote the unit sphere in $\R^d$ and $\B$ to denote the unit ball.
For $(\x,y)$
distributed according to $\D$, we denote by $\D_\x$ the marginal distribution of $\x$.
We use the standard $O(\cdot), \Theta(\cdot), \Omega(\cdot)$ asymptotic notation and 
$\wt{O}(\cdot)$ to omit polylogarithmic factors in the argument.

\medskip

\noindent {\bf Gaussian Space\;}
 Let $\calN(\vec 0, \vec I)$ denote the standard normal distribution. 
The $L_2$ norm of a function $g$ with respect to the standard normal 
is $\|g\|_{L_2} = ( \E_{\x \sim \normal} [ |g(\x)|^2)^{1/2}]$, 
while $\|g\|_{L_\infty}$ is the essential supremum of the absolute value of $g$.  
We denote by $L_2(\normal)$ the vector space of all functions $f:\R^d
\to \R$ such that $\|f\|_{L_2} < \infty$.  
$\he_i(z)$ denotes the \emph{normalized} probabilist's Hermite polynomial of degree $i$.
For any function $f:\R\to\R$, $f \in L_2(\normal)$, we denote by $\Pm{k}f(z)$ the degree $k$ partial sum of the Hermite expansion of $f$, i.e., 
$\Pm{k} f (z) = \sum_{i \leq k} \hat{f}(i) \he_i(z)$, and let $\Pm{>k} f (z) = \sum_{i > k} \hat{f}(i) \he_i(z)$,
where \(
 \hat{f}(i) = \E_{z\ \normal(0,1)} [f(z) \he_i(z)] 
\).
An important tool for our work is the \OU semigroup, formally defined below.

\begin{definition}[\OU\ Semigroup]
\label{def:OU-operator}
Let $\rho \in (0,1)$. The \OU semigroup, denoted  by $\Tr$, is a linear operator that maps a function $g \in L_2(\normal)$ to the function 
$\Tr g$ defined as: $(\Tr g) (\vec x) \eqdef\E_{\vec z\sim \normal}[g(\rho\x+\sqrt{1-\rho^2}\vec z)]$. 
\end{definition}

\section{Data Augmentation and Its Effect on the $L_2^2$ Loss Landscape}\label{sec:augmentation-and-landscape}

This section describes the basic data augmentation approach and provides some of the key structural properties relating to the data-augmented $L_2^2$ loss. 
\subsection{Augmenting the Data: Connection to \OU Semigroup}

As already discussed in \Cref{sec:tech}, our algorithm relies on the data augmentation technique, i.e., in each iteration, the algorithm injects Gaussian noise (see \Cref{alg:aug}), which has the effect of improving the regularity properties of the loss landscape, as shown in this section. \begin{algorithm}[h]
   \caption{Augment Dataset with Injected White Noise}
   \label{alg:aug}
\begin{algorithmic}[1]
\STATE {\bfseries Input:} Parameters $\rho$, $m$; Sample data $\mathfrak{D}=\{(\vec x^{(1)},y^{(1)}),\ldots,(\vec x^{(N)},y^{(N)})\}$; $S\gets \emptyset$ 
\FOR{$(\x\ith,y\ith)\in \mathfrak{D}$}
\FOR{$j=1,\ldots, m$}
\STATE Sample $\vec z$ from $\mathcal{N}(\vec 0,\vec I)$ and let $\tilde{\x}^{(j)}\gets \rho \x\ith+(1-\rho^2)^{1/2} \vec z$.
\STATE $S\gets S\cup \{(\tilde{\x}^{(j)},y\ith)\}$.
\ENDFOR
\ENDFOR
\STATE {\bfseries Return:} $S$
\end{algorithmic}
\end{algorithm}

The augmentation can be viewed as a transformation of the distribution $\D$ to $\D_\rho$, where for any $(\tx,y)\sim\D_\rho$, we have $\tx \sim \rho\D_\x + (1 - \rho^2)^{1/2}\calN(\vec 0,\vec I)$. 
The data augmentation introduced in \Cref{alg:aug} in fact simulates the \OU semigroup, as stated below.
\begin{lemma}\label{lem:ou-aug}  Let $\D$ be a distribution of labeled examples
$(\x,y)$ such that 
$\D_\x = \mathcal N(\vec 0,\vec I)$ and let $\D_\rho$ be the distribution constructed by applying  \Cref{alg:aug} to  $\D$. Then, for any $f:\R\to\R$ and any unit vector $\vec w\in \R^d$ with $|\Exx[f(\vec w\cdot\x)]|<\infty$, we have 
$
\E_{\tx\sim (\D_{\rho})_{\tx}}[f(\vec w\cdot\tx)]=\Ex[\Tr f(\vec w\cdot \x)]\;.
$
\end{lemma}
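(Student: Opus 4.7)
The plan is to unpack both sides directly and reconcile them via the definition of the \OU semigroup combined with the rotational invariance of the standard Gaussian. This is essentially a computation, so the proposal is short and the only subtlety is verifying that integration order can be interchanged.

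First I would fix a unit vector $\vec w \in \R^d$ and rewrite the left-hand side using the generative description of $\D_\rho$ from \Cref{alg:aug}. By construction, a sample $\tx \sim (\D_\rho)_{\tx}$ has the joint form $\tx = \rho \x + \sqrt{1-\rho^2}\,\vec z$ where $\x \sim \D_\x = \normal(\vec 0, \vec I)$ and $\vec z \sim \normal(\vec 0, \vec I)$ are independent. Hence
\begin{equation*}
\E_{\tx \sim (\D_\rho)_{\tx}}[f(\vec w \cdot \tx)]
\;=\; \E_{\x \sim \D_\x}\E_{\vec z \sim \normal}\bigl[f\bigl(\rho\, \vec w \cdot \x + \sqrt{1-\rho^2}\, \vec w \cdot \vec z\bigr)\bigr].
\end{equation*}
Fubini applies here because the hypothesis $|\Exx[f(\vec w \cdot \x)]| < \infty$ together with the translation/dilation structure of $\vec w \cdot \tx$ (another Gaussian-smeared version of $\vec w \cdot \x$) ensures joint integrability; I would spell this justification out briefly.

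Next I would use the rotational invariance of the isotropic Gaussian: since $\|\vec w\|_2 = 1$, the scalar random variable $\vec w \cdot \vec z$ is distributed as $\normal(0,1)$. Writing $z \eqdef \vec w \cdot \vec z$, the inner expectation becomes
\begin{equation*}
\E_{z \sim \normal(0,1)}\bigl[f\bigl(\rho\, \vec w \cdot \x + \sqrt{1-\rho^2}\, z\bigr)\bigr],
\end{equation*}
which, by \Cref{def:OU-operator} applied to the scalar argument $t = \vec w \cdot \x$, equals exactly $(\Tr f)(\vec w \cdot \x)$. Taking the outer expectation over $\x \sim \D_\x$ yields $\Ex[\Tr f(\vec w \cdot \x)]$, which is the right-hand side.

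The hard part, to the extent any step is hard, is pinning down the integrability/measurability prerequisites for Fubini so that the proof is fully rigorous for general $f \in L_1(\normal)$ rather than just bounded $f$. I would handle this by noting that $\vec w \cdot \tx \sim \normal(0,1)$ (since the pushforward of $\D_\x \otimes \normal$ under $(\x, \vec z) \mapsto \rho\, \vec w \cdot \x + \sqrt{1-\rho^2}\, \vec w \cdot \vec z$ is standard normal for a unit $\vec w$), so the assumed finiteness $|\Exx[f(\vec w\cdot\x)]| < \infty$ is equivalent to $f$ being integrable against $\normal(0,1)$, which immediately legitimizes the interchange. Everything else is algebraic manipulation.
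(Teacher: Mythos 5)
Your proof is correct and follows essentially the same route as the paper's: unpack $\tx = \rho\x + \sqrt{1-\rho^2}\vec z$, use that $\vec w\cdot\vec z \sim \normal(0,1)$ for unit $\vec w$, and identify the inner expectation with $(\Tr f)(\vec w\cdot\x)$. The only difference is that you spell out the Fubini/integrability justification, which the paper omits; this is a harmless (indeed welcome) addition rather than a change of approach.
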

\subsection{Alignment of the Gradients of the Augmented Loss}

Our main structural result is to show that the gradients of the square loss applied to the augmented data correlate with a target parameter vector $\wstar.$ We use $\mathcal L_\rho(\vec w)=\E_{(
\tx,y)\sim \D_{\rho}}[(\sigma( \w\cdot\tx)-y)^2] $
to denote the square loss on the augmented data and refer to it as the ``augmented loss.''

\begin{proposition}[Main Structural Result]\label{prop:structural}
Fix an activation $\sigma:\R\to\R$. Let $\D$ be a distribution of labeled examples
$(\x,y)$ such that 
$\D_\x=\mathcal N(\vec 0,\vec I)$ and let $\D_{\rho}$ with $\rho \in (0, 1)$ be the distribution resulting from applying \Cref{alg:aug} to $\D.$ Fix vectors $\wstar,\w\in \mathbb{S}^{d-1}$ such that $\calL(\wstar) = \opt$ and let $\theta=\theta(\wstar,\w)$. Let \(\g(\vec w)=(1/(2\rho))(\nabla_{\vec w} \mathcal L_\rho(\vec w))^{\perp \vec w}.\)
If $0<\rho\leq \cos\theta<1$ and  $\sin\theta\geq 3\sqrt{\opt}/\|\Tre_{\rho}\sigma'\|_{L_2}$, then, $    \g(\w)\cdot\w^*\leq -(2/3) \|\Tre_{\sqrt{\rho\, \cos\theta}}\sigma'\|_{L_2}^2 \sin^2\theta$.\end{proposition}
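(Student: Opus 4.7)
The plan is to expand the projected gradient
\(
\g(\w)=(1/\rho)\,\E_{(\tx,y)\sim\D_\rho}\!\left[(\sigma(\w\cdot\tx)-y)\sigma'(\w\cdot\tx)\tx^{\perp\w}\right],
\)
split the label as $y=\sigma(\wstar\cdot\x)+(y-\sigma(\wstar\cdot\x))$, and reduce each resulting term via Gaussian integration by parts and the semigroup/adjoint identities for $\Tre_\rho$. Decompose $\wstar=\cos\theta\,\w+\sin\theta\,\vec e$ with $\vec e\in\mathbb{S}^{d-1}$ and $\vec e\perp\w$, so $\g(\w)\cdot\wstar=\sin\theta\,\g(\w)\cdot\vec e$. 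Introduce the scalar variables $a=\w\cdot\x$, $b=\vec e\cdot\x$, $v=\w\cdot\tx$, $u=\vec e\cdot\tx$; because $\bz\sim\normal(\vec 0,\vec I)$ is independent of $\x$ and $\w\perp\vec e$, conditioning on $\x$ makes $u$ and $v$ independent with $v\mid\x\sim\normal(\rho a,1-\rho^2)$ and $u\mid\x\sim\normal(\rho b,1-\rho^2)$. By \Cref{def:OU-operator} this gives $\E[\sigma'(v)\mid\x]=(\Tre_\rho\sigma')(a)$ and $\E[u\mid\x]=\rho b$.

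The piece $(1/\rho)\E[\sigma(v)\sigma'(v)u]$ vanishes because $u$ is independent of $v$ and has zero mean. For the ``clean'' piece $-(1/\rho)\E[\sigma(\wstar\cdot\x)\sigma'(v)u]$, first condition on $\x$ and use the conditional independence above to rewrite it as $-\E_{a,b}[\sigma(\cos\theta\,a+\sin\theta\,b)\,b\,(\Tre_\rho\sigma')(a)]$. Stein's lemma applied in $b$ converts the extra $b$ factor into $\sin\theta\,\sigma'(\cos\theta\,a+\sin\theta\,b)$, and then integrating out $b$ gives $\E[\sigma'(\cos\theta\,a+\sin\theta\,b)\mid a]=(\Tre_{\cos\theta}\sigma')(a)$. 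Combining the self-adjointness of $\Tre_\rho$ on $L_2(\normal)$ with the semigroup identity $\Tre_{\rho_1}\Tre_{\rho_2}=\Tre_{\rho_1\rho_2}$, I then collapse
\begin{equation*}
\E_a\!\left[(\Tre_{\cos\theta}\sigma')(a)(\Tre_\rho\sigma')(a)\right]=\langle \sigma',\Tre_{\rho\cos\theta}\sigma'\rangle=\|\Tre_{\sqrt{\rho\cos\theta}}\sigma'\|_{L_2}^2,
\end{equation*}
so the clean piece contributes exactly $-\sin\theta\,\|\Tre_{\sqrt{\rho\cos\theta}}\sigma'\|_{L_2}^2$ to $\g(\w)\cdot\vec e$.

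For the noise piece $-(1/\rho)\E[(y-\sigma(\wstar\cdot\x))\sigma'(v)u]$, conditioning on $(\x,y)$ reduces it to $-\E[(y-\sigma(\wstar\cdot\x))\,b\,(\Tre_\rho\sigma')(a)]$ (the factor $\rho$ cancels), and Cauchy-Schwarz bounds its magnitude by $\sqrt{\opt}\,\|\Tre_\rho\sigma'\|_{L_2}$, using independence of $a,b$ to factor $\E[b^2(\Tre_\rho\sigma')(a)^2]=\|\Tre_\rho\sigma'\|_{L_2}^2$. Multiplying by $\sin\theta$ to pass from $\vec e$ back to $\wstar$ yields
\begin{equation*}
\g(\w)\cdot\wstar\leq-\sin^2\theta\,\|\Tre_{\sqrt{\rho\cos\theta}}\sigma'\|_{L_2}^2+\sin\theta\sqrt{\opt}\,\|\Tre_\rho\sigma'\|_{L_2}.
\end{equation*}
The hypothesis $\sin\theta\geq 3\sqrt{\opt}/\|\Tre_\rho\sigma'\|_{L_2}$ forces $\sin\theta\sqrt{\opt}\,\|\Tre_\rho\sigma'\|_{L_2}\leq(\sin^2\theta/3)\|\Tre_\rho\sigma'\|_{L_2}^2$, and from the Hermite-series identity $\|\Tre_\rho f\|_{L_2}^2=\sum_k\rho^{2k}\hat f(k)^2$ the map $\rho\mapsto\|\Tre_\rho f\|_{L_2}$ is nondecreasing; since $\rho\leq\cos\theta$ implies $\rho\leq\sqrt{\rho\cos\theta}$, I get $\|\Tre_\rho\sigma'\|_{L_2}\leq\|\Tre_{\sqrt{\rho\cos\theta}}\sigma'\|_{L_2}$. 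Plugging this in collapses the two terms into $-(2/3)\sin^2\theta\,\|\Tre_{\sqrt{\rho\cos\theta}}\sigma'\|_{L_2}^2$, as claimed. The only delicate step I anticipate is justifying Stein's lemma when $\sigma$ is only Lipschitz rather than $C^1$; a standard mollification of $\sigma$ by a narrow Gaussian, combined with $L_2$-continuity of $\Tre_\rho$ in the mollification parameter, allows passage to the limit, and one must carefully compose the semigroup and self-adjointness identities in exactly the order above so that the square on $\|\Tre_{\sqrt{\rho\cos\theta}}\sigma'\|_{L_2}^2$ (rather than a cross term) emerges.
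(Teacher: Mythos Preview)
Your proof is correct and follows essentially the same approach as the paper's. The only cosmetic difference is that the paper first packages the identity $\g(\w)=-\Exy[y\,\Tre_\rho\sigma'(\w\cdot\x)\,\x^{\perp\w}]$ into a standalone lemma (\Cref{lem:ou-aug-gradient}) and then decomposes $y=\sigma(\wstar\cdot\x)+(y-\sigma(\wstar\cdot\x))$ at the level of $\x$, whereas you carry out the same conditioning-on-$\x$ argument inline at the level of $\tx$; after that, both proofs apply Stein's lemma to the clean part, Cauchy--Schwarz to the noise part, and the monotonicity of $\rho\mapsto\|\Tre_\rho\sigma'\|_{L_2}$ (which the paper cites from \Cref{fct:semi-group} rather than rederiving via Hermite coefficients).
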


To prove the proposition, we rely on the following auxiliary lemma, which relates $\g(\w)$ to the \OU semigroup applied to the derivative of the activation. \begin{lemma}\label{lem:ou-aug-gradient}
    Let $\g(\vec w)=(1/(2\rho))(\nabla_{\vec w} \mathcal L_\rho (\vec w))^{\perp \vec w}$. Then, $\g(\w) = - \Exy[y\Tr\sigma'(\w\cdot\x)\x^{\perp\w}]$. 
\end{lemma}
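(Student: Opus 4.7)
The plan is to compute $\nabla_{\vec w}\calL_\rho(\vec w)$ directly, split it into a ``clean'' piece (no $y$) and a ``label'' piece, show that the clean piece vanishes upon projection onto $\vec w^{\perp}$, and then rewrite the label piece using the tower property to reveal the Ornstein--Uhlenbeck operator.

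First I would write $\tx = \rho\vec x + \sqrt{1-\rho^2}\vec z$ with $\vec z \sim \normal(\vec 0,\vec I)$ independent of $(\vec x,y)$, differentiate under the expectation in $\calL_\rho(\vec w) = \E[(\sigma(\vec w\cdot\tx)-y)^2]$, and arrive at
\[
\nabla_{\vec w}\calL_\rho(\vec w) \;=\; 2\,\E\!\left[\sigma(\vec w\cdot\tx)\sigma'(\vec w\cdot\tx)\,\tx\right] \;-\; 2\,\E\!\left[y\,\sigma'(\vec w\cdot\tx)\,\tx\right].
\]
Projecting onto $\vec w^{\perp}$ replaces each $\tx$ on the right by $\tx^{\perp \vec w}$. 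For the first expectation, the integrand depends on $y$ nowhere, so I can marginalize out $y$ and work only with $(\vec x,\vec z)$; the vector $\tx$ is then marginally a standard Gaussian, so its projections $\vec w\cdot\tx$ and $\tx^{\perp \vec w}$ are independent. This independence lets me factor the expectation as $\E[\sigma(\vec w\cdot\tx)\sigma'(\vec w\cdot\tx)]\cdot\E[\tx^{\perp \vec w}] = \vec 0$, killing the first term entirely.

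For the remaining label term, I would condition on $(\vec x,y)$ and compute the inner $\vec z$-expectation. Decomposing $\tx^{\perp\vec w} = \rho\,\vec x^{\perp\vec w} + \sqrt{1-\rho^2}\,\vec z^{\perp\vec w}$ and noting that $\vec w\cdot\tx$ is a function of $\vec w\cdot\vec z$ only, which is independent of $\vec z^{\perp\vec w}$ by Gaussian rotation invariance, the $\vec z^{\perp\vec w}$ part of the decomposition integrates to zero. The surviving piece is $\rho\,\vec x^{\perp\vec w}\,\E_{\vec z}[\sigma'(\rho\vec w\cdot\vec x+\sqrt{1-\rho^2}\vec w\cdot\vec z)]$, which by the definition of the Ornstein--Uhlenbeck operator (\Cref{def:OU-operator}) equals $\rho\,\vec x^{\perp\vec w}\,\Tr\sigma'(\vec w\cdot\vec x)$. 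Assembling the pieces gives $(\nabla_{\vec w}\calL_\rho(\vec w))^{\perp\vec w} = -2\rho\,\Exy[y\,\Tr\sigma'(\vec w\cdot\vec x)\,\vec x^{\perp\vec w}]$, and dividing by $2\rho$ yields the claim.

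There is no real obstacle here; the proof is essentially a bookkeeping exercise once one recognizes the two orthogonal decompositions (of $\tx$ into $\rho\vec x+\sqrt{1-\rho^2}\vec z$ and of each Gaussian vector into parallel and perpendicular components with respect to $\vec w$). The only subtlety to flag is the justification for differentiating under the expectation, which under the ambient assumption that $\sigma$ and $\sigma'$ have enough regularity for $\calL_\rho$ to be finite (e.g.\ the $(B,L)$-regularity used throughout the paper) follows from standard dominated convergence.
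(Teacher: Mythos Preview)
Your proposal is correct and follows essentially the same approach as the paper's proof: split the gradient into the $y$-free term and the label term, kill the first by the independence of $\vec w\cdot\tx$ and $\tx^{\perp\vec w}$ under the Gaussian marginal of $\tx$, and reduce the second to $\Tr\sigma'$ by decomposing $\tx^{\perp\vec w}=\rho\,\vec x^{\perp\vec w}+\sqrt{1-\rho^2}\,\vec z^{\perp\vec w}$ and using that $\vec z^{\perp\vec w}$ is independent of $(\vec x,y,\vec w\cdot\vec z)$. If anything, your treatment of the $\vec z^{\perp\vec w}$ piece is slightly more explicit than the paper's, which just asserts that this contribution vanishes by independence of $\vec z$ from $(\vec x,y)$.
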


\begin{proof}[Proof Sketch of \Cref{prop:structural}]
Assume that $(\w^*)^{\perp_\w}\neq \vec 0$; otherwise the statements hold trivially. Let $\bv \eqdef (\w^*)^{\perp_\w}/\|(\w^*)^{\perp_\w}\|_{2}$; then $\w^* = \cos\theta\w + \sin\theta\bv$ and $\w\cdot\x$, $\bv\cdot\x$ are independent standard Gaussians.
By \Cref{lem:ou-aug-gradient}, $-\g(\w)\cdot\w^* = \Exy[y\Tr\sigma'(\vec w\cdot\x)\bv\cdot\x]\sin\theta$. 
Hence, adding and subtracting $\sigma(\wstar\cdot\x)$ to $y$ in the expectation we get $-\g(\w)\cdot\w^* = ((Q_1) + (Q_2))\sin\theta$, where $(Q_1)\eqdef \Ex[\sigma(\w^*\cdot\x)\Tr\sigma'(\w\cdot\x)\bv\cdot\x]$ and $(Q_2)\eqdef \Exy[(y - \sigma(\w^*\cdot\x))\Tr\sigma'(\w\cdot\x)\bv\cdot\x]$.

\noindent By Cauchy-Schwarz inequality, $(Q_2)\geq -\sqrt{\opt\Ex[(\Tr\sigma'(\w\cdot\x))^2]}=-\sqrt{\opt}\|\Tr\sigma'\|_{L_2}$,
    where we used the definition of $\opt$ and that $\w\cdot\x$ and $\bv\cdot\x$ are independent Gaussians.  
To bound $(Q_1)$, applying Stein's lemma (\Cref{fct:stein}) as well as the properties of \OU semigroup (\Cref{fct:semi-group}) we can show that $(Q_1) =\Ex[\Tre_{\cos\theta}\sigma'(\w\cdot\x)\Tr\sigma'(\w\cdot\x)]\sin\theta=\|\Tre_{\sqrt{\rho\, \cos\theta}}\sigma'\|_{L_2}^2\sin\theta$.
Therefore, we have that $-\g(\w)\cdot\w^*\geq  \|\Tre_{\sqrt{\rho\, \cos\theta}}\sigma'\|_{L_2}^2\sin^2\theta -\sqrt{\opt}\|\Tre_{\rho}\sigma'\|_{L_2}\sin\theta$.

To finish the proof,  note that $\|\Tre_{\lambda}f\|_{L_2}$ is non-decreasing in $\lambda\in(0,1)$ for any function $f\in L_2(\normal)$ (\Cref{fct:semi-group}), therefore
$ \|\Tre_{\sqrt{\rho\, \cos\theta}}\sigma'\|_{L_2}\geq \|\Tre_{\rho}\sigma'\|_{L_2}$ if $\cos\theta\geq \rho$. Using the assumption that $\sin\theta\geq 3\sqrt{\opt}/\|\Tre_{\rho}\sigma'\|_{L_2}$, we obtain $-\g(\w)\cdot\w^*\geq  (2/3)\|\Tre_{\sqrt{\rho\, \cos\theta}}\sigma'\|_{L_2}^2\sin^2\theta.$
\end{proof}
\subsection{Critical Points and Their Connection to the $L_2^2$ Loss}\label{subsec:critical-point&connection-to-loss}
\Cref{prop:structural} provides sufficient conditions ensuring that the vector $-\vec g(\w)$ guides $\w$ towards the direction of $\wstar$ whenever we are in a region around approximate solutions. Specifically, if the parameter $\rho$ is chosen appropriately and the following alignment condition holds:
$\sin\theta \|\Tre_{\cos\theta} \sigma'\|_{L_2} \geq 3\sqrt{\opt}$,
 then  $-\vec g(\w)$ has a nontrivial correlation with $\wstar$. Otherwise, we can guarantee that the angle between $\w$ and $\wstar$ is already sufficiently small. 
This implies that the region of convergence of an algorithm that relies on $-\vec g(\w)$ depends on the quantity: $\psi_{\sigma}(\theta) \coloneqq \sin\theta \|\Tre_{\cos\theta} \sigma'\|_{L_2}.$
Motivated by this observation, we define the \emph{Convergence Region}, as follows. \begin{definition}[Critical Point and  Convergence Region of $\sigma$]\label{def:star-point-regions}
     Given $\sigma : \R \to \R$, $\sigma\in L_2(\mathcal N)$,  and $\theta_0\in[0,\pi/2]$, we define the error alignment function  $\psi_{\sigma}:[0,\pi/2]\to\R_+$  by  $\psi_{\sigma}(\theta)\eqdef \sin\theta \|\Tre_{\cos\theta}\sigma'\|_{L_2}$. For any $\eps>0$, we define the  Convergence Region $\mathcal R_{\sigma,\theta_0}(\eps)=\{\theta: \psi_{\sigma}(\theta)\leq {\sqrt\eps}\}\cap\{\theta: 0\leq \theta\leq\theta_0\}$. We say that $\theta^*$ is a $(\sigma,\theta_0,\eps)$-Critical Point if  $\theta^*=\{\max \theta: \theta \in \mathcal R_{\sigma,\theta_0}(\eps)\}$.
\end{definition}
\Cref{def:star-point-regions} utilizes an upper bound $\theta_0$ because $\psi_{\sigma}(\theta)$ is not necessarily monotonic. Specifically, it can be shown that $\psi_{\sigma}(\theta)$ is non-decreasing up to some \(\theta'\) and then non-increasing (see \Cref{app:fig:psi_sigma} for illustrative examples and \Cref{app:claim:initialization-threshold} in \Cref{app:sec:augmentation-and-landscape} for a more formal statement and proof). 
Consequently, the region $\mathcal R_{\sigma,\theta_0}(\eps)$ may consist of two disjoint intervals. 
The role of (appropriately selected) $\theta_0$ is to ensure that this does not happen. The significance of the above definition comes from the following proposition, which bounds the $L_2^2$ error within the  Convergence Region.

\begin{proposition}[Critical Points and $L_2^2$ Error]\label{prop:error}
    Given $\sigma: \R\to\R$, $\sigma\in L_2(\mathcal N)$, and a distribution $\D$ of labeled examples
$(\x,y)$ such that 
$\D_\x = \mathcal N(\vec 0,\vec I)$, let $\wstar$ be such that $\calL(\w^*) = \opt$. Then, for any unit vector $\vec w$ with $\theta=\theta(\vec w,\wstar)$ such that $\theta\leq \theta^*$, where $\theta^*$ is the $(\sigma,\theta_0,C\opt)$-Critical Point for some $\theta_0$ and $C>1$ an absolute constant, $
        \calL(\w)\leq O(\opt)
   +4\|\Pm{>(1/\theta^*)^2}\sigma\|_{L_2}^2.
$
\end{proposition}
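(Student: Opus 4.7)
The plan is to reduce the claim to a bound on $\|\sigma(\w\cdot\x)-\sigma(\wstar\cdot\x)\|_{L_2}^2$, and then control this distance by a Hermite expansion that couples to the critical-point condition $\psi_\sigma(\theta^*)^2=\sin^2\theta^*\|\Tre_{\cos\theta^*}\sigma'\|_{L_2}^2\leq C\opt$. First I would write $\calL(\w)=\Exy[(\sigma(\w\cdot\x)-\sigma(\wstar\cdot\x)+\sigma(\wstar\cdot\x)-y)^2]$ and apply $(a+b)^2\leq 2a^2+2b^2$ to obtain $\calL(\w)\leq 2\|\sigma(\w\cdot\x)-\sigma(\wstar\cdot\x)\|_{L_2}^2+2\opt$. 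Since $\he_k(\w\cdot\x),\he_j(\wstar\cdot\x)$ are jointly Gaussian Hermite evaluations with correlation $\w\cdot\wstar=\cos\theta$, the orthogonality identity $\E[\he_k(\w\cdot\x)\he_j(\wstar\cdot\x)]=\delta_{k,j}\cos^k\theta$ yields
\begin{equation*}
\|\sigma(\w\cdot\x)-\sigma(\wstar\cdot\x)\|_{L_2}^2=2\sum_{k\geq 1}\hat\sigma(k)^2\bigl(1-\cos^k\theta\bigr).
\end{equation*}

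Next I would split this sum at the threshold $K=\lfloor 1/(\theta^*)^2\rfloor$. For the tail $k>K$, the trivial bound $1-\cos^k\theta\leq 1$ gives a contribution of at most $2\|\Pm{>K}\sigma\|_{L_2}^2$, which is exactly the high-degree term in the target inequality. For $k\leq K$, I would use the elementary estimate $1-\cos^k\theta\leq k(1-\cos\theta)\leq k\theta^2/2\leq k(\theta^*)^2/2$, so the low-degree contribution is at most $(\theta^*)^2\sum_{k\leq K}k\hat\sigma(k)^2$. To connect this to the critical-point hypothesis, I would use the Hermite identities $\he_k'=\sqrt{k}\he_{k-1}$ and $\Tre_\rho\he_k=\rho^k\he_k$, which give $\|\Tre_\rho\sigma'\|_{L_2}^2=\sum_{k\geq 1}k\hat\sigma(k)^2\rho^{2(k-1)}$, so that $\psi_\sigma(\theta^*)^2\leq C\opt$ translates into $\sin^2\theta^*\sum_{k\geq 1}k\hat\sigma(k)^2(\cos\theta^*)^{2(k-1)}\leq C\opt$.

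The final step is to extract $\sum_{k\leq K}k\hat\sigma(k)^2$ from this weighted sum. For $k\leq K$ we have $(\cos\theta^*)^{2(k-1)}\geq(\cos\theta^*)^{2K}$, and a direct calculation using $\cos\theta^*\geq 1-(\theta^*)^2/2$ shows $(\cos\theta^*)^{2K}=\Omega(1)$; combining this with $(\theta^*)^2/\sin^2\theta^*=O(1)$ on $[0,\pi/2]$ gives
\begin{equation*}
(\theta^*)^2\sum_{k\leq K}k\hat\sigma(k)^2\ \leq\ O(1)\cdot\sin^2\theta^*\sum_{k\leq K}k\hat\sigma(k)^2(\cos\theta^*)^{2(k-1)}\ \leq\ O(\opt).
\end{equation*}
Plugging the low- and high-degree bounds back yields $\|\sigma(\w\cdot\x)-\sigma(\wstar\cdot\x)\|_{L_2}^2\leq O(\opt)+2\|\Pm{>K}\sigma\|_{L_2}^2$ and hence $\calL(\w)\leq O(\opt)+4\|\Pm{>K}\sigma\|_{L_2}^2$. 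The main obstacle I anticipate is handling the constant $(\cos\theta^*)^{2K}$ uniformly, especially in the edge case where $\theta^*$ is not small: when $\theta^*>1$ we have $K=0$ so the low-degree sum is empty and the high-degree term absorbs everything, while for $\theta^*\leq 1$ the bound $(1-(\theta^*)^2/2)^{2/(\theta^*)^2}\geq 1/e$ (valid on this range) gives the desired $\Omega(1)$; this is routine but needs a clean case split.
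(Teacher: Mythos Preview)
Your proposal is correct and follows essentially the same route as the paper: the paper factors the argument through an ``Error Decomposition'' lemma (\Cref{prop:starpoints-l2error}), but that lemma is proved exactly by your Hermite expansion of $\|\sigma(\w\cdot\x)-\sigma(\wstar\cdot\x)\|_{L_2}^2$, the split at $K=\lfloor 1/(\theta^*)^2\rfloor$, and the comparison $\sum_{k\leq K}k\hat\sigma(k)^2\lesssim\|\Tre_{\cos\theta^*}\sigma'\|_{L_2}^2$ via a uniform lower bound on $(\cos\theta^*)^{2K}$. One small correction: on $\theta^*\leq 1$ the quantity $(1-(\theta^*)^2/2)^{2/(\theta^*)^2}$ decreases to $1/4$ at $\theta^*=1$, not $1/e$, so your $\Omega(1)$ constant should be $1/4$ rather than $1/e$; this does not affect the argument.
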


To prove \Cref{prop:error}, we first prove the following technical lemma, which decomposes the error into $O(\opt)$ and error terms that depend on the properties of the activation $\sigma.$ 
A more formal version of \Cref{prop:starpoints-l2error} is stated as \Cref{app:prop:starpoints-l2error} in \Cref{app:sec:augmentation-and-landscape}, where its proof is also provided.

\begin{lemma}[Error Decomposition, Informal]\label{prop:starpoints-l2error}
Under the assumptions of \Cref{prop:error}, we have that $\calL(\w)\leq 2\opt + C\theta^2\|\Tr\sigma'\|_{L_2}^2  + 4\|\Pm{>k}\sigma\|_{L_2}^2$, where $C$ is an absolute constant, for the following choices of $\rho$: (i) if $k\leq 1$, $\rho$ can be any value in $(0,1)$; (ii) if $k\geq 2$, then $\rho = \sqrt{1-1/k}$.
\end{lemma}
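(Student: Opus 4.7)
The plan is to split $\calL(\w)$ into the irreducible noise term $\opt$ and an approximation-error term between the two GLM predictions. Adding and subtracting $\sigma(\wstar\cdot\x)$ inside the square and applying $(a+b)^2 \leq 2a^2 + 2b^2$ gives
\[
\calL(\w) \;\leq\; 2\opt + 2\,\Ex[(\sigma(\w\cdot\x) - \sigma(\wstar\cdot\x))^2],
\]
so it suffices to bound the second term by $(C/2)\,\theta^2 \|\Tr\sigma'\|_{L_2}^2 + 2\|\Pm{>k}\sigma\|_{L_2}^2$.

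Next, I would expand $\sigma = \sum_{i\geq 0}\hat\sigma(i)\he_i$ in the Hermite basis and use the standard identity $\E[\he_i(\w\cdot\x)\he_j(\wstar\cdot\x)] = \delta_{ij}\cos^i\theta$ for unit vectors $\w,\wstar$ (which follows because $\w\cdot\x$ and $\wstar\cdot\x$ are jointly standard Gaussian with correlation $\cos\theta$). This yields the closed form
\[
\Ex[(\sigma(\w\cdot\x) - \sigma(\wstar\cdot\x))^2] \;=\; 2\sum_{i\geq 1}\hat\sigma(i)^2 (1 - \cos^i\theta).
\]
I would then split the sum at degree $k$: the tail $i>k$ contributes at most $2\sum_{i>k}\hat\sigma(i)^2 = 2\|\Pm{>k}\sigma\|_{L_2}^2$ using the trivial bound $1-\cos^i\theta\leq 1$; for the low-degree part $i\leq k$, the elementary inequalities $1-\cos^i\theta \leq i(1-\cos\theta) \leq i\theta^2/2$ reduce everything to bounding $\theta^2 \sum_{i=1}^k i\,\hat\sigma(i)^2$.

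Relating this weighted Hermite sum back to $\|\Tr\sigma'\|_{L_2}^2$ is where the choice of $\rho$ enters. Since $\he_i' = \sqrt{i}\,\he_{i-1}$ under the paper's normalization, the Hermite coefficients of $\sigma'$ satisfy $\hat{\sigma'}(j) = \sqrt{j+1}\,\hat\sigma(j+1)$, and since $\Tr$ acts diagonally with eigenvalue $\rho^i$ on $\he_i$,
\[
\|\Tr\sigma'\|_{L_2}^2 \;=\; \sum_{i\geq 1}\rho^{2(i-1)} i\,\hat\sigma(i)^2.
\]
For $i\leq k$ we have $\rho^{2(i-1)} \geq \rho^{2(k-1)}$, hence $\sum_{i=1}^k i\,\hat\sigma(i)^2 \leq \rho^{-2(k-1)}\|\Tr\sigma'\|_{L_2}^2$. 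For $k\geq 2$ and $\rho = \sqrt{1-1/k}$, an elementary calculation shows $(1-1/k)^{-(k-1)} \leq e$; for $k\leq 1$ the low-degree sum is either empty ($k=0$) or equals $\hat\sigma(1)^2 \leq \|\Tr\sigma'\|_{L_2}^2$ for any $\rho\in(0,1)$ (using only the $i=1$ term of the spectral decomposition of $\|\Tr\sigma'\|_{L_2}^2$). Putting everything together produces the claim with absolute constant $C = 2e$.

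I expect the main bookkeeping burden to be setting up the Hermite spectral picture cleanly---checking the normalization so that $\hat{\sigma'}(j)=\sqrt{j+1}\hat\sigma(j+1)$ holds exactly, and justifying termwise manipulation of the Hermite series (well defined because $\sigma\in L_2(\normal)$, which also ensures $\sigma'$ has a valid $L_2$ Hermite expansion when we invoke $\|\Tr\sigma'\|_{L_2}$). The only genuinely numerical step is the uniform bound $(1-1/k)^{-(k-1)} \leq e$ for $k\geq 2$, needed to turn the $\rho$-dependent factor into the claimed dimension-free constant; everything else is orthogonality and elementary inequalities.
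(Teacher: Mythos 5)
Your proposal is correct and follows essentially the same route as the paper's proof: the same $2\opt$ plus alignment-error decomposition, the same Hermite-spectral identity $\Ex[(\sigma(\w\cdot\x)-\sigma(\wstar\cdot\x))^2]=2\sum_{i\ge1}\hat\sigma(i)^2(1-\cos^i\theta)$ split at degree $k$, and the same comparison of $\sum_{i\le k} i\,\hat\sigma(i)^2$ with $\|\Tr\sigma'\|_{L_2}^2=\sum_i i\rho^{2(i-1)}\hat\sigma(i)^2$ for $\rho=\sqrt{1-1/k}$. The only (immaterial) differences are that you bound $1-\cos^i\theta\le i(1-\cos\theta)$ where the paper applies Bernoulli to $(1-\sin^2\theta)^{i/2}$, and your constant $(1-1/k)^{-(k-1)}\le e$ replaces the paper's slightly looser $2e^{c^2}$ factor.
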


\begin{proof}[Proof Sketch of \Cref{prop:error}]
Since $\theta^*$ is the $(\sigma,\theta_0,C\opt)$-Critical point, we have by its definition that $(\theta^*)^2\|\Tre_{\cos(\theta^*)}\sigma'\|_{L_2}^2\leq C\opt$. 
Let $k = \lfloor 1/(\theta^*)^2\rfloor$. Consider first $\theta^*\leq 1/\sqrt{2}$, which implies that $k\geq 2$. Observe that $(1 - 1/k)^{1/2}\leq \cos\theta^*$,
thus as $\|\Tr\sigma'\|_{L_2}$ is non-decreasing with respect to $\rho$ (\Cref{fct:semi-group}), we further have $\|\Tre_{(1 - 1/k)^{1/2}}\sigma'\|_{L_2}^2\leq \|\Tre_{\cos(\theta^*)}\sigma'\|_{L_2}^2$. 
Thus, applying \Cref{prop:starpoints-l2error}, for any $\theta\leq \theta^*$, we get $\calL(\w)\leq (2+ 8eC)\opt +4\|\Pm{>(1/\theta^*)^2}\sigma\|_{L_2}^2$.
When $\theta^* > 1/\sqrt{2}$, then $k = 0, 1$. Choose $\rho = \cos(\theta^*)\in (0,1)$ in \Cref{prop:starpoints-l2error}, note again that $(\theta^*)^2\|\Tre_{\cos(\theta^*)}\sigma'\|_{L_2}^2\leq C\opt$ by the definition of $\theta^*$, thus we have $\calL(\w)\leq (2 + C)\opt + 4\|\Pm{>(1/\theta^*)^2}\sigma\|_{L_2}^2$.
\end{proof}

\section{Learning GLMs via Variable Augmentation}\label{sec:learn-sim-general}
In this section, we present our main algorithm (\Cref{alg:GD-general-activation}) for robustly learning Gaussian GLMs, as stated in \Cref{def:agnostic-learning}.  
Our algorithm applies to the following large class of activations:
\begin{definition}[$(B,L)$-Regular Activations]\label{ass:activation}
   Given parameters $B,L>0$, we define the class of $(B,L)$-Regular activations, denoted by $\mathcal{H}(B,L)$, as the class containing all functions $\sigma:\R\to\R$ such that 1) $\|\sigma\|_{L_\infty}\leq B$ and 2) $\|\sigma'\|_{L_2}\leq L$. Given $\eps>0$, we define the class of $\eps$-Extended $(B,L)$-Regular activations, denoted by  $\mathcal{H}_\epsilon(B,L)$, as the class containing all activations $\sigma_1: \R\to \R$ for which there exists $\sigma_2\in \mathcal{H}(B,L)$ such that $\|\sigma_1-\sigma_2\|_{L_2}^2\leq\eps$. 
\end{definition}
\noindent Our results hold for any activation that is $\eps$-Extended $(B,L)$-Regular.
This class contains all Lipschitz activations and all activations with bounded 4\textsuperscript{th} moment. More examples are in \Cref{app:function-class}.

\Cref{alg:GD-general-activation} uses the main structural result of \Cref{sec:augmentation-and-landscape} (\Cref{prop:structural}) to update its iterates $\w^{(t)}$. In particular, 
for $\theta_t = \theta(\w\tth,\w^*)$, we show that 
after one gradient descent-style update, the angle $\theta_{t+1}$ shrinks by a factor $1 - c$, i.e., $\theta_{t+1}\leq (1 - c)\theta_t$, where $0<c<1$ is an absolute constant.  A crucial feature of \Cref{alg:GD-general-activation} is that 
in each iteration it carefully chooses a new value of $\rho_t$. This \textit{variable} update of $\rho_t$ ensures the `signal' of the gradient is present 
until $\w\tth$ reaches a small region centered at $\w^*$. Within this region, the agnostic noise corrupts the signal of the augmented gradient and convergence to $\w^*$ is no longer be guaranteed. However, the region that $\w\tth$ reaches is in fact the  Convergence Region $\mathcal{R}_{\sigma,\theta_0}(O(\opt))$, within which all points are solutions with the target approximation error. We show in \Cref{sec:learn-monotone} that for any \textit{monotone} $(B,L)$-Regular activations, any $\wh{\w}$ in $\mathcal{R}_{\sigma,\theta_0}(O(\opt))$ is a solution with error $C\opt + \eps$, under suitable initialization. We now present our algorithm and state our main result (\Cref{thm:main-general}) for general $(B, L)$-regular activations.

\begin{algorithm}[h]
   \caption{$\mathrm{SGD}-\mathrm{VA}$: SGD with Variable Augmentation
}
   \label{alg:GD-general-activation}
\begin{algorithmic}[1]
\STATE {\bfseries Input:} Parameters $\eps, T$; Sample access to $\D$
\STATE $[\w^{(0)}, \bar{\theta}] = \textbf{Initialization}[\sigma]$ {(\Cref{subsec:initialization-for-monotone})}; set $\rho_0 =  \cos\bar{\theta}$ 
\FOR{$t = 0,\dots, T$} 
\STATE Draw $n $ samples $\widehat{\D}_{\rho_t} = \{(\tx\ith,y\ith)\}_{i=1}^n$ from $\D_{\rho_t}$ using \Cref{alg:aug}\STATE $\whg(\w^{(t)}) = -(1/\rho_t)\E_{(\tx,y)\sim\widehat{\D}_{\rho_t}}[y\sigma'(\w^{(t)}\cdot\tx)(\tx)^{\perp{\w^{(t)}}}]$ \label{line:grad-empirical}
\STATE $\eta_t = \sqrt{(1 - \rho_t)/2}/(4\|\wh{\g}(\w\tth)\|_2)$
\STATE $\w^{(t+1)} = (\w^{(t)} - \eta_{t}\whg(\w^{(t)}))/\|\w^{(t)} - \eta_{t}\whg(\w^{(t)})\|_{2}$
\STATE $\rho_{t+1} = 1 - (1 - 1/256)^2(1 - \rho_t)$\label{line:rho}
\ENDFOR
\STATE $\wh{\w} = \textbf{Test}[\w^{(0)},\w^{(1)},\dots, \w^{(T)}]$ (\Cref{app:alg:test})
\STATE{\bfseries Return:} $\wh{\w}$
\end{algorithmic}
\end{algorithm}

\begin{theorem}\label{thm:main-general}Let $\eps>0$. Let $\sigma$ be a $(B,L)$-Regular activation. \Cref{alg:GD-general-activation}, given initialization $\vec w^{(0)}$ with $\theta(\vec w^{(0)},\wstar)\leq \bar{\theta}$, 
runs at most  $T =O(\log(L/\eps))$ iterations, 
    draws ${\Theta}({dB^2\log(L/\eps)}/{\eps} + B^4\log(L/\eps)/\eps^2)$ samples, 
    and returns a vector $\wh{\w}$ such that with probability at least $2/3$, 
    {$\wh{\w} \in \mathcal{R}_{\sigma,\theta_0}(O(\opt))$.}
Moreover,  $\calL(\wh{\w}) \leq O(\opt) + \eps + 4\|\Pm{>1/(\theta^*)^2}\sigma\|_{L_2}^2$.
\end{theorem}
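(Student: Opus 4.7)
My plan is to prove Theorem \ref{thm:main-general} by an inductive argument over iterations $t = 0,\ldots,T$ that maintains the following invariant: either $\w^{(t)}$ has entered the convergence region $\mathcal{R}_{\sigma,\theta_0}(O(\opt))$, or $\rho_t \leq \cos\theta_t$ (so Proposition \ref{prop:structural} applies at iteration $t$) and the angle $\theta_t = \theta(\w^{(t)},\wstar)$ contracts by a constant factor. The schedule $\rho_{t+1} = 1 - (1-1/256)^2(1-\rho_t)$ is chosen so that $1-\rho_t$ shrinks geometrically at a rate calibrated to the guaranteed angle contraction, which is what keeps $\rho_t \leq \cos\theta_t$ throughout. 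Since $\rho_t \leq \cos\theta_t$ implies $\sin^2\theta_t \leq 2(1-\rho_t)$, the step size $\eta_t = \sqrt{(1-\rho_t)/2}/(4\|\wh\g\|_2)$ is calibrated to the correct scale.

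The first ingredient is concentration. Using Bernstein/Gaussian tail bounds together with $\|\sigma\|_{L_\infty}\leq B$ and $\|\sigma'\|_{L_2}\leq L$ (and the fact that $\Tr\sigma'$ inherits bounded $L_2$ norm from $\sigma'$), I would show that $n = \widetilde\Theta(dB^2/\eps + B^4/\eps^2)$ samples per iteration suffice so that the empirical gradient $\wh\g(\w^{(t)})$ is close enough to the true gradient $\g(\w^{(t)})$ to preserve the direction of descent toward $\wstar$, simultaneously over all $T = O(\log(L/\eps))$ iterations (via a union bound). The second ingredient is the per-step contraction. Assuming inductively that $\rho_t \leq \cos\theta_t$ and $\theta_t \notin \mathcal{R}_{\sigma,\theta_0}(O(\opt))$, Proposition \ref{prop:structural} yields $-\g(\w^{(t)})\cdot\wstar \geq (2/3)\|\Tre_{\sqrt{\rho_t\cos\theta_t}}\sigma'\|_{L_2}^2\sin^2\theta_t$; since outside the convergence region this signal dominates the concentration error, a direct expansion of $\cos\theta_{t+1} = \w^{(t+1)}\cdot\wstar$ (using $\wh\g\perp\w^{(t)}$, which makes the normalizer $\sqrt{1+(1-\rho_t)/32}$) yields $\sin^2\theta_{t+1} \leq (1-c_2)\sin^2\theta_t$ for an absolute $c_2 \in (0,1)$. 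One then verifies that with $1-\rho_{t+1} = (1-1/256)^2(1-\rho_t)$ the invariant $\cos\theta_{t+1}\geq \rho_{t+1}$ persists; the constant $1/256$ is picked small enough relative to $c_2$ to leave the necessary slack.

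The main obstacle is this simultaneous calibration step: the signal scales with $\|\Tre_{\sqrt{\rho_t\cos\theta_t}}\sigma'\|_{L_2}$ (itself a function of $\rho_t$), the step size uses $\sqrt{1-\rho_t}$, and the schedule for $\rho_t$ must track the angle contraction, so all constants must interlock. Once the induction goes through, iterating $T = O(\log(L/\eps))$ steps drives $\sin^2\theta_t$ below the threshold defining $\mathcal{R}_{\sigma,\theta_0}(O(\opt))$, so some iterate $\w^{(t^*)}$ falls in this region. The Test procedure I would use evaluates $\calL(\w^{(t)})$ on a fresh validation sample of size $\widetilde\Theta(B^4/\eps^2)$ for each candidate (Hoeffding with a union bound over $T+1$ iterates caps total failure probability at $1/3$) and returns the minimizer $\wh\w$; applying Proposition \ref{prop:error} to this iterate gives $\calL(\wh\w) \leq O(\opt) + \eps + 4\|\Pm{>1/(\theta^*)^2}\sigma\|_{L_2}^2$ as claimed.
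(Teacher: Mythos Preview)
Your high-level plan is right, but there is a genuine logical gap in the inductive step. You assert that from ``$\rho_t\leq\cos\theta_t$'' and ``$\theta_t\notin\mathcal{R}_{\sigma,\theta_0}(O(\opt))$'' one may invoke Proposition~\ref{prop:structural}. That proposition requires $\sin\theta_t\geq 3\sqrt{\opt}/\|\Tre_{\rho_t}\sigma'\|_{L_2}$, whereas being outside the convergence region only gives $\sin\theta_t\gtrsim \sqrt{\opt}/\|\Tre_{\cos\theta_t}\sigma'\|_{L_2}$. Because $\rho_t\leq\cos\theta_t$ implies $\|\Tre_{\rho_t}\sigma'\|_{L_2}\leq\|\Tre_{\cos\theta_t}\sigma'\|_{L_2}$, the latter inequality does \emph{not} imply the former. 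Concretely, if early iterates overshoot and $\theta_t$ becomes much smaller than $\cos^{-1}\rho_t$, then $\|\Tre_{\rho_t}\sigma'\|_{L_2}$ may be far below $\|\Tre_{\cos\theta_t}\sigma'\|_{L_2}$, the hypothesis of Proposition~\ref{prop:structural} fails, and you cannot conclude that the angle contracts---it may even increase.

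The paper closes this gap by working not with the event ``$\theta_t\in\mathcal{R}$'' but with the composite stopping event
\[
\evt_t=\bigl\{\,|\cos\theta_t-\rho_t|\leq\sin^2\theta_t\ \text{and}\ \sin\theta_t\leq C\sqrt{\opt}/\|\Tre_{\rho_t}\sigma'\|_{L_2}\,\bigr\}.
\]
The first condition forces $\rho_t\geq\cos(2\theta_t)$, so when $\evt_t$ fires one genuinely lands in $\mathcal{R}_{\sigma,\theta_0}(O(\opt))$. When $\evt_t$ fails, the induction splits into two cases: (I) if $\sin\theta_t\geq C\sqrt{\opt}/\|\Tre_{\rho_t}\sigma'\|_{L_2}$ then Proposition~\ref{prop:structural} applies and $\theta_t$ contracts enough to keep $\rho_{t+1}\leq\cos\theta_{t+1}$; (II) if instead $\cos\theta_t-\rho_t>\sin^2\theta_t$ (so $\rho_t$ lags far behind), then one only uses the triangle inequality $\|\w^{(t+1)}-\wstar\|_2\leq\|\w^{(t)}-\wstar\|_2+\eta_t\|\wh\g\|_2$: the step size $\eta_t\propto\sqrt{1-\rho_t}$ bounds the possible \emph{increase} in $\theta_t$, and the slack $\cos\theta_t-\rho_t>\sin^2\theta_t$ absorbs it, preserving $\rho_{t+1}\leq\cos\theta_{t+1}$. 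Your proposal is missing precisely this Case~II analysis and the event $\evt_t$ that makes the case split exhaustive; without them the induction does not go through.
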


Define $\zeta(\rho)\eqdef \sqrt{\opt}/\|\Tre_{\rho}\sigma'\|_{L_2}$. Recall that in \Cref{prop:structural} we showed when 
\begin{equation}\label{eq:condition-for-fast-convergence}
    \text{conditions for fast convergence:}\;\sin\theta_t\geq 3\zeta(\rho_t), \,\zeta(\rho_t)\eqdef \sqrt{\opt}/\|\Tre_{\rho_t}\sigma'\|_{L_2}, \rho_t\leq\cos\theta_t
\end{equation}
hold, $-\g(\w\tth)$ aligns well with $\w^*$, 
enabling $\theta_{t+1}\leq (1 - c)\theta_t$.
However, two critical issues arise:

\noindent \textbf{(1)} If $\sin\theta_t\lesssim \zeta(\rho_t)$, then  conditions in \Cref{eq:condition-for-fast-convergence} do not hold, and we cannot guarantee that $\theta_t$ contracts. Moreover, since $\|\Tre_{\rho_t}\sigma'\|_{L_2}\leq \|\Tre_{\cos\theta_t}\sigma'\|_{L_2}$, it is not necessarily the case that $\sin\theta_t\lesssim \zeta(\cos\theta_t)$, thus we also cannot assert that $\w\tth$ has reached the target region $\mathcal{R}_{\sigma,\theta_0}(C^2\opt)$. 

\noindent \textbf{(2)} Suppose that the conditions in \Cref{eq:condition-for-fast-convergence} apply, hence $\theta_{t+1}\leq (1 - c)\theta_t$. Assume that $\w^{(t+1)}$ is still far from $\w^*$ and $\theta_{t+1}\gtrsim \zeta(\cos\theta_{t+1})$. It is possible that $\zeta(\cos\theta_{t+1}) \lesssim \theta_{t+1}\lesssim \zeta(\rho_t)$, because $\|\Tre_{\cos\theta_{t+1}}\sigma'\|_{L_2}\geq \|\Tre_{\rho_t}\sigma'\|_{L_2}$, as $\rho_t\leq \cos\theta_t\leq\cos\theta_{t+1}$ and $\|\Tr\sigma'\|_{L_2}$ is an increasing function of $\rho$ (by \Cref{fct:semi-group}). This implies that the conditions in \Cref{eq:condition-for-fast-convergence} might become invalid for $\rho_t$.

\vspace{0.25em}

To overcome these issues, we consider the event $\evt_t\eqdef\{|\cos\theta_t-\rho_t|\leq \sin^2\theta_t,\, \sin\theta_t \leq C\zeta(\rho_t)\}$. We first observe that when $\evt_t$ is satisfied, then, 
$|\cos\theta_t-\rho_t|\leq \sin^2\theta_t$ indicates that $\rho_t$ and $\cos\theta_t$ are sufficiently close and we argue that  $\zeta(\rho_t)\approx\zeta(\cos(2\theta_t))$, therefore, we have that $\sin\theta_t \leq C\zeta(\cos(2\theta_t))$. From here we argue that $\w\tth\in \mathcal{R}_{\sigma,\theta_0}(4C^2\opt)$. This addresses (1).  
Now suppose $\evt_t$ does not hold. We use induction to show that updating $\rho_t$ by Line \ref{line:rho}, we have $\rho_{t+1}\leq \cos\theta_{t+1}$. Now if $\sin\theta_{t+1}\geq 3\zeta(\rho_{t+1})$, \Cref{eq:condition-for-fast-convergence} is satisfied and we decrease $\theta_{t+1}$, whereas if $\sin\theta_{t+1}\leq 3\zeta(\rho_{t+1})$, we know that $\w^{(t+1)}$ is the target vector as discussed above.
This addresses issue (2).
{\Cref{app:fig:alg} in the appendix provides a visual illustration of the mechanism of \Cref{alg:GD-general-activation}.}
\begin{proof}[Proof Sketch of \Cref{thm:main-general}]
 Let $\theta_t = \theta(\w^{(t)},\w^*)$ and define $\zeta(\rho)\eqdef \sqrt{\opt}/\|\Tre_{\rho}\sigma'\|_{L_2}$.  Assume that $\eps\le\opt$ (otherwise we can get additive error $O(\eps)$). Suppose further that we have access to the population gradients $\vec g^{(t)}$, so that the statistical error is negligible (we bound it in \Cref{app:sec:proof-of-concentration-lemma}).

Define the event $\evt_t\eqdef\{|\cos\theta_t-\rho_t|\leq \sin^2\theta_t,\, \sin\theta_t \lesssim\zeta(\rho_t)\}$. We claim that if $\evt_t$ holds at some iteration $t$, then the algorithm converges to a vector in the region $\mathcal{R}_{\sigma,\theta_0}(O(\opt))$. In particular, in this case we have that $\rho_t\geq \cos2\theta_t$, hence $\sin\theta_t\lesssim \zeta(\rho_t)\lesssim\zeta(\cos(2\theta_t))$, i.e., $\psi_{\sigma}(2\theta_t)\lesssim \sqrt{\opt}$ as $\zeta(\rho)$ is a decreasing function, which implies that $\w^{(t)}\in\mathcal{R}_{\sigma,\theta_0}(O(\opt))$.

It remains to show that there exists some $t^*\le T$ for which $\evt_{t^*}$ holds. In fact, it suffices to prove that $\rho_t\leq \cos\theta_t$ for all $t\leq t^*$. Since $\rho_t\to 1$, if no such $t^*$ existed then eventually $\cos\theta_t$ would be arbitrarily close to $1$, forcing $\sin\theta_t\lesssim\zeta(1)$ and yielding a contradiction. We prove $\rho_t\le \cos\theta_t$ for all $t\le t^*$ by induction. By the assumptions on $\theta_0$, we have $\rho_0\le \cos\theta_0$.

\noindent\textbf{Induction Step.} Suppose that for some $0\leq t<t^*$ we have $\rho_t\le \cos\theta_t$. We argue that $\rho_{t+1}\le \cos\theta_{t+1}$. If $\evt_t$ already holds for some $t'\le t$, there is nothing to prove. Otherwise, assume that the condition $\sin\theta_t \lesssim \zeta(\rho_t)$ is violated.
Since $\g\tth$ is orthogonal to $\w\tth$, the update is given by $\w^{(t+1)} = \proj_\B(\w\tth - \eta_t\g\tth)$. Thus, $\|\w^{(t+1)} - \w^*\|_{2}^2 
        \leq \|\w\tth - \eta_{t}\g\tth - \w^*\|_{2}^2 \nonumber
        = \|\w\tth - \w^*\|_{2}^2 + \eta_t^2\|\g\tth\|_{2}^2 + 2\eta_{t}\g\tth\cdot\w^*.$ By \Cref{prop:structural}, we have $\g\tth\cdot\w^*\lesssim-\|\Tre_{\rho_t}\sigma'\|_{L_2}^2\sin\theta_t^2 $ and $\|\g\tth\|_{2}\lesssim \|\Tre_{\rho_t}\sigma'\|_{L_2}^2\sin\theta_t$, hence $\|\w^{(t+1)} - \w^*\|_{2}^2 
        \leq  \|\w\tth - \w^*\|_{2}^2  -\eta_t |\g\tth\cdot\w^*|$.
Thus, by choosing $\eta_t$ appropriately, there exists $\xi>0$ such that $\theta_{t+1}\leq \theta_t-\xi$ and if we choose $\rho_{t+1}$ so that $\cos^{-1}\rho_{t}-\cos^{-1}\rho_{t+1}<\xi$, we ensure $\rho_{t+1}\leq \cos\theta_{t+1}$.   Alternatively, if $\sin\theta_t\lesssim \zeta(\rho_t)$ and $|\cos\theta_t - \rho_t|\geq \sin^2\theta_t$, then by the triangle inequality we obtain $ \|\w^{(t+1)} - \w^*\|_{2} 
        \leq  \|\w\tth - \w^*\|_{2}  +\eta_t \|\g\tth\|_2$.  In this case, we can choose $\eta_t$ so that even if $\theta_{t+1}\ge \theta_t$, the increase is bounded by a small $\xi>0$, i.e., $\theta_{t+1}\le \theta_t + \xi$. Since $ \cos\theta_t\geq \sin^2\theta_t +\rho_t$, we can adjust $\rho_t$ to ensure that $\cos(\theta_t+\xi)\ge \rho_{t+1}$. This completes the inductive step.
\end{proof}

\section{$\mathrm{SGD}-\mathrm{VA}$ Efficiently Learns Monotone GLMs}\label{sec:learn-monotone}

We have shown in \Cref{sec:learn-sim-general} that \Cref{alg:GD-general-activation} converges to a parameter vector $\vec w$ with an $L_2^2$ error at most $O(\opt)+4\|\Pm{>1/(\theta^*)^2}\sigma\|_{L_2}^2$, where $\theta^*$ is a Critical Point.
 One of the technical difficulties is that in general we cannot bound $\|\Pm{>1/(\theta^*)^2}\sigma\|_{L_2}^2$ by $O(\opt)$. One such example is when $\sigma(t)=\he_{(1/(\theta^*)^2+1)}(t)$; in this case $\|\Pm{>1/(\theta^*)^2}\sigma\|_{L_2}^2=\|\sigma\|_{L_2}^2$, which can be  much larger than $\opt$. 
In this section, we show that if the activation is also monotone, then for sufficiently small $\theta^*$, we can bound $\|\Pm{>1/(\theta^*)^2}\sigma\|_{L_2}^2$ by the \OU semigroup of $\sigma'$. Specifically, we provide an initialization method that along with \Cref{alg:GD-general-activation} gives an algorithm that guarantees error $O(\opt)$. Formally, our main result is stated in the following theorem.

\begin{theorem}[Learning Monotone $(B,L)$-Regular Activations]\label{thm:main-monotone}
    Let $\eps>0$, and let $\sigma\in \mathcal{H}(B,L)$ be a monotone  activation. 
    Then, \Cref{alg:GD-general-activation} 
    draws $N = \tilde{\Theta}({dB^2\log(L/\eps)}/{\eps} + d/\eps^2)$ samples, 
    runs in $\poly(d, N)$ time, 
    and outputs $\wh{\w}$ such that with probability at least $2/3$,
    $\wh{\w}\in\mathcal{R}_{\sigma,\theta_0}(O(\opt) + \eps)$ and 
     $\calL(\wh{\w}) \leq C\opt + \eps$, where $C$ is an absolute constant independent of $\eps, d, B, L$.
\end{theorem}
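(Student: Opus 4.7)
The plan is to deduce \Cref{thm:main-monotone} from \Cref{thm:main-general} by adding two ingredients: (i) a warm-start initialization producing $\vec w^{(0)}$ with $\theta(\vec w^{(0)}, \wstar) \leq \bar\theta$ for a sufficiently small $\bar\theta$, and (ii) a structural bound that, under monotonicity, controls the residual Hermite tail $\|\Pm{>1/(\theta^*)^2}\sigma\|_{L_2}^2$ by $O(\opt) + \eps$.

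For (i), I would follow the label-transformation strategy sketched in \Cref{sec:tech}. Using a monotone staircase approximation of $\sigma$ I would pick a threshold $t$ and relabel each sample by $\tilde y = \1\{y \geq t\}$. Monotonicity of $\sigma$ implies $\{\sigma(\wstar\cdot\x)\geq t\} = \{\wstar\cdot\x \geq t'\}$ for some $t'$, so the relabelled task becomes agnostic learning of a (possibly biased) halfspace under the Gaussian marginal with noise rate $O(\opt)$. A standard constant-factor efficient halfspace learner applied to a centered version of this task then outputs $\vec w^{(0)}$ with $\theta(\vec w^{(0)}, \wstar) \leq \bar\theta = O(1/\sqrt{\log(1/\eps)})$, which is small enough to invoke \Cref{thm:main-general}. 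Picking $t$ from the staircase so that the resulting halfspace has non-negligible mass on each side, and verifying that the noise rate is $O(\opt)$, are the technical points here; both follow from the piecewise-constant approximation of $\sigma \in \mathcal{H}(B,L)$.

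With this warm start, \Cref{thm:main-general} delivers $\wh{\w} \in \mathcal{R}_{\sigma,\theta_0}(O(\opt) + \eps)$ and the bound $\calL(\wh{\w}) \leq O(\opt) + \eps + 4\|\Pm{>1/(\theta^*)^2}\sigma\|_{L_2}^2$ within the stated iteration, sample and time budgets. By \Cref{def:star-point-regions}, the Critical Point $\theta^*$ satisfies $\psi_\sigma(\theta^*)^2 = \sin^2\theta^* \|\Tre_{\cos\theta^*}\sigma'\|_{L_2}^2 \leq O(\opt) + \eps$. The final step (ii) is to invoke the monotone-activation structural result announced in \Cref{sec:tech}, which states that for any monotone $(B,L)$-regular $\sigma$ and sufficiently small $\theta$, $\|\Pm{>1/\theta^2}\sigma\|_{L_2}^2 \lesssim \psi_\sigma(\theta)^2 + \eps$. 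Applied at $\theta = \theta^*$, this converts the residual Hermite tail into $O(\opt) + \eps$, so combining with the loss bound from \Cref{thm:main-general} produces $\calL(\wh{\w}) \leq C\opt + \eps$ for an absolute constant $C$ independent of $\eps, d, B, L$, while the sample and time complexities follow from those of \Cref{thm:main-general} together with the cost of the halfspace initialization.

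The central obstacle is the structural bound $\|\Pm{>1/\theta^2}\sigma\|_{L_2}^2 \lesssim \psi_\sigma(\theta)^2 + \eps$ for monotone $(B,L)$-regular $\sigma$. The natural route is to first approximate $\sigma$ by a monotone staircase at $L_2$ cost $O(\eps)$, using the bounds on $\|\sigma\|_{L_\infty}$ and $\|\sigma'\|_{L_2}$ to control jump locations. For a staircase, the identity $\widehat{\sigma'}(k) = \sqrt{k+1}\,\hat\sigma(k+1)$ gives $\|\Tre_{\cos\theta}\sigma'\|_{L_2}^2 = \sum_{k\geq 0}\cos^{2k}(\theta)(k+1)\hat\sigma(k+1)^2$, which via $\cos^{2k}\theta \asymp e^{-k\theta^2}$ is dominated by the regime $k \lesssim 1/\theta^2$. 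Monotonicity---in the form of non-negativity of $\sigma'$---is what prevents the Hermite mass of $\sigma$ from concentrating on the high-degree tail; converting this intuition into a clean inequality that tracks all constants is the technically most delicate part and is where the auxiliary \OU-semigroup lemmas announced in \Cref{sec:tech} must be used.
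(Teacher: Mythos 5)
Your proposal follows essentially the same route as the paper: initialize via label thresholding and agnostic halfspace learning to get $\theta(\vec w^{(0)},\wstar)\lesssim 1/M$ with $M\lesssim\sqrt{\log(B/\eps)}$ (\Cref{prop:initial2}), invoke \Cref{thm:main-general}, and then absorb the Hermite tail $\|\Pm{>1/(\theta^*)^2}\sigma\|_{L_2}^2$ into $O(\opt)+\eps$ using the monotone staircase-approximation structural bound $\|\Pm{>1/\theta^2}\sigma\|_{L_2}^2\lesssim\sin^2\theta\|\Tre_{\cos\theta}\sigma'\|_{L_2}^2$ (\Cref{prop:error-bound-smoothing-tails}). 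Your sketch of the staircase/OU-semigroup argument for that structural bound is consistent with the paper's, and you correctly flag it as the delicate step whose full proof lives in the auxiliary lemmas.
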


The main result of this section is an initialization routine that allows us to bound the higher coefficients of the spectrum, $\|\Pm{>1/(\theta^*)^2}\sigma\|_{L_2}^2$. In particular, we prove the following.

\begin{proposition}[Initialization]\label{prop:initialization}
     Let $\sigma\in \mathcal{H}(B,L)$ be a monotone activation. There exists an algorithm that draws $N=\widetilde O(d/\eps^2)$ samples, runs in $\poly(N,d)$ time, and with probability at least $2/3$, returns a unit vector $\w^{(0)}\in \R^d$ such that for any unit $\w'\in \R^d$ with $\theta=\theta(\w',\wstar)\leq \theta(\w^{(0)},\wstar)$, it holds that
$
\|\Pm{>1/\theta^2}\sigma\|_{L_2}^2\lesssim \sin^2\theta\|\Tre_{\cos\theta}\sigma'\|_{L_2}^2\;.$
\end{proposition}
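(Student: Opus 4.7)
My plan is to leverage monotonicity of $\sigma$ to reduce the initialization task to (agnostically) learning a Gaussian halfspace, and then to convert the resulting angle guarantee into the claimed spectral tail bound via \Cref{prop:error-bound-smoothing-tails}. First, I would pick a threshold $t\in\R$ by locating a significant ``step'' in the staircase approximation of $\sigma$ guaranteed by the monotone $(B,L)$-regular structure. Define transformed labels $\ty\eqdef 2\cdot\1\{y\ge t\}-1$; because $\sigma$ is monotone, the clean classification rule $\1\{\sigma(\wstar\cdot\x)\ge t\}$ coincides with $\1\{\wstar\cdot\x\ge s_t\}$ for $s_t\eqdef \inf\{z\in\R:\sigma(z)\ge t\}$. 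Hence $\ty$ is a noisy label for the halfspace $\sign(\wstar\cdot\x-s_t)$, whose effective noise rate is exactly the probability that $\ty$ disagrees with the clean halfspace label.

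Next, I would bound this induced disagreement rate in terms of $\opt$. A mismatch requires $|y-\sigma(\wstar\cdot\x)|\ge |\sigma(\wstar\cdot\x)-t|$, so choosing $t$ at a location where $\sigma$ has a clean jump (enabled by the staircase approximation together with monotonicity) allows one to charge the bad event to the adversarial noise budget and obtain a disagreement rate of $O(\opt)$ up to a factor depending only on $(B,L)$ and not on $\eps$. With this halfspace instance in hand, I would invoke an off-the-shelf constant-factor agnostic halfspace learner over the Gaussian, which uses $\widetilde O(d/\eps^2)$ samples and $\poly(d,N)$ time to produce a unit vector $\w^{(0)}$ with $\theta(\w^{(0)},\wstar)$ below any prescribed constant $\theta_0$, up to an additive $O(\sqrt{\opt})$ term. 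Choosing $\theta_0$ inside the regime where \Cref{prop:error-bound-smoothing-tails} is valid, any $\w'$ with $\theta=\theta(\w',\wstar)\le \theta(\w^{(0)},\wstar)$ then inherits
\[
\|\Pm{>1/\theta^2}\sigma\|_{L_2}^2 \;\lesssim\; \sin^2\theta\,\|\Tre_{\cos\theta}\sigma'\|_{L_2}^2,
\]
which is exactly the claim of the proposition.

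The hardest step will be the selection of the threshold $t$. It must simultaneously ensure that (i) the induced halfspace problem has noise rate proportional to $\opt$ without inflation by unknown quantities such as $1/\eps$, (ii) the optimal halfspace $\sign(\wstar\cdot\x-s_t)$ is well-separated enough that the agnostic halfspace learner's constant-factor guarantee translates into a quantitative upper bound on $\theta(\w^{(0)},\wstar)$, and (iii) that resulting angle falls into the range where \Cref{prop:error-bound-smoothing-tails} applies. Calibrating the staircase granularity against the halfspace learner's angular accuracy is the key technical issue, and it is precisely the place where monotonicity of $\sigma$ is essentially used: $\sign(\sigma(\wstar\cdot\x)-t)$ is a halfspace only because $\sigma$ is monotone. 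The other components --- the halfspace learner and the tail bound --- are used essentially as black boxes.
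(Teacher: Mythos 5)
Your high-level route is the same as the paper's: threshold the labels, use monotonicity to turn $\sign(\sigma(\wstar\cdot\x)-t)$ into a halfspace-learning instance, run an agnostic halfspace learner, and finish with \Cref{prop:error-bound-smoothing-tails}. But there is a genuine quantitative gap in the middle. The target angle is not ``any prescribed constant'': \Cref{prop:error-bound-smoothing-tails} requires $\cos^2\theta>1-C/M^2$ with $M\asymp\sqrt{\log(B/\eps)}$, so you need $\theta(\w^{(0)},\wstar)\lesssim 1/\sqrt{\log(B/\eps)}$, a quantity that vanishes as $\eps\to 0$. A constant-factor agnostic halfspace learner applied to a threshold whose induced noise rate is ``$O(\opt)$'' gives at best an angle of order $\opt$ (or your additive $O(\sqrt{\opt})$), which is useless whenever $\opt$ is a constant exceeding $1/\sqrt{\log(B/\eps)}$. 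Moreover, your plan to place $t$ at a ``clean jump'' of $\sigma$ fails for flat monotone activations: a generic $\sigma\in\mathcal{H}(B,L)$ need not have any jump of size much larger than $\sqrt{\eps}$, and charging the mismatch event via $|y-\sigma(\wstar\cdot\x)|\geq|\sigma(\wstar\cdot\x)-t|$ at a jump of size $\delta$ only yields a disagreement rate $\opt/\delta^2$, which is far too large.

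The missing idea (see \Cref{prop:initial2} and \Cref{app:lem:label-transform}) is to \emph{manufacture} the jump deep in the Gaussian tail rather than to find one: append to the staircase approximation an artificial step of height $A_m=((\opt+\eps)Me^{M^2/2})^{1/2}$ at $z=M$ and set the label threshold $t'$ halfway up this step. Because the step lives on a set of Gaussian mass about $e^{-M^2/2}/M$, adding it costs only $O(\opt+\eps)$ in $L_2^2$, yet a Markov-type argument gives a disagreement rate of at most $(\opt+\eps)/A_m^2 = e^{-M^2/2}/M$, i.e., at most a constant times the bias of the halfspace $\1\{\wstar\cdot\x\geq M\}$. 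This is exactly the regime in which the biased-halfspace learner of \citet{DKTZ22b} yields $\theta(\w^{(0)},\wstar)\,e^{-M^2/2}\leq C\, e^{-M^2/2}/M$, hence $\theta(\w^{(0)},\wstar)\leq C/M$ \emph{independently of $\opt$}. Without this tail placement and the calibration of $A_m$ against the halfspace's bias, your argument does not reach the required angle, and the reduction to \Cref{prop:error-bound-smoothing-tails} does not go through.
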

The proof of \Cref{thm:main-monotone} combines \Cref{thm:main-general} and \Cref{prop:initialization}, and is provided in the appendix.

If $\sigma$ satisfies $\Ez[\sigma^{2+\zeta}(z)] \leq B_{\sigma}$ for $\zeta > 0$, then $\sigma$ is an $\eps$-Extended {$((B_\sigma/\eps)^{1/\zeta},(B_\sigma/\eps)^{4/\zeta}/\eps^2)$}-Regular activation (see \Cref{app:lem:activation-truncation-2+zeta}). We thus have the following immediate corollary.
\begin{corollary}[Learning Monotone Activations With Bounded $(2+\zeta)$  Moments]\label{cor:main-monotone-4mom}
        Let $\eps>0,$ $ \zeta > 0$, and let $\sigma$ be a monotone activation such that $\Ez[\sigma^{2+\zeta}(z)] \leq B_{\sigma}$. 
    Then, \Cref{alg:GD-general-activation} 
    draws $N = \tilde{\Theta}(d(B_\sigma/\eps)^{2/\zeta}\log(B_{\sigma}/\eps)/{\eps} + d/\eps^2)$ samples, 
    runs in $\poly(d, N)$ time, 
    and outputs $\wh{\w}$ such that with probability at least $2/3$, 
     $\calL(\wh{\w}) \leq C\opt + \eps$, where $C$ is an absolute constant.
    \end{corollary}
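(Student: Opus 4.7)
My plan is to reduce \Cref{cor:main-monotone-4mom} to \Cref{thm:main-monotone} by approximating $\sigma$ with an appropriately chosen monotone $(B,L)$-Regular activation $\sigma_2$ and transferring the error guarantees between the two. The main tool is the cited approximation lemma asserting that any $\sigma$ with $\Ez[\sigma^{2+\zeta}(z)]\leq B_\sigma$ is an $\eps$-Extended $(B,L)$-Regular activation for $B=(B_\sigma/\eps)^{1/\zeta}$ and $L=(B_\sigma/\eps)^{4/\zeta}/\eps^2$, yielding a monotone $\sigma_2\in\mathcal{H}(B,L)$ with $\|\sigma-\sigma_2\|_{L_2}^2\leq\eps$. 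Morally, $\sigma_2$ is built by first truncating $\sigma$ to $[-B,B]$ (preserving monotonicity and giving the $\|\sigma_2\|_{L_\infty}$ bound) and then mollifying with a narrow Gaussian kernel (also preserving monotonicity and giving the $\|\sigma_2'\|_{L_2}$ bound). Truncation contributes at most $\Ez[\sigma^2(z)\mathbf{1}\{|\sigma(z)|>B\}]\leq B_\sigma/B^{\zeta}=\eps$ to the $L_2^2$ error by a moment tail estimate.

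Once $\sigma_2$ is available, I would run \Cref{alg:GD-general-activation} using $\sigma_2$ in place of $\sigma$. Let $\calL_\sigma$ and $\calL_{\sigma_2}$ denote the square losses for the two activations, with respective optima $\opt$ and $\opt_2$. The triangle inequality in $L_2(\D)$ gives $|\sqrt{\calL_{\sigma_2}(\w)}-\sqrt{\calL_\sigma(\w)}|\leq\|\sigma-\sigma_2\|_{L_2}\leq\sqrt{\eps}$ for every $\w$; evaluating at the minimizer of $\calL_\sigma$ yields $\opt_2\leq(\sqrt{\opt}+\sqrt{\eps})^2\leq 2\opt+2\eps$. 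Applying \Cref{thm:main-monotone} to $\sigma_2$ then produces, with probability at least $2/3$, a vector $\wh{\w}$ with $\calL_{\sigma_2}(\wh{\w})\leq C'\opt_2+\eps'$ for an absolute constant $C'$. Using the triangle inequality once more gives $\calL_\sigma(\wh{\w})\leq 2\calL_{\sigma_2}(\wh{\w})+2\|\sigma-\sigma_2\|_{L_2}^2\leq 4C'\opt+O(\eps+\eps')$. Setting $\eps'=\eps$ and rescaling $\eps$ by a constant produces the stated bound $\calL_\sigma(\wh{\w})\leq C\opt+\eps$.

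For the sample complexity, substituting $B=(B_\sigma/\eps)^{1/\zeta}$ and $L=(B_\sigma/\eps)^{4/\zeta}/\eps^2$ into the bound $\wt{\Theta}(dB^2\log(L/\eps)/\eps+d/\eps^2)$ from \Cref{thm:main-monotone} produces $\wt{\Theta}(d(B_\sigma/\eps)^{2/\zeta}\log(B_\sigma/\eps)/\eps+d/\eps^2)$, which matches the claim up to logarithmic factors absorbed by $\wt{\Theta}$. The runtime remains $\poly(d,N)$ because \Cref{alg:GD-general-activation} is polynomial-time for any $(B,L)$-Regular input activation, and the parameters $B,L$ enter only through the sample complexity and not through the per-iteration cost.

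The main obstacle is verifying the approximation lemma itself, i.e., constructing a monotone $\sigma_2$ that is simultaneously $L_\infty$-bounded, has $L_2$-bounded derivative, and is $L_2$-close to $\sigma$, with the correct dependence on $B_\sigma$ and $\eps$. The $\|\sigma_2\|_{L_\infty}$ bound follows from a Markov-type argument using the $(2+\zeta)$-moment hypothesis, and monotonicity is preserved by clipping. The $\|\sigma_2'\|_{L_2}$ control is more delicate, because a monotone bounded function can have an arbitrarily large derivative on small intervals; the remedy is to convolve with a narrow Gaussian kernel, which preserves monotonicity and $L_\infty$-boundedness while introducing a controllable $L_2$-error and yielding a Lipschitz constant scaling polynomially with $B/\delta$ (where $\delta$ is the mollification width). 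Balancing the truncation radius, the mollification width, and the induced $L_2$-error is what yields the stated $(B,L)$ parameters; after that, the remaining steps are standard triangle-inequality manipulations and a direct invocation of \Cref{thm:main-monotone}.
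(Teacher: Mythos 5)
Your proposal is correct and follows essentially the same route as the paper: the corollary is obtained by combining \Cref{thm:main-monotone} with \Cref{app:lem:activation-truncation-2+zeta}, whose proof truncates $\sigma$ at level $D=(B_\sigma/4\eps)^{1/\zeta}$ via a Markov tail bound (your $L_\infty$ step) and then smooths with the \OU semigroup (your Gaussian mollification) to obtain the derivative bound, before transferring the guarantee back to $\sigma$ exactly as in your triangle-inequality argument. The one step you leave as ``controllable''---the quantitative $L_2$ error of mollifying a bounded monotone function---is precisely where monotonicity enters: the paper writes $\bar\sigma(z)=\int_0^{2D}\1\{\bar\sigma(z)\geq t\}\,\d t$, observes that each level set of a monotone function is a halfspace, and invokes the Gaussian disagreement bound for halfspaces to get $\|\bar\sigma-\Tre_\rho\bar\sigma\|_{L_2}^2\lesssim D^2(1-\rho^2)^{1/2}$, after which balancing $\rho$ yields the stated $(B,L)$ parameters.
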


To prove \Cref{prop:initialization}, we combine two main technical pieces: (1) proving that there exists a threshold $\bar{\theta}$ such that for any $\theta\leq \bar{\theta}$,  $\|\Pm{>1/\theta^2}\sigma\|_{L_2}^2\lesssim \sin^2\theta\|\Tre_{\cos\theta}\sigma'\|_{L_2}^2$; and (2) proving that there exists an efficient algorithm that finds a vector $\w^{(0)}$ such that $\theta(\w^{(0)},\w^*)\leq \bar{\theta}$.
\Cref{subsec:bound-high-order-coefficients} addresses (1), with main technical result stated in \Cref{prop:error-bound-smoothing-tails}. 
To prove this result, we approximate the considered monotone activations $\sigma$ by sequences of ``monotone staircase'' functions. 
\begin{definition}[Monotone Staircase Functions]\label{def:staircase}
Let 
$\phi(z;t) \eqdef \1\{z\geq t\}$ and let $m \in \Z_+$, $M > 0$. 
The class of monotone staircase functions (of $M$-bounded support) 
are defined as
$   \calF\eqdef \{\Phi_m:\R\to\R: \Phi_m(z)=\sum_{i=1}^m A_i\phi(z;t_i) + A_0: A_0\in\R; A_i > 0, |t_i|\leq M, \forall i\in[m]; m<\infty \}.$
\end{definition}

If $\Phi_k$ converges to $\sigma$ pointwise, we argue that $\|\Pm{>1/\theta^2}\sigma\|_{L_2}^2\lesssim 2\|\Phi_k-\Tre_{\cos\theta} \Phi_k\|_{L_2}^2+\theta^2\|\Tre_{\cos\theta} \Phi_k'\|_{L_2}^2$. We further show that $\Tre_{\cos\theta} \Phi_k'\to\Tre_{\cos\theta}\sigma'$, therefore, 
it remains to show that $\|\Phi_k-\Tre_{\cos\theta} \Phi_k\|_{L_2}^2\lesssim \theta^2\|\Tre_{\cos\theta} \Phi_k'\|_{L_2}^2$.
\Cref{prop:smoothing-error-bound} in \Cref{subsection:bounding-aug-error} proves the claim that when $\rho$ is not too small, $\|\Phi-\Tr \Phi\|_{L_2}^2\lesssim \theta^2\|\Tr \Phi'\|_{L_2}^2$, for any $\Phi(z)$ that is a {\itshape monotonic staircase function}. 
These staircase functions constitute a dense subset of the monotone function class and have a simple and easy-to-analyze form, therefore they serve well for our purpose. In \Cref{def:staircase}, $M$ is chosen to be a bound on the support of $\sigma'$, which is always finite by \Cref{app:claim:bounded-support-sigma'}.
In \Cref{subsec:initialization-for-monotone}, we prove (2) by providing an initialization algorithm. 
Finally, combining (1) and (2), we prove \Cref{prop:initialization}.

\subsection{Bounding Higher Order Hermite Coefficients of Monotone Activations}\label{subsec:bound-high-order-coefficients}
The main result of this subsection is the following:
\begin{proposition}[From Hermite Tails to \OU Semigroup]\label{prop:error-bound-smoothing-tails}
    Let $\sigma\in L_2(\mathcal N)$ be a monotone activation, $M$ be the upper bound for the support of $\sigma'(z)$\footnote{In \Cref{app:claim:bounded-support-sigma'}, we show that $\forall\sigma\in\mathcal{H}(B,L)$, the support of $\sigma'(z)$ can be truncated at some $M<+\infty$ w.l.o.g.}. 
For any $\theta\in[0,\pi]$ such that $1 - C/M^2<\cos^2\theta$ with $C>0$ an absolute constant, it holds $\|\Pm{>1/\theta^2}\sigma\|_{L_2}^2\lesssim \sin^2\theta\|\Tre_{\cos\theta}\sigma'\|_{L_2}^2$. 
\end{proposition}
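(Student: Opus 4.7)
The plan is to reduce the estimate to the class $\calF$ of monotone staircase functions from \Cref{def:staircase}, for which the required bound $\|\Phi - \Tre_{\cos\theta}\Phi\|_{L_2}^2 \lesssim \theta^2\,\|\Tre_{\cos\theta}\Phi'\|_{L_2}^2$ is established as \Cref{prop:smoothing-error-bound}. I split the proof into three pieces: (i) a spectral step converting Hermite tails into an OU-smoothing defect; (ii) a staircase-approximation step transferring the bound to $\sigma$; and (iii) a semigroup-continuity step identifying the limit of $\Tre_{\cos\theta}\Phi_k'$ with $\Tre_{\cos\theta}\sigma'$.

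For Step (i), I would use that $\Tre_\rho$ acts diagonally on the Hermite basis with eigenvalue $\rho^i$, so
\begin{equation*}
\|f - \Tre_{\cos\theta}f\|_{L_2}^2 \;=\; \sum_{i\geq 0}(1-\cos^i\theta)^2\,\hat f(i)^2.
\end{equation*}
The hypothesis $1 - C/M^2 < \cos^2\theta$ forces $\theta$ to be small, so $\ln\cos\theta = -\theta^2/2 + O(\theta^4)$; hence for every integer $i > 1/\theta^2$ we have $\cos^i\theta \leq e^{-1/2 + o(1)}$, which gives $(1 - \cos^i\theta)^2 \geq c_0$ for an absolute constant $c_0 > 0$. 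Taking $f = \sigma$ and retaining only indices with $i > 1/\theta^2$ yields
\begin{equation*}
\|\Pm{>1/\theta^2}\sigma\|_{L_2}^2 \;\lesssim\; \|\sigma - \Tre_{\cos\theta}\sigma\|_{L_2}^2,
\end{equation*}
so it remains to bound the right-hand side by $\sin^2\theta\,\|\Tre_{\cos\theta}\sigma'\|_{L_2}^2$.

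For Step (ii), I would select a sequence of monotone staircase functions $\Phi_k\in\calF$ with jumps in $[-M,M]$ (such an $M$ exists by \Cref{app:claim:bounded-support-sigma'}) converging to $\sigma$ in $L_2$; a concrete choice is a Riemann-sum approximation of the distribution function of $\sigma'$. Because $\Tre_{\cos\theta}$ is an $L_2$-contraction,
\begin{equation*}
\|\sigma - \Tre_{\cos\theta}\sigma\|_{L_2} \;\leq\; 2\|\sigma-\Phi_k\|_{L_2} + \|\Phi_k - \Tre_{\cos\theta}\Phi_k\|_{L_2}.
\end{equation*}
The first term vanishes as $k\to\infty$, and the second is controlled by \Cref{prop:smoothing-error-bound}: $\|\Phi_k - \Tre_{\cos\theta}\Phi_k\|_{L_2}^2 \lesssim \theta^2 \|\Tre_{\cos\theta}\Phi_k'\|_{L_2}^2$. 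Passing to the limit using Step (iii), and substituting $\theta^2 \asymp \sin^2\theta$ (which holds in the small-$\theta$ regime forced by the hypothesis), yields $\|\sigma - \Tre_{\cos\theta}\sigma\|_{L_2}^2 \lesssim \sin^2\theta\,\|\Tre_{\cos\theta}\sigma'\|_{L_2}^2$, and chaining with Step (i) completes the proof.

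The main obstacle is Step (iii): the staircase derivative $\Phi_k'$ is only a positive atomic measure, so $\Tre_{\cos\theta}\Phi_k'$ and $\Tre_{\cos\theta}\sigma'$ must be interpreted via Stein's identity, namely
\begin{equation*}
\Tre_\rho \Phi'(z) \;:=\; \frac{1}{\sqrt{1-\rho^2}}\,\E_{g\sim\normal(0,1)}\bigl[g\,\Phi\bigl(\rho z + \sqrt{1-\rho^2}\,g\bigr)\bigr],
\end{equation*}
which makes sense for any $\Phi\in L_2(\normal)$ and agrees with the usual definition when $\Phi'$ exists. With this definition, $L_2$-convergence $\Phi_k\to\sigma$ directly implies $L_2$-convergence $\Tre_{\cos\theta}\Phi_k' \to \Tre_{\cos\theta}\sigma'$ via Cauchy--Schwarz against the Gaussian weight $g$. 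The hypothesis $1 - C/M^2 < \cos^2\theta$ is exactly what keeps the prefactor $1/\sqrt{1-\cos^2\theta}$ tame relative to the truncation scale $M$, and is therefore essential both in Step (i) (to force $\theta$ small so the Hermite cutoff is effective) and in Step (iii) (to ensure quantitative control of the Gaussian smoothing of the staircase derivative).
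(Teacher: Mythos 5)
Your proof is correct, and it reaches the paper's conclusion through the same two workhorses the paper uses --- the staircase bound $\|\Phi-\Tre_{\cos\theta}\Phi\|_{L_2}^2\lesssim(1-\cos^2\theta)\|\Tre_{\cos\theta}\Phi'\|_{L_2}^2$ (\Cref{prop:smoothing-error-bound}) and the $L_2$-convergence of $\Tre_{\cos\theta}\Phi_k'$ to $\Tre_{\cos\theta}\sigma'$ via the Stein/Cauchy--Schwarz representation (which is verbatim the paper's \Cref{app:lem:norm-derivative-sequence}) --- but your first reduction differs from the paper's. The paper splits $\Pm{>1/\theta^2}\sigma$ directly into $\Pm{>1/\theta^2}(\sigma-\Phi_k)$, $\Pm{>1/\theta^2}(\Phi_k-\Tre_{\cos\theta}\Phi_k)$, and $\Pm{>1/\theta^2}\Tre_{\cos\theta}\Phi_k$, handling the last piece with the tail--derivative inequality $\|\Pm{>m}f\|_{L_2}^2\leq(1/m)\|f'\|_{L_2}^2$. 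You instead first pass from the Hermite tail to the smoothing defect of $\sigma$ itself, using the spectral identity $\|\sigma-\Tre_{\cos\theta}\sigma\|_{L_2}^2=\sum_i(1-\cos^i\theta)^2\hat\sigma(i)^2$ and the eigenvalue gap $(1-\cos^i\theta)^2\geq(1-e^{-1/2})^2$ for $i>1/\theta^2$, and only then approximate by staircases. Your gap bound is valid for all $\theta\in(0,\pi/2]$ (via $\cos\theta\leq e^{-\theta^2/2}$), not just small $\theta$, so the hypothesis $1-C/M^2<\cos^2\theta$ is not needed in your Step (i) --- it is needed solely to invoke \Cref{prop:smoothing-error-bound}; your closing remark slightly overstates its role. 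Net effect: your route trades the paper's $\|\Pm{>m}f\|_{L_2}^2\leq(1/m)\|f'\|_{L_2}^2$ step for the eigenvalue-gap step, and is marginally cleaner in that the quantity being approximated by staircases ($\|\sigma-\Tre_{\cos\theta}\sigma\|_{L_2}$) has an immediate triangle-inequality/contraction reduction; the paper's version, on the other hand, exposes the intermediate bound $\|\Pm{>1/\theta^2}\sigma\|_{L_2}^2\lesssim\|\Phi-\Tre_{\cos\theta}\Phi\|_{L_2}^2+\theta^2\|\Tre_{\cos\theta}\Phi'\|_{L_2}^2$, which is the form quoted in the proof sketch of \Cref{prop:initialization}. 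Both are complete proofs.
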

\begin{proof}[Proof Sketch of \Cref{prop:error-bound-smoothing-tails}]
Let $\Phi_k$ be a sequence of monotone staircase  functions (\Cref{def:staircase}) that converges to $\sigma$ with respect to $L_2$; this is true because piecewise constant functions are dense over compact sets with respect to the $L_2$ norm (in this case the compact set is $[-M,M]$). For $\rho^2 \geq 1 - C/M^2$, where $M$ is the upper bound on the support of $\sigma'$ and $\Phi_k'$, by Young's inequality we have $\|\Pm{>1/\theta^2}\sigma\|_{L_2}^2\leq 2\|\Pm{>1/\theta^2}(\sigma-\Phi_k)\|_{L_2}^2+4\|\Pm{>1/\theta^2}(\Phi_k-\Tr \Phi_k)\|_{L_2}^2+4\|\Pm{>1/\theta^2}\Tr \Phi_k\|_{L_2}^2$.
Observe that 
$\|\Pm{>m}f\|_{L_2}^2= \littlesum_{i > m} \hat{f}(i)^2\leq\|f\|_{L_2}^2.$
    Therefore, $\|\Pm{>1/\theta^2}(\sigma-\Phi_k)\|_{L_2}^2\leq \|\sigma - \Phi_k\|_{L_2}^2\to 0$. In addition, note that for any $f,f'\in L_2(\calN)$, it holds $\|\Pm{>m} f\|_2^2 \leq \sum_{i > m} (i/m) \hat{f}(i)^2\leq (1/m)\| f'\|_{L_2}^2, $
thus $\|\Pm{>1/\theta^2}\Tr\Phi_k\|_{L_2}^2\leq \theta^2\|(\Tr\Phi_k)'\|_{L_2}^2$. Further, by \Cref{fct:semi-group}, we have $\|(\Tr\Phi_k)'\|_{L_2}^2\leq \|\Tr\Phi_k'\|_{L_2}^2$ since $\rho < 1$, thus,   $\|\Pm{>1/\theta^2}\sigma\|_{L_2}^2\leq 4\|\Phi-\Tr \Phi\|_{L_2}^2+4\theta^2\|\Tr \Phi'\|_{L_2}^2$ when $k\to \infty$.  
Next, by \Cref{prop:smoothing-error-bound}, we conclude that $\|\Phi_k-\Tr\Phi_k\|_{L_2}^2\lesssim (1-\rho^2)\|\Tr \Phi_k'\|_{L_2} $, and, therefore, we have that $\|\Pm{>1/\theta^2}\sigma\|_{L_2}^2\leq 4((1-\rho^2)+\theta^2)\|\Tr \Phi_k'\|_{L_2}^2$.
  In \Cref{app:lem:norm-derivative-sequence}, we
show that the sequence of smoothed derivatives $\Tr \Phi_k'$ also converges to $\sigma'$, therefore it holds $\|\Tr \Phi_k'\|_{L_2}^2\to\|\Tr \sigma'\|_{L_2}^2$. Letting $\rho = \cos\theta$ completes the proof.  
\end{proof}

\subsection{Bounding the Augmentation Error}\label{subsection:bounding-aug-error}

Our main technical result provides an upper bound on the smoothing error of piecewise staircase functions using the $L_2(\calN)$ norm of the smoothed derivative, as stated below. 
\begin{proposition}\label{prop:smoothing-error-bound}
    Let $\Phi\in\calF$. For any $\rho\in(0,1)$ such that $\rho^2\geq 1 - C/M^2$ where $C<M^2$ is an absolute constant, we have $\|\Tr\Phi-\Phi\|_{L_2}^2\lesssim (1 - \rho^2)\|\Tr\Phi'\|_{L_2}^2$.
\end{proposition}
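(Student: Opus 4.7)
The plan is to reduce the statement to an entrywise estimate on a pair of Gram-type matrices, exploiting the non-negativity of the coefficients $A_i$ in the staircase decomposition. Writing $\Phi = A_0 + \sum_{i=1}^m A_i \phi(\cdot; t_i)$, bilinearity of inner products yields
\[
\|\Tr\Phi - \Phi\|_{L_2}^2 = \sum_{i,j} A_iA_j\, P_{ij}, \qquad \|\Tr\Phi'\|_{L_2}^2 = \sum_{i,j} A_iA_j\, R_{ij},
\]
with $P_{ij}\eqdef \langle \Tr\phi_{t_i} - \phi_{t_i},\, \Tr\phi_{t_j} - \phi_{t_j}\rangle$ and $R_{ij}\eqdef \langle \Tr\delta_{t_i},\, \Tr\delta_{t_j}\rangle$. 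Since $A_iA_j\geq 0$ and each $R_{ij}\geq 0$ (an inner product of positive Gaussian densities), the proposition reduces to the entrywise inequality $|P_{ij}|\lesssim (1-\rho^2)\,R_{ij}$ uniformly over $|t_i|,|t_j|\leq M$, which one then sums.

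To set up the entrywise bound, I would use self-adjointness of $\Tr$ together with the semigroup identity $\Tre_\alpha\Tre_\beta=\Tre_{\alpha\beta}$ to express both quantities in terms of a single one-parameter function $g_{ij}(u)\eqdef \langle\phi_{t_i}, \Tre_u \phi_{t_j}\rangle$, the bivariate Gaussian orthant probability at correlation $u$. This gives the clean formulas $P_{ij} = g_{ij}(\rho^2) + g_{ij}(1) - 2 g_{ij}(\rho)$ (a second-order finite difference) and, via Mehler,
\[
g_{ij}'(u) = \varphi(t_i)\,\phi_{1-u^2}\!\bigl(t_j - u\,t_i\bigr), \qquad R_{ij} = g_{ij}'(\rho^2),
\]
where $\phi_s$ denotes the Gaussian density of variance $s$. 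The next step is the representation $P_{ij} = \int_\rho^1 g_{ij}'(u)\,du - \int_{\rho^2}^\rho g_{ij}'(u)\,du$: the two integration intervals both abut $\rho$ with nearly equal lengths $1-\rho$ and $\rho(1-\rho)$, and the integrand has an integrable $(1-u^2)^{-1/2}$ singularity at $u=1$ contributing size $\asymp \sqrt{1-\rho^2}$ through the elementary estimate $\int_\rho^1 du/\sqrt{1-u^2} \asymp \sqrt{1-\rho^2}$. Combined with the comparison $g_{ij}'(u) \asymp g_{ij}'(\rho^2)$ uniformly on $[\rho^2,1]$ (argued next), this yields $|P_{ij}| \lesssim \sqrt{1-\rho^2}\cdot \sqrt{1-\rho^4}\,R_{ij} \asymp (1-\rho^2)\,R_{ij}$, completing the reduction after summation.

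The delicate step, which I expect to be the main obstacle, is controlling $g_{ij}'(u)/g_{ij}'(\rho^2)$ uniformly on $[\rho^2,1]$ by an absolute constant, independent of $M, t_i, t_j$. The Gaussian factor $\phi_{1-u^2}(t_j - ut_i)$ depends on $u$ through both its variance $1-u^2$ and its mean offset $t_j - ut_i$; the ratio between its values at $u$ and at $\rho^2$ takes the form $\sqrt{(1-\rho^4)/(1-u^2)}\cdot\exp(O(1)\cdot [(t_j-ut_i)^2/(1-u^2) - (t_j-\rho^2 t_i)^2/(1-\rho^4)])$, and each of these could in principle blow up. The hypothesis $\rho^2 \geq 1 - C/M^2$ is tuned precisely to keep this exponential bounded: it forces $|t_j - u t_i| \lesssim M(1-u) \lesssim M\cdot C/M^2 = C/M$ while $1-u^2 \gtrsim 1-\rho^2 \asymp C/M^2$, making the ratio $(t_j-u t_i)^2/(1-u^2) = O(1)$ uniformly over $u\in[\rho^2,1]$ and all $|t_i|,|t_j|\leq M$. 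The care required is greatest in the diagonal case $t_i = t_j$, where the exponential cannot help and the cancellation between the two integrals (both of the same order $\sqrt{1-\rho^2}$) must be tracked rather than bounded by the triangle inequality; the resulting constant is an absolute one that depends only on the constant $C$ in the hypothesis.
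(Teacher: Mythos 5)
Your strategy is sound and genuinely different from the paper's. The paper proves this proposition by introducing a ``centered augmentation'' $\Tr\Phi(z/\rho)$ and splitting $\|\Tr\Phi-\Phi\|_{L_2}^2$ into three pieces ($\Tr\Phi\to\Tre_{\rho_1}\Phi\to\Tre_{\rho_1}\Phi(z/\rho_1)\to\Phi$, for an intermediate $\rho_1$ slightly above $\rho$), controlling each via the separate estimates of \Cref{lem:center-aug-all-results}; the whole point of that detour is to avoid manipulating the Gaussian orthant probability $\Tr\phi(z;t)$ directly. Your entrywise reduction sidesteps this: since $A_iA_j\geq 0$ and $R_{ij}=g_{ij}'(\rho^2)>0$, it suffices to prove $|P_{ij}|\lesssim(1-\rho^2)R_{ij}$ pairwise, and the identities $P_{ij}=g_{ij}(1)-2g_{ij}(\rho)+g_{ij}(\rho^2)$ and $R_{ij}=g_{ij}'(\rho^2)$ (both consistent with the paper's explicit formula for $\|\Tr\Phi'\|_{L_2}^2$) turn the claim into a one-variable estimate on the derivative of the bivariate orthant probability. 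This is shorter and more transparent than the paper's route. Note also that the cancellation you worry about at the end is not needed: once the exponential factor is controlled, each of $\int_\rho^1 g_{ij}'$ and $\int_{\rho^2}^{\rho} g_{ij}'$ is \emph{separately} $O((1-\rho^2)R_{ij})$, because each integration interval has $\int (1-u^2)^{-1/2}\,du\lesssim\sqrt{1-\rho^2}$; the triangle inequality suffices even on the diagonal.

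The one step whose justification is wrong as written is the uniform control of the exponential factor. The hypothesis $\rho^2\geq 1-C/M^2$ does \emph{not} force $|t_j-ut_i|\lesssim M(1-u)$: taking $t_i=M$, $t_j=-M$ gives $|t_j-ut_i|\approx 2M$. The correct argument splits the exponent as
\[
\frac{t_i^2-2u\,t_it_j+t_j^2}{2(1-u^2)}=\frac{(t_i-t_j)^2}{2(1-u^2)}+\frac{t_it_j}{1+u}.
\]
The first term is nondecreasing in $u$, so for $u\in[\rho^2,1]$ it only decreases $g_{ij}'(u)$ relative to $g_{ij}'(\rho^2)$ --- exactly the direction needed for the upper bound (this is also what saves the far-off-diagonal pairs). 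The second term varies over $u\in[\rho^2,1]$ by at most $|t_it_j|(1-\rho^2)\leq M^2\cdot C/M^2=C$, contributing only a factor $e^{C}$. With this fix, the rest of your computation ($\arccos\rho\asymp\sqrt{1-\rho^2}$ times $\sqrt{1-\rho^4}\,R_{ij}$) goes through and yields the proposition.
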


Proceeding to the proof of \Cref{prop:smoothing-error-bound}, technical difficulties arise when we try to relate $\|\Tr\Phi(z)-\Phi(z)\|_{L_2}^2$ with $\|\Tr\Phi'(z)\|_{L_2}^2$. The main obstacle is that it is hard to analyze $\Tr\phi(z;t)-\phi(z;t)$, since $\Tr\phi(z;t)= \pr_{u\sim\calN}[u\geq (t - \rho z)/(1- \rho^2)^{1/2}],$ 
and the probability term does not have a closed form. Our workaround is to introduce a new type of `centered augmentation (smoothing)' operator $\Tr\Phi(z/\rho)$ that takes a more simple and easy-to-analyze form, and then translate the upper bound on the centered augmentation error back to the upper bound on the standard augmentation error. We show that $\Delta\eqdef \|\Tr\Phi(z)-\Phi(z)\|_{L_2}^2$ is bounded by the following three terms $\Delta\lesssim\Delta_1 + \Delta_2 + \Delta_3$, where $\Delta_1\eqdef \|\Tr\Phi(z) - \Tre_{\rho_1}\Phi(z)\|_{L_2}^2$, $\Delta_2\eqdef \|\Tre_{\rho_1}\Phi(z) - \Tre_{\rho_1}\Phi(z/\rho_1)\|_{L_2}^2$ and $\Delta_3\eqdef \|\Tre_{\rho_1}\Phi(z/\rho_1) - \Phi(z)\|_{L_2}^2$,
with $\rho_1\in(0,1)$ being a carefully chosen parameter that is slightly larger than $\rho$. Taking advantage of the nice analytic form of $\Tr\Phi(z/\rho)$, we show that all these three terms can be bounded by $\|\Tr\Phi'(z)\|_{L_2}^2$, using the properties of $\Tr\Phi(z/\rho)$ provided in \Cref{lem:center-aug-all-results}.

We define the centered augmentation as $\Tr\sigma(z/\rho) = \E_{u\sim\calN}[\sigma(z + (\sqrt{1 - \rho^2}/\rho) u)].$
We show that the $L_2^2$ error between the centered augmentation $\Tr\Phi(z/\rho)$ and $\Phi(z)$, $\Tr\Phi(z)$ are well controlled, as summarized in the following lemma (see \Cref{app:subsection:bounding-aug-error} for complete statements):
\begin{lemma}\label{lem:center-aug-all-results}
    Let $\Phi\in\calF$, $C\in(0,M^2/2]$. For any $\rho^2\geq 1-C/M^2$, it holds:
    \begin{gather}
        \|\Tr\Phi(z/\rho) - \Phi(z)\|_{L_2}^2\leq 4((1 - \rho^2)/\rho^2) \|\Tr\Phi'(z/\rho)\|_{L_2}^2\;; \label{eq:lem:upper-bound-centered-smoothed-error}\\
        \|\Tr\Phi(z) - \Tr\Phi(z/\rho)\|_{L_2}^2 \leq C'(1 - \rho^2)(\|\Tr\Phi'(z/\rho)\|_{L_2}^2 + \|\Tr\Phi'\|_{L_2}^2)\;; \label{eq:lem:bound-the-differece-TrPhi-TrPhi(z/rho)}\\
    \|\Tre_{\rho_1}\Phi'(z/\rho_1)\|_{L_2}^2\leq 2e^C  \|\Tr\Phi'(z)\|_{L_2}^2,\;\text{where }\rho_1^2 = \rho^2 + {C(1 - \rho^2)}/{M^2}. \label{eq:lem:upper-bound-centered-smoothed-derivative-squared}
    \end{gather}
\end{lemma}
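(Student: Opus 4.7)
The plan is to prove each of the three inequalities separately, exploiting that every $\Phi = \sum_i A_i\phi(\,\cdot\,;t_i) + A_0 \in \calF$ has distributional derivative $\Phi' = \sum_i A_i \delta_{t_i}$, a positive measure supported in $[-M,M]$. Both $\Tr\Phi'(z)$ and the centered variant $\Tr\Phi'(z/\rho)$ are then explicit Gaussian convolutions of $d\Phi$, so the three bounds reduce to concrete Gaussian integral manipulations.

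For \eqref{eq:lem:upper-bound-centered-smoothed-error}, writing $h = \sqrt{1-\rho^2}/\rho$, I would start from the heat-equation identity $\Tr\Phi(z/\rho) - \Phi(z) = \int_0^h s\,\partial_z\tilde T_s\Phi'(z)\,ds$, where $\tilde T_s$ denotes Gaussian convolution at scale $s$ (so $\tilde T_h\Phi = \Tr\Phi(z/\rho)$). Minkowski, combined with a pointwise bound on $\|\partial_z\tilde T_s\Phi'\|_{L_2}$ by a rescaled version of $\|\tilde T_h\Phi'\|_{L_2} = \|\Tr\Phi'(z/\rho)\|_{L_2}$ (which reduces to comparing Gaussian-density $L_2$-norms at scales $s\leq h$ via semigroup monotonicity), extracts the factor $h^2 = (1-\rho^2)/\rho^2$ with the absolute constant $4$ after bookkeeping. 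For \eqref{eq:lem:bound-the-differece-TrPhi-TrPhi(z/rho)}, I would interpolate between $\Tr\Phi(z) = \E_{w\sim\calN(\rho z,1-\rho^2)}[\Phi(w)]$ and $\Tr\Phi(z/\rho) = \E_{w'\sim\calN(z,(1-\rho^2)/\rho^2)}[\Phi(w')]$ through the auxiliary expectation $\E_{w''\sim\calN(\rho z,(1-\rho^2)/\rho^2)}[\Phi(w'')]$, which shares the mean of the first and the variance of the second. The first difference is then a change of variance at fixed mean, handled by an adaptation of \eqref{eq:lem:upper-bound-centered-smoothed-error} at a slightly shifted scale; the second is a mean shift of size $(1-\rho)|z|$, which I rewrite as $\int d\Phi$ over an interval of that length and control by Cauchy--Schwarz plus Gaussian concentration on $z$, producing the second target $\|\Tr\Phi'\|_{L_2}^2$ summand. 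Finally, \eqref{eq:lem:upper-bound-centered-smoothed-derivative-squared} follows by expanding both $L_2^2$-norms as closed-form double sums $\sum_{ij}A_iA_j\,c(\rho)\exp\!\bigl(-(t_i-t_j)^2/(2V)\bigr)$; the choice $\rho_1^2 = \rho^2 + C(1-\rho^2)/M^2$ makes the two variance parameters $V$ and the prefactors differ by factors at most $1+C/M^2$ and $2$ respectively, and combined with $|t_i-t_j|\leq 2M$ the exponent ratio is bounded by $e^{C}$, giving the final constant $2e^C$.

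The main obstacle will be \eqref{eq:lem:bound-the-differece-TrPhi-TrPhi(z/rho)}: the two smoothing operators couple $z$ and the noise differently, so the naive shift $|z|(1/\rho-1)$ between their integrands is unbounded in $z$ and cannot be absorbed by a uniform $(1-\rho^2)$ prefactor. The cure combines Gaussian concentration on $z$ with the support condition $|t_i|\leq M$, which truncates the region where the shift actually matters; the scale mismatch arising in this step is precisely why \eqref{eq:lem:upper-bound-centered-smoothed-derivative-squared} is stated at the inflated $\rho_1$, acting as the comparison glue that splits the intermediate bound into the clean sum of $\|\Tr\Phi'(z/\rho)\|_{L_2}^2$ and $\|\Tr\Phi'\|_{L_2}^2$. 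For this reason, the three parts of the lemma are best proved as a coordinated package rather than in isolation.
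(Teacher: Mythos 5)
Your plan is sound and, for parts \eqref{eq:lem:upper-bound-centered-smoothed-derivative-squared} and (in essence) \eqref{eq:lem:bound-the-differece-TrPhi-TrPhi(z/rho)}, it tracks the paper's own strategy: the paper also proves \eqref{eq:lem:upper-bound-centered-smoothed-derivative-squared} by writing both norms as explicit double sums over $(t_i,t_j)$ and comparing prefactors and exponents term by term via the choice of $\rho_1$ and $|t_i|\leq M$ (note only that the overlap integral is not purely a function of $t_i-t_j$; it carries an extra $\exp(-(t_i^2+t_j^2)/(2(1+\rho^2)))$ factor, which is harmless but should appear in the bookkeeping). Where you genuinely diverge is in the mechanics of \eqref{eq:lem:upper-bound-centered-smoothed-error} and \eqref{eq:lem:bound-the-differece-TrPhi-TrPhi(z/rho)}. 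For \eqref{eq:lem:upper-bound-centered-smoothed-error} the paper computes $\Tr\phi(z/\rho;t)-\phi(z;t)$ as signed Gaussian tail probabilities, bounds their products by $\exp(-\rho^2(t_i-z)^2/(2(1-\rho^2)))$ via standard tail estimates, and integrates---a fully explicit calculation that delivers the constant $4$ exactly. Your heat-flow identity plus Minkowski is an elegant alternative, but two caveats: (i) the Young-type step $\|\partial_z(\Phi'*\gamma_{s^2})\|\lesssim s^{-1}\|\Phi'*\gamma_{s^2/2}\|$ and the scale comparison $\|\tilde T_s\Phi'\|_{L_2}^2\leq (h/s)\|\tilde T_h\Phi'\|_{L_2}^2$ hold cleanly for Lebesgue $L_2$ and rely on all cross terms being nonnegative (true here since $A_i>0$), whereas the norms in the lemma are Gaussian-weighted; since the $t_i$ live in $[-M,M]$ and $sM=O(\sqrt{C})$ the weight only costs $e^{O(C)}$, but this must be argued; (ii) after squaring, your route yields an absolute constant somewhat larger than $4$, which is fine for every downstream use but means the lemma as literally stated would need its constant relaxed. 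For \eqref{eq:lem:bound-the-differece-TrPhi-TrPhi(z/rho)} the paper instead writes $\Phi(z)-\Phi(z/\rho)$ directly as $\sum_i \pm A_i\1\{z\in[\rho t_i,t_i]\}$ and bounds $\Tre_{\rho}$ of each interval indicator piecewise; your three-point interpolation through the Gaussian with mean $\rho z$ and variance $(1-\rho^2)/\rho^2$ reaches the same two summands by a different decomposition, and you correctly identify both the real obstacle (the mean shift $(1-\rho)|z|$ is unbounded in $z$) and its cure (Gaussian concentration localizes $z$ near $[-M,M]$, where $(1-\rho)M^2\leq C$), which is exactly the mechanism behind the paper's treatment of its analogous term. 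In short: no gap in the approach, but the explicit constant in \eqref{eq:lem:upper-bound-centered-smoothed-error} and the Lebesgue-versus-Gaussian norm issue in your semigroup step are the two places where your writeup would need genuine additional care.
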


\begin{proof}[Proof Sketch of \Cref{prop:smoothing-error-bound}] Let $\Delta\eqdef \|\Tr\Phi(z) - \Phi(z)\|_{L_2}^2$. Observe that by adding and subtracting $\Tre_{\rho_1}\Phi$, $\Tre_{\rho_1}\Phi(z/\rho_1)$ in the norm and repeatedly using $(a+b)^2\leq 2a^2 + 2b^2$, we have $\Delta\leq 4\Delta_1 + 4\Delta_2 + 2\Delta_3$ where $\Delta_1\eqdef \|\Tr\Phi(z) - \Tre_{\rho_1}\Phi(z)\|_{L_2}^2$, $\Delta_2\eqdef \|\Tre_{\rho_1}\Phi(z) - \Tre_{\rho_1}\Phi(z/\rho_1)\|_{L_2}^2$ and $\Delta_3\eqdef \|\Tre_{\rho_1}\Phi(z/\rho_1) - \Phi(z)\|_{L_2}^2$.

For $\Delta_1$, observe that since $\rho<\rho_1<1$, we can use the property that $\Tr\Phi(z) = \Tre_{\rho/\rho_1}(\Tre_{\rho_1}\Phi(z))$ and $(\Tre_{\rho_1}\Phi(z))' = \rho_1\Tre_{\rho_1}\Phi'(z)$ (\Cref{fct:semi-group}). Using \Cref{clm:difference} with $f(z) = \Tre_{\rho_1}\Phi(z)$ and noting that $\|\Tre_{\rho_1}\Phi'(z)\|_{L_2}^2\lesssim\|\Tre_{\rho}\Phi'(z)\|_{L_2}^2$ for our $\rho$ and $\rho_1$ (\Cref{app:clm:||Tr rho1 Phi'||2<= eC ||Tr rho Phi'||2}), we have $\Delta_1\lesssim (1 - \rho^2)\|\Tre_{\rho_1}\Phi'(z)\|_{L_2}^2\lesssim (1 - \rho^2)\|\Tre_{\rho}\Phi'(z)\|_{L_2}^2$.
For $\Delta_2$, applying \Cref{eq:lem:bound-the-differece-TrPhi-TrPhi(z/rho)} with $\rho_1$, and noting that $\|\Tre_{\rho_1}\Phi'(z)\|_{L_2}^2\lesssim\|\Tre_{\rho}\Phi'(z)\|_{L_2}^2$ (\Cref{app:clm:||Tr rho1 Phi'||2<= eC ||Tr rho Phi'||2}), then combining with \Cref{eq:lem:upper-bound-centered-smoothed-derivative-squared}, we obtain: $\Delta_2\lesssim (1 - \rho)^2 \|\Tre_{\rho}\Phi'(z)\|_{L_2}^2.$
Finally, for $\Delta_3$, using \Cref{eq:lem:upper-bound-centered-smoothed-error} and \Cref{eq:lem:upper-bound-centered-smoothed-derivative-squared} from \Cref{lem:center-aug-all-results}, and plugging in the value of $\rho_1$, we get $\Delta_3\leq {4(1 - \rho_1^2)}/{\rho_1^2}\|\Tre_{\rho_1}\Phi'(z/\rho_1)\|_{L_2}^2\lesssim (1 - \rho^2)\|\Tr\Phi'(z)\|_{L_2}^2$.
\end{proof}

\vspace{-0.3cm}

\subsection{{Initialization Algorithm and }Proof of \Cref{prop:initialization}}\label{subsec:initialization-for-monotone}

In this section, we provide an initialization algorithm for $\sigma$  that is a monotone $(B,L)$-Regular activation. The algorithm generates a vector $\w^{(0)}$ satisfying $\theta(\w^{(0)},\w^*)\leq C/M$, where $C$ is an absolute constant and $M \leq  \sqrt{\log(B/\eps) - \log\log(B/\eps)}$. Our key idea is to convert the regression problem to a problem of robustly learning halfspaces via {\itshape data transformation}. In particular, we transform $y$ to $\tilde{y}\in\{0,1\}$ by truncating the labels $y$ to $\tilde{y} = \1\{y\geq t'\}$, where this $t'$ is a  carefully chosen threshold. Then, we show that there exists a halfspace $\phi(\w^*\cdot\x; t) = \1\{\w^*\cdot\x\geq t\}$ such that the transformed labels $\tilde{y}$ can be viewed as the corrupted labels of $\phi(\w^*\cdot\x;t)$. Finally we utilize a previous algorithm from~\citet{DKTZ22b} to robustly learn $\w^*$. 
In particular, we show:
\begin{proposition}\label{prop:initial2}
Let $\sigma$ be a non-decreasing $(B,L)$-Regular function. Let $M$ be defined as in \Cref{app:claim:bounded-support-sigma'}. Then, there exists an algorithm that draws $O(d/\eps^2\log(1/\delta))$ samples, it runs in $\poly(d,N)$ time, and, with probability at least $1-\delta$, it outputs a vector $\vec w$ such  that $\theta(\vec w,\wstar)\leq C/M$, where $C>0$ is a universal constant, independent of any problem parameters.
\end{proposition}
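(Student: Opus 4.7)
The plan is to reduce the initialization task to robustly learning an origin-centered halfspace under the Gaussian distribution, and then invoke a black-box robust halfspace learner such as the one of \citet{DKTZ22b}. Given any labeled example $(\x, y) \sim \D$, I would transform it to a binary-labeled example $(\x, \tilde y)$ with $\tilde y = \1\{y \geq t'\}$ for a carefully chosen threshold $t'$. Because $\sigma$ is non-decreasing, the ``clean'' version of this binary label equals the halfspace $h^*(\x) = \1\{\w^*\cdot \x \geq s\}$ for any preimage $s$ of $t'$ under $\sigma$; the adversarial regression noise is thereby converted into adversarial 0-1 noise for a halfspace learning problem.

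The choice of $t'$ is the main technical point. I would pick $t'$ so that (a) the corresponding threshold $s$ satisfies $|s| = O(1/M)$, making the target halfspace nearly origin-centered, and (b) the induced 0-1 noise $\opt_{01} \eqdef \Pr[\tilde y \neq h^*(\x)]$ is bounded away from $1/2$ by an absolute constant. Since $\w^*$ and $\sigma$ are unknown, $t'$ cannot be set to $\sigma(0)$ directly. I would instead discretize the interval $[-B, B]$ (recall $\|\sigma\|_{L_\infty} \leq B$) at a fine resolution, run the halfspace learner for every grid candidate, and keep the best one according to a validation estimate of the original square loss on a held-out sample. This adds only a $\polylog(B, M)$ factor to the sample complexity.

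To certify that some grid value $t'$ satisfies (a) and (b), I would argue as follows. Since $\sigma$ is non-decreasing with $\mathrm{supp}(\sigma')\subseteq[-M,M]$ and $\|\sigma'\|_{L_2}\leq L$, the monotone-staircase approximation of \Cref{def:staircase} lets me identify a level $t^\circ = \sigma(s^\circ)$ with $s^\circ\in[-M,M]$ at which $\sigma$ has nontrivial slope on both sides. For the closest grid point $t'$ to $\sigma(0)$, the preimage $s$ satisfies $|s|\leq O(1/M)$. Now the disagreement event for this $t'$ forces $|y - \sigma(\w^*\cdot \x)| \geq |\sigma(\w^*\cdot \x) - t'|$, so splitting at a margin $\delta$ and applying Markov gives $\opt_{01} \leq \Pr[|\sigma(\w^*\cdot \x) - t'| < \delta] + \opt/\delta^2$. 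The first term is small because $\sigma(\w^*\cdot \x)$ has nontrivial spread around $\sigma(0)$ (using monotonicity and that $\w^*\cdot \x$ is a standard Gaussian); the second is small in the only regime relevant to us, namely $\opt$ bounded above by an absolute constant, since otherwise the overall guarantee $C\opt + \eps$ is trivially achieved by the zero hypothesis.

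Finally, the algorithm of \citet{DKTZ22b} returns a unit vector $\w^{(0)}$ whose origin-centered halfspace agrees with $h^*$ up to 0-1 error $O(\opt_{01})$, using $\wt O(d/\eps^2)$ samples and $\poly(d, N)$ time. Since $|s| = O(1/M)$, a short Gaussian-geometry calculation converts this 0-1 error into an angle bound $\theta(\w^{(0)}, \w^*) \leq C/M$ for some absolute constant $C$. The main obstacle I anticipate is step (b): proving, uniformly over all $(B,L)$-regular monotone activations, that some grid threshold $t'$ simultaneously has preimage $O(1/M)$ and induces 0-1 noise strictly below $1/2$ by a constant. This is where the structural assumptions on $\sigma$ — monotonicity, bounded $L_2$ derivative, and compact support of $\sigma'$ — must be carefully combined.
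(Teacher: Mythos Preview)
Your reduction to robust halfspace learning is the right high-level idea, but the choice of threshold is wrong and this creates a genuine gap. You aim for a preimage $s$ with $|s|=O(1/M)$, i.e.\ a nearly origin-centered halfspace, and you only argue that the induced 0-1 noise $\opt_{01}$ is bounded away from $1/2$ by an absolute constant. But for a near-origin halfspace under the Gaussian, the disagreement probability between $\1\{\wstar\cdot\x\geq s\}$ and $\1\{\w\cdot\x\geq 0\}$ is $\Theta(\theta(\w,\wstar))+O(|s|)$. So even if the learner achieves 0-1 error $O(\opt_{01})$, this translates only to $\theta(\w,\wstar)=O(\opt_{01})+O(1/M)=O(1)$, not $O(1/M)$. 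To get $\theta\leq C/M$ from a near-origin halfspace you would need $\opt_{01}=O(1/M)$, and your Markov argument $\opt_{01}\leq \Pr[|\sigma(\wstar\cdot\x)-t'|<\delta]+\opt/\delta^2$ cannot deliver this uniformly over $(B,L)$-regular monotone $\sigma$: the first term can be an absolute constant whenever $\sigma$ is flat near $0$.

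The paper's proof does the opposite: it thresholds at the \emph{extreme} $t=M$, taking the target halfspace $\1\{\wstar\cdot\x\geq M\}$ and label transformation $\ty=\1\{y\geq t'\}$ with $t'$ just above $\sigma(M)$. The point is that the \citet{DKTZ22b} guarantee for biased halfspaces reads $\theta(\w,\wstar)\exp(-t^2/2)\leq C_2\,\opt'$ provided $\exp(-t^2/2)/t\geq C_1\,\opt'$. By engineering the label threshold so that $\opt'\eqdef\Pr[\ty\neq\1\{\wstar\cdot\x\geq M\}]$ is $O((\opt+\eps)/A_m^2)$ with $A_m^2=(\opt+\eps)M\exp(M^2/2)$, one gets $\opt'=O(\exp(-M^2/2)/M)$, so the hypothesis is met and the conclusion becomes $\theta(\w,\wstar)\leq C_2\,\opt'/\exp(-M^2/2)=O(1/M)$. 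The $1/M$ in the angle bound comes precisely from the factor $t=M$ in the biased-halfspace guarantee, not from the bias of the halfspace being small. Your proposal misses this mechanism; fixing it requires moving the threshold to $M$ and invoking the general (biased) halfspace result rather than the origin-centered one.
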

We defer the proof of \Cref{prop:initial2} to \Cref{app:subsec:initialization-for-monotone}. 
\begin{proof}[Proof of \Cref{prop:initialization}]
    \Cref{prop:initial2} implies that there exists an algorithm that uses $O(d/\eps^2)$ samples and outputs a vector $\w^{(0)}$ such that $\theta(\w^{(0)},\w^*)\leq C/M$. Now for any $\theta\leq \theta_0$, it holds $\cos\theta^2\geq 1 - \theta^2\geq 1 - C^2/M^2$. Thus, using \Cref{prop:error-bound-smoothing-tails}, we have $\|\Pm{>1/\theta^2}\sigma\|_{L_2}^2\lesssim \sin^2\theta\|\Tre_{\cos\theta}\sigma'\|_{L_2}^2$. \end{proof}

\section{Conclusions and Open Problems} \label{sec:concl}
In this work, we give a constant-factor approximate robust learner 
for monotone GLMs under the Gaussian distribution, 
answering a recognized open problem in the field. 
A number of open questions remain. An immediate goal 
is to generalize our algorithmic result to Single-Index Models (SIMs), 
corresponding to the case where the monotone activation is unknown. 
We believe that progress in this direction is attainable. 
Another question is whether one can obtain a similarly robust GLM learner 
(even for the known activation case)
for more general marginal distributions, 
e.g., encompassing all isotropic log-concave distributions. 
This remains open even for the special case 
of a single general (i.e., potentially-biased) halfspace, 
where known constant-factor approximate learners~\citep{DKS18a,DKTZ22b} 
make essential use of the Gaussian assumption.

\bibliographystyle{plainnat}
\bibpunct{(}{)}{;}{a}{,}{,}

\bibliography{mydb}

\newpage
\appendix
\section*{Appendix}

\paragraph{Organization} The appendix is organized as follows.  In \Cref{app:sec:compare}, we provide a detailed comparison with related prior works. In \Cref{app:sec:prelims}, we give additional background 
on the \OU semigroup and introduce useful facts that will repeatedly appear in the technical sections. In \Cref{app:function-class}, we provide detailed discussions on the (Extended)-$(B,L)$-activation class and our assumptions. In \Cref{app:sec:augmentation-and-landscape}, \Cref{app:sec:learn-sim-general}, and \Cref{app:sec:learn-monotone} we provide the full versions of \Cref{sec:augmentation-and-landscape}, \Cref{sec:learn-sim-general}, \Cref{sec:learn-monotone}, with complete proofs and supplementary lemmas. 

\section{Detailed Comparison with Prior Work}\label{app:sec:compare}

In this section, we provide a detailed comparison with related prior works.

\begin{table}[h]
\centering
\resizebox{\textwidth}{!}{
\begin{tabular}{|c|c|c|c|c|}
\hline
\textbf{}  & \textbf{Distribution} & \textbf{Activation}& \textbf{Error Bound} \\
\hline
\textbf{[WZDD23]}  & Well-Behaved & Monotonic $(a,b)$-unbounded & $O(\poly(b/a))\opt$ \\
\hline
\textbf{[WZDD24]}  & Gaussian & \( k^* \)-information exponent & \( O(\|\sigma'\|_{L_2})\opt \) \\
\hline
\textbf{[GV2024]}  & Gaussian & Biased ReLUs & $C\opt$\\
\hline
\textbf{Ours}  & Gaussian & Monotone + Lipschitz or Bounded $(2+\zeta)$ Moment  & $C\opt$ \\
\hline
\end{tabular}}
\caption{Comparison of our approach with prior work on robustly learning GLMs. }
\label{tab:comparison}
\end{table}

\citet{DKTZ22,WZDD23,ZWDD2024} studied agnostic learning of GLMs under `well-behaved' distributions, where $\sigma$, possibly not known a priori, is monotone and $(a,b)$-unbounded, meaning that $|\sigma'(z)| \leq b$ and $\sigma'(z) \geq a$ when $z \geq 0$. They provided an algorithm that finds $\widehat{\w} \in \mathbb{B}(W)$ with error $O(\poly(b/a))\opt + \eps$.
Note that in these works, rescaling $\w$ to $\mathbb{S}^{d-1}$ is not required; therefore, $a, b$ do not have dependencies on the parameter $W$. However, the main drawback of these works is that their algorithm cannot be applied to all monotone and Lipschitz functions. In particular, when $a = 0$, the previous works do not provide any useful results at all. Furthermore, if $a = O(\eps)$, the algorithms in \citet{DKTZ22,WZDD23,ZWDD2024} only provide an approximate solution with $O({\rm poly}(1/\eps))\opt$ error.
In stark comparison, in our work, we can deal with any $b$-Lipschitz activations and obtain $C \opt + \eps$ error, where the absolute constant $C$ does not depend on $b$, $\eps$, or $W$, as shown in \Cref{thm:main-monotone-b-lip}.

\citet{WZDD2024sample} studied robust learning of GLMs under Gaussian marginals, 
similar to our setting. They considered a broader class of activations 
where $\sigma$ has constant information exponent $k^*$, 
defined as the degree of the first non-zero Hermite coefficient: 
$\sigma(z) \doteq \sum_{k \geq 1} c_k \he_k(z)$, 
with $k^* = \min\{k \geq 1: c_k \neq 0\}$.
\citet{WZDD2024sample} makes the following assumptions: 
$\|\w\|_{2} = 1$, $\|\sigma\|_{L_2} = 1$, $\|\sigma\|_{L_4} \leq +\infty$, 
and that $c_{k^*}$ is an absolute constant.
Their algorithm requires $O(d^{\lceil k^*/2 \rceil}/c_{k^*} + d/\eps)$ samples and outputs $\wh{\w} \in \mathbb{S}^{d-1}$ with error $O(\|\sigma'\|_{L_2})\opt + \eps$. 

However, their approach has the following key limitations: 
(1) It does not generalize to $\w^*\in\B(W)$, 
as rescaling to $\mathbb{S}^{d-1}$ affects 
the gradient norm---leading to an error bound of $O(W \|\sigma'\|_{L_2})\opt$, 
which depends on $W$. 
(2) Rescaling $\sigma$ to satisfy $\|\sigma\|_{L_2} = 1$ 
can inadvertently amplify $\|\sigma'\|_{L_2}$, increasing the error.  
(3) Finally, note that their sample complexity depends on $c_{1}$, 
therefore their sample complexity can be even larger 
if $c_{1}$ is extremely small.

Our results address these issues: 
(1) as discussed in the introduction, this work's 
error bound in \Cref{thm:main-monotone} is independent 
of all the parameters 
$\|\sigma'\|_{L_2}$, $\|\sigma\|_{L_\infty}$, $d$ and $\eps$, 
and therefore rescaling 
the activation will not impact the approximation error; 
(2) similarly, the quantity  
$\|\sigma\|_{L_2}$ also does not impact our approximation error; 
(3) finally, our sample 
complexity is independent of $c_{1}$, and will therefore  
not be impacted if $c_1$ is very small.

Recent independent work \citep{guo2024agnostic} studied 
agnostic learning of biased ReLUs under Gaussian $\x$-marginals, 
also achieving $C\opt + \eps$ error. We note that 
their algorithm is tailored to the special case of ReLUs.  
On the other hand, our framework handles all monotone Lipschitz activations 
(including all biased ReLUs as a special case), 
and even all monotone activations with bounded ($2+\zeta$)-order 
moments for $\zeta > 0$; see \Cref{app:lem:activation-truncation-2+zeta}.

\cite{Gollakota2023d,hu2024SimsOmni} studied agnostic learning of GLMs 
with unknown activation $\sigma$. These works focused on general distributions: \cite{Gollakota2023d} only requires the marginal distribution of $\x$ 
to have its second moment bounded by $\lambda$; 
and \cite{hu2024SimsOmni} only requires $\x$ to be supported on a Euclidean ball.  
However, the error bounds that \cite{Gollakota2023d,hu2024SimsOmni} 
achieve cannot be considered constant factor approximations. 
\cite{Gollakota2023d} provides $O(W\sqrt{\lambda\opt})$ error guarantee 
for $1$-Lipschitz activations;   
their algorithm achieves $O(b/a)\opt + \eps$ 
error when restricted to $(a,b)$-bi-Lipschitz activations, 
i.e., for $0< a\leq \sigma'(z)\leq b$. 
\cite{hu2024SimsOmni} does not provide an $L_2^2$-error guarantee 
but instead focuses on finding an omnipredictor 
that minimizes a convex surrogate loss.

In \citet{damian2023smoothing}, the authors considered GLMs 
with bounded information exponent  
and employed a smoothing technique different than ours, 
with a constant smoothing parameter. Importantly, 
their algorithm is limited to the realizable setting.  
As explained in \citet{WZDD2024sample}, their algorithm 
fails in the more challenging robust learning setting, 
even for monotone functions (with information exponent $k^* = 1$).

Moreover, their smoothing approach differs from ours both conceptually and practically. 
Conceptually, as discussed in \Cref{sec:tech}, our method 
is based on the observation that the gradient of 
the augmented/\OU-semigroup-smoothed $L_2^2$ loss \textit{maximizes} 
the signal from $\w^*$, which is otherwise obscured by agnostic noise. 
In contrast, \citet{damian2023smoothing} applied a spherical smoothing 
technique aimed at capturing higher-order information 
and improving the ratio $\|\g(\w)\|_{2}/(\g(\w)\cdot\w^*)$, 
where $\g(\w) = \nabla \calL(\w)$. This is sufficient 
for the realizable setting, 
but not for the more challenging adversarial setting. 
Practically, our algorithm and techniques diverge significantly 
from those in \citet{damian2023smoothing}. First, whereas 
they implemented spherical smoothing, we utilize Gaussian noise injection 
while also reweighting the marginals $\x$. Second, instead of fixing 
the smoothing parameter, we employ variable augmentation/smoothing. 
This variable smoothing is crucial to our algorithm, 
as it ensures that the signal of the augmented gradient 
is not obscured by noise in each iteration 
(see the discussion and analysis in \Cref{thm:main-general}).

\citet{kalai2009isotron,kakade2011efficient} studied the problem of learning GLMs in the realizable setting. They considered monotone $1$-Lipschitz activations under any distribution $\D$ that is supported on $\B\times[0,1]$. 
Their analysis is not applicable to our robust learning setting.

\section{Additional Notation and Preliminaries}\label{app:sec:prelims}

\paragraph{Additional Notation}
 Let $\normal( \boldsymbol\mu, \vec \Sigma)$ denote the $d$-dimensional Gaussian distribution with mean $\boldsymbol\mu\in  \R^d$ and covariance $\vec \Sigma\in \R^{d\times d}$. In this work we usually consider the standard normal distribution, i.e., $\mu = \vec 0$ and $\vec \Sigma = \vec I$, and thus denote it simply by $\normal$. 
The usual
inner product for this Gaussian space is
$\E_{\vec x \sim \normal}[f(\vec x) g(\vec x)]$. {We write  $f(z)\doteq g(z)$ to mean that $\E_{z\sim\calN(0,1)}[(f(z) - g(z))^2] = 0$.}
We use \emph{normalized} probabilists' Hermite polynomial of degree $i$, defined via 
\(
\he_i(x) = {\hep_i(x)}/{\sqrt{i!}}, \)
where by $\hep_i(x)$ we denote the probabilist's Hermite polynomial of degree $i$: 
\begin{equation*}
    \hep_k(z) = {(-1)^k} \exp({z^2}/{2})\frac{\mathrm{d}^k}{\diff{z}^k}\exp(-{z^2}/{2}). 
\end{equation*}
These normalized Hermite polynomials form a complete orthonormal basis for the
single-dimensional version of the inner product space defined above.
Given a function $f:\R\to\R$, $f \in L_2(\normal)$, we compute its Hermite coefficients as
\(
 \hat{f}(i) = \E_{z\sim \normal(0,1)} [f(z) \he_i(z)],
\)
and express the function uniquely as
\(
f(z)\doteq\sum_{i\geq 0} \hat{f}(i) \he_i(z).
\)

\subsection{\OU Semigroup}\label{app:subsec:prelims-OU}

An important tool for our work is the \OU semigroup. The \OU semigroup and operators are broadly used in stochastic analysis and control theory (see, e.g., \citet{Bog:98}). Within learning theory, they have found applications in bounding the sensitivity of a Boolean function \citep{KOS:08}. A formal definition of the \OU semigroup is provided below.

\begin{definition}[\OU\ Semigroup]
Let $\rho \in (0,1)$. The \OU semigroup, denoted  by $\Tr$, is a linear operator that maps a function $g \in L_2(\normal)$ to the function 
$\Tr g$ defined as:
    \[
    (\Tr g) (\vec x) \eqdef\E_{\vec z\sim \normal}\left[g(\rho\x+\sqrt{1-\rho^2}\vec z)\right]\;.
    \]
    To simplify the notation, we often write 
    $\Tr g (\vec x) $ instead of  $(\Tr g) (\vec x) $.
\end{definition}

The following fact summarizes useful properties of the \OU semigroup. 
\begin{fact}[see, e.g.,~\cite{Bog:98},~\cite{ODonnell:BFA}(Chapter 11)]\label{fct:semi-group}\label{lem:interchange-Tr-and-differentiation}
Let $f,g\in L_2(\calN)$.
\begin{enumerate}
    \item For any $f,g\in L_2$ and any $t>0$, 
    $
   \E_{\x\sim\calN}[ (\Tre_{t}f(\x)) g(\x)]=\E_{\x\sim\calN}[ (\Tre_{t}g(\x)) f(\x)]\;.
    $
    \item For any $g: \R^d \to \R,$ $g\in L_2$, all of the following statements hold. 
     \begin{enumerate}
     \item For any $t,s>0$,  $   \Tre_{t}\Tre_{s}g =\Tre_{ts}g$.
        \item For any $\rho\in(0,1)$, $\Tr g(\x)$ is differentiable at every point $\x \in \R^d$.
        \item For any $\rho\in(0,1)$, $\Tre_{\rho} g(\x)$ is $\|g\|_{L_\infty}/(1-\rho^2)^{1/2}$-Lipschitz, i.e., $\|\nabla \Tre_{\rho} g(\x)\|_{L_\infty}\leq \|g\|_{L_\infty}/(1-\rho^2)^{1/2}$, $\forall \x \in \R^d$.
        \item For any $\rho\in (0,1)$, $\Tr g(\x)\in \mathcal{C}^{\infty}$.
        \item For any $p\geq 1$, $\Tr$ is nonexpansive with respect to the norm $\|\cdot\|_{L_p}$, i.e., $\|\Tr g\|_{L_p}\leq \|g\|_{L_p}$.
        \item $\|\Tr g(
        \x)\|_{L_2}$ is  non-decreasing w.r.t.\ $\rho$.
        \item If $g$ is, in addition, a differentiable function, then  for all $\rho \in (0,1)$, it holds that:
    \(
    \nabla_{\x}\Tr g(\x)=\rho\Tr \nabla_{\x}g(\x)
    \), for any $\x\in \R^d$.
    \end{enumerate}
    \item For all $\rho\in(0,1)$ and $i\in \Z_+$,  $\Tr\he_i(z) = \rho^i\he_i(z)$.
\end{enumerate}
\end{fact}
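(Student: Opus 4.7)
The plan is to derive each item from the explicit representation $\Tr g(\x) = \E_{\bz\sim\normal}[g(\rho\x + \sqrt{1-\rho^2}\bz)]$ together with Mehler's diagonalization of $\Tr$ in the Hermite basis. I would begin with property~3, the eigenfunction identity $\Tr \he_n = \rho^n \he_n$, because it provides a one-line route to several of the other items. The cleanest argument uses the generating function $\sum_n (t^n/\sqrt{n!})\he_n(x) = \exp(xt - t^2/2)$: apply $\Tr$ to both sides, evaluate the resulting one-dimensional Gaussian moment-generating integral in closed form to obtain $\exp(\rho x t - \rho^2 t^2/2) = \sum_n ((\rho t)^n/\sqrt{n!})\he_n(x)$, and match coefficients in $t$.

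With Hermite diagonalization available, property~2(f) is immediate from Parseval: $\|\Tr g\|_{L_2}^2 = \sum_i \rho^{2i}\hat g(i)^2$, which is non-decreasing in $\rho\in(0,1)$. Property~2(a), the semigroup identity $\Tre_t \Tre_s = \Tre_{ts}$, I would prove by composing the defining integrals and using that an independent sum $t\sqrt{1-s^2}\bw + \sqrt{1-t^2}\bz$ is Gaussian with variance $1 - t^2 s^2$; alternatively it can be read off directly from the eigenvalue formula. Property~1 (self-adjointness) follows from observing that the joint distribution of $(\x,\, \rho\x + \sqrt{1-\rho^2}\bz)$ for independent standard Gaussians $\x,\bz$ has covariance $\bigl(\begin{smallmatrix} I & \rho I\\ \rho I & I\end{smallmatrix}\bigr)$, which is invariant under swapping the two blocks; relabeling gives $\E[f \cdot \Tr g] = \E[(\Tr f)\cdot g]$.

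For the analytic properties 2(b)--(e) and 2(g), I would change variables to $\y = \rho\x + \sqrt{1-\rho^2}\bz$ to rewrite $\Tr g$ as a convolution of $g$ with a smooth Gaussian kernel of width $\sqrt{1-\rho^2}$; infinite differentiability 2(d) and pointwise differentiability 2(b) follow from the smoothness of the kernel. For the Lipschitz bound 2(c), differentiating under the integral and applying Gaussian integration by parts (Stein's lemma, \Cref{fct:stein}) yields $\partial_{x_i}\Tr g(\x) = \tfrac{\rho}{\sqrt{1-\rho^2}}\,\E_\bz[g(\rho\x + \sqrt{1-\rho^2}\bz)\, z_i]$; Cauchy--Schwarz with $\E[z_i^2]=1$ then delivers the claimed constant. $L_p$-nonexpansiveness 2(e) follows by applying Jensen's inequality inside the expectation, integrating in $\x\sim\normal$, and using that the push-forward of $\rho\x + \sqrt{1-\rho^2}\bz$ under independent standard Gaussians is again standard Gaussian. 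Property~2(g), for differentiable $g$, is the chain rule applied under the expectation.

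The main obstacle is justifying the interchange of differentiation and expectation in 2(b), 2(c), and 2(g) uniformly for all $g\in L_2(\normal)$ rather than only for smooth, bounded, or Schwartz-class $g$. The cleanest workaround is to establish each identity first on Hermite polynomials $\he_n$ (where differentiability and decay are automatic), then extend to the Hermite truncations $\Pm{k}g$ by linearity, and finally pass to the limit $k\to\infty$ using $L_2$-convergence $\Pm{k}g\to g$ in concert with $L_2$-contraction of $\Tr$ from 2(e). Care is required to verify that the limit of the derivatives matches the derivative of the limit, which can be done with dominated convergence using the explicit integration-by-parts expression obtained in 2(c).
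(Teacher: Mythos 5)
This statement is cited in the paper as a standard fact (from Bogachev and O'Donnell, Chapter 11) and is given no proof there, so there is nothing in the paper to compare against line by line; what you have written is essentially the standard textbook derivation, and it is correct. Your two main tools---Mehler/Hermite diagonalization ($\Tr \he_n = \rho^n \he_n$ via the generating function, giving 2(a), 2(f), and 3 immediately) and the Gaussian-convolution representation of $\Tr g$ (giving 1 by exchangeability of the $\rho$-correlated pair, 2(b)--(e) by smoothness of the kernel and Jensen, and the derivative formula $\partial_{x_i}\Tr g(\x)=\tfrac{\rho}{\sqrt{1-\rho^2}}\E_{\bz}[g(\rho\x+\sqrt{1-\rho^2}\bz)z_i]$ for 2(c))---are exactly the ones used in the references, and your computations check out, including the variance bookkeeping $t^2(1-s^2)+(1-t^2)=1-t^2s^2$ for the semigroup law and the Cauchy--Schwarz step yielding the Lipschitz constant $\rho\|g\|_{L_\infty}/(1-\rho^2)^{1/2}\le \|g\|_{L_\infty}/(1-\rho^2)^{1/2}$. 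Two small remarks. First, for 2(b) and 2(d) you do not actually need the truncation-and-limit scheme: for any $g\in L_2(\normal)$ and fixed $\rho\in(0,1)$ the kernel $\exp(-\|\y-\rho\x\|^2/(2(1-\rho^2)))$ decays strictly faster than $\exp(-\|\y\|^2/2)$, so $g$ is integrable against it and all its $\x$-derivatives, locally uniformly in $\x$, and differentiation under the integral is justified directly by dominated convergence. Second, for 2(g) the Hermite-truncation route is slightly delicate: passing $k\to\infty$ in $\nabla\Tr\Pm{k}g=\rho\,\Tr\nabla\Pm{k}g$ requires $\nabla\Pm{k}g\to\nabla g$ in some $L_2$ sense, which is not guaranteed by differentiability of $g$ alone; it is cleaner to prove 2(g) directly from the derivative formula in 2(c) combined with Stein's lemma applied in the $\bz$ variable (which is how the identity is obtained in \Cref{clm:difference} of the paper), under the implicit integrability of $\nabla g$ that the statement assumes.
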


The \OU semigroup induces an operator $\mathrm{L}$ applying to functions $f \in L_2(\normal)$, defined below.  \begin{definition}[Definition 11.24 in \cite{ODonnell:BFA}]\label{def:ou-operator-ind}
    The \OU operator is a linear operator applied that applies to functions $f\in L_2(\normal)$ and is defined by $\mathrm{L}f = \frac{\d \Tr f}{\d \rho}\mid_{\rho=1}$, provided that $\mathrm{L}f $ exists. 
\end{definition}

\begin{fact}\label{fact:OU-op} Let $f,g \in L_2(\normal)$, $\rho\in(0,1)$. Then:
\begin{enumerate}
    \item {(\cite[Proposition 11.27]{ODonnell:BFA})} $\frac{\d \Tr f}{\d \rho}=\frac{1}{\rho}\mathrm{L}\Tr f=\frac{1}{\rho}\Tr \mathrm{L}f $.
    \item (\cite[Proposition 11.28]{ODonnell:BFA}) $ \E_{\x \sim \normal}\left[f(\x) \mathrm{L}\Tre_{\rho} g(\x) \right]=\E_{\x \sim \normal}\left[\nabla f(\x) \nabla\Tre_{\rho} g(\x) \right]$ .
\end{enumerate}
\end{fact}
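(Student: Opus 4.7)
The plan is to reduce both identities to algebraic manipulations on Hermite expansions. Recall that the tensor products $\he_S(\x)=\prod_{j=1}^d \he_{s_j}(x_j)$, indexed by multi-indices $S=(s_1,\dots,s_d)\in\Z_{\geq 0}^d$ with $|S|=\sum_j s_j$, form a complete orthonormal basis of $L_2(\normal)$. The two spectral identities driving the proof are $\Tr\he_S=\rho^{|S|}\he_S$ (\Cref{fct:semi-group}, item~3) and, applying \Cref{def:ou-operator-ind} to $\he_S$ together with $\frac{d}{d\rho}\rho^{|S|}\big|_{\rho=1}=|S|$, $\mathrm{L}\he_S=|S|\he_S$. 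Consequently, for any $f\in L_2(\normal)$ with Hermite expansion $f=\sum_S \hat f(S)\he_S$, one has $\Tr f=\sum_S \hat f(S)\rho^{|S|}\he_S$ in $L_2(\normal)$.

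For part~(1), I would differentiate the Hermite series for $\Tr f$ termwise in $\rho$:
\[
\frac{d}{d\rho}\Tr f \;=\; \sum_S |S|\,\hat f(S)\,\rho^{|S|-1}\,\he_S \;=\; \frac{1}{\rho}\sum_S |S|\,\hat f(S)\,\rho^{|S|}\,\he_S .
\]
Reading the final series in two ways gives the claim: as $\mathrm{L}$ applied to $\Tr f$ (since $\mathrm{L}$ multiplies each coefficient by $|S|$), it equals $\frac{1}{\rho}\mathrm{L}(\Tr f)$; as $\Tr$ applied to $\mathrm{L}f=\sum_S |S|\hat f(S)\he_S$ (since $\Tr$ attaches $\rho^{|S|}$), it equals $\frac{1}{\rho}\Tr(\mathrm{L}f)$. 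The sole analytic point is justifying termwise differentiation, which follows because for $\rho$ in a compact subset of $(0,1)$ the multiplier $|S|\rho^{|S|-1}$ is uniformly bounded, so the tail of the differentiated series is controlled in $L_2$ by a constant times $\|\Pm{>N}f\|_{L_2}^2\to 0$.

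For part~(2), I would expand both sides in Hermite coefficients. The left-hand side is, by the spectral formula for $\mathrm{L}\Tr g$ and orthonormality,
\[
\E_{\x\sim\normal}[f(\x)\,\mathrm{L}\Tr g(\x)] \;=\; \sum_S |S|\,\rho^{|S|}\,\hat f(S)\,\hat g(S).
\]
For the right-hand side, I would use the standard derivative rule $\partial_{x_j}\he_S(\x)=\sqrt{s_j}\,\he_{S-\be_j}(\x)$ (with the term interpreted as zero when $s_j=0$), which follows directly from the recursion for normalized probabilists' Hermite polynomials. This yields $\partial_{x_j}f=\sum_S \sqrt{s_j}\,\hat f(S)\,\he_{S-\be_j}$ and $\partial_{x_j}\Tr g=\sum_{S'}\sqrt{s'_j}\,\hat g(S')\,\rho^{|S'|}\he_{S'-\be_j}$. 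Taking the inner product, summing over $j$, taking expectation, and invoking orthonormality forces the surviving pairs to satisfy $S-\be_j=S'-\be_j$, hence $S=S'$; the sum collapses to $\sum_S \hat f(S)\hat g(S)\,\rho^{|S|}\sum_j s_j=\sum_S |S|\,\rho^{|S|}\,\hat f(S)\,\hat g(S)$, matching the left-hand side.

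The main obstacle I expect is rigorously justifying the interchanges of sums with differentiation and expectation, given that $\mathrm{L}f$ need not lie in $L_2(\normal)$. The saving observation is that for $\rho\in(0,1)$ the multiplier $|S|\rho^{|S|}$ is uniformly bounded in $S$, so $\mathrm{L}\Tr f=\Tr\mathrm{L}f$ is automatically square-integrable; this furnishes the absolute $L_2$-convergence needed for part~(1), and, via Parseval and Cauchy--Schwarz on the Hermite coefficients, the tail bounds needed to swap expectation with the inner sum in part~(2). Once these exchanges are legal, the entire argument is purely algebraic bookkeeping in the Hermite basis.
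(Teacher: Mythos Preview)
The paper does not supply its own proof of this fact; it is quoted directly from \cite{ODonnell:BFA} (Propositions~11.27 and~11.28) without argument. Your Hermite-expansion proof is correct and is essentially the standard textbook derivation (the same approach O'Donnell takes): both identities reduce to the spectral relations $\Tr\he_S=\rho^{|S|}\he_S$ and $\mathrm{L}\he_S=|S|\he_S$, after which part~(1) is coefficient-matching and part~(2) follows from the derivative rule $\partial_{x_j}\he_S=\sqrt{s_j}\,\he_{S-\be_j}$ together with Parseval. One small caveat worth noting explicitly: in part~(2) the right-hand side $\E[\nabla f\cdot\nabla\Tr g]$ presupposes $\nabla f\in L_2(\normal)$, which does not follow from $f\in L_2(\normal)$ alone; the paper only invokes this identity (in \Cref{clm:difference}) under that additional hypothesis, so you should state it as an assumption rather than try to extract it from the $\rho$-smoothing of $g$.
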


We use \Cref{fact:OU-op} to prove the following \Cref{clm:difference}:
\begin{restatable}{lemma}{DifferenceTrfAndfBoundByGrad}\label{clm:difference} Let $f \in L_2(\normal)$ be a continuous and  (almost everywhere) differentiable function.  Then $\E_{\x \sim \normal}[( \Tr f(\x) - f(\x) )^2]\leq 3 (1-\rho) \E_{\x \sim \normal}[\|\nabla f(\x)\|_2^2] $.
 \end{restatable}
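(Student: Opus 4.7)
The plan is to reduce the inequality to a Parseval-type computation in the multivariate Hermite basis, where the \OU semigroup becomes diagonal and everything decouples coefficient-by-coefficient.

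First I would expand $f$ in the orthonormal tensorized Hermite basis $\{H_\alpha\}_{\alpha\in\N^d}$ as $f = \sum_\alpha \hat f(\alpha)\, H_\alpha$. By Fact~\ref{fct:semi-group}(3), the semigroup acts as $\Tr H_\alpha = \rho^{|\alpha|} H_\alpha$ where $|\alpha| \eqdef \alpha_1 + \cdots + \alpha_d$, so Parseval yields
\[
\E_{\x\sim\normal}\!\left[(\Tr f(\x) - f(\x))^2\right] \;=\; \sum_\alpha (1-\rho^{|\alpha|})^2\, \hat f(\alpha)^2.
\]
On the other side, since $\partial_i H_\alpha = \sqrt{\alpha_i}\, H_{\alpha - \vec e_i}$ (with the convention that the right-hand side vanishes if $\alpha_i = 0$), summing $\|\partial_i f\|_{L_2}^2$ over $i$ gives the standard identity
\[
\E_{\x\sim\normal}\!\left[\|\nabla f(\x)\|_2^2\right] \;=\; \sum_\alpha |\alpha|\, \hat f(\alpha)^2.
\]

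The claim then reduces to the scalar inequality $(1 - \rho^k)^2 \le 3(1-\rho)\,k$ for every integer $k \ge 0$ and every $\rho \in (0,1)$. I would derive this by factoring $1 - \rho^k = (1-\rho)(1+\rho+\cdots+\rho^{k-1}) \le k(1-\rho)$ and combining with the trivial bound $1 - \rho^k \le 1$ to obtain $(1-\rho^k)^2 \le k(1-\rho)$, which is in fact stronger than the claim (so the constant $3$ leaves generous slack). Multiplying by $\hat f(\alpha)^2$ and summing over $\alpha$ closes the argument.

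The main subtlety I expect is regularity: the Hermite identities for the left-hand side require only $f \in L_2(\normal)$, but the gradient identity on the right-hand side implicitly assumes $\|\nabla f\|_2 \in L_2(\normal)$. If $\E[\|\nabla f\|_2^2] = \infty$ the lemma is vacuous, so I may assume finiteness; the Hermite computation for $\nabla f$ then goes through by a routine truncation-and-limit argument using the $L_2$-completeness of the Hermite basis together with the continuity and a.e.\ differentiability of $f$. If one prefers to avoid any Hermite-series justification, the same bound can be obtained analytically via the integral representation $f - \Tr f = \int_\rho^1 t^{-1} \Tre_t(\mathrm{L} f)\, dt$ from Fact~\ref{fact:OU-op}(1), followed by Jensen's inequality and the self-adjointness in Fact~\ref{fct:semi-group}(1), with Fact~\ref{fact:OU-op}(2) converting the resulting $\int_\rho^1 t^{-2} \E[(\Tre_t \mathrm{L} f)^2]\, dt$ into $\E[\|\nabla f\|_2^2]$ up to an absolute constant; the Hermite route is cleaner, and the analytic route is the safer fallback if subtle regularity issues arise.
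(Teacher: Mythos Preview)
Your proposal is correct and takes a genuinely different route from the paper. The paper proceeds analytically: it uses the dual representation $\|\Tr f - f\|_{L_2} = \sup_{\|g\|_{L_2}\le 1}\E[g(\Tr f - f)]$, moves $\Tr$ onto $g$ by self-adjointness, writes $\Tr g - g = \int_\rho^1 t^{-1}\mathrm{L}\Tre_t g\,\d t$, applies the integration-by-parts identity in \Cref{fact:OU-op}(2) together with Stein's lemma to rewrite $\nabla \Tre_t g$ explicitly, and then uses Cauchy--Schwarz to peel off $\|\nabla f\|_{L_2}$; the final constant $3$ comes from the crude bound $\arccos\rho \le \sqrt{3(1-\rho)}$. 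Your Hermite-expansion argument is considerably shorter: once the problem is diagonalized, everything collapses to the elementary scalar inequality $(1-\rho^k)^2 \le k(1-\rho)$, which you prove with the factorization $1-\rho^k \le k(1-\rho)$ and the trivial bound $1-\rho^k \le 1$. This in fact yields constant $1$, strictly better than the paper's $3$. The paper's analytic route (which you correctly identify as a fallback) is more robust in that it avoids any delicate justification of the Hermite identity for $\|\nabla f\|_{L_2}^2$ under minimal regularity, but since the lemma is vacuous when $\E[\|\nabla f\|_2^2]=\infty$, your handling of this point is adequate.
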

\begin{proof}
    Observe that  $\left(\E_{\x \sim \normal}[( \Tr f(\x) - f(\x) )^2]\right)^{1/2}=\sup_{g\in \mathcal C^{\infty},\|g\|_{L_2}\leq 1}\E_{\x \sim \normal}[g(\x)( \Tr f(\x) - f(\x) )]$. Consider any $g\in \mathcal C^{\infty}$ with $\|g\|_{L_2}\leq 1$. We have that
    \begin{align*}
        \E_{\x \sim \normal}[g(\x)( \Tr f(\x) - f(\x) )]=\E_{\x \sim \normal}[f(\x)( \Tr g(\x) - g(\x) )]=\E_{\x \sim \normal}\left[f(\x)\int_{\rho}^1 \frac{\d \Tre_{t} g(\x)}{\d t} \d t\right].
    \end{align*}
    As $g\in \mathcal C^{\infty}$, we can use \Cref{fact:OU-op} to conclude that
    \[
    \E_{\x \sim \normal}\left[f(\x)\int_{\rho}^1 \frac{\d \Tre_{t} g(\x)}{\d t} \d t\right]=\E_{\x \sim \normal}\left[f(\x)\int_{\rho}^1 (1/t)\mathrm{L}\Tre_{t} g(\x) \d t\right],
    \]
    where $ \mathrm{L}$ is the \OU operator. Using the identity that for $f$ such that $\nabla f\in L_2(\normal)$ and $g\in \mathcal C^{\infty}$ it holds that  $ \E_{\x \sim \normal}\left[f(\x) \mathrm{L}\Tre_{t} g(\x) \right]=\E_{\x \sim \normal}\left[\nabla f(\x) \nabla\Tre_{t} g(\x) \right]$ (\Cref{fact:OU-op}), we have that
        \begin{align*}
        \E_{\x \sim \normal}[g(\x)( \Tr f(\x) - f(\x) )]=\int_{\rho}^1 (1/t)\E_{\x \sim \normal}\left[\nabla f(\x) \nabla\Tre_{t} g(\x) \right]\d t .
    \end{align*}
    Note that using \Cref{fct:semi-group} (f) and Stein's lemma \Cref{fct:stein}, we have:
    \begin{align*}
        \nabla\Tr g(\x) &= \rho\Tr\nabla g(\x) = \rho\Ez[\nabla g(\rho\x + \sqrt{1 -\rho^2}\bz)]\\
        &=
        (\rho/({\sqrt{1-\rho^2}}))\E_{\vec z\sim \normal}[g(\rho \x + \sqrt{1-\rho^2}\vec z)\vec z].
    \end{align*}
Therefore, since $\bz,\x$ are independent standard Gaussian random vectors, we have that 
    \begin{align*}
        &\quad \E_{\x \sim \normal}[g(\x)( \Tr f(\x) - f(\x) )]\\
        &=\int_{\rho}^1 \frac{1}{\sqrt{1-t^2}}\E_{\x,\bz \sim \normal}\left[g(t \x + \sqrt{1-t^2}\vec z)\vec z\cdot\nabla f(\x) \right]\d t
        \\
        &\leq \int_{\rho}^1 \frac{1}{\sqrt{1-t^2}}\left(\E_{\x, \bz \sim \normal}\left[g(t \x + \sqrt{1-t^2}\vec z)^2\right]\E_{\x,\bz \sim \normal}\left[(\bz\cdot\nabla f(\x))^2\right]\right)^{1/2}\d t\\
        &= \int_{\rho}^1 \frac{1}{\sqrt{1-t^2}}\left(\E_{\bu\sim \normal}\left[g(\bu)^2\right]\E_{\x \sim \normal}\left[\|\nabla f(\x) \|_2^2\right]\right)^{1/2}\d t
        \\&\leq \left(\E_{\x \sim \normal}\left[\|\nabla f(\x) \|_2^2\right]\right)^{1/2} \int_{\rho}^1 \frac{1}{\sqrt{1-t^2}}\d t=\left(\E_{\x \sim \normal}\left[\|\nabla f(\x) \|_2^2\right]\right)^{1/2}\arccos{\rho}\;,
    \end{align*}
    where we used the \CS inequality and the fact that $\|g\|_{L_2}\leq 1$.
    Using the inequality $\arccos{\rho}\leq \sqrt{3}\sqrt{1-\rho}$, we complete the proof of \Cref{clm:difference}.
\end{proof}

\subsection{Gaussian Distribution and Hermite Polynomials}

In the following fact, we gather some useful facts about Hermite polynomials that are used throughout the paper.
\begin{fact}[See, e.g., \cite{Bog:98}]\label{fct:hermite}
The following statements hold for Hermite polynomials as defined above. 
\begin{enumerate}
    \item (Parseval's identity) For any $f\in L_2(\normal)$, we have 
    \(
  \E_{z \sim \normal(0,1)}[ \lp(f(z) - \Pm{k}f(z) \rp)^2 ]
  = \sum_{i = k+1}^{\infty} \hat{f}(i)^2
\).
\item (Mehler's Identity)
    For any real number $|\rho|<1$ and $x,y\in\R$, it holds
    \begin{equation}\label{fct:Mehler}
        \sum_{k\geq 0}\rho^k\he_k(x)\he_k(y) = \frac{1}{\sqrt{1 - \rho^2}}\exp\bigg(-\frac{(\rho x - y)^2}{2(1 - \rho^2)} + \frac{y^2}{2}\bigg).
    \end{equation}
\item (Differentiation) $(\he_i(z))' = \sqrt{i}\he_{i-1}(z)$.
\end{enumerate}

\end{fact}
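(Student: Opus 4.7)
The three claims are classical properties of the (normalized) Hermite system and can all be derived from the Rodrigues formula $\hep_k(z) = (-1)^k e^{z^2/2}\frac{d^k}{dz^k}e^{-z^2/2}$ together with orthonormality and completeness of $\{\he_k\}_{k \geq 0}$ in $L_2(\normal)$. My plan is to dispatch the differentiation rule first, then Parseval's identity, then Mehler's identity, since each step reuses tools set up by the previous one.

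For the differentiation rule, I would first establish the unnormalized recursion $\hep_k'(z) = k\,\hep_{k-1}(z)$ by differentiating the Rodrigues formula: applying $\partial_z$ to $(-1)^k e^{z^2/2}\frac{d^k}{dz^k}e^{-z^2/2}$ produces one term proportional to $z\,\hep_k(z)$ and one to $\hep_{k+1}(z)$, and combining this with the classical three-term recurrence $\hep_{k+1}(z) = z\hep_k(z) - k\hep_{k-1}(z)$ (itself a consequence of Rodrigues) yields $\hep_k'(z) = k\hep_{k-1}(z)$. Dividing by normalizers then gives $\he_k'(z) = \hep_k'(z)/\sqrt{k!} = k\hep_{k-1}(z)/\sqrt{k\cdot (k-1)!} = \sqrt{k}\,\he_{k-1}(z)$.

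For Parseval's identity, the route is standard. First, I would verify orthonormality $\E_{z\sim\normal(0,1)}[\he_i(z)\he_j(z)] = \delta_{ij}$ by substituting Rodrigues and integrating by parts $\min(i,j)$ times; each integration by parts moves a derivative off the Gaussian density onto the polynomial of the other index, producing a vanishing higher derivative when $i \neq j$ and the factor $i!$ (absorbed by the normalization) when $i=j$. Completeness of $\{\he_k\}$ in $L_2(\normal)$ is classical: polynomials are dense in $L_2(\normal)$ because the Gaussian measure is moment-determinate (alternatively, the linear span of $\{e^{tz}\}_{t \in \R}$ is dense in $L_2(\normal)$ and each exponential admits an absolutely convergent Hermite expansion via its Taylor series), so no nontrivial $f \in L_2(\normal)$ is orthogonal to every $\he_k$. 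Given orthonormality and completeness, writing $f = \littlesum_{i \geq 0}\hat f(i)\he_i$ with convergence in $L_2(\normal)$, the tail $f - \Pm{k}f = \littlesum_{i>k}\hat f(i)\he_i$ has squared $L_2$-norm equal to $\littlesum_{i>k}\hat f(i)^2$ by orthonormality.

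For Mehler's identity, my plan is to identify $\littlesum_{k \geq 0}\rho^k\he_k(x)\he_k(y)$ as the integral kernel of $\Tr$ with respect to the standard Gaussian measure in $y$. By Parseval and the spectral identity $\Tr\he_k = \rho^k\he_k$ (\Cref{fct:semi-group}, item 3), for any $g \in L_2(\normal)$ one has
\[
\Tr g(x) \;=\; \littlesum_{k \geq 0}\rho^k\hat g(k)\he_k(x) \;=\; \E_{y\sim\normal(0,1)}\Big[g(y)\littlesum_{k \geq 0}\rho^k\he_k(x)\he_k(y)\Big].
\]
On the other hand, the probabilistic representation $\Tr g(x) = \E_{Z\sim\normal(0,1)}[g(\rho x + \sqrt{1-\rho^2}\,Z)]$ can be rewritten by the change of variables $y = \rho x + \sqrt{1-\rho^2}\,Z$; completing the square in the resulting Gaussian density and dividing out the standard Gaussian weight in $y$ produces exactly the kernel on the right-hand side of the stated formula. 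Equating the two integral representations of $\Tr g$ for $g$ ranging over the Hermite basis (where the spectral series is a finite sum and no convergence issue arises) identifies the two kernels pointwise for $|\rho| < 1$, where Mehler's series converges absolutely. The main subtlety I anticipate is justifying the interchange of sum and expectation in the displayed equation; I would handle this by first restricting to polynomial $g$ (finite sums) and then extending by the density argument invoked for completeness, together with dominated convergence based on absolute convergence of the Mehler kernel for $|\rho|<1$.
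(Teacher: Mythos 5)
The paper states this as a background Fact with a pointer to the literature (e.g., Bogachev) and gives no proof, so there is no in-paper argument to compare against; your blind derivation is correct and follows the standard textbook route: Rodrigues' formula plus the three-term recurrence for the differentiation rule, orthonormality plus completeness of $\{\he_k\}$ for Parseval, and identification of $\sum_{k\geq 0}\rho^k\he_k(x)\he_k(y)$ with the Gaussian integral kernel of $\Tr$ for Mehler. The one step deserving a bit more care is the end of the Mehler argument: matching the Hermite coefficients of $y\mapsto K_\rho(x,y)$ (the density ratio $\frac{1}{\sqrt{1-\rho^2}}\exp(-\frac{(\rho x-y)^2}{2(1-\rho^2)}+\frac{y^2}{2})$, which you should also note lies in $L_2(\normal)$ for $|\rho|<1$) against $\rho^k\he_k(x)$ only yields equality of the two sides in $L_2(\normal)$, i.e.\ for almost every $y$ at each fixed $x$; to get the stated pointwise identity you need both sides continuous in $y$, and for the series side this requires a quantitative growth bound such as Cram\'er's inequality $|\he_k(t)|\lesssim e^{t^2/4}$, which upgrades your "absolute convergence" remark to locally uniform convergence. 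With that single addition the argument is complete. (A slightly shorter alternative avoids the kernel/duality step entirely: insert the generating function $\sum_k \hep_k(x)s^k/k! = e^{sx-s^2/2}$ or the representation $\hep_k(y)=\E_{Z\sim\normal(0,1)}[(y+iZ)^k]$ into the series and evaluate the resulting Gaussian integral by completing the square, which produces the right-hand side of the identity directly.)
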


Finally, the following facts about Gaussian distribution are useful to our paper:
\begin{fact}[Stein's Lemma~\citep{Stein1981}] \label{fct:stein}Suppose that $\x$ is distributed as $\mathcal{N}(\bm{\mu},\sigma^2 \vec I)$ for some $\bm{\mu}\in\R^d,\sigma\in \R_+$ and let $g:\R^d\to\R$ be an almost everywhere differentiable function such that both $\E[g(\x)\x]$ and $\E[\nabla g(\x)]$ exist. Then, it holds
\[
\E[g(\x)(\x-{\bm{\mu}})]=\sigma^2\E[\nabla g(\x)]\;.
\]    
\end{fact}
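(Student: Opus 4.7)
The plan is to prove Stein's lemma by reducing to the one-dimensional case and then using integration by parts against the Gaussian density. Since $\x \sim \mathcal{N}(\bm{\mu}, \sigma^2 \vec I)$ has independent coordinates, the joint density factorizes as $\phi(\x) = \prod_{i=1}^d \phi_i(\x_i)$, where $\phi_i$ is the one-dimensional density of $\mathcal{N}(\mu_i, \sigma^2)$. It suffices to establish the identity coordinate by coordinate, i.e., $\E[g(\x)(\x_i - \mu_i)] = \sigma^2 \E[\partial_i g(\x)]$ for each $i \in [d]$, and then stack these into the vector identity.

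To prove the single-coordinate identity, first I would fix all coordinates $\x_j$ with $j \neq i$ and apply Fubini's theorem, so that the problem reduces to showing $\E[h(u)(u-\mu_i)] = \sigma^2 \E[h'(u)]$ for $u \sim \mathcal{N}(\mu_i, \sigma^2)$, where $h(u) \eqdef g(\x_1, \dots, \x_{i-1}, u, \x_{i+1}, \dots, \x_d)$ is (almost everywhere) differentiable in $u$ by Fubini applied to the assumption on $g$. The key algebraic observation is the identity $(u - \mu_i)\phi_i(u) = -\sigma^2 \phi_i'(u)$, which follows by differentiating the Gaussian density. Substituting this gives
\[
\E[h(u)(u - \mu_i)] = -\sigma^2 \int_{-\infty}^{\infty} h(u)\, \phi_i'(u)\, du.
\]
Integration by parts then yields $-\sigma^2 [h(u)\phi_i(u)]_{-\infty}^{\infty} + \sigma^2 \int h'(u)\phi_i(u)\, du$. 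The boundary term vanishes: since both $\E[g(\x)\x]$ and $\E[\nabla g(\x)]$ are assumed to exist (in particular are finite), the integrability of $h(u)\phi_i(u)$ together with the Gaussian tails of $\phi_i$ forces $h(u)\phi_i(u) \to 0$ along some sequence $|u|\to\infty$; a standard argument (e.g., applied to an approximating smooth sequence of cutoffs) makes this vanishing rigorous.

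The main subtlety will be justifying the use of integration by parts when $g$ is merely almost everywhere differentiable rather than $\mathcal{C}^1$. I would handle this by a standard mollification argument: convolve $g$ with a smooth, compactly supported bump function $\eta_\eps$ to obtain $g_\eps \in \mathcal{C}^\infty$ with $g_\eps \to g$ and $\nabla g_\eps \to \nabla g$ in $L^1(\phi)$ (where $\phi$ is the Gaussian density). The identity holds for each $g_\eps$ by the classical integration-by-parts argument above, and then one passes to the limit $\eps \to 0$ using dominated convergence, where the existence hypothesis on $\E[g(\x)\x]$ and $\E[\nabla g(\x)]$ provides the needed integrable envelopes. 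Once the one-dimensional identity is established under the weaker differentiability assumption, coordinate-wise stacking yields the claimed vector identity $\E[g(\x)(\x - \bm{\mu})] = \sigma^2 \E[\nabla g(\x)]$.
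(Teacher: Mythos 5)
The paper never proves this statement: it is imported verbatim as a known fact with a citation to Stein (1981), so there is no internal argument to compare against, and your coordinate-wise skeleton (Fubini reduction, the identity $(u-\mu_i)\phi_i(u)=-\sigma^2\phi_i'(u)$, integration by parts, and the vanishing boundary term) is indeed the standard route. Two small remarks on the parts that are fine: the boundary term can be handled cleanly rather than ``along some sequence,'' since $(h\phi_i)'=h'\phi_i-\sigma^{-2}(u-\mu_i)h\phi_i$ is integrable under your hypotheses, so $h(u)\phi_i(u)$ has limits at $\pm\infty$, and integrability forces both limits to be $0$.

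The genuine gap is in the final mollification step, and it is not a technicality: under the hypothesis as stated (``almost everywhere differentiable''), the identity is actually false, so no limiting argument can close the proof. Mollified gradients $\nabla g_\eps=g*\nabla\eta_\eps$ converge to the \emph{distributional} derivative of $g$, which coincides with the pointwise a.e.\ gradient only when $g$ is absolutely continuous along coordinate lines (weakly differentiable); your claim that $\nabla g_\eps\to\nabla g$ in $L^1(\phi)$, with dominated convergence supplied by the mere existence of $\E[\nabla g(\x)]$, is exactly where the argument breaks. Concretely, take $d=1$, $\mu=0$, $\sigma=1$, and $g$ the Cantor function extended by constants outside $[0,1]$: $g$ is a.e.\ differentiable with $g'=0$ a.e., both expectations exist, yet $\E[g(x)x]=\mathrm{Cov}(g(x),x)>0$ while $\sigma^2\E[g'(x)]=0$. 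The fix is to strengthen the hypothesis, as in Stein's original formulation, to $g$ absolutely continuous in each coordinate (equivalently, $h(b)-h(a)=\int_a^b h'$ on every line), at which point the one-dimensional integration by parts is directly justified via the fundamental theorem of calculus and Fubini, and the mollification detour becomes unnecessary. This is consistent with how the lemma is actually used in the paper, where the activations are Lipschitz truncations of monotone functions or are treated through their distributional derivatives (e.g., staircase functions whose derivatives are Dirac deltas), i.e., precisely the weak-derivative setting in which the identity holds.
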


\begin{fact}[Komatsu's Inequality]\label{fct:komatsu}
For any $t\geq 0$ it holds:
\[
C\frac{\exp(-t^2/2)}{t+\sqrt{t^2+4}}<\pr_{x\sim \mathcal N(0,1)}[x\geq t]< C\frac{\exp(-t^2/2)}{t+\sqrt{t^2+2}}\;,
\]
    where $C>0$ is a universal constant. 
\end{fact}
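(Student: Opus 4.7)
My plan is to compare the Gaussian tail integral
\[
R(t) \;:=\; \int_t^\infty e^{-x^2/2}\,dx \;=\; \sqrt{2\pi}\,\pr_{x\sim \mathcal N(0,1)}[x\geq t]
\]
with the candidate envelopes $h_\alpha(t) := 2 e^{-t^2/2}/(t+\sqrt{t^2+\alpha})$ taken separately at $\alpha=2$ (for the upper bound) and $\alpha=4$ (for the lower bound). Both $R$ and $h_\alpha$ vanish at $+\infty$, so if one can show the pointwise derivative inequality $h_\alpha'(t) \leq R'(t)$ on $[0,\infty)$, then integration from $t$ to $\infty$ yields $h_\alpha(t) \geq R(t)$, while the reverse comparison on derivatives yields the reverse comparison on the functions themselves. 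This converts the Komatsu estimate into two elementary one-variable calculus facts.

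The single calculation driving everything is the derivative of $h_\alpha$. Writing $s = s(t) := \sqrt{t^2+\alpha}$ and applying the quotient rule together with the identity $(t+s)\cdot s = t\,s + t^2 + \alpha$, a short computation gives
\[
h_\alpha'(t) \;=\; -\,e^{-t^2/2}\cdot \frac{2\bigl(t\,s + 1\bigr)}{s\,(t+s)}.
\]
Since $R'(t) = -e^{-t^2/2}$, comparing $h_\alpha'(t)$ with $R'(t)$ reduces to whether the factor $2(ts+1)/(s(t+s))$ is at least or at most $1$. Clearing the positive denominator $s(t+s)$ turns this into the scalar comparison between $t\,s(t)$ and $t^2 + \alpha - 2$. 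For $\alpha \geq 2$ both sides are nonnegative on $[0,\infty)$, so squaring is equivalence-preserving and the comparison further reduces to the sign of $(4-\alpha)(t^2+\alpha) - 4$.

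Instantiating $\alpha$ finishes the proof. For $\alpha=2$ we get $(4-2)(t^2+2) - 4 = 2t^2 \geq 0$, equivalently $t\,s(t) \geq t^2$ (which is simply $\sqrt{t^2+2}\geq t$), so the factor above is $\geq 1$, hence $h_2'(t) \leq R'(t)$ and integration gives $R(t) \leq h_2(t)$. For $\alpha=4$ we get $(4-4)(t^2+4) - 4 = -4 \leq 0$, equivalently $t\,s(t) \leq t^2 + 2$, the factor is $\leq 1$, giving $h_4'(t) \geq R'(t)$ and hence $R(t) \geq h_4(t)$. Dividing through by $\sqrt{2\pi}$ yields the claimed inequality with universal constant $C = 2/\sqrt{2\pi}$; strictness for $t\geq 0$ follows because each of the two derivative comparisons is strict for every $t$, so integration preserves strict inequality. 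The only mild point of care is to verify that both sides of the comparison $t\,s(t)$ vs.\ $t^2 + \alpha - 2$ are nonnegative before squaring, which is automatic for $\alpha\geq 2$; no other step in the plan presents any real obstacle.
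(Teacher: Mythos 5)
Your proposal is correct, and in fact the paper offers no argument to compare against: Komatsu's inequality is stated there as a cited \emph{Fact} without proof, so your derivative-comparison derivation fills that gap with the standard classical argument. Your computation $h_\alpha'(t) = -2e^{-t^2/2}(ts+1)/\bigl(s(t+s)\bigr)$ with $s=\sqrt{t^2+\alpha}$ is right, the reduction of $h_\alpha'$ vs.\ $R'=-e^{-t^2/2}$ to the comparison of $ts$ with $t^2+\alpha-2$, and then (after squaring, legitimate since both sides are nonnegative for $\alpha\geq 2$) to the sign of $(4-\alpha)(t^2+\alpha)-4$, is correct for both $\alpha=2$ and $\alpha=4$; integrating from $t$ to $\infty$ and dividing by $\sqrt{2\pi}$ gives both bounds with the explicit constant $C=\sqrt{2/\pi}$, which is a nice bonus over the paper's unspecified universal constant. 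One minute inaccuracy: for $\alpha=2$ the derivative comparison is \emph{not} strict at $t=0$ (there $ts=t^2$ and $h_2'(0)=R'(0)=-1$), so your stated reason for strictness is slightly off; but strict inequality of the integrated functions still holds for every $t\geq 0$ because the pointwise comparison is strict on all of $(t,\infty)$ except at most a single point, which has measure zero. With that one-line fix the proof is complete.
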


\section{Discussion on Regular Activations}\label{app:function-class}

Let us first recall the definitions of the (Extended-)$(B,L)$-Regular activations.
\begin{definition}[$(B,L)$-Regular Activations]Given parameters $B,L>0$, we define the class of $(B,L)$-Regular activations, denoted by $\mathcal{H}(B,L)$, as the class containing all functions $\sigma:\R\to\R$ such that 1) $\|\sigma\|_{L_\infty}\leq B$ and 2) $\|\sigma'\|_{L_2}\leq L$.
   
 Given $\eps>0$, we define the class of $\eps$-Extended $(B,L)$-Regular activations, denoted by  $\mathcal{H}_\epsilon(B,L)$, as the class containing all activations $\sigma_1: \R\to \R$ for which there exists $\sigma_2\in \mathcal{H}(B,L)$ such that $\|\sigma_1-\sigma_2\|_{L_2}^2\leq\eps$. 
\end{definition}

We remark that our algorithm can be applied to many non-differentiable activations.
\begin{remark}[On Differentiability]\label{rmk:differentiability}
{\em In the definition of (Extended-)$(B,L)$-Regular activations, the differentiability of $\sigma$ is required. However, this restriction can be relaxed for any activation that is a locally-Lipschitz\footnote{We say a function $\sigma$ is locally-Lipschitz if for any $z_0\in\R$, there exists positive reals $b$ and $\delta$ such that for any $z\in[z_0-\delta,z_0+\delta]$, it holds $|\sigma(z) - \sigma(z_0)|\leq b|z-z_0|$.} function, since they are differentiable almost-everywhere \citep{federer1969geometric}. Therefore, since the set of non-differentiable points is measure-zero, we can define the derivative of $\sigma$ at those non-differentiable points freely (for example, using Clarke Differentials \citep{clarke1990optimization}).

Furthermore, our results can also be applied to functions that are not even locally-Lipschitz. In particular, functions that have finite `monotone jumps' like $\sigma(z) = \sgn(z-t)$ are subject to our results (\Cref{app:lem:activation-truncation-lip-and-mon-jump}). 
 
In fact, the set of smoothed functions are dense to our functions (i.e., there always exists a $\rho\in(0,1)$ such that $\|\sigma-\Tr\sigma\|_{L_2}^2\leq \eps$). Therefore, statistically, there is no difference in using either of the functions.}
\end{remark}

\subsection{Rescaling to the Unit Sphere}

Next, we comment on the impact of rescaling the activation $\sigma$.
\begin{remark}[Rescaling the Parameter]\label{app:rmk:rescale-w}
 {\em   Let $\sigma$ be a monotone (Extended-)$(B,L)$-regular activation. In our approach, it is without loss of generality to assume
 that $\|\w\|_2 = 1$.   This is because, for any nonzero vector $\w\in\B(W)$, 
 we can always rescale the activation $\sigma(\w\cdot\x)$ to $\sigma(\|\w\|_2(\w/\|\w\|_2)\cdot\x)\eqdef \bar{\sigma}((\w/\|\w\|_2)\cdot\x) =\bar{\sigma}(\w'\cdot\x)$ where $\|\w'\|_2 = 1$. 
 In other words, we define $\bar{\sigma}(z) = \sigma(\|\w^*\|_2 z)$, 
 where $\w^*$ is one of the target vectors.
 
 After rescaling, 
the second moment of $\bar{\sigma}'$ increases to 
 $\|\bar{\sigma}'\|_{L_2}\leq \|\w^*\|_2 \|\sigma'(\|\w^*\|_2 z)\|_{L_2}$ (which can be further bounded by using that $\sigma$ is close to a function $\hat{\sigma}$ with $\|\hat{\sigma}'\|_{\infty}\leq \|\w^*\|_2 B/\eps$).
 Therefore, 
 the parameter $L$ can potentially scale with $W$. 
 
 {For instance, if $\sigma$ is a $b$-Lipschitz activation, then the derivative of the rescaled function satisfies $|\bar{\sigma}'(z)| = \|\w^*\|_{2}|\sigma'(\|\w^*\|_{2}z)|\leq \|\w^*\|_2 b$ meaning that $\bar{\sigma}$ effectively becomes a $Wb$-Lipschitz activation. }
 However, we emphasize that our approximation error 
 obtained in \Cref{thm:main-general} does not scale 
 with any of these parameters $B,L$. 
These parameters only influence the sample complexity and runtime of our algorithm in a polynomial manner.}
\end{remark}

In the following lemma, we show that without loss of generality we can assume that we know the norm of the unknown vector $\wstar$ as we can reduce the problem into testing $1/\poly(\eps,1/L,1/B)$  different values for the norm.
\begin{lemma}\label{app:prop:wlog-konw-norm-wstr}
      Fix $\eps>0$ and $\delta\in(0,1)$. Let $W>0$ and let $\sigma$ be an activation such that for all $\lambda\in(0,W)$, $\sigma(\lambda z)$ is a $(B,L)$-Regular activation. Fix a unit vector $\vec w$  and assume that for some $0<\lambda\leq W$, it holds that $\Exy[(\sigma(\lambda\vec w\cdot\x)-y)^2]\leq \eps$. Then, with $N=\poly(1/\eps,W,B,L)$ samples and $\poly(d,N)$ runtime, we can find $\lambda'>0$ so that $\Exy[(\sigma(\lambda'\vec w\cdot\x)-y)^2]\leq 4\eps$.
\end{lemma}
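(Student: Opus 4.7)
The plan is to combine a geometric grid search over candidate magnitudes $\lambda$ with empirical risk minimization (ERM) on the squared loss. Concretely, I would fix $\alpha = \eps/(16BL)$ and the geometric grid $\Lambda = \{\lambda_i = W/(1+\alpha)^i : i = 0, 1, \ldots, K\}$, where $K$ is chosen so that $\lambda_K$ is polynomially small in $1/\eps, W, B, L$; this gives $|\Lambda| = \poly(1/\eps, W, B, L)$. For $N = \poly(|\Lambda|, 1/\eps, B, L, \log(1/\delta))$ i.i.d.\ samples from $\D$, I compute the empirical risk $\widehat{R}(\lambda_i) = \frac{1}{N}\sum_{j} (\sigma(\lambda_i \vec w \cdot \x^{(j)}) - y^{(j)})^2$ for every grid point, and return $\lambda' = \arg\min_{\lambda_i \in \Lambda} \widehat{R}(\lambda_i)$.

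The workhorse estimate is a Lipschitz-like bound in $\lambda$: using $\sigma(\lambda_1 z) - \sigma(\lambda_2 z) = \int_{\lambda_2}^{\lambda_1} z\sigma'(tz)\,dt$, the pointwise bound $|\sigma|\le B$, and the regularity hypothesis $\lambda\|\sigma'(\lambda\cdot)\|_{L_2(\calN)}\le L$, one Cauchy--Schwarz step gives
\[
\E_z[(\sigma(\lambda_1 z) - \sigma(\lambda_2 z))^2] \;\le\; 2B\,\E_z[|\sigma(\lambda_1 z) - \sigma(\lambda_2 z)|] \;\le\; 2BL\,\bigl|\log(\lambda_1/\lambda_2)\bigr|,\quad \lambda_1,\lambda_2>0.
\]
In particular, for the unknown $\lambda^*\in[\lambda_K, W]$ the grid contains some $\lambda^\dagger$ with $\lambda^*/\lambda^\dagger\le 1+\alpha$, so $\|\sigma(\lambda^\dagger\cdot) - \sigma(\lambda^*\cdot)\|_{L_2}^2 \le 2BL\log(1+\alpha) \le \eps/8$. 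Writing $R(\lambda) \eqdef \Exy[(\sigma(\lambda\vec w\cdot\x)-y)^2]$ and applying the $L_2$ triangle inequality, $\sqrt{R(\lambda^\dagger)} \le \sqrt{R(\lambda^*)} + \sqrt{\eps/8}$, so $R(\lambda^\dagger) \le 2\eps$.

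Two loose ends remain. For the edge case $\lambda^* \in (0, \lambda_K)$, the regularity assumption for every $\lambda \in (0,W)$ implies that $\{\sigma(\lambda\cdot)\}_{\lambda>0}$ is $L_2$-Cauchy as $\lambda\to 0^+$ (the finite $L_2$ norm of $\sigma'$ on bounded intervals precludes wild behavior, and combined with $|\sigma|\le B$, Dominated Convergence yields $L_2$-convergence to a limit $\sigma_0$); taking $\lambda_K$ polynomially small ensures $\|\sigma(\lambda_K\cdot) - \sigma(\lambda^*\cdot)\|_{L_2}^2 \le \eps/8$ for all $\lambda^*\le\lambda_K$, so $R(\lambda_K)\le 2\eps$. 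For uniform concentration, the hypothesis $R(\lambda^*)\le\eps$ and $|\sigma|\le B$ jointly force $\E[y^2] \le 2B^2 + 2\eps$; after clipping $|y|$ at $T = \poly(B/\sqrt\eps)$ (which perturbs each $R(\lambda)$ by at most $\eps/8$), the clipped squared loss lies in $[0, O(B^2/\eps)]$, and Hoeffding's inequality with a union bound over the $|\Lambda|$ candidates yields $|\widehat R(\lambda_i) - R(\lambda_i)| \le \eps/4$ uniformly with high probability using $N = \poly(|\Lambda|, B, L, 1/\eps, \log(1/\delta))$ samples. Putting everything together, $R(\lambda') \le R(\lambda^\dagger) + \eps/2 \le 3\eps < 4\eps$. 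The main obstacle is precisely the small-$\lambda^*$ edge case, since the crude bound $2BL|\log(\lambda_1/\lambda_2)|$ blows up as the smaller argument tends to $0$; the resolution is the $L_2$-limit argument above, which uses the full strength of the regularity hypothesis (valid for \emph{every} $\lambda\in(0,W)$, not a single value) to make a single anchor grid point $\lambda_K$ serve as a good proxy for every unseen $\lambda^*\le\lambda_K$.
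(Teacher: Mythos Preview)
Your high-level strategy---a geometric grid in $\lambda$ followed by empirical testing---is exactly what the paper does. Where you differ is the one-step estimate: your bound $\|\sigma(\lambda_1\cdot)-\sigma(\lambda_2\cdot)\|_{L_2}^2\le 2BL\,|\log(\lambda_1/\lambda_2)|$, obtained directly from the fundamental theorem of calculus, one Cauchy--Schwarz, and the hypothesis $t\|\sigma'(t\cdot)\|_{L_2}\le L$, is correct and is a genuine simplification over the paper's key lemma (\Cref{app:lem:without-loss-of-generality}). The paper proves $\|\sigma-\sigma((1+r)\cdot)\|_{L_2}^2\le\eps$ indirectly: it first smooths $\sigma$ with the \OU operator to obtain a Lipschitz surrogate $\Tre_\delta\sigma$, controls the smoothing error via Markov and a total-variation comparison of two nearby Gaussians, and only then invokes Lipschitzness of the surrogate. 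Your argument bypasses all of this machinery.

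The one real gap is the edge case $\lambda^*\in(0,\lambda_K)$. Dominated Convergence does give $\sigma(\lambda\cdot)\to\sigma(0)$ in $L_2$, but it supplies \emph{no rate}, so the assertion that a polynomially small $\lambda_K$ suffices is not established as written. (The paper's own grid starts at $(1+r)^1>1$ and simply does not address this regime either.) The fix uses the regularity hypothesis at a \emph{single} fixed scale, not ``every $\lambda$'' as you stress: take $\mu_0=W/2$, so $\mu_0\|\sigma'(\mu_0\cdot)\|_{L_2(\calN)}\le L$; since the $\calN(0,\mu_0^2)$ density is bounded below by $c/\mu_0$ on $[-\mu_0,\mu_0]$, this yields the Lebesgue bound $\int_{-\mu_0}^{\mu_0}\sigma'(u)^2\,du\lesssim L^2/W$. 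Then for $|z|\le R$ and $\lambda R\le\mu_0$, Cauchy--Schwarz on $\int_0^{\lambda z}\sigma'$ gives $|\sigma(\lambda z)-\sigma(0)|^2\lesssim \lambda R\,L^2/W$; choosing $R\asymp\sqrt{\log(B^2/\eps)}$ to control the Gaussian tail $|z|>R$ and then $\lambda_K\asymp W\eps/(L^2R)$ makes both contributions at most $\eps/8$, and this $\lambda_K$ is polynomial. A smaller loose end: your clipping of $y$ at $T=\poly(B/\sqrt\eps)$ is not controllable from $\E[y^2]<\infty$ alone, since $\E[y^2\1\{|y|>T\}]$ can decay arbitrarily slowly; clip at $T=B$ instead---by the projection argument in \Cref{app:claim:bound-y} this never increases any $R(\lambda)$ and bounds the loss by $4B^2$, so Hoeffding applies directly.
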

\begin{proof}
Let $r=\poly(\eps,1/L,1/B)$ and we fix the following grid $(1+r)^k$, for $k=1,\ldots, O(\log(W)/r)$. From \Cref{app:lem:without-loss-of-generality}, for some $k\leq O(\log(W)/r)$ it holds that $\Exy[(\sigma(\lambda\vec w\cdot\x)-\sigma((1+r)^k\vec w\cdot\x))^2]\leq 2\eps$. Hence, by testing all the possible choices and outputting the one with the minimum error suffices. This testing can be done with $\poly(1/\eps,W,B,L)$ samples.
\end{proof}
\begin{lemma}\label{app:lem:without-loss-of-generality}
      Let $\sigma$ be a $(B,L)$-Regular activation. Let $\eps>0$ sufficiently small, then for $r\leq \poly(\eps,1/L,1/B)$, it holds that
      $\E_{z\sim \normal(0,1)}[(\sigma(z)-\sigma((1+r)z))^2]\leq \eps$.
\end{lemma}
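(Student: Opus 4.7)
The plan is to split the expectation over $z$ into a bulk region $|z|\le R$ and a tail $|z|>R$, handle the tail using only the uniform bound $\|\sigma\|_{L_\infty}\le B$, and control the bulk via a change-of-variables argument exploiting $\|\sigma'\|_{L_2}\le L$.

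First, by the fundamental theorem of calculus, $\sigma((1+r)z)-\sigma(z)=\int_z^{(1+r)z}\sigma'(u)\,du$, and applying Cauchy--Schwarz to this one-dimensional integral (whose domain has length $|rz|$) gives the pointwise bound $(\sigma((1+r)z)-\sigma(z))^2\le |rz|\int_I\sigma'(u)^2\,du$, where $I$ denotes the interval between $z$ and $(1+r)z$. For the tail $|z|>R$ I would simply use $|\sigma(z)-\sigma((1+r)z)|\le 2B$ together with $\pr[|z|>R]\le 2e^{-R^2/2}$ to obtain a contribution of at most $8B^2 e^{-R^2/2}$; choosing $R=\Theta(\sqrt{\log(B/\eps)})$ makes this at most $\eps/2$.

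For the bulk $|z|\le R$, I would substitute the pointwise bound into the expectation and interchange the $z$-expectation with the $u$-integral via Fubini. For a given $u$ the event $u\in I$ restricts $z$ to an interval of length at most $|u|r$, and the intersection with $[-R,R]$ is nonempty only when $|u|\le (1+r)R$; bounding the standard Gaussian density by $1/\sqrt{2\pi}$ on this range yields $\E_z[(\sigma(z)-\sigma((1+r)z))^2\,\mathbf{1}\{|z|\le R\}]\lesssim r^2 R^2\int_{-2R}^{2R}\sigma'(u)^2\,du$. To pass from this Lebesgue $L^2$ integral of $\sigma'$ back to the Gaussian $L_2$ norm (which is bounded by $L^2$), I would divide and multiply by the Gaussian density: since $\phi(u)\ge (2\pi)^{-1/2}e^{-2R^2}$ on the truncation range, the Lebesgue integral is at most $\sqrt{2\pi}\,e^{2R^2}L^2$. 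Combining the two contributions gives $\E[(\sigma(z)-\sigma((1+r)z))^2]\lesssim R^2 r^2 e^{2R^2}L^2 + B^2 e^{-R^2/2}$, and with the chosen $R$, requiring $r^2\lesssim \eps/(R^2 L^2 e^{2R^2})$ yields the advertised bound $r\le \poly(\eps,1/B,1/L)$, since for our choice of $R$ the factor $e^{2R^2}$ is polynomial in $B/\eps$.

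The main obstacle will be the Lebesgue-to-Gaussian conversion in the bulk estimate: the factor $e^{2R^2}$ coming from $1/\phi(2R)$ could in principle be huge, so it is essential to take $R$ only just large enough (logarithmic in $B/\eps$) to kill the tail. This keeps $e^{2R^2}$ polynomial in $B/\eps$ and preserves the polynomial dependence on $1/B$, $1/L$, and $\eps$ in the tolerance for $r$.
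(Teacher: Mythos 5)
Your proof is correct, and it takes a genuinely different route from the paper's. The paper proves this lemma by smoothing: it replaces $\sigma$ by $\Tre_{\delta}\sigma$, which by \Cref{fct:semi-group} is $B/(1-\delta^2)^{1/2}$-Lipschitz, bounds $\|\sigma-\Tre_{\delta}\sigma\|_{L_2}^2\leq 3(1-\delta)L^2$ via \Cref{clm:difference}, transfers this bound from the law $\normal(0,1)$ to $\normal(0,(1+r)^2)$ through a Markov-plus-total-variation argument (the TV distance between the two Gaussians is $O(r)$), and then compares $\Tre_\delta\sigma(z)$ with $\Tre_\delta\sigma((1+r)z)$ using the Lipschitz bound. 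Your argument instead works with $\sigma$ directly: a bulk/tail split, the fundamental theorem of calculus plus Cauchy--Schwarz on the short interval between $z$ and $(1+r)z$, Fubini, and a Lebesgue-to-Gaussian density comparison on $|u|\leq 2R$ costing a factor $e^{2R^2}=\poly(B/\eps)$ for $R=\Theta(\sqrt{\log(B/\eps)})$. Your route is more elementary and self-contained (no OU machinery, no TV computation), and it makes the admissible threshold for $r$ fully explicit, roughly $r\lesssim \eps^{5/2}/(B^4L)$ up to logarithmic factors; the paper's route has the advantage of reusing lemmas it needs anyway and of never invoking the identity $\sigma(b)-\sigma(a)=\int_a^b\sigma'(u)\,\d u$, which in your argument implicitly requires $\sigma$ to be absolutely continuous. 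That requirement is consistent with the paper's regularity conventions (see \Cref{rmk:differentiability}, which restricts attention to locally Lipschitz activations when discussing differentiability), so it is not a gap, but it is worth stating explicitly as a hypothesis of the FTC step.
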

\begin{proof}
    From \Cref{clm:difference}, it holds that $ \E_{z\sim \normal(0,1)}[(\sigma(z)-\Tre_{\delta}\sigma(z))^2]\leq 3(1-\delta)L^2$.
    Using Markov's inequality we have that 
    \[
    \pr[|\sigma(z)-\Tre_{\delta}\sigma(z)|\geq \eps]\leq \frac{\E_{z\sim \normal(0,1)}[(\sigma(z)-\Tre_{\delta}\sigma(z))^2]}{\eps^2}\leq\frac{3(1-\delta)L^2}{\eps^2}\;.
    \]
    Furthermore, note that 
    \[
     \pr_{z\sim\normal(0,1)}[|\sigma((1+r)z)-\Tre_{\delta}\sigma((1+r)z)|\geq \eps]=  \pr_{z\sim\normal(0,(1+r)^2)}[|\sigma(z)-\Tre_{\delta}\sigma(z)|\geq \eps]\;.
    \]
    Furthermore, note that the total variation distance between two zero mean Guassians with variance $\sigma_1$ and $\sigma_2$ is bound from above by $\sqrt{\log(\sigma_1/\sigma_2)-\sigma_2^2/(2\sigma_1^2)-1/2}$, for our case this is smaller than $2 r$. Therefore, we have that
    \[
    \pr_{z\sim\normal(0,1)}[|\sigma((1+r)z)-\Tre_{\delta}\sigma((1+r)z)|\geq \eps]\leq \pr_{z\sim\normal(0,1)}[|\sigma(z)-\Tre_{\delta}\sigma(z)|\geq \eps]+2r\;.
    \]
    Therefore,
    \[
    \pr_{z\sim\normal(0,1)}[|\sigma((1+r)z)-\Tre_{\delta}\sigma((1+r)z)|\geq \eps]\leq \frac{3(1-\delta)L^2}{\eps^2}+2r\;.
    \]
    Combining, we have that
    \begin{align*}
        \E_{z\sim \normal(0,1)}[(\sigma((1+r)z)-\Tre_{\delta}\sigma((1+r)z))^2]&\leq\eps^2 \pr_{z\sim\normal(0,1)}[|\sigma((1+r)z)-\Tre_{\delta}\sigma((1+r)z)|\leq \eps] \\&+ B^2 \pr_{z\sim\normal(0,1)}[|\sigma((1+r)z)-\Tre_{\delta}\sigma((1+r)z)|\geq \eps]
        \\&\leq \eps^2+B^2(2r+\frac{3(1-\delta)L^2}{\eps^2})\;.
    \end{align*}
    Furthermore, from \Cref{fct:semi-group} it holds that $\Tre_{\delta}\sigma(z) $ is $B/(1-\delta^2)^{1/2}$-Lipschitz. Therefore, we have that
    \begin{align}\label{app:eq:lem:without-loss-of-generality}
     \E_{z\sim \normal(0,1)}[(\Tre_{\delta}\sigma(z)-\Tre_{\delta}\sigma((1+r)z))^2]\leq \frac{B^2 r^2 }{(1-\delta^2)}\;.    
    \end{align}
    Choosing $\delta$ so that $1-\delta,1-\delta^2\leq O(\eps^4/(B^2+L^2))$ and $r=\eps (1-\delta^2)/(B+1)$, we get that
    \[
    \E_{z\sim \normal(0,1)}[(\sigma(z)-\sigma((1+r)z))^2]\leq \eps\;.
    \]
\end{proof}

\subsection{Truncating the Regular Activations}

Next, we observe that for any $\sigma\in\mathcal{H}_\eps(B,L)$, one can assume without loss of generality that the labels $y$ are bounded and the support of $\sigma'$ is also bounded.
First, we show that we can truncate the labels $y$ without loss of generality.

\begin{claim}\label{app:claim:bound-y}
    Let $\sigma$ be a $(B,L)$-Regular activation. Let $\bar{y} = \sgn(y) \min\{|y|, B\}$. Then, $\Exy[(\bar{y} - \sigma(\w^*\cdot\x))^2]\leq \opt$. Furthermore, for any $\wh{\w}$ such that $\Exy[(\bar{y} - \sigma(\wh{\w}\cdot\x))^2]\leq O(\opt)$, we have $\Exy[(y - \sigma(\wh{\w}\cdot\x))^2]\leq O(\opt)$. Hence it is w.l.o.g.\ to assume that $|y|\leq B$. \end{claim}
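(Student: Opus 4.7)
The plan is to use the fact that $\|\sigma\|_{L_\infty}\leq B$, so for any $\vec w$ the prediction $\sigma(\vec w\cdot\x)$ lies in $[-B,B]$. Truncating $y$ to $\bar y$ is exactly the Euclidean projection onto $[-B,B]$, and projecting onto a convex set containing the target cannot increase distances.

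First, I would argue the pointwise inequality $|\bar y - \sigma(\wstar\cdot\x)| \leq |y - \sigma(\wstar\cdot\x)|$. If $|y|\leq B$ this is an equality since $\bar y = y$. If $|y|>B$, then $\sigma(\wstar\cdot\x)\in[-B,B]$ lies on the same side of $\bar y = \sgn(y)B$ as $y$, and one checks directly that $|\bar y - \sigma(\wstar\cdot\x)| = |y - \sigma(\wstar\cdot\x)| - (|y|-B)$. Taking expectations and then squares gives the first claim $\Exy[(\bar y - \sigma(\wstar\cdot\x))^2]\leq \opt$. The same pointwise argument yields $|y-\bar y|\leq |y-\sigma(\wstar\cdot\x)|$, since $|y-\bar y| = \max(0,|y|-B)$ and $|y-\sigma(\wstar\cdot\x)|\geq |y|-B$ whenever $|y|>B$. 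Taking expectations gives $\|y-\bar y\|_{L_2}\leq \sqrt{\opt}$.

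For the second statement, I would use the triangle inequality in $L_2(\D)$:
\[
\|y - \sigma(\wh\w\cdot\x)\|_{L_2(\D)} \leq \|y - \bar y\|_{L_2(\D)} + \|\bar y - \sigma(\wh\w\cdot\x)\|_{L_2(\D)} \leq \sqrt{\opt} + \sqrt{O(\opt)} = O(\sqrt{\opt}).
\]
Squaring both sides yields $\Exy[(y - \sigma(\wh\w\cdot\x))^2]\leq O(\opt)$, as desired.

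There is no real obstacle here beyond recording the pointwise projection inequality; the whole argument is a two-step triangle-inequality reduction together with the observation that $\sigma(\vec w\cdot\x)\in[-B,B]$ pointwise. Consequently the algorithm may, throughout the paper, assume access to labels with $|y|\leq B$: preprocessing the dataset by replacing $y$ with $\bar y$ only increases $\opt$ and the final error by a constant factor.
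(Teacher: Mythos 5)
Your proposal is correct and follows essentially the same route as the paper: both rest on the non-expansiveness of the projection onto $[-B,B]$ (together with $\sigma$ taking values in $[-B,B]$) for the first claim, and a two-term decomposition through $\bar y$ for the second. The only difference is bookkeeping — you bound $\|y-\bar y\|_{L_2}\leq\sqrt{\opt}$ pointwise and use the $L_2$ triangle inequality, whereas the paper applies Young's inequality twice; your version gives slightly better constants, and the one wording slip ("taking expectations and then squares" should be squaring the pointwise inequality first, then taking expectations) is immaterial.
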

\begin{proof}
    Let $\Pi(u) = \sgn(u)\min\{|u|,B\}$ be the projection operator projecting $u\in\R$ to the interval $[-B,B]$ and let
$\bar{y}\eqdef \Pi(y)$. Since $|\sigma(z)|\leq B$ almost surely, we have $\Pi(\sigma(z)) = \sigma(z)$. Thus by the property of projection operators, we have $|y - \sigma(\w^*\cdot\x)|\geq |\Pi(y) - \Pi(\sigma(\w^*\cdot\x))|$. Therefore, we have $\Exy[(\bar{y} - {\sigma}(\w^*\cdot\x))^2] = \Exy[(\Pi(y) - \Pi(\sigma(\w^*\cdot\x)))^2]\leq \Exy[(y - \sigma(\w^*\cdot\x))^2]\leq \opt$.
{The arguments above shows that $\w^*$ is also an $\opt$ solution when $y$ is truncated.}

{
Now let $\wh{\w}$ be a constant approximate solution with respect to the truncated labels: $\Exy[(\bar{y} - \sigma(\wh{\w}\cdot\x))^2]\leq C\opt$. Then, we have
\begin{align*}
     \Exy[(y - \sigma(\wh{\w}\cdot\x))^2]&\leq 2\Exy[(y - \bar{y})^2] + 2\Exy[(\bar{y} - \sigma(\wh{\w}\cdot\x))^2]\\
    &\leq 4\Exy[(y - \sigma(\w^*\cdot\x))^2] + 4\Exy[(\bar{y} - \sigma(\w^*\cdot\x))^2] + 2C\opt\\
    &\leq  (8+2C)\opt.
\end{align*}
Therefore, $\wh{\w}$ is also an absolute constant approximate solution with respect to the true labels $y$, thus, it is without loss of generality to consider the $L_2^2$ loss with the truncated labels $\bar{y}$. 
}
\end{proof}

Finally, we show that for $\sigma\in\mathcal{H}(B,L)$, $\sigma$ can be truncated so that the support of $\sigma'$ can be bounded by $M<\infty$.

\begin{claim}\label{app:claim:bounded-support-sigma'}
    Let $\sigma$ be a $(B,L)$-Regular 
 activation. {Then, there exists a function $\tilde{\sigma}\in\mathcal{H}(B,L)$ that satisfies $\|\tilde{\sigma} - \sigma\|_{L_2}^2\leq \eps$ and such that the support of $\tilde{\sigma}'$ is $M$ and is bounded from above by 
 $$M \leq \sqrt{2\log(4B^2/\eps) - \log\log(4B^2/\eps)}.$$ Moreover, if $\wh{\w}$ satisfies $\Exy[(y- \tilde{\sigma}(\wh{\w}\cdot\x))^2]\leq O(\opt)+\eps$, then also $\Exy[(y- {\sigma}(\wh{\w}\cdot\x))^2]\leq O(\opt) + \eps$. Thus,}
    one can {replace $\sigma$ with $\tilde{\sigma}$ and } assume without loss of generality that the support of $\sigma'$ is bounded by $M$. \end{claim}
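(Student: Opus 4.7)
The plan is to construct $\tilde\sigma$ by literally truncating $\sigma$ outside of a carefully chosen window $[-M,M]$, so that the new activation is constant on $(-\infty,-M]\cup[M,+\infty)$ while coinciding with $\sigma$ on $[-M,M]$. Concretely, I would define
\[
\tilde\sigma(z) \;\eqdef\; \begin{cases}\sigma(-M), & z<-M,\\ \sigma(z), & -M\leq z\leq M,\\ \sigma(M), & z>M.\end{cases}
\]
By construction, $\tilde\sigma'(z)=\sigma'(z)\mathbf{1}\{|z|\leq M\}$, so the support of $\tilde\sigma'$ is contained in $[-M,M]$. Moreover $\|\tilde\sigma\|_{L_\infty}\le\|\sigma\|_{L_\infty}\le B$ and $\|\tilde\sigma'\|_{L_2}\le\|\sigma'\|_{L_2}\le L$, confirming that $\tilde\sigma\in\mathcal H(B,L)$.

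The first quantitative step is to bound $\|\tilde\sigma-\sigma\|_{L_2}^2$. Since $\tilde\sigma$ and $\sigma$ agree on $[-M,M]$ and both are bounded in absolute value by $B$, we have pointwise $|\tilde\sigma(z)-\sigma(z)|^2\le 4B^2\,\mathbf 1\{|z|>M\}$, which yields
\[
\|\tilde\sigma-\sigma\|_{L_2}^2 \;\le\; 4B^2\,\pr_{z\sim\calN(0,1)}[\,|z|>M\,].
\]
Applying the standard Gaussian tail bound (for instance via Komatsu's inequality, \Cref{fct:komatsu}) gives $\pr[|z|>M]\lesssim \exp(-M^2/2)/M$. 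Plugging in $M^2 = 2\log(4B^2/\eps) - \log\log(4B^2/\eps)$, a short calculation shows $4B^2\pr[|z|>M]\le \eps$ (up to an absolute constant that can be absorbed by slightly enlarging $M$). This establishes the claimed $L_2^2$ proximity together with the stated bound on $M$.

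For the ``moreover'' part, I would use that $\x\sim\calN$ and hence $\E_\x[(\tilde\sigma(\wh\w\cdot\x)-\sigma(\wh\w\cdot\x))^2]=\|\tilde\sigma-\sigma\|_{L_2}^2\le\eps$ for every unit vector $\wh\w$. Then by the elementary inequality $(a+b)^2\le 2a^2+2b^2$ applied to $y-\sigma(\wh\w\cdot\x)=(y-\tilde\sigma(\wh\w\cdot\x))+(\tilde\sigma(\wh\w\cdot\x)-\sigma(\wh\w\cdot\x))$, we get
\[
\Exy[(y-\sigma(\wh\w\cdot\x))^2] \;\le\; 2\,\Exy[(y-\tilde\sigma(\wh\w\cdot\x))^2] + 2\|\tilde\sigma-\sigma\|_{L_2}^2 \;\le\; 2(O(\opt)+\eps)+2\eps,
\]
which is still $O(\opt)+O(\eps)$, as desired (after rescaling $\eps$ by a universal constant). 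The same argument in the reverse direction also shows $\opt_{\tilde\sigma}\le 2\opt+2\eps$, so running the algorithm on $\tilde\sigma$ and measuring error against the original $\sigma$ only incurs universal-constant blow-ups.

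The only real subtlety is getting the constants in the tail bound right so that exactly the stated value of $M$ suffices; this is purely a Gaussian-tail calculation and is the step where one must be most careful. Everything else—boundedness of $\tilde\sigma$, the $L_2$ bound on $\tilde\sigma'$, the reduction from $\tilde\sigma$-error to $\sigma$-error—follows from the construction and two applications of the triangle inequality.
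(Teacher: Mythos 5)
Your proposal is correct and follows essentially the same route as the paper's own proof: truncate $\sigma$ outside $[-M,M]$, bound $\|\tilde\sigma-\sigma\|_{L_2}^2$ by $4B^2\pr[|z|>M]$ via the Gaussian tail (Komatsu), and transfer the error guarantee between $\sigma$ and $\tilde\sigma$ using $(a+b)^2\le 2a^2+2b^2$. The only difference is cosmetic: the paper carries out the tail calculation for the stated $M$ explicitly rather than absorbing constants by enlarging $M$.
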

\begin{proof}
    Note that by choosing $M = \sqrt{2\log(4B^2/\eps) - \log\log(4B^2/\eps)}$, using \Cref{fct:komatsu} we have \begin{align*}
    \pr[|z|\geq M]\leq\frac{2\exp(-M^2/2)}{M} = \frac{(\eps/(4B^2))\sqrt{\log(4B^2/\eps)}}{\sqrt{2\log(4B^2/\eps) - \log\log(4B^2/\eps)}}\leq \frac{\eps}{4B^2}.
\end{align*}
Let us define
\begin{equation*}
    \tilde{\sigma}(z) = \begin{cases}
        {\sigma}(z),\;\text{when }|z|\leq M\\
        {\sigma}(M),\;\text{when }z\geq M\\
        {\sigma}(-M),\;\text{when }z\leq -M.
    \end{cases}
\end{equation*}
Then, since $\|{\sigma}\|_\infty\leq B$, we have $\|\tilde{\sigma}\|_\infty\leq B$, and it holds
\begin{align*}
    \Ez[({\sigma}(z) - \tilde{\sigma}(z))^2] &= \Ez[({\sigma}(z) - \tilde{\sigma}(z))^2\1\{|z|\geq M\}]\leq 4B^2\pr[|z|\geq M]\leq \eps.
\end{align*}
In addition, $\|\tilde{\sigma}'(z)\|_{L_2} =\|\sigma'(z)\1\{|z|\leq M\}\|_{L_2}\leq L$. In other words, there exists an activation $\tilde{\sigma}\in\mathcal{H}(B,L)$ such that $\|\tilde{\sigma} - \sigma\|_{L_2}^2\leq \eps$.
Furthermore, we have
\begin{align*}
    \Exy[(y - \tilde{\sigma}(\w^*\cdot\x))^2]&\leq 2\Exy[(y - \sigma(\w^*\cdot\x))^2] + 2\Ex[(\sigma(\w^*\cdot\x) - \tilde{\sigma}(\w^*\cdot\x))^2]\\
    &\leq C\opt + \eps,
\end{align*}
Now let $\wh{\w}$ satisfy $\Exy[(y- \tilde{\sigma}(\wh{\w}\cdot\x))^2]\leq C\opt + \eps$. We show that $\calL(\wh{\w}) \leq O(\opt) + \eps$. We only need to observe that
\begin{align*}
    \Exy[(y - \sigma(\wh{\w}\cdot\x))^2]&\leq 2\Exy[(y-\tilde{\sigma}(\wh{\w}\cdot\x))^2] + 2\Ex[(\tilde{\sigma}(\wh{\w}\cdot\x) - \sigma(\wh{\w}\cdot\x))^2]\\
    &\leq 2C\opt + 4\eps.
\end{align*}

Hence we can replace $\sigma$ with $\tilde{\sigma}\in\mathcal{H}(B,L)$ and focus on the $L_2^2$ loss with respect to $\tilde{\sigma}$. Therefore, we can assume without loss of generality that $\sigma(z)$ is a constant when $|z|\geq M$, in other words, for any $|z|\geq M$, we have $\sigma'(z) = 0$, and the support of $\sigma'$ is indeed bounded by $M$.
\end{proof}

\begin{remark}
    {\em Since $M$ is an upper bound on the support of $\sigma'$, we will assume without loss of generality throughout the rest of the paper that $M^2$ is larger than any absolute constant $C$.}
\end{remark}

\subsection{Examples of Regular Activations}
We now show that $\mathcal{H}_\eps(B,L)$ contains a wide range of activations.
First, we show that all monotone functions with bounded $2+\zeta$-moment are Extended Regular activations:
\begin{lemma}\label{app:lem:activation-truncation-2+zeta}
    If $\sigma$ satisfies $\Ez[\sigma(z)^{2+\zeta}]\leq B_{\sigma}$ for some $\zeta>0$ and $\sigma$ is monotone, then $\sigma\in\mathcal{H}_\eps(c_1 D, c_2D^4/\eps^2)$ where $D=(B_{\sigma}/4\eps)^{1/\zeta}$ and $c_1,c_2$ are absolute constants.
\end{lemma}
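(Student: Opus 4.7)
The plan is to construct $\sigma_2 \in \mathcal{H}(c_1 D, c_2 D^4/\eps^2)$ that $L_2$-approximates $\sigma$ within $\eps$, by a two-step procedure: first clip $\sigma$ pointwise to bound its range, then smooth the clipped function via the \OU semigroup to obtain the required $L_2$-bound on the derivative. Concretely, set $A = c_1 D$ for a constant $c_1$ to be chosen, define $\bar\sigma(z) \eqdef \max(-A, \min(A, \sigma(z)))$, and let $\sigma_2 \eqdef \Tr \bar\sigma$ for some $\rho\in(0,1)$ tuned at the end. Then $\bar\sigma$ is monotone with $\|\bar\sigma\|_{L_\infty}\leq A$, and by Markov's inequality combined with the moment assumption,
\[
\|\sigma - \bar\sigma\|_{L_2}^2 \leq \E_z[\sigma(z)^2 \1\{|\sigma(z)|>A\}] \leq \frac{\E_z[|\sigma(z)|^{2+\zeta}]}{A^\zeta} \leq \frac{B_\sigma}{(c_1 D)^\zeta} = \frac{4\eps}{c_1^\zeta},
\]
which is at most $\eps/4$ as soon as $c_1^\zeta \geq 16$. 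For $\sigma_2 = \Tr\bar\sigma$, \Cref{fct:semi-group} then gives that $\sigma_2$ is $C^\infty$, that $\|\sigma_2\|_{L_\infty}\leq A$ (by Jensen), and that $\sigma_2$ is $A/\sqrt{1-\rho^2}$-Lipschitz, hence $\|\sigma_2'\|_{L_2}\leq A/\sqrt{1-\rho^2}$.

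The crux is then a quantitative bound on the smoothing error $\|\bar\sigma - \Tr\bar\sigma\|_{L_2}^2$ in terms of $1-\rho$ that does not require a pre-existing derivative bound on $\bar\sigma$ (which in general it does not have). I would use a layer-cake decomposition: since $\bar\sigma + A$ is non-decreasing with range in $[0,2A]$, there exist thresholds $\{s_t\}_{t\in[0,2A]}$ with $\bar\sigma(z) + A = \int_0^{2A} \1\{z \geq s_t\}\,dt$. Minkowski's inequality then yields
\[
\|\Tr\bar\sigma - \bar\sigma\|_{L_2} \leq \int_0^{2A}\|\Tr H_{s_t} - H_{s_t}\|_{L_2}\,dt \leq 2A \sup_{s\in\R} \|\Tr H_s - H_s\|_{L_2},
\]
where $H_s(z) = \1\{z\geq s\}$. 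A direct computation, using the explicit form of $\Tr H_s$ as a Gaussian CDF evaluation and integrating over the $O(\sqrt{1-\rho^2})$-wide transition region around $z = s/\rho$, gives the uniform estimate $\sup_s \|\Tr H_s - H_s\|_{L_2}^2 \leq C_0 \sqrt{1-\rho^2}$ for an absolute constant $C_0$, yielding $\|\bar\sigma - \sigma_2\|_{L_2}^2 \leq 4 C_0 A^2 \sqrt{1-\rho^2}$.

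To finish, choose $\rho$ with $\sqrt{1-\rho^2} = \eps/(16 C_0 A^2)$; then $\|\bar\sigma - \sigma_2\|_{L_2}^2\leq \eps/4$ and the triangle inequality gives $\|\sigma - \sigma_2\|_{L_2}^2 \leq \eps$. The same $\rho$ yields $\|\sigma_2'\|_{L_2} \leq A/\sqrt{1-\rho^2} = 16 C_0 c_1^3 D^3/\eps \leq c_2 D^4/\eps^2$ (the last step is slack but valid in the nontrivial regime $D \geq \eps$; otherwise $\sigma$ already has $L_2$ norm of order $\eps$ and $\sigma_2 \equiv 0$ works). Thus $\sigma_2\in \mathcal{H}(c_1 D, c_2 D^4/\eps^2)$, proving $\sigma \in \mathcal{H}_\eps(c_1 D, c_2 D^4/\eps^2)$. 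The main obstacle I anticipate is the uniform-in-$s$ Heaviside estimate: it is immediate for bounded $|s|$, but one must check that the tails cause no issue as $|s|\to\infty$. There, the transition region has exponentially small Gaussian mass (which only helps), and the cleanest resolution is to split into $|s|\leq 2$ and $|s|>2$ and apply the Gaussian tail bound (\Cref{fct:komatsu}) in the latter regime.
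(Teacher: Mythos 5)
Your proposal is correct and follows essentially the same route as the paper: truncate via the moment/Markov bound, write the bounded monotone function as a layer-cake integral of indicator functions, bound each indicator's smoothing error by $O(\sqrt{1-\rho^2})$, and take $\sigma_2 = \Tr\bar\sigma$ with $\sqrt{1-\rho^2}\asymp\eps/D^2$. The only divergence is the per-indicator estimate $\sup_s\|\Tr \1\{\cdot\geq s\} - \1\{\cdot\geq s\}\|_{L_2}^2\lesssim\sqrt{1-\rho^2}$, which the paper obtains by recasting it (after a Jensen step) as a disagreement probability between two biased halfspaces at angle $\arccos\rho$ and invoking Fact C.11 of \citet{DKTZ22b}, rather than by your direct Gaussian-CDF computation; both yield the same bound.
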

\begin{proof}
    For some $\zeta>0$, we have $\Ez[\sigma(z)^{2+\zeta}]\leq B_{\sigma}$. From Markov's inequality, we have that 
\begin{align*}
    \pr[|\sigma(z)|\geq T] \leq \frac{\Ez[\sigma(z)^{2+\zeta}]}{T^{2+\zeta}}\leq \frac{B_{\sigma}}{T^{2+\zeta}}.
\end{align*}
Note that $\E[\sigma^2(z)]=\int_{0}^\infty \pr[\sigma^2(z)\geq t]\d t=\int_{0}^\infty 2u\pr[\sigma^2(z)\geq u^2]\d u$ (the last part is after change of variables to $u^2=t$). Therefore, we have that

\begin{align*}
    \E[\sigma^2(z)\1\{|\sigma(z)|\geq D\}]&=\int_{0}^\infty 2u\pr[\sigma^2(z)\1\{|\sigma(z)|\geq D\}\geq u^2]\d u
    \\&=\int_{0}^\infty 2u\pr[|\sigma(z)|\1\{|\sigma(z)|\geq D\}\geq u]\d u
    \\&=\int_{D}^\infty 2u\pr[|\sigma(z)|\geq u]\d u\leq \int_{D}^\infty 2u\frac{B_{\sigma}}{u^{2+\zeta}}\leq 4 \frac{B_{\sigma}}{D^{\zeta}}\;.
\end{align*}
Set $D=(B_{\sigma}/4\eps)^{1/\zeta}$ and let $\bar{\sigma}(z) = \sgn(\sigma(z))\min\{|\sigma(z)|,D\}$. We show that $\Ez[(\sigma(z) - \bar{\sigma}(z))^2]\leq\eps$:
\begin{align*}
\Ez[(\sigma(z) - \bar{\sigma}(z))^2]&=\Ez[(\sigma(z) - \bar{\sigma}(z))^2\1\{|\sigma(z)|\geq D\}]  \\
&\leq \Ez[(\sigma(z))^2\1\{|\sigma(z)|\geq D\}]\leq \eps\;.
\end{align*}
Therefore, $\sigma$ is $\eps$-close to a $\bar{\sigma}$ with $\|\bar{\sigma}\|_{\infty}\leq (B_{\sigma}/4\eps)^{1/\zeta}$.

It remains to show that the activation $\bar{\sigma}$ is also $\eps$-close to an activation with bounded $\|\sigma'\|_{L_2}$. Without loss of generality we assume that $ \bar{\sigma}(z) \geq 0$ and $\bar{\sigma}(z) \in[0,2D]$, because we can just add assume that the function if $\bar{\sigma}(z)'=\bar{\sigma}(z) +D $.
Note that it holds that $\bar{\sigma}(z) =\int_{0}^{2D}\1\{\bar{\sigma}(z) \geq t\}\d t$. It suffices to show that there exists a parameter $\rho$ so that $\|\bar{\sigma}(z) -\Tr \bar{\sigma}(z) \|_{L_2}^2\leq (1-\rho^2)\poly(D)$.
We have that
\begin{align*}
    \|\bar{\sigma}(z) -\Tr \bar{\sigma}(z) \|_{L_2}^2& = \Ez\bigg[\bigg(\int_0^{2D} \1\{\bar{\sigma}(z)\geq t\} - \1\{\Tr\bar{\sigma}(z)\geq t\}\diff{t}\bigg)^2\bigg]\\
    &\leq \Ez\bigg[\bigg(\int_0^{2D} |\1\{\bar{\sigma}(z)\geq t\} - \1\{\Tr\bar{\sigma}(z)\geq t\}|\diff{t}\bigg)^2\bigg]\\
    &\leq 2D\Ez\bigg[\int_0^{2D} \bigg(\1\{\bar{\sigma}(z)\geq t\} - \1\{\Tr\bar{\sigma}(z)\geq t\}\bigg)^2\diff{t}\bigg]\\
    &= 2D\int_{0}^{2D}\|\1\{\bar{\sigma}(z)\geq t\} -\Tr \1\{\bar{\sigma}(z)\geq t\}  \|_{L_2}^2\d t\;,
\end{align*}
where we used the Jensen's inequality 
$((1/(b-a)\int_a^b f(z)\diff{z}))^2\leq (1/(b-a))\int_a^b f^2(z)\diff{z}$ 
for positive functions $f$
and we exchange the integrals using the Fubini's theorem. 
Note that because the function $\bar{\sigma}(z)$ is monotone, 
then there exists a function $q(z)$ so that 
$\1\{\bar{\sigma}(z)\geq t\}=\1\{z\geq q(t)\}$. 
Therefore, using this transformation, it suffices to bound the difference 
\begin{align*}
    \|\1\{z\geq q(t)\}-\Tr\1\{z\geq q(t)\}\|_{L_2}^2&\leq \E_{x,z\sim \mathcal N(0,1)}[(\1\{x\geq q(t)\}-\1\{x \rho +z(1-\rho^2)^{1/2}\geq q(t)\})^2]
    \\&\leq 4(1-\rho^2)^{1/2}\;,
\end{align*}
where in the first inequality we used Jensen, 
and in the second one we used that 
$\E[|\sign(\vec w\cdot \x+t)-\sign(\vec v\cdot \x+t)|\leq \theta(\vec v,\vec u)$ 
for any two unit vectors $\vec v,\vec w$ (see Fact C.11 of \cite{DKTZ22b}). 
Hence, we have that
\[
\int_{0}^{2D}\|\1\{\bar{\sigma}(z)\geq t\} -\Tr \1\{\bar{\sigma}(z)\geq t\}  \|_{L_2}^2\d t\leq 8D (1-\rho^2)^{1/2}\;.
\]
Therefore, the function $ \|\bar{\sigma}(z) -\Tr \bar{\sigma}(z) \|_2^2\leq \eps$ for $\rho=\sqrt{1-(\eps/(16D^2))^2}$. 
That means that $\|(\Tr \bar{\sigma}(z))'\|_{L_2}^2\leq 16^2D^4/\eps^2$ (cf. \Cref{fct:semi-group}(c)). 
Thus, we conclude that $\sigma\in\mathcal{H}_\eps(2 D, 16^2D^4/\eps^2)$.
\end{proof}

Now let us define a special type of activation that has an `exponential-tail' property.  We will show in \Cref{app:lem:activation-truncation-lip-and-mon-jump} that all $b$-Lipschitz functions are such kind of activations.
\begin{definition}[$(R,r)$-Sub-exponential Activations]\label{assum:activation-tail}
        We say that an activation $\sigma(z)$ is $(R,r)$-sub-exponential for some positive constants $R, r$, if for any $p>0$, we have $(\Ez[\sigma(z)^p])^{1/p}\leq Rp^r$. \end{definition}

We will make use of the following fact in the proof of \Cref{app:lem:activation-truncation-lip-and-mon-jump}.
\begin{fact}[\citep{vershynin2018Bookhigh} Theorem 5.2.2]\label{fct:lip-function-subG}
    Let $z\sim\calN(0,1)$ and let $\sigma$ be a $b$-Lipschitz function. Then, $\sigma(z)$ is a sub-Gaussian random variable with $\|\sigma(z)\|_{\psi_2}\leq cb$, where $\|\cdot\|_{\psi_2}$ is the Orlicz 2-norm and $c$ is an absolute constant. \end{fact}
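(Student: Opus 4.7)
The plan is to establish a sub-Gaussian moment generating function (MGF) bound of the form $\E[\exp(\lambda(\sigma(z)-\E\sigma(z)))] \le \exp(C \lambda^2 b^2)$ for every $\lambda \in \R$, from which the Orlicz-$2$ norm bound $\|\sigma(z)\|_{\psi_2}\le cb$ follows by the standard equivalence of sub-Gaussian characterizations (see e.g.\ Vershynin, Prop.~2.5.2). Since shifting by a constant only changes $\|\cdot\|_{\psi_2}$ by at most a constant factor, it suffices to prove the centered MGF bound.

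My main tool is the Gaussian interpolation (semigroup) identity. Let $z,z'$ be independent $\calN(0,1)$ random variables and, for $\theta\in[0,\pi/2]$, set
\[
z_\theta = z\cos\theta + z'\sin\theta,\qquad z_\theta' = -z\sin\theta + z'\cos\theta.
\]
For each fixed $\theta$, $(z_\theta,z_\theta')$ is a pair of independent standard Gaussians, and $\tfrac{d}{d\theta} z_\theta = z_\theta'$. By the fundamental theorem of calculus applied to the absolutely continuous function $\sigma$ (recall $\sigma$ is $b$-Lipschitz, hence differentiable a.e.\ with $|\sigma'|\le b$),
\[
\sigma(z) - \sigma(z') \;=\; -\int_0^{\pi/2} \sigma'(z_\theta)\, z_\theta'\,\d\theta.
\]

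Using Jensen's inequality on the (rescaled) integral average and then conditioning on $z_\theta$, I would bound
\[
\E\!\left[e^{\lambda(\sigma(z)-\sigma(z'))}\right]
\le \frac{2}{\pi}\int_0^{\pi/2} \E\!\left[\,e^{-\lambda\,(\pi/2)\, \sigma'(z_\theta)\, z_\theta'}\,\right]\d\theta.
\]
Since $z_\theta'\mid z_\theta \sim \calN(0,1)$ and $|\sigma'(z_\theta)|\le b$, the inner expectation is at most $\exp(\lambda^2 \pi^2 b^2/8)$ uniformly in $\theta$. A further application of Jensen's inequality in $z'$ gives
\[
\E\!\left[e^{\lambda(\sigma(z)-\E\sigma(z'))}\right]
\le \E\!\left[e^{\lambda(\sigma(z)-\sigma(z'))}\right]
\le \exp\!\left(\lambda^2\pi^2 b^2/8\right),
\]
which is the desired sub-Gaussian MGF bound with $C=\pi^2/8$. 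This yields $\|\sigma(z)-\E\sigma(z)\|_{\psi_2}\lesssim b$ and hence $\|\sigma(z)\|_{\psi_2}\le cb$ after absorbing the bounded centering term.

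The only potential technical wrinkle is handling the non-smoothness of $\sigma$: $\sigma$ is only Lipschitz, so $\sigma'$ exists almost everywhere, and the chain-rule step $\tfrac{d}{d\theta}\sigma(z_\theta) = \sigma'(z_\theta)z_\theta'$ must be justified. This is routine: either approximate $\sigma$ by smooth $b$-Lipschitz functions (e.g.\ via a small Gaussian mollification $\Tre_{\rho}\sigma$ with $\rho\uparrow 1$), prove the MGF bound for the smooth approximant, and pass to the limit by Fatou/dominated convergence; or invoke Rademacher's theorem plus the absolute continuity of $\theta\mapsto\sigma(z_\theta)$ along Gaussian paths. Either route preserves the constants, so the final bound is unchanged. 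An entirely analogous proof is available via the Gaussian log-Sobolev inequality and the Herbst argument, yielding the same conclusion; the interpolation argument above is just the most self-contained option to include.
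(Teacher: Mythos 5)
The paper offers no proof of this fact at all---it is quoted verbatim as Theorem 5.2.2 of Vershynin's book---so there is nothing internal to compare against. Your interpolation argument is a correct, self-contained proof: the rotation $(z_\theta,z_\theta')$ is a standard Gaussian pair for every $\theta$, the identity $\sigma(z)-\sigma(z')=-\int_0^{\pi/2}\sigma'(z_\theta)z_\theta'\,\d\theta$ is valid (after the routine mollification you describe), Jensen over the normalized measure on $[0,\pi/2]$ followed by the conditional Gaussian MGF gives $\E[e^{\lambda(\sigma(z)-\sigma(z'))}]\le e^{\lambda^2\pi^2b^2/8}$, and the final Jensen step in $z'$ yields the centered sub-Gaussian MGF bound. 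This is the classical Maurey--Pisier proof, which in Vershynin's book is the alternative to the isoperimetry-based proof of the same theorem; either is acceptable here.

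One imprecision worth flagging: your remark that ``shifting by a constant only changes $\|\cdot\|_{\psi_2}$ by at most a constant factor'' is not right in general---centering changes the Orlicz norm additively by $|\E[\sigma(z)]|/\sqrt{\ln 2}$, and $|\E[\sigma(z)]|$ need not be $O(b)$ (take $\sigma$ a large constant). What you actually prove is $\|\sigma(z)-\E[\sigma(z)]\|_{\psi_2}\le cb$, which is exactly how Vershynin states the theorem; the paper's Fact as written drops the centering and is, strictly speaking, false for that reason. This is a defect of the Fact's statement rather than of your proof, and it is harmless in context: the paper invokes the fact only after normalizing so that $\E_{z\sim\calN(0,1)}[\sigma(z)]=0$ (see the proof of the Lipschitz case in the lemma on extended regular activations), where your centered bound is precisely what is needed.
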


We show that all the following function classes belong to the Extended Regular activation class:
\begin{lemma}\label{app:lem:activation-truncation-lip-and-mon-jump}
All of the following activations are $\eps$-Extended $(B,L)$-Regular. 
\begin{enumerate}
    \item If $\sigma$ satisfies $\Ez[\sigma(z)^4]\leq B_{\sigma,4}$ and $\|\sigma'\|_{L_2}\leq L$, then $\sigma\in\mathcal{H}(\sqrt{B_{\sigma,4}/\eps},L)$.
    \item If $\sigma$ is $(R,r)$-Sub-exponential and $\|\sigma'\|_{L_2}\leq L$, then $\sigma\in\mathcal{H}_\eps(cR (r + \log(R/\eps))^r, L)$, where $c$ is an absolute constant.
    \item If $\sigma$ is $b$-Lipschitz, then $\sigma\in\mathcal{H}_\eps(cb\log^{1/2}(b/\eps), b)$, where $c$ is an absolute constant. 
    \item If $\sigma = \sigma_1 + \Phi$, where $\sigma_1\in\mathcal{H}_\eps(B,L)$, $|\Phi(z)|\leq A$, $\Phi\in\calF$ (recall \Cref{def:staircase}), i.e., 
    \begin{equation*}
        \Phi(z)=\sum_{i=1}^m A_i\phi(z;t_i) + A_0: A_0\in\R; A_i > 0, |t_i|\leq M, \forall i\in[m]; m<\infty
    \end{equation*}
then $\sigma\in\mathcal{H}_\eps(B+A, L+\max\{A^2L/\sqrt{\eps}, A^4/\eps\})$.
\end{enumerate}

\end{lemma}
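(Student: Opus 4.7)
The plan is to prove each item by exhibiting an explicit surrogate $\bar\sigma \in \mathcal{H}(B',L')$, for the claimed $B',L'$, satisfying $\|\sigma - \bar\sigma\|_{L_2}^2 \leq \eps$ (up to absolute constants in $\eps$). Items 1--3 reduce to truncation and tail-bound arguments, while item 4 requires a more delicate construction to handle the staircase perturbation and is the main point of novelty.

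For item 1, set $T=\sqrt{B_{\sigma,4}/\eps}$ and define $\bar\sigma(z)=\sgn(\sigma(z))\min\{|\sigma(z)|,T\}$. Since $(\sigma(z)-\bar\sigma(z))^2\leq \sigma(z)^2\cdot\1\{|\sigma(z)|\geq T\}\leq \sigma(z)^4/T^2$ almost surely, one obtains $\|\sigma-\bar\sigma\|_{L_2}^2\leq B_{\sigma,4}/T^2=\eps$, while $\bar\sigma'(z)=\sigma'(z)\1\{|\sigma(z)|<T\}$ a.e., giving $\|\bar\sigma'\|_{L_2}\leq L$. Item 2 follows the same truncation strategy: Markov applied to the $p$-th moment gives $\pr[|\sigma(z)|\geq T]\leq (Rp^r/T)^p$, optimized at $p=(T/(eR))^{1/r}$ to yield $\pr[|\sigma(z)|\geq T]\leq e^{-(T/(eR))^{1/r}}$. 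Choosing $T=cR(r+\log(R/\eps))^r$ for a sufficiently large absolute constant $c$ drives this probability to $O(\eps^2)$, after which the Cauchy--Schwarz bound $\|\sigma-\bar\sigma\|_{L_2}^2\leq \sqrt{\Ez[\sigma^4]}\sqrt{\pr[|\sigma|\geq T]}$ combined with $\Ez[\sigma^4]^{1/2}\leq R(4r)^{2r}$ produces $\|\sigma-\bar\sigma\|_{L_2}^2\leq \eps$. Item 3 is then a corollary: by \Cref{fct:lip-function-subG}, a $b$-Lipschitz function of a standard Gaussian has Orlicz $\psi_2$-norm at most $cb$, which is equivalent to being $(cb,1/2)$-sub-exponential in the sense of \Cref{assum:activation-tail}; applying item 2 with $R=cb$, $r=1/2$ yields $\bar\sigma \in \mathcal{H}(c'b\sqrt{\log(b/\eps)},b)$, where the $L$-parameter is simply $b$ because $|\sigma'|\leq b$ almost everywhere.

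The main effort is item 4. Writing $\sigma=\sigma_1+\Phi$ with $\sigma_1$ $\eps$-close to some $\bar\sigma_1\in\mathcal{H}(B,L)$, it remains to approximate $\Phi(z)=A_0+\sum_{i=1}^m A_i\1\{z\geq t_i\}$ by a regular function $\bar\Phi$. I would replace each jump at $t_i$ with a linear ramp of width $\delta$ centered at $t_i$, producing a continuous, piecewise-linear $\bar\Phi$ with $\|\bar\Phi\|_\infty\leq A$ and $\bar\Phi'(z)=A_i/\delta$ on the $i$-th ramp, $0$ elsewhere. Because $\Phi$ is a monotone staircase with $|\Phi|\leq A$, the total variation bound gives $\sum_i A_i\leq 2A$ and $\max_i A_i \leq 2A$, hence $\sum_i A_i^2\leq 4A^2$. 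Integrating against the Gaussian density then yields $\|\Phi-\bar\Phi\|_{L_2}^2\lesssim A^2\delta$ and $\|\bar\Phi'\|_{L_2}^2\lesssim A^2/\delta$. Setting $\delta=\Theta(\eps/A^2)$ balances both terms, giving $\|\Phi-\bar\Phi\|_{L_2}^2\leq \eps$ and $\|\bar\Phi'\|_{L_2}\leq O(A^2/\sqrt\eps)$. Then $\bar\sigma\eqdef\bar\sigma_1+\bar\Phi$ satisfies $\|\bar\sigma\|_\infty\leq B+A$, $\|\bar\sigma'\|_{L_2}\leq L+O(A^2/\sqrt\eps)$, and $\|\sigma-\bar\sigma\|_{L_2}^2\leq 2\|\sigma_1-\bar\sigma_1\|_{L_2}^2+2\|\Phi-\bar\Phi\|_{L_2}^2=O(\eps)$. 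Since $A^2/\sqrt\eps\leq \max\{A^2L/\sqrt\eps, A^4/\eps\}$ whenever $L\geq 1$ or $A^2\geq \sqrt\eps$, this yields $\sigma\in\mathcal{H}_\eps(B+A,L+\max\{A^2L/\sqrt\eps, A^4/\eps\})$ as claimed.

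The main obstacle is the construction for item 4: a naive choice such as $\bar\Phi=\Tre_\rho\Phi$ (\OU smoothing) gives only a pointwise Lipschitz bound $|(\Tre_\rho\Phi)'(z)|\leq A/\sqrt{1-\rho^2}$ from \Cref{fct:semi-group}, paired with $\|\Phi-\Tre_\rho\Phi\|_{L_2}^2=O(A^2\sqrt{1-\rho})$ (via a coupling estimate on individual step functions), which leads to a strictly worse polynomial dependence on $A$ and $\eps$ than the piecewise-linear construction and does not let us directly control $\|\Tre_\rho\Phi'\|_{L_2}$ (since $\Phi'$ is only a distribution). A secondary technical point is that for item 2, the absolute constant $c$ in the threshold $T$ must be chosen carefully so that $\pr[|\sigma|\geq T]$ is small enough to absorb the factor $\sqrt{\Ez[\sigma^4]}$, which itself grows like $R^2 r^{2r}$ from the sub-exponential moment bound.
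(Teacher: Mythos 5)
Your proposal is correct. Items 1--3 follow the paper's own route almost verbatim: the same truncation $\bar\sigma=\sgn(\sigma)\min\{|\sigma|,T\}$, the same Markov-plus-optimization tail bound for the sub-exponential case, and the same reduction of the Lipschitz case to item 2 via sub-Gaussianity of Lipschitz functions of Gaussians (your pointwise bound $(\sigma-\bar\sigma)^2\leq\sigma^4/T^2$ in item 1 is a slightly more direct substitute for the paper's Cauchy--Schwarz-plus-Markov step). Item 4 is where you genuinely diverge. The paper smooths with the \OU semigroup, setting $\tilde\sigma=\Tre_{1-\eps_0}\sigma_1+\Tre_{1-\eps_0}\Phi$ with $\eps_0\leq\min(\eps/L^2,1/M^2,(\eps/A^2)^2)$, and controls $\|\Phi-\Tre_{1-\eps_0}\Phi\|_{L_2}^2$ via \Cref{app:prop:smoothing-error-bound} and the explicit formula for $\|\Tr\Phi'\|_{L_2}^2$ --- heavy machinery that also forces the constraint $\rho^2\geq 1-C/M^2$ and drags the support bound $M$ into the choice of $\eps_0$. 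Your piecewise-linear ramp construction is elementary and self-contained, and it is sound even when ramps overlap precisely because you invoke the total-variation bound $\sum_i A_i\leq 2A$ rather than only $\sum_i A_i^2$: since the standard Gaussian density is at most $1/\sqrt{2\pi}$, every cross term satisfies $\pr[|z-t_i|\leq\delta/2,\,|z-t_j|\leq\delta/2]\leq\delta/\sqrt{2\pi}$, giving $\|\Phi-\bar\Phi\|_{L_2}^2\lesssim(\sum_iA_i)^2\delta$ and $\|\bar\Phi'\|_{L_2}^2\lesssim(\sum_iA_i)^2/\delta$ with no dependence on $m$ or $M$. Two cosmetic caveats, neither a real gap: (i) your final parameter $L+O(A^2/\sqrt\eps)$ is not literally dominated by the stated $L+\max\{A^2L/\sqrt\eps,A^4/\eps\}$ in the corner case $L<1$ and $A^2<\sqrt\eps$, which you flag yourself; any polynomial bound suffices downstream and the stated form is an artifact of the paper's choice of $\eps_0$; (ii) in item 2 the constant $c$ in $T=cR(r+\log(R/\eps))^r$ really needs to scale like $c_0^{\,r}$ to absorb the $R^2 4^{2r}$ factor coming from $\sqrt{\Ez[\sigma^4]}$, but the paper's own threshold $eR(4r\log 4+\log(R^4/\eps^2))^r$ bears exactly the same relationship to the lemma's stated form, so you inherit rather than introduce this imprecision.
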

\begin{proof}
We prove each claim in order.

1. Suppose first that $\Ez[\sigma(z)^4]\leq B_{\sigma,4}$. Let $\bar{\sigma}(z) = \sgn(\sigma(z))\min\{|\sigma(z)|,\sqrt{B_{\sigma,4}/\eps}\}$, which is an activation in the $(B,L)$-Regular class. We show that $\Ez[(\sigma(z) - \bar{\sigma}(z))^2]\leq\eps$:
\begin{align*}
\Ez[(\sigma(z) - \bar{\sigma}(z))^2]&=\Ez[(\sigma(z) - \bar{\sigma}(z))^2\1\{|\sigma(z)|\geq \sqrt{B_{\sigma,4}/\eps}\}]  \\
&\leq \Ez[(\sigma(z))^2\1\{|\sigma(z)|\geq \sqrt{B_{\sigma,4}/\eps}\}]\\
&\leq \sqrt{\Ez[\sigma^4(z)]\pr[|\sigma(z)|\geq \sqrt{B_{\sigma,4}/\eps}]}.
\end{align*}
By Markov's inequality we have
\begin{align}\label{eq:P[sigma>Bsigma]}
    \pr[|\sigma(z)|\geq \sqrt{B_{\sigma,4}/\eps}] = \pr[\sigma^2(z)\geq B_{\sigma,4}/\eps]\leq \frac{\Ez[\sigma(z)^4]\eps^2}{B_{\sigma,4}^2}\leq \frac{\eps^2}{B_{\sigma,4}}.
\end{align}
Therefore, the $L_2^2$ difference between $\sigma$ and $\bar{\sigma}$ is bounded above by
\begin{align*}
    \Ez[(\sigma(z) - \bar{\sigma}(z))^2]\leq \sqrt{B_{\sigma,4}\frac{\eps^2}{B_{\sigma,4}}}\leq \eps.
\end{align*}
Therefore, $\sigma$ is an Extended $(\sqrt{B_{\sigma,4}/\eps},L)$-Regular activation.

2. Next, assume that $\sigma$ is $(R,r)$-Sub-exponential. Similarly, let 
\begin{equation*}
    \bar{\sigma}(z) = \sgn(\sigma(z))\min\{|\sigma(z)|,e R  (4r\log(4) + \log(R^4/\eps^2))^r\}
\end{equation*}
Denote for simplicity $B_\sigma\eqdef e R  (4r\log(4) + \log(R^4/\eps^2))^r$ Then, $\bar{\sigma}$ is a $(B_\sigma, L)$-Regular activation.
Using the same arguments as above, we have
\begin{align*}
\Ez[(\sigma(z) - \bar{\sigma}(z))^2]&=\Ez[(\sigma(z) - \bar{\sigma}(z))^2\1\{|\sigma(z)|\geq B_\sigma\}] \\
&\leq \sqrt{\Ez[\sigma^4(z)]\pr[|\sigma(z)|\geq B_\sigma]}.
\end{align*}
Now since $\sigma$ is a $(R,r)$-Sub-exponential activation, we have  \begin{align*}
    \pr[|\sigma(z)|\geq t]\leq \frac{\Ez[\sigma^p(z)]}{t^p}\leq \bigg(\frac{Rp^r}{t}\bigg)^p.
\end{align*}
Choosing $p = (t/(Re))^{1/r}$, we get
\begin{align*}
    \pr[|\sigma(z)|\geq t]\leq \exp\bigg(-(t/(Re))^{1/r}\bigg).
\end{align*}
Let $t = B_\sigma = R e (4r\log(4) + \log(R^4/\eps^2))^r$, then it holds
\begin{equation*}
    \pr[|\sigma(z)|\geq B_\sigma]\leq \exp\bigg(-\log(4^{4r}\eps^2/R^4)\bigg)= \frac{\eps^2}{R^4 4^{4r}}.
\end{equation*}
Furthermore, since $\Ez[\sigma^4]\leq R^4 4^{4r}$, we thus obtain
\begin{equation*}
    \Ez[(\sigma(z) - \bar{\sigma}(z))^2]\leq \eps.
\end{equation*}

3. Next, suppose $\sigma$ is $b$-Lipschitz. 
Then, since $|\sigma'|\leq b$, we have $\|\sigma'\|_{L_2}\leq b$. 
Next, we show that $\sigma$ is $(b,1/2)$-Sub-exponential. 
Note that it is without loss of generality to assume 
that $\Ez[\sigma(z)] = 0$, since we can always consider 
shifting the activation $\sigma$ and 
the labels $y$ to $\sigma(z) - \Ez[\sigma(z)]$, $y - \Ez[\sigma(z)]$ 
and obtaining the same results. Since $z\sim\calN(0,1)$, 
we can use \Cref{fct:lip-function-subG}, which yields that $\sigma(z)$ is sub-Gaussian with sub-Gaussian constant $\|\sigma(z)\|_{\psi_2} = cb$.
Because $\sigma(z)$ is sub-Gaussian, we know $\|\sigma(z)\|_{L_p}\leq c\|\sigma(z)\|_{\psi_2}p^{1/2}\leq cb p^{1/2}$, this implies that $\sigma$ is a $(b,1/2)$-Sub-exponential activation. Using the previous result (Part 2), we immediately obtain that $\sigma$ is an Extended $(cb\log^{1/2}(b/\eps), b)$-Regular activation.

4. Finally, consider $\sigma = \sigma_1 + \Phi$ where $\sigma_1\in\mathcal{H}_\eps(B,L)$, $\Phi\in\calF$, $|\Phi(z)|\leq A$.
Let $\tilde{\sigma} = \Tre_{1 - \eps_0}\sigma_1 +  \Tre_{1 - \eps_0}\Phi$, where $\eps_0\leq \min(\eps/L^2, 1/M^2, (\eps/A^2)^2)$, and $M$ is defined as in \Cref{app:claim:bounded-support-sigma'}. Then we have
\begin{align*}
    \|\tilde{\sigma} - \sigma\|_{L_2}^2\leq 2\|\Tre_{1 - \eps_0}\sigma_1 - \sigma_1\|_{L_2}^2 + 2\|\Tre_{1 - \eps_0}\Phi - \Phi\|_{L_2}^2.
\end{align*}
By \Cref{clm:difference}, we have $\|\Tre_{1 - \eps_0}\sigma_1 - \sigma_1\|_{L_2}^2\lesssim \eps_0\|\sigma_1'\|_{L_2}^2\leq \eps$. In addition, applying \Cref{app:prop:smoothing-error-bound}, since $\eps_0\leq 1/M^2$, we have $\|\Tre_{1-\eps_0}\Phi - \Phi\|_{L_2}^2\lesssim \eps_0\|\Tre_{1-\eps_0}\Phi'(z)\|_{L_2}^2$. Note that $\max_{z\in \R}\Phi(z) = \sum_{i=1}^m A_i\leq A$. Then, 
by \Cref{app:lem:expression-E[(Trho-Phi'(z))**2]}, we have 
\begin{equation*}
    \|\Tre_{1-\eps_0}\Phi'(z)\|_{L_2}^2\lesssim (1/\sqrt{\eps_0})\sum_{i,j=1}^m A_iA_j = (1/\sqrt{\eps_0})A^2 \leq \max\{A^2L/\sqrt{\eps}, A^4/\eps\}. 
\end{equation*}
Therefore, we further obtain $\|\Tre_{\eps}\Phi - \Phi\|_{L_2}^2\leq \sqrt{\eps_0}A^2\leq \eps$. This implies that $\|\tilde{\sigma} - \sigma\|_{L_2}^2\lesssim\eps$. Furthermore, observe that $\|\tilde{\sigma}\|_{L_\infty}\leq \|\sigma_1\|_{L_\infty} + \|\Phi\|_{L_\infty}\leq B + A$, and $\|\tilde{\sigma}'\|_{L_2}\leq \|\Tre_{1 - \eps_0}\sigma'\|_{L_2} + \|\Tre_{1 - \eps_0}\Phi'\|_{L_2}\leq L + \sqrt{\eps}$. Thus, we conclude that $\tilde{\sigma}\in\mathcal{H}_\eps(B+A, L+\max\{A^2L/\sqrt{\eps}, A^4/\eps\})$.
\end{proof}

\subsection{Required Assumptions on Activation}\label{app:sec:assum-required}

Here we point out that it is information-theoretically impossible 
to learn all monotone activations, in the realizable setting, 
if we only assume that $\sigma\in L_2(\normal)$---even if 
we further assume that $\|\sigma\|_{L_2}\leq 1$.
The intuition behind this fact is the following: 
there exists a {monotone} function $\sigma$, 
which is equal to $0$ everywhere except in the tails of a direction $\vec v$. 
That means that in order to see a point where the labels are non-zero, 
we need to see a label from the tails. 

We show that for any choice of the threshold in the tails, 
i.e., $\pr_{z\sim \mathcal N}[z\geq t]$, 
there exists a {monotone} function that has $\|\sigma\|_{L_2}=1$ 
and for any unit vectors $\vec v,\vec u$ with 
$\|\vec v-\vec u\|_2= \Omega(1)$, 
we have $\|\sigma(\vec v\cdot\x)- \sigma(\vec u\cdot\x)\|_{L_2}^2= \Omega(1)$. Formally, we show that: 
\begin{theorem}[Impossibility of Learning All Monotone Functions] \label{thm:lb}
Consider the class $\mathcal F$ consisting of all monotone activations 
$\sigma \in L_2(\normal)$ satisfying $\|\sigma\|_{L_2}\leq 1$. 
There is no finite-sample algorithm that realizably 
learns $\mathcal F$ 
up to error $1/8$.
\end{theorem}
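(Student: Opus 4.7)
I will carry out a two-point Le Cam--style argument: construct two realizable instances in $\mathcal F \times \mathbb S^{d-1}$ that yield statistically indistinguishable samples but whose target functions are $L_2(\mathcal N)$-far apart by more than the triangle-inequality budget afforded by a $1/8$-accurate learner. For a parameter $t>0$ to be chosen as a function of the sample budget $n$, let $p_t := \Pr_{z\sim\mathcal N(0,1)}[z\geq t]$ and define $\sigma_t(z) := p_t^{-1/2}\,\mathbf 1\{z\geq t\}$. Then $\sigma_t$ is monotone, lies in $L_2(\mathcal N)$, and satisfies $\|\sigma_t\|_{L_2}^2 = 1$, so $\sigma_t \in \mathcal F$. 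Pick any pair of orthogonal unit vectors $\vec v,\vec u \in \mathbb S^{d-1}$, giving two candidate target functions $\x\mapsto\sigma_t(\vec v\cdot\x)$ and $\x\mapsto\sigma_t(\vec u\cdot\x)$.

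Since $\vec v\cdot\x$ and $\vec u\cdot\x$ are independent standard Gaussians, the symmetric-difference probability of the halfspaces $\{\vec v\cdot\x\geq t\}$ and $\{\vec u\cdot\x\geq t\}$ equals $2p_t(1-p_t)$, so
\[
\|\sigma_t(\vec v\cdot\x) - \sigma_t(\vec u\cdot\x)\|_{L_2}^2 = p_t^{-1}\cdot 2p_t(1-p_t) = 2(1-p_t),
\]
which strictly exceeds $1/2$ as soon as $p_t<3/4$. On the other hand, under either target, the label $\sigma_t(\vec w\cdot\x)$ vanishes unless $\vec w\cdot\x\geq t$, an event of Gaussian probability $p_t$. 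Choose $t$ so that $n p_t \leq 1/12$; a union bound over $n$ i.i.d.\ samples gives that, under each instance separately, every observed label is $0$ with probability at least $11/12$. Coupling the two data-generating processes on the shared marginal $\D_\x=\mathcal N$, on this event the algorithm receives literally the same input in both scenarios, hence produces the same distribution over its output $\hat h$.

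Suppose for contradiction that some $n$-sample algorithm $\mathcal A$ returns $\hat h$ with $\|\hat h - \sigma_t(\vec w\cdot\x)\|_{L_2}^2 \leq 1/8$ with probability at least $2/3$ for every realizable instance. Combining the $11/12$-coupling event with the two $2/3$-accuracy events by a union bound (total failure probability at most $1/12+2\cdot 1/3 = 3/4$) produces, with probability at least $1/4$, a single realization in which $\mathcal A$ outputs the same $\hat h$ that is simultaneously within $L_2$-distance $1/\sqrt{8}$ of both $\sigma_t(\vec v\cdot\x)$ and $\sigma_t(\vec u\cdot\x)$. The triangle inequality then forces
\[
\|\sigma_t(\vec v\cdot\x) - \sigma_t(\vec u\cdot\x)\|_{L_2}^2 \leq \bigl(2/\sqrt{8}\bigr)^2 = 1/2,
\]
contradicting the lower bound $2(1-p_t) > 1/2$ established above. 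The whole argument is controlled by a single quantitative choice of $t$ satisfying $p_t \leq 1/(12n)$, which exists for every finite $n$ by the Gaussian tail bound. The only problem-specific ingredient---and the reason monotonicity plus $L_2$-boundedness alone is too weak to admit any finite-sample learner---is the amplification factor $p_t^{-1/2}$ in $\sigma_t$, which keeps $\sigma_t$ inside $\mathcal F$ while inflating its rare-tail value enough to separate the two targets in $L_2$, even though the observations are uninformative with overwhelming probability. I do not anticipate a nontrivial obstacle: the one thing to handle with care is the quantification in the union bound to ensure the contradiction event has strictly positive probability regardless of algorithmic randomness.
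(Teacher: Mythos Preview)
Your proof is correct. The key construction---$\sigma_t(z)=p_t^{-1/2}\,\mathbf 1\{z\geq t\}$ with $p_t$ driven to zero---is the same as the paper's, but your argument from there is genuinely different and more elementary. You run a clean two-point Le Cam argument with two \emph{orthogonal} directions, compute the separation $\|\sigma_t(\vec v\cdot\x)-\sigma_t(\vec u\cdot\x)\|_{L_2}^2=2(1-p_t)$ exactly via independence, and couple on the all-zero-label event to force a triangle-inequality contradiction. The paper instead takes a packing $V\subset\mathbb S^{d-1}$ of size $2^{\Theta(d)}$ with pairwise angles bounded away from zero, bounds the separation via the Hermite expansion (obtaining only $\geq 1/8$ rather than your exact $2(1-p_t)$), and then reduces to a known $\Omega(d/\delta)$ sample-complexity lower bound for learning biased halfspaces. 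Your route is shorter and self-contained; the paper's route, while heavier, yields a quantitative $\Omega(d/\delta)$ bound that your two-point argument does not (you get only $\Omega(1/p_t)$, with no dimension dependence). For the qualitative statement as written, your approach is cleaner.

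One minor arithmetic slip: the coupling failure event is ``some label is nonzero under $\vec v$ \emph{or} under $\vec u$,'' which has probability at most $2np_t\leq 1/6$, not $1/12$. So the final union bound gives success probability $\geq 1-(1/6+1/3+1/3)=1/6$ rather than $1/4$. This is still strictly positive, so the contradiction goes through unchanged.
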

\begin{proof}
Let 
$\gamma^{-1}(\delta)=\sup_t \{t: \pr_{z\sim \mathcal{N}(0,1)}[z\geq t]\leq \delta\}$. 
Let $\delta<1/16$ and consider the following function 
$\sigma(t)=(1/\sqrt{\delta})\1\{t\geq \gamma^{-1}(\delta)\}$. 
Note that this function belongs to the class $\sigma \in L_2(\normal)$ with $\|\sigma\|_{L_2}\leq 1$, since 
    \[
    \E[\sigma^2(z)]=1/\delta \pr[t\geq \gamma^{-1}(\delta)]=1\;.
    \]
Consider a set of unit vectors $V$ such that for any $\vec u,\vec v\in V$ 
we have $\|\vec u-\vec v\|_2\geq 1/2$. By standard packing arguments, there exists 
such a set of size $2^{\Theta(d)}$.  
Let $\theta\eqdef\theta(\bu,\bv)$. 
Then we have $\|\vec u-\vec v\|_2 = 2\sin(\theta/2)\geq 1/2$, 
hence $\cos\theta= 1 - 2\sin^2(\theta/2)\leq 7/8$.  
Note that for any $\vec u,\vec v\in V$, with $\vec u\neq \vec v$, it holds that
    \begin{align*}
        \|\sigma(\vec v\cdot\x)- \sigma(\vec u\cdot\x)\|_{L_2}^2&=2(1-\E[\sigma(\vec v\cdot\x)\sigma(\vec u\cdot\x)])
        \\&=2\sum_{k\geq 0}(1 - \cos^k\theta)\hat{\sigma}(i)^2
        \\&\geq 2(1-\hat{\sigma}(0)^2)- \cos\theta\sum_{k\geq 1}\hat{\sigma}(i)^2
        \geq 2(1-\hat{\sigma}(0)^2) -2\cos\theta \|\sigma\|_{L_2}^2\;,
    \end{align*}
    where $\hat{\sigma}(i)$ are the Hermite coefficients of $\sigma$.
    Furthermore, note that $\hat{\sigma}(0)=\E[\sigma(z)]=\sqrt{\delta}$. 
    Hence, we have that
    \[
    \|\sigma(\vec v\cdot\x)- \sigma(\vec u\cdot\x)\|_{L_2}^2\geq 2(1-\delta - 7/8)\geq 1/4-2\delta\geq 1/8\;.
    \] 
    Intuitively, in order to learn up to error $\eps<1/2$, 
    we need to see at least one sample $\x$ such that 
    $\sigma(\vec v\cdot\x)>0$, which happens with probability $\delta$. 
    Since $\delta$ can be selected to be an arbitrarily small positive number, 
    by taking $\delta\to 0$, we see that in order to observe 
    one sample, we need $\Omega(1/\delta)$ samples;  
    if we choose $\vec v$ at random, we succeed with probability at most $\exp(-c d)$. 

One way to formalize the argument is to reduce the problem 
of learning Gaussian halfspaces to the above task. 
Consider the following transformation: 
$\forall \bv\in V$, let $y_{\vec v}'=1$ when $\sigma(\bv\cdot\x)=1/\sqrt{\delta}$ 
and $y_{\vec v}'=-1$ otherwise. This gives an instance of learning 
halfspaces under the Gaussian distribution.
We have that
\[
\|\sigma(\vec v\cdot\x)- \sigma(\vec u\cdot\x)\|_{L_2}^2=\pr[\sign(\vec v\cdot\x -\gamma^{-1}(\delta))\neq \sign(\vec u\cdot\x -\gamma^{-1}(\delta))]/\delta\;.
\]
Therefore, in order to get error of $1/8$ for our monotone GLM learning task, 
we need to learn halfspaces with accuracy better than $\delta/8$. 
This task is known to have a sample complexity lower bound of $\Omega(d/\delta)$. Therefore, since an algorithm that learns $\sigma(\vec v\cdot\x)$ 
with error better than $1/8$ will also learn halfspaces, 
it follows that achieving error better than $1/8$ requires $\gtrsim d/\delta$ samples. As $\delta\to 0$, the number of samples becomes unbounded. 
If we output a function at random, the probability of success is lower bounded 
by the number of elements in $|V|$, which gives the result.
\end{proof}

\section{Full Version of \Cref{sec:augmentation-and-landscape}}\label{app:sec:augmentation-and-landscape}

\subsection{Augmenting the Data: Connection to \OU Semigroup}

As already discussed in \Cref{sec:tech}, our algorithm relies on the data augmentation technique, i.e., in each iteration, the algorithm injects Gaussian noise (see \Cref{app:alg:aug}), which has the effect of improving the regularity properties of the loss landscape, as shown in this section. 

\begin{algorithm}[h]
   \caption{Augment Dataset with Injected White Noise}
   \label{app:alg:aug}
\begin{algorithmic}[1]
\STATE {\bfseries Input:} Parameters $\rho$, $m$; Sample data $\mathfrak{D}=\{(\vec x^{(1)},y^{(1)}),\ldots,(\vec x^{(N)},y^{(N)})\}$; $S\gets \emptyset$. 
\FOR{$(\x\ith,y\ith)\in \mathfrak{D}$}
\FOR{$j=1,\ldots, m$}
\STATE Sample $\vec z$ from $\mathcal{N}(\vec 0,\vec I)$. Let $\tilde{\x}^{(j)}\gets \rho \x\ith+(1-\rho^2)^{1/2} \vec z$ and add to $S\gets S\cup \{(\tilde{\x}^{(j)},y\ith)\}$.
\ENDFOR
\ENDFOR
\STATE {\bfseries Return:} $S$.
\end{algorithmic}
\end{algorithm}

\begin{remark}
{\em    Note that \Cref{app:alg:aug} does not require new samples from the distribution $\D$. As we will see in \Cref{app:lem:ou-aug-gradient}, the purpose of \Cref{app:alg:aug} is to estimate $\Tr\sigma'(\w\cdot\x\ith)$ for each sample $\x\ith$ we received. By standard concentration bounds using at most $\tilde{O}(L/\eps^2)$ independent (unlabeled) Gaussian samples suffices for all $\{\x\ith,y\ith\}\in\mathfrak{D}$. Since this affects only the runtime in polynomially, for simplicity of analysis we assume \Cref{app:alg:aug} can be executed efficiently and have access to the population one.}
\end{remark}

The augmentation can be viewed as a transformation of the distribution $\D$ to $\D_\rho$, where for any $(\tx,y)\sim\D_\rho$, it holds $\tx \sim \rho\D_\x + (1 - \rho^2)^{1/2}\calN(\vec 0,\vec I)$. 
The data augmentation introduced in \Cref{app:alg:aug} in fact simulates the \OU semigroup, which we formalize in the following lemma.
\begin{lemma}\label{app:lem:ou-aug}  Let $\D$ be a distribution of labeled examples
$(\x,y) \in \R^d \times \R$ such with $\x$-marginal 
$\D_\x = \mathcal N(\vec 0,\vec I)$. Furthermore, let $\D_\rho$ be the distribution constructed by applying  \Cref{alg:aug} to samples from $\D$. Then for any $f:\R\to\R$ and any unit vector $\vec w\in \R^d$ with $|\Exx[f(\vec w\cdot\x)]|<\infty$, we have 
$
\E_{\tx\sim (\D_{\rho})_{\tx}}[f(\vec w\cdot\tx)]=\Ex[\Tr f(\vec w\cdot \x)]\;.
$
\end{lemma}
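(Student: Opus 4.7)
The plan is to unfold the definitions of the data augmentation procedure and of the \OU semigroup, then apply the tower property of conditional expectation by conditioning on the underlying $\x \sim \D_\x$. By construction in \Cref{app:alg:aug}, a sample $\tilde{\x}$ from $(\D_\rho)_{\tilde{\x}}$ has the form $\tilde{\x} = \rho \x + \sqrt{1-\rho^2}\, \vec z$, where $\vec z \sim \normal(\vec 0, \vec I)$ is drawn independently of $\x$. Hence the scalar projection splits as $\vec w \cdot \tilde{\x} = \rho (\vec w \cdot \x) + \sqrt{1-\rho^2}\, (\vec w \cdot \vec z)$, which reduces the problem to a one-dimensional calculation.

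The key observation I would invoke is that, because $\|\vec w\|_2 = 1$ and $\vec z \sim \normal(\vec 0, \vec I)$, the projected noise $\vec w \cdot \vec z$ is a standard one-dimensional Gaussian, and it remains so conditionally on $\x$ by the independence of $\vec z$ and $\x$. Conditioning on $\x$ and applying Fubini's theorem (justified by the integrability hypothesis $|\Exx[f(\vec w\cdot\x)]|<\infty$), I would write
\begin{align*}
\E_{\tilde{\x} \sim (\D_\rho)_{\tilde{\x}}}[f(\vec w \cdot \tilde{\x})]
= \E_{\x \sim \D_\x}\!\left[\,\E_{\vec z \sim \normal(\vec 0,\vec I)}\!\left[f\bigl(\rho (\vec w \cdot \x) + \sqrt{1-\rho^2}\,(\vec w \cdot \vec z)\bigr)\,\middle|\, \x\right]\right].
\end{align*}
Setting $t \eqdef \vec w \cdot \x$ and letting $z \eqdef \vec w \cdot \vec z \sim \normal(0,1)$, the inner conditional expectation equals $\E_{z \sim \normal(0,1)}[f(\rho t + \sqrt{1-\rho^2}\, z)]$, which is exactly $(\Tr f)(t) = \Tr f(\vec w \cdot \x)$ by \Cref{def:OU-operator}. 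Substituting back gives the desired identity.

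I do not anticipate any genuine obstacle: the statement is essentially a restatement of the definition of the \OU semigroup once one identifies the conditional law of $\vec w \cdot \tilde{\x}$ given $\x$ as $\normal(\rho\, \vec w \cdot \x,\; 1-\rho^2)$. The only minor care is to verify Fubini's theorem applies, which follows from the assumed finiteness of $\Exx[f(\vec w \cdot \x)]$ together with the fact that $\Tr$ is a well-defined conditional expectation and hence non-expansive in $L_1$ and $L_2$ (\Cref{fct:semi-group}). No structural property of $\D$ beyond the Gaussian marginal on $\x$ is needed.
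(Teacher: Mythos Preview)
Your proposal is correct and matches the paper's proof essentially line for line: unfold the definition of $\D_\rho$, condition on $\x$, use that $\vec w\cdot\vec z\sim\normal(0,1)$ since $\|\vec w\|_2=1$, and recognize the inner expectation as $\Tr f(\vec w\cdot\x)$. The paper's argument is slightly terser and does not explicitly invoke Fubini, but the content is identical.
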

\begin{proof}
Using the definition of $\D_\rho,$ we have that
\begin{align*}
    \E_{\tx\sim (\D_{\rho})_{\tx}}[f(\vec w\cdot\tx)]&=\E_{\x\sim \D_\x}[\E_{\vec z\sim \mathcal{N}(\vec 0,\vec I)}[f(\rho\vec w\cdot\x+\sqrt{1 - \rho^2}\vec w\cdot\vec z)]]\\&=\E_{\x\sim \D_\x}[\E_{\zeta\sim \mathcal{N}(0,1)}[f(\rho\vec w\cdot\x+\sqrt{1 -\rho^2}\zeta)]]=\Ex[\Tr f(\vec w\cdot \x)]\;,
\end{align*}
where we have used that $\vec w\cdot\vec z$ is distributed according to the standard normal distribution.
\end{proof}
\Cref{app:lem:ou-aug} shows that our data augmentation technique is equivalent to applying the \OU semigroup to our dataset. This application of the \OU semigroup to the dataset has the effect of smoothing the landscape of the square loss, which in turn allows us to prove that the gradient of the smoothed/augmented loss carries information about the direction of the target vector. This is the main structural result obtained in the next subsection. 

\subsection{Alignment of the Gradients of the Augmented Loss}

In this section, we provide the main structural result of this work, showing that the gradients of the square loss applied to the augmented data correlate with a target parameter vector $\wstar.$ For notational convenience, we use \begin{equation}
    \mathcal L_\rho(\vec w)=\E_{(
\tx,y)\sim \D_{\rho}}[(\sigma( \w\cdot\tx)-y)^2] \label{app:eq:loss-function}
\end{equation}
to denote the square loss on the augmented data and refer to it as the ``augmented loss.''

\begin{proposition}[Main Structural Result]\label{app:prop:structural}
Fix an activation $\sigma:\R\to\R$. Let $\D$ be a distribution of labeled examples
$(\x,y) \in \R^d \times \R$ such that its $\x$-marginal 
$\D_\x$ is $\mathcal N(\vec 0,\vec I)$. Moreover, let $\D_{\rho}$ be the distribution constructed by applying \Cref{alg:aug} with parameter $\rho\in(0,1)$ to the distribution $\D$. Fix unit vectors $\wstar,\w\in \R^d$ such that $\Exy[(\sigma(\wstar\cdot\x)-y)^2] = \opt$ and let $\theta=\theta(\wstar,\w)$. Let $\g(\vec w)$ be the gradient of the loss  $\mathcal L_\rho(\vec w)=\E_{(
\tx,y)\sim \D_{\rho}}[(\sigma( \w\cdot\tx)-y)^2]$ projected on the subspace $\vec w^{\perp}$ and {scaled by $1/(2\rho)$}, i.e., $\g(\vec w)=(1/(2\rho))(\nabla_{\vec w} \mathcal L_\rho(\vec w))^{\perp \vec w}$.
 Then, \begin{equation*}
    \g(\w)\cdot\w^*\leq  -\|\Tre_{\sqrt{\rho\, \cos\theta}}\sigma'\|_{L_2}^2\sin^2\theta + \sqrt{\opt}\|\Tre_{\rho}\sigma'\|_{L_2}\sin\theta.
\end{equation*}
In particular, if $0<\rho\leq \cos\theta<1$ and  $\sin\theta\geq 3\sqrt{\opt}/\|\Tre_{\rho}\sigma'\|_{L_2}$, then \begin{equation*}
    \g(\w)\cdot\w^*\leq -(2/3) \|\Tre_{\sqrt{\rho\, \cos\theta}}\sigma'\|_{L_2}^2 \sin^2\theta\;.\end{equation*}
\end{proposition}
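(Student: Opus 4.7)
The plan is to follow the two-term decomposition already sketched in the statement and flesh out the key identities. I would begin by invoking Lemma \ref{lem:ou-aug-gradient} to rewrite $\g(\w) = -\Exy[y\,\Tr\sigma'(\w\cdot\x)\,\x^{\perp\w}]$, so that the analysis reduces to studying a single expectation rather than the gradient of the augmented loss directly. Writing $\bv = (\w^*)^{\perp_\w}/\|(\w^*)^{\perp_\w}\|_2$ (assuming $\bv$ is nonzero; the case $\theta = 0$ is trivial), the decomposition $\w^* = \cos\theta\,\w + \sin\theta\,\bv$ gives $\x^{\perp\w}\cdot\w^* = \sin\theta\,(\bv\cdot\x)$, and crucially $\w\cdot\x$ and $\bv\cdot\x$ are independent standard Gaussians. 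Splitting $y = \sigma(\w^*\cdot\x) + (y - \sigma(\w^*\cdot\x))$ then produces the two quantities $(Q_1)$ and $(Q_2)$ from the sketch, with $-\g(\w)\cdot\w^* = \sin\theta\,((Q_1) + (Q_2))$.

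For $(Q_2)$, I would simply apply Cauchy–Schwarz: $|(Q_2)| \leq \sqrt{\opt}\cdot\sqrt{\Ex[(\Tr\sigma'(\w\cdot\x))^2(\bv\cdot\x)^2]}$, and use independence of $\w\cdot\x$ and $\bv\cdot\x$ together with $\E[(\bv\cdot\x)^2] = 1$ to collapse the second factor to $\|\Tr\sigma'\|_{L_2}$. This yields $(Q_2) \geq -\sqrt{\opt}\,\|\Tr\sigma'\|_{L_2}$, which after multiplying by $\sin\theta$ contributes exactly the negative term in the desired bound.

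The main algebraic content, and the step I expect to be the trickiest, is handling $(Q_1) = \Ex[\sigma(\cos\theta\,(\w\cdot\x) + \sin\theta\,(\bv\cdot\x))\,\Tr\sigma'(\w\cdot\x)\,(\bv\cdot\x)]$. I would condition on $\w\cdot\x$ and apply Stein's lemma (\Cref{fct:stein}) in the $\bv\cdot\x$ direction; the factor $\Tr\sigma'(\w\cdot\x)$ passes through the conditional expectation as a constant, and Stein converts the remaining $\bv\cdot\x$ factor into a $\sin\theta$ times the derivative of $\sigma$ evaluated at $\cos\theta\,(\w\cdot\x) + \sin\theta\,(\bv\cdot\x)$. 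Taking the conditional expectation over $\bv\cdot\x$ recognizes this as $\Tre_{\cos\theta}\sigma'(\w\cdot\x)$, yielding $(Q_1) = \sin\theta\,\Ex[\Tre_{\cos\theta}\sigma'(\w\cdot\x)\,\Tr\sigma'(\w\cdot\x)]$. Then I would invoke self-adjointness and the semigroup property of $\Tre_\cdot$ (\Cref{fct:semi-group}) to rewrite $\Ex[\Tre_{\rho_1}f\cdot\Tre_{\rho_2}f] = \Ex[f\cdot\Tre_{\rho_1\rho_2}f] = \|\Tre_{\sqrt{\rho_1\rho_2}}f\|_{L_2}^2$, obtaining $(Q_1) = \sin\theta\,\|\Tre_{\sqrt{\rho\cos\theta}}\sigma'\|_{L_2}^2$. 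Combined with the $(Q_2)$ bound, this proves the first displayed inequality of the proposition.

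For the second (conditional) part, I would use that $\rho \leq \cos\theta$ implies $\sqrt{\rho\cos\theta} \geq \rho$, so by the monotonicity of $\lambda \mapsto \|\Tre_\lambda\sigma'\|_{L_2}$ (\Cref{fct:semi-group}) we have $\|\Tre_{\sqrt{\rho\cos\theta}}\sigma'\|_{L_2} \geq \|\Tre_\rho\sigma'\|_{L_2}$. The hypothesis $\sin\theta \geq 3\sqrt{\opt}/\|\Tre_\rho\sigma'\|_{L_2}$ then gives $\sqrt{\opt}\,\|\Tre_\rho\sigma'\|_{L_2} \leq (\sin\theta/3)\,\|\Tre_{\sqrt{\rho\cos\theta}}\sigma'\|_{L_2}^2$, so the noise term absorbs at most $1/3$ of the signal term, leaving the clean $(2/3)$ factor. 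The only subtlety to double-check is that Stein's lemma applies under the regularity assumed (essentially finiteness of the expectations involved), which is guaranteed since we are working within the $(B,L)$-regular class where $\sigma'$ has bounded $L_2$ norm.
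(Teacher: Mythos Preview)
Your proposal is correct and follows essentially the same approach as the paper's proof: invoke \Cref{lem:ou-aug-gradient}, decompose via $\bv = (\w^*)^{\perp_\w}/\|(\w^*)^{\perp_\w}\|_2$, split $y$ into the clean and noise parts, handle $(Q_2)$ by Cauchy--Schwarz and independence, and handle $(Q_1)$ by Stein's lemma plus the semigroup identity $\E[\Tre_{\cos\theta}\sigma'\cdot\Tr\sigma'] = \|\Tre_{\sqrt{\rho\cos\theta}}\sigma'\|_{L_2}^2$, then finish with monotonicity of $\|\Tre_\lambda\sigma'\|_{L_2}$ in $\lambda$ for the $(2/3)$ bound. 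The only minor deviation is your closing remark about $(B,L)$-regularity, which the proposition does not actually assume (it only requires the relevant expectations to be finite), but this does not affect the argument.
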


Before proving the proposition, we first prove the following auxiliary lemma, which establishes a connection between the Riemannian gradient of \Cref{app:eq:loss-function} and the \OU semigroup applied to the derivative of the activation. 
\begin{lemma}\label{app:lem:ou-aug-gradient}
    Let $\g(\vec w)=(1/(2\rho))(\nabla_{\vec w} \mathcal L_\rho (\vec w))^{\perp \vec w}$. Then, $\g(\w) = - \Exy[y\Tr\sigma'(\w\cdot\x)\x^{\perp\w}]$. 
\end{lemma}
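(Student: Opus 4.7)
The plan is to compute $(\nabla_\w \calL_\rho(\w))^{\perp \w}$ directly from the definition and simplify using the Gaussian structure of $\D_\rho$. First, I would differentiate under the expectation to obtain
\[
\nabla_\w \calL_\rho(\w) = 2\,\E_{(\tx,y)\sim \D_\rho}\bigl[(\sigma(\w\cdot\tx) - y)\,\sigma'(\w\cdot\tx)\,\tx\bigr],
\]
and then project onto $\w^\perp$, splitting the result into a ``clean'' piece $2\,\E_{(\tx,y)\sim\D_\rho}[\sigma(\w\cdot\tx)\sigma'(\w\cdot\tx)\,\tx^{\perp\w}]$ and a ``label'' piece $-2\,\E_{(\tx,y)\sim\D_\rho}[y\,\sigma'(\w\cdot\tx)\,\tx^{\perp\w}]$.

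Next, I would dispose of the clean piece. The integrand depends on $y$ only trivially, so we can marginalize and use that $\tx \sim \calN(\vec 0,\vec I)$ (since $\rho^2 + (1-\rho^2) = 1$ and $\D_\x = \calN(\vec 0,\vec I)$). Then $\sigma(\w\cdot\tx)\sigma'(\w\cdot\tx)$ is a function of $\w\cdot\tx$ alone, which is independent of $\tx^{\perp\w}$, and $\E[\tx^{\perp\w}] = \vec 0$, so the clean piece vanishes.

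For the label piece, I would plug in $\tx = \rho\x + \sqrt{1-\rho^2}\bz$ with $\bz\sim\calN(\vec 0,\vec I)$ independent of $(\x,y)$, yielding
\[
\E_{(\tx,y)\sim\D_\rho}\bigl[y\,\sigma'(\w\cdot\tx)\,\tx^{\perp\w}\bigr]
= \Exy\E_\bz\bigl[y\,\sigma'\bigl(\rho(\w\cdot\x) + \sqrt{1-\rho^2}(\w\cdot\bz)\bigr)\bigl(\rho\,\x^{\perp\w} + \sqrt{1-\rho^2}\,\bz^{\perp\w}\bigr)\bigr].
\]
Since $\bz^{\perp\w}$ is independent of $\w\cdot\bz$ with mean $\vec 0$, the $\bz^{\perp\w}$ contribution integrates to $\vec 0$. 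Recognizing the remaining inner expectation as $\E_{\zeta\sim \calN(0,1)}[\sigma'(\rho(\w\cdot\x) + \sqrt{1-\rho^2}\zeta)] = \Tr\sigma'(\w\cdot\x)$ (via $\zeta = \w\cdot\bz$), the label piece becomes $2\rho\,\Exy[y\,\Tr\sigma'(\w\cdot\x)\,\x^{\perp\w}]$. Dividing by $2\rho$ gives the claim.

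I do not anticipate any real obstacle: the argument is a bookkeeping exercise combining the rotational invariance of the Gaussian (to kill components along $\w^\perp$ in both $\tx$ and $\bz$) with the definition of the \OU semigroup. The only point meriting care is the justification for interchanging differentiation and expectation, which is standard under mild regularity of $\sigma$ (e.g., the $(B,L)$-regularity assumed elsewhere), and the verification that $\tx$ is marginally $\calN(\vec 0,\vec I)$, which is immediate.
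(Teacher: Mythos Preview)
Your proposal is correct and follows essentially the same approach as the paper's proof: both compute the gradient, project onto $\w^\perp$, kill the ``clean'' term using independence of $\w\cdot\tx$ and $\tx^{\perp\w}$ under the Gaussian marginal, substitute $\tx = \rho\x + \sqrt{1-\rho^2}\bz$, eliminate the $\bz^{\perp\w}$ contribution via independence from $\w\cdot\bz$, and identify the remaining inner expectation with $\Tr\sigma'(\w\cdot\x)$.
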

\begin{proof}
    By definition, the projected gradient vector $\g(\w)$ equals
    \begin{align*}
        \g(\w) = \frac{1}{\rho}\bigg(\E_{(\tx,y)\sim \D_{\rho}}[\sigma(\w\cdot\tx)\sigma'(\w\cdot\tx)\tx^{\perp\w}] - \E_{(\tx,y)\sim \D_{\rho}}[y\sigma'(\w\cdot\tx)\tx^{\perp\w}]\bigg).
    \end{align*}
    Note that since $(\D_\rho)_{\tx}$ is also a standard Gaussian distribution, we have $\E_{\tx\sim(\D_\rho)_{\tx}}[\tx^{\perp\w}] = \vec 0$, hence $\E_{(\tx,y)\sim \D_{\rho}}[\sigma(\w\cdot\tx)\sigma'(\w\cdot\tx)\tx^{\perp\w}] = \vec 0$, as $\w\cdot\tx$ and $\tx^{\perp\w}$ are independent. Thus, $\rho\g(\w) = - \E_{(\tx,y)\sim \D_{\rho}}[y\sigma'(\w\cdot\tx)\tx^{\perp\w}]$. Now, since $(\D_\rho)_{\tx} = \rho\D_\x + (1 - \rho^2)^{1/2}\calN(\vec 0, \vec I)$, we have
    \begin{align*}
        \g(\w) &= - \frac{1}{\rho}\E_{(\x,y)\sim \D, \bz\sim\calN(\vec 0, \vec I)}[y\sigma'(\w\cdot(\rho\x + \sqrt{1 - \rho^2}\bz))(\rho\x + \sqrt{1 - \rho^2}\bz)^{\perp\w}].
    \end{align*}
    As $\bz$ is independent of $\x,y$ and  follows  the standard Gaussian distribution, it must be $\E_{(\x,y)\sim \D, \bz\sim\calN(\vec 0, \vec I)}[y\sigma'(\w\cdot\tx)\bz^{\perp\w}] = 0$, and thus we further have
    \begin{align*}
        \g(\w) &= - \frac{1}{\rho}\E_{(\x,y)\sim \D, \bz\sim\calN(\vec 0, \vec I)}[y\sigma'(\w\cdot(\rho\x + \sqrt{1 - \rho^2}\bz))\rho\x^{\perp\w}]\\
        &=-\Exy[y\E_{\bz\sim\calN(\vec 0, \vec I)}[\sigma'(\w\cdot(\rho\x + \sqrt{1 - \rho^2}\bz))]\x^{\perp\w}]\\
        &=-\Exy[y\Tr\sigma'(\w\cdot\x)\x^{\perp\w}],
    \end{align*}
    completing the proof.
\end{proof}

We are now ready to prove our main structural result. 

\smallskip

\begin{proof}[Proof of \Cref{app:prop:structural}]
When $\w^*$ is parallel to $\w$, then $\g(\w)\cdot\w^* = 0$ since $\g(\w)$ is orthogonal to $\w$, and $\sin\theta = 0$, hence the statements hold trivially. Thus in the rest of the proof we assume that $(\w^*)^{\perp_\w}\neq \vec 0$.
    Denote $\bv \eqdef (\w^*)^{\perp_\w}/\|(\w^*)^{\perp_\w}\|_{2}$. Then, $\w^* = \w\cos\theta + \bv\sin\theta$, where $\theta \eqdef \theta(\w,\w^*)$. 
Using \Cref{lem:ou-aug-gradient}, the inner product between $\w^*$ and $-\g(\w)$ equals
    \begin{align*}
       -\g(\w)\cdot\w^* =\Exy[y\Tr\sigma'(\vec w\cdot\x)\bv\cdot\x]\sin\theta\nonumber\;.
    \end{align*}
By adding and subtracting $\sigma(\wstar\cdot\x)$ on the right-hand side, we get that 
    \begin{align}\label{app:eq:g(w)-cdot_wstr}
-\g(\w)\cdot\w^*
        &=\Ex[\sigma(\w^*\cdot\x)\Tr\sigma'(\w\cdot\x)\bv\cdot\x]\sin\theta \nonumber\\
        &\quad +\Exy[(y - \sigma(\w^*\cdot\x))\Tr\sigma'(\w\cdot\x)\bv\cdot\x]\sin\theta  .
    \end{align}
    Observe that since $\w^* = \cos\theta\w + \sin\theta\bv$ and $\x$ is a standard Gaussian random vector, we have $\w\cdot\x$ and $\bv\cdot\x$ are independent standard Gaussian random variables. 
    By applying Cauchy-Schwarz inequality to the expectation in the last term, we obtain
    \begin{align*}
        &\quad \Exy[(y - \sigma(\w^*\cdot\x))\Tr\sigma'(\w\cdot\x)\bv\cdot\x]  \\
        &\geq -\bigg(\Exy[(y - \sigma(\w^*\cdot\x))^2]\Ex[(\Tr\sigma'(\w\cdot\x)\bv\cdot\x)^2]\bigg)^{1/2}\\
        &= -\sqrt{\opt\Ex[(\Tr\sigma'(\w\cdot\x))^2]}=-\sqrt{\opt}\|\Tr\sigma'\|_{L_2},
    \end{align*}
    where in the first equality we used the definition of $\opt$ and that $\w\cdot\x$ and $\bv\cdot\x$ are independent standard Gaussian random variables noted above.

    To bound the first term on the right-hand side of \eqref{app:eq:g(w)-cdot_wstr}, we again use that $\w\cdot\x$ and $\bv\cdot\x$ are independent standard Gaussian random variables and apply Stein's lemma (\Cref{fct:stein}) to obtain
    \begin{align*}
        \Ex[\sigma(\w^*\cdot\x)\Tr\sigma'(\w\cdot\x)\bv\cdot\x] &= \Ex[\sigma(\cos\theta\w\cdot\x + \sin\theta\bv\cdot\x)\Tr\sigma'(\w\cdot\x)\bv\cdot\x]\\
        & =\Ex[\sigma'(\cos\theta\w\cdot\x + \sin\theta\bv\cdot\x)\sin\theta\Tr\sigma'(\w\cdot\x)]\\
        &=\Ex[\Tre_{\cos\theta}\sigma'(\w\cdot\x)\Tr\sigma'(\w\cdot\x)]\sin\theta=\|\Tre_{\sqrt{\rho\, \cos\theta}}\sigma'\|_{L_2}^2\sin\theta\;,
    \end{align*}
    where in the last equality we used the identity $\Tre_{a}\Tre_{b}=\Tre_{ab} = \Tre_{\sqrt{ab}}\Tre_{\sqrt{ab}}$.
    Therefore, we have that 
    \begin{equation*}
    -\g(\w)\cdot\w^*\geq  \|\Tre_{\sqrt{\rho\, \cos\theta}}\sigma'\|_{L_2}^2\sin^2\theta -\sqrt{\opt}\|\Tre_{\rho}\sigma'\|_{L_2}\sin\theta.
\end{equation*}

To argue the last part of \Cref{app:prop:structural}, recall (by \Cref{fct:semi-group}, Part 2(e)) that the function $g(\lambda)\eqdef\|\Tre_{\lambda}f\|_{L_2}$ is non-decreasing in $\lambda\in(0,1)$ for any function $f\in L_2(\normal)$, therefore
$ \|\Tre_{\sqrt{\rho\, \cos\theta}}\sigma'\|_{L_2}\geq \|\Tre_{\rho}\sigma'\|_{L_2}$ if $\cos\theta\geq \rho$. By using the assumption that $\sin\theta\geq 3\sqrt{\opt}/\|\Tre_{\rho}\sigma'\|_{L_2}$, we obtain that 
    \begin{equation*}
    -\g(\w)\cdot\w^*\geq  (2/3)\|\Tre_{\sqrt{\rho\, \cos\theta}}\sigma'\|_{L_2}^2\sin^2\theta\;.
\end{equation*}
This completes the proof of \Cref{app:prop:structural}.
\end{proof}

\subsection{Critical Points and Their Connection to the $L_2^2$ Loss}\label{app:subsec:critical-point&connection-to-loss}

\Cref{app:prop:structural} provides sufficient conditions ensuring that the vector $-\vec g(\w)$ directs $\w$ towards the direction of $\wstar$ whenever we are in a region around approximate solutions. Specifically, if the parameter $\rho$ is chosen appropriately and the following alignment condition holds:
$\sin\theta \|\Tre_{\cos\theta} \sigma'\|_{L_2} \geq 3\sqrt{\opt}$,
then  $-\vec g(\w)$ has a nontrivial correlation with $\wstar$. Otherwise, we can guarantee that the angle between $\w$ and $\wstar$ is already sufficiently small. 
This implies that the region of convergence of an algorithm that relies on $-\vec g(\w)$ depends on the quantity:
\begin{equation*}
    \psi_{\sigma}(\theta) \coloneqq \sin\theta \|\Tre_{\cos\theta} \sigma'\|_{L_2}.
\end{equation*}
Motivated by this observation, we define the \emph{ Convergence Region}, which characterizes the region of $\theta$  (and, equivalently, the region of $\w$) for which the algorithm makes progress towards $\wstar$.
\begin{definition}[Critical Point and  Convergence Region of $\sigma$]\label{app:def:star-point-regions}
     Given $\sigma : \R \to \R$, $\sigma\in L_2(\mathcal N)$,  and $\theta_0\in[0,\pi/2]$, we define the error alignment function  $\psi_{\sigma}:[0,\pi/2]\to\R_+$ with respect to $\sigma$ as follows: $\psi_{\sigma}(\theta)\eqdef \sin\theta \|\Tre_{\cos\theta}\sigma'\|_{L_2}$. For any $\eps>0$, we define the  Convergence Region $\mathcal R_{\sigma,\theta_0}(\eps)=\{\theta: \psi_{\sigma}(\theta)\leq {\sqrt\eps}\}\cap\{\theta: 0\leq \theta\leq\theta_0\}$. We say that $\theta^*$ is  $(\sigma,\theta_0,\eps)$-Critical Point if  $\theta^*$ is the maximum $\theta$ in $\mathcal R_{\sigma,\theta_0}(\eps)$.
\end{definition}
\Cref{app:def:star-point-regions} defines the  Convergence Region using an upper bound $\theta_0$. This upper bound is necessary because $\psi_{\sigma}(\theta)$ is not necessarily monotonic. Specifically, it can be shown that $\psi_{\sigma}(\theta)$ is non-decreasing up to a critical point \(\theta'\) and then non-increasing (see \Cref{app:fig:psi_sigma} for illustrative examples). 
Consequently, the region $\mathcal R_{\sigma,\theta_0}(\eps)$ may consist of two disjoint intervals. The role of (an appropriately selected) $\theta_0$ is to ensure that this does not happen. \begin{claim}\label{app:claim:initialization-threshold}
    Let $\sigma\in L_2(\calN)$. Then there exists a real number $\bar{\theta}\in (0,\pi/2)$, such that for any $\theta\leq \bar{\theta}$, $\psi_\sigma(\theta)$ is non-decreasing. If $\|\sigma''\|_{L_2}\leq L'$, then $\bar{\theta}\geq \min(\pi/3, \|\Tre_{1/2}\sigma'\|_{L_2}^2/(L')^2)$.
\end{claim}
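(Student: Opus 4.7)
My plan is to compute $(\psi_\sigma^2)'(\theta)$ in closed form via the Hermite expansion of $\sigma'$, extract a clean monotonicity criterion, and verify it in the two regimes the claim asserts. Writing $\sigma'(z) \doteq \sum_{i \geq 0} c_i \he_i(z)$ with $c_i = \hat{\sigma'}(i)$, \Cref{fct:semi-group}, Part 3, yields $\|\Tre_{\cos\theta}\sigma'\|_{L_2}^2 = \sum_{i} c_i^2 (\cos\theta)^{2i}$. Differentiating termwise,
\begin{equation*}
(\psi_\sigma^2)'(\theta) \;=\; 2\sin\theta\cos\theta \sum_{i} c_i^2 (\cos\theta)^{2i} \;-\; 2\sin^3\theta \sum_{i} i\, c_i^2 (\cos\theta)^{2i-1}.
\end{equation*}
Combining $\he_i' = \sqrt{i}\,\he_{i-1}$ (\Cref{fct:hermite}, Part 3) with $(\Tre_\rho g)'(z) = \rho\,\Tre_\rho g'(z)$ (\Cref{fct:semi-group}, Part 2(g)) gives the key identity $\sum_i i\, c_i^2 \rho^{2i} = \rho^2 \|\Tre_\rho \sigma''\|_{L_2}^2$, which lets me rewrite the second sum as $\cos\theta\,\|\Tre_{\cos\theta}\sigma''\|_{L_2}^2$. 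Substituting back,
\begin{equation*}
(\psi_\sigma^2)'(\theta) \;=\; 2\sin\theta\cos\theta\,\bigl[\|\Tre_{\cos\theta}\sigma'\|_{L_2}^2 \;-\; \sin^2\theta\,\|\Tre_{\cos\theta}\sigma''\|_{L_2}^2\bigr].
\end{equation*}
For $\theta\in(0,\pi/2)$ the prefactor is strictly positive, so $\psi_\sigma$ is non-decreasing at $\theta$ iff
\begin{equation*}
(\ast)\qquad \|\Tre_{\cos\theta}\sigma'\|_{L_2}^2 \;\geq\; \sin^2\theta\,\|\Tre_{\cos\theta}\sigma''\|_{L_2}^2.
\end{equation*}

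For existence of $\bar\theta>0$: if $\sigma'\doteq 0$ then $\psi_\sigma\equiv 0$ and any $\bar\theta$ works, so assume $\|\sigma'\|_{L_2}>0$. As $\theta\to 0$, the LHS of $(\ast)$ tends to $\|\sigma'\|_{L_2}^2>0$, while the RHS tends to $0$ (immediate when $\|\sigma''\|_{L_2}<\infty$; in general, $\sin^2\theta\,\sum_i i c_i^2(\cos\theta)^{2i-2}\to 0$ as $\theta\to 0$ by a standard Abelian-type argument using $\sum_i c_i^2<\infty$). Continuity of both sides then yields a neighborhood of $0$ on which $(\ast)$ holds.

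For the quantitative bound under $\|\sigma''\|_{L_2}\leq L'$: for any $\theta\leq \pi/3$ I have $\cos\theta\geq 1/2$, so monotonicity of $\rho\mapsto\|\Tre_\rho\sigma'\|_{L_2}$ (\Cref{fct:semi-group}, Part 2(f)) gives $\|\Tre_{\cos\theta}\sigma'\|_{L_2}\geq \|\Tre_{1/2}\sigma'\|_{L_2}$, while non-expansiveness of $\Tre_{\cos\theta}$ (\Cref{fct:semi-group}, Part 2(e)) gives $\|\Tre_{\cos\theta}\sigma''\|_{L_2}\leq L'$. Thus $(\ast)$ is implied by $\sin^2\theta\leq \|\Tre_{1/2}\sigma'\|_{L_2}^2/(L')^2$. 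A brief case analysis finishes it: if $\|\Tre_{1/2}\sigma'\|_{L_2}^2/(L')^2\geq \pi/3$, then for $\theta\leq \pi/3$ one has $\sin^2\theta\leq 3/4<\pi/3$; otherwise the min equals $\|\Tre_{1/2}\sigma'\|_{L_2}^2/(L')^2\leq 1$, and for $\theta$ bounded by this value one uses $\sin^2\theta\leq \theta^2\leq \theta\leq \|\Tre_{1/2}\sigma'\|_{L_2}^2/(L')^2$. The main obstacle is the existence part when $\|\sigma''\|_{L_2}=\infty$, but this is cosmetic: the quantitative regime sits inside $\|\sigma''\|_{L_2}\leq L'$, and the edge case can be dispatched by a Tauberian argument or by first smoothing $\sigma$ by $\Tre_{1-\delta}$ and letting $\delta\to 0$.
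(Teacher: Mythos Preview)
Your proof is correct and follows essentially the same approach as the paper. The only cosmetic difference is that you compute $(\psi_\sigma^2)'(\theta)$ directly via the Hermite expansion of $\sigma'$, whereas the paper routes the computation through the \OU\ operator $\mathrm{L}$ (\Cref{fact:OU-op}) to obtain the equivalent form $2\sin\theta\cos\theta\bigl(\|\Tre_{\cos\theta}\sigma'\|_{L_2}^2 - \tan^2\theta\|(\Tre_{\cos\theta}\sigma')'\|_{L_2}^2\bigr)$; after using $(\Tre_\rho f)'=\rho\,\Tre_\rho f'$ this is exactly your bracket, and from that point on both arguments coincide.
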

\begin{proof}
    Since $\psi_\sigma(\theta)\geq 0$, to show that $\psi_\sigma(\theta)$ is non-decreasing is equivalent to show that $\psi_\sigma^2(\theta)$ is non-decreasing. Let us calculate the derivative of $\psi_\sigma^2(\theta)$:
    \begin{align*}
        (\psi_\sigma^2(\theta))' &= \frac{\diff{}}{\diff{\theta}}(\sin^2\theta\Ez[(\Tre_{\cos\theta}\sigma')^2]) \\
        &= 2\sin\theta\cos\theta\|\Tre_{\cos\theta}\sigma'\|_{L_2}^2 + 2\sin^2\theta\Ez\bigg[\Tre_{\cos\theta}\sigma' \frac{\diff{}}{\diff{\theta}}\Tre_{\cos\theta}\sigma'\bigg]
    \end{align*}
    Using \Cref{fact:OU-op}, since $\frac{\diff{}}{\diff{\rho}}\Tr f = (\mathrm{L}\Tr f)/\rho$, we further have
    \begin{align*}
        (\psi_\sigma^2(\theta))' &=2\sin\theta\cos\theta\|\Tre_{\cos\theta}\sigma'\|_{L_2}^2 + 2\sin^2\theta\Ez\bigg[\Tre_{\cos\theta}\sigma' \frac{1}{\cos\theta}\mathrm{L}\Tre_{\cos\theta}\sigma'(-\sin\theta)\bigg]\\
        &=2\sin\theta\cos\theta\|\Tre_{\cos\theta}\sigma'\|_{L_2}^2 -2\frac{\sin^3\theta}{\cos\theta}\Ez\bigg[\frac{\diff}{\diff{z}}\Tre_{\cos\theta}\sigma' \frac{\diff}{\diff{z}}\Tre_{\cos\theta}\sigma'\bigg]\\
        &=2\sin\theta\cos\theta\bigg(\|\Tre_{\cos\theta}\sigma'\|_{L_2}^2 - \tan^2\theta\|(\Tre_{\cos\theta}\sigma')'\|_{L_2}^2\bigg),
    \end{align*}
    where in the second equality we used \Cref{fact:OU-op} that $ \E_{z \sim \normal}\left[f(z) \mathrm{L}\Tre_{\rho} g(z) \right]=\E_{z \sim \normal}\left[ f'(z) (\Tre_{\rho} g(z))' \right]$. Therefore, we only need to prove that $h(\theta)\eqdef\|\Tre_{\cos\theta}\sigma'\|_{L_2}^2 - \tan^2\theta\|(\Tre_{\cos\theta}\sigma')'\|_{L_2}^2\geq 0$ in a region $(0,\bar{\theta})$. Note that $h(0) = \|\sigma'\|_{L_2}^2 > 0$. Furthermore, since $\|\Tre_{\cos\theta}\sigma'\|_{L_2}^2$ and $\|(\Tre_{\cos\theta}\sigma')'\|_{L_2}^2$ are continuous functions of $\theta$ (as we can see by Hermite expansion), we know that there exists a threshold $\bar{\theta}$ such that for any $\theta\leq \bar{\theta}$, it holds $\psi_\sigma'(z)\geq 0$. Furthermore, if $\sigma''$ is in $L_2(\calN)$ and $\|\sigma''\|_{L_2}\leq L'$, then since $(\Tr f(z))' = \rho\Tr f'(z)$ and $\Tr$ is a non-expansive operator (\Cref{fct:semi-group}), we have
    \begin{equation*}
        h(\theta)= \|\Tre_{\cos\theta}\sigma'\|_{L_2}^2 - \sin^2\theta\|\Tre_{\cos\theta}\sigma''\|_{L_2}^2\geq \|\Tre_{\cos\theta}\sigma'\|_{L_2}^2 - \sin^2\theta(L')^2.   
    \end{equation*}
    Assuming $\theta\leq \pi/3$, we have $h(\theta)\geq 0$ as long as $\theta\leq \bar{\theta} = \min(\pi/3, \|\Tre_{1/2}\sigma'\|_{L_2}^2/(L')^2)$.
\end{proof}

The significance of the Critical Point and the  Convergence Region comes from the following proposition, which bounds the $L_2^2$ error for points within the  Convergence Region. 

\begin{figure}[htbp]
  \centering
  \begin{minipage}[b]{0.32\textwidth}
    \centering
    \includegraphics[width=\linewidth]{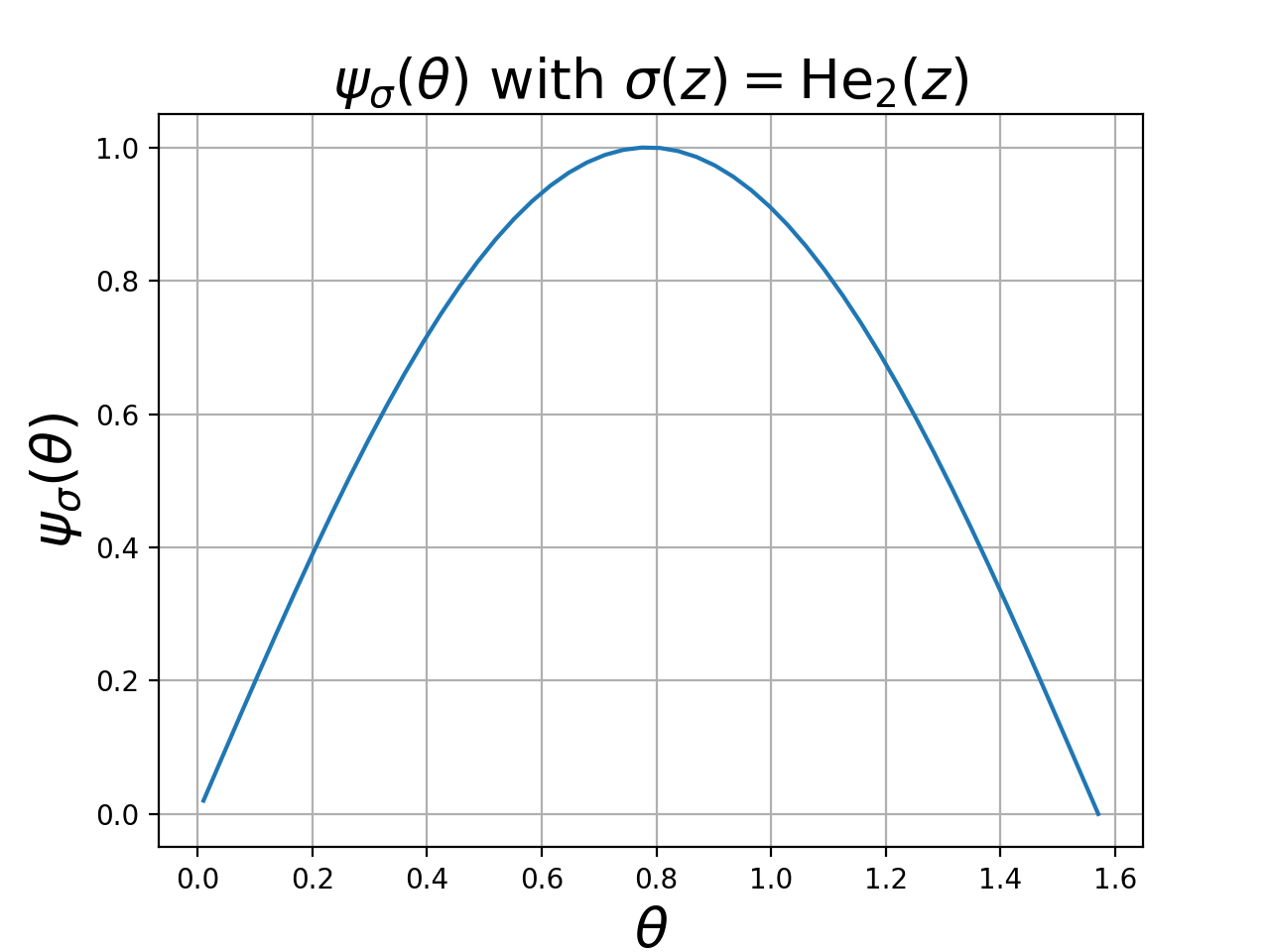}
\end{minipage}\hfill
  \begin{minipage}[b]{0.32\textwidth}
    \centering
    \includegraphics[width=\linewidth]{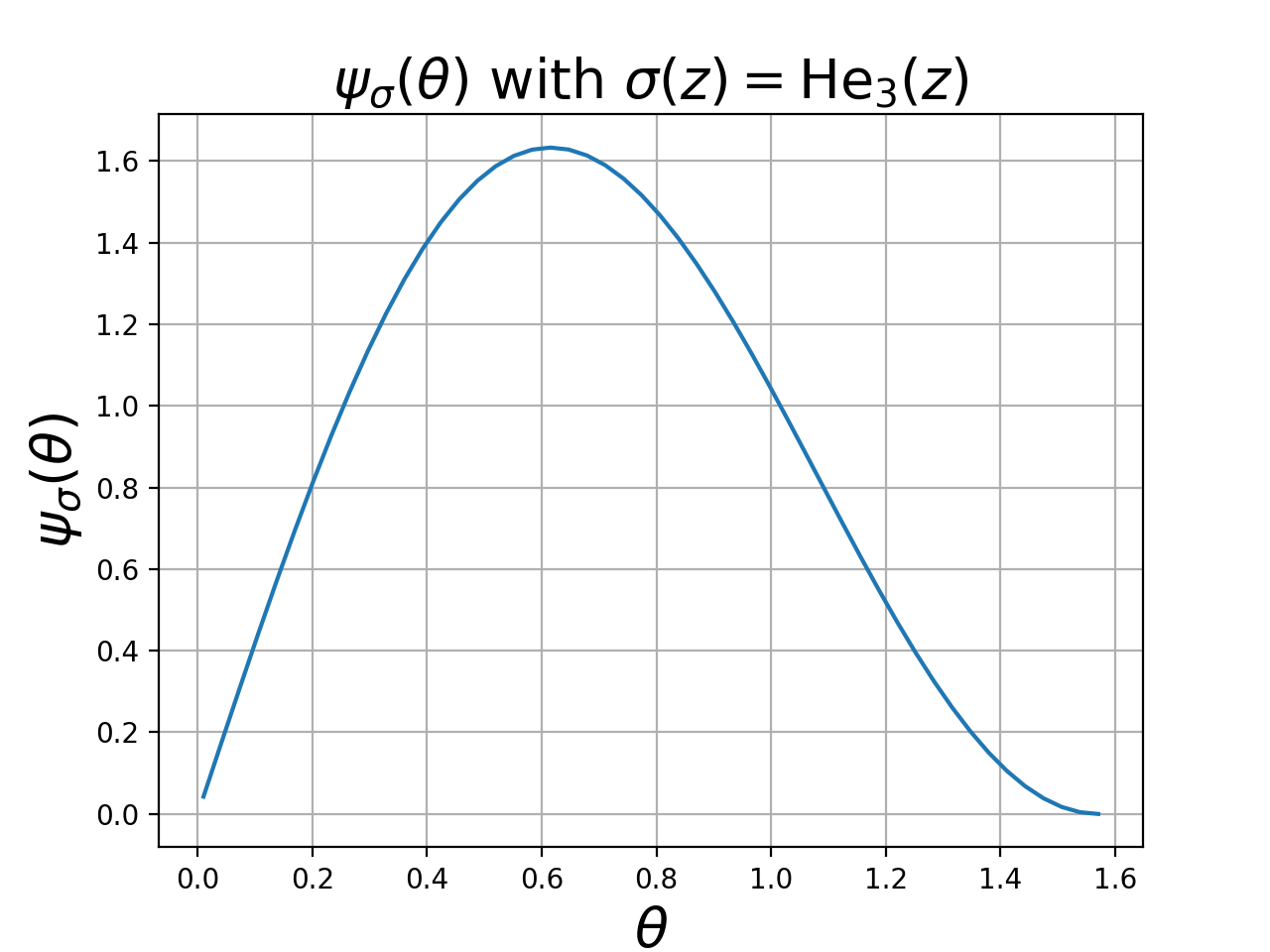}
\end{minipage}\hfill
  \begin{minipage}[b]{0.32\textwidth}
    \centering
    \includegraphics[width=\linewidth]{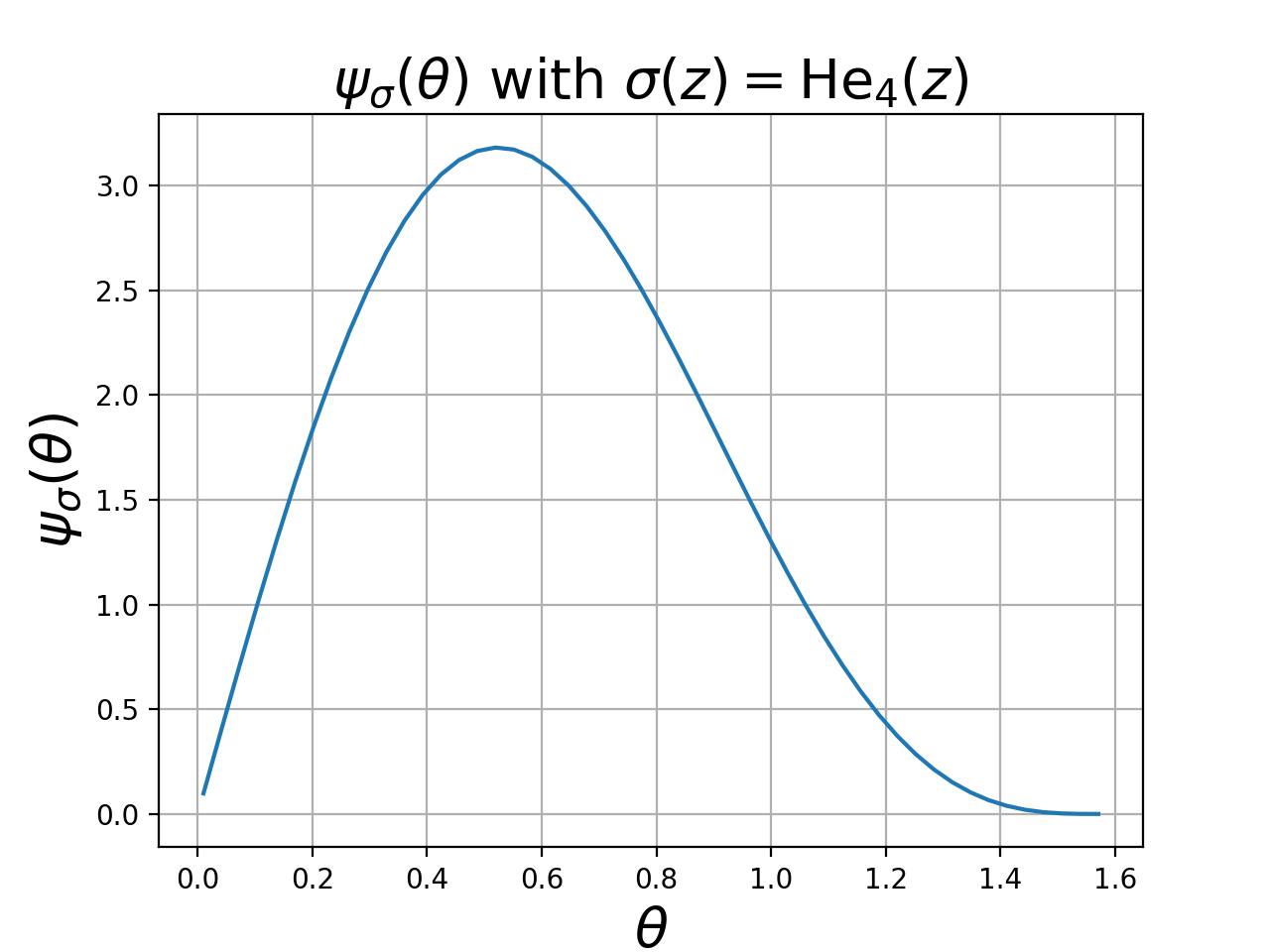}
\end{minipage}
  \vspace{1em}
  \begin{minipage}[b]{0.32\textwidth}
    \centering
    \includegraphics[width=\linewidth]{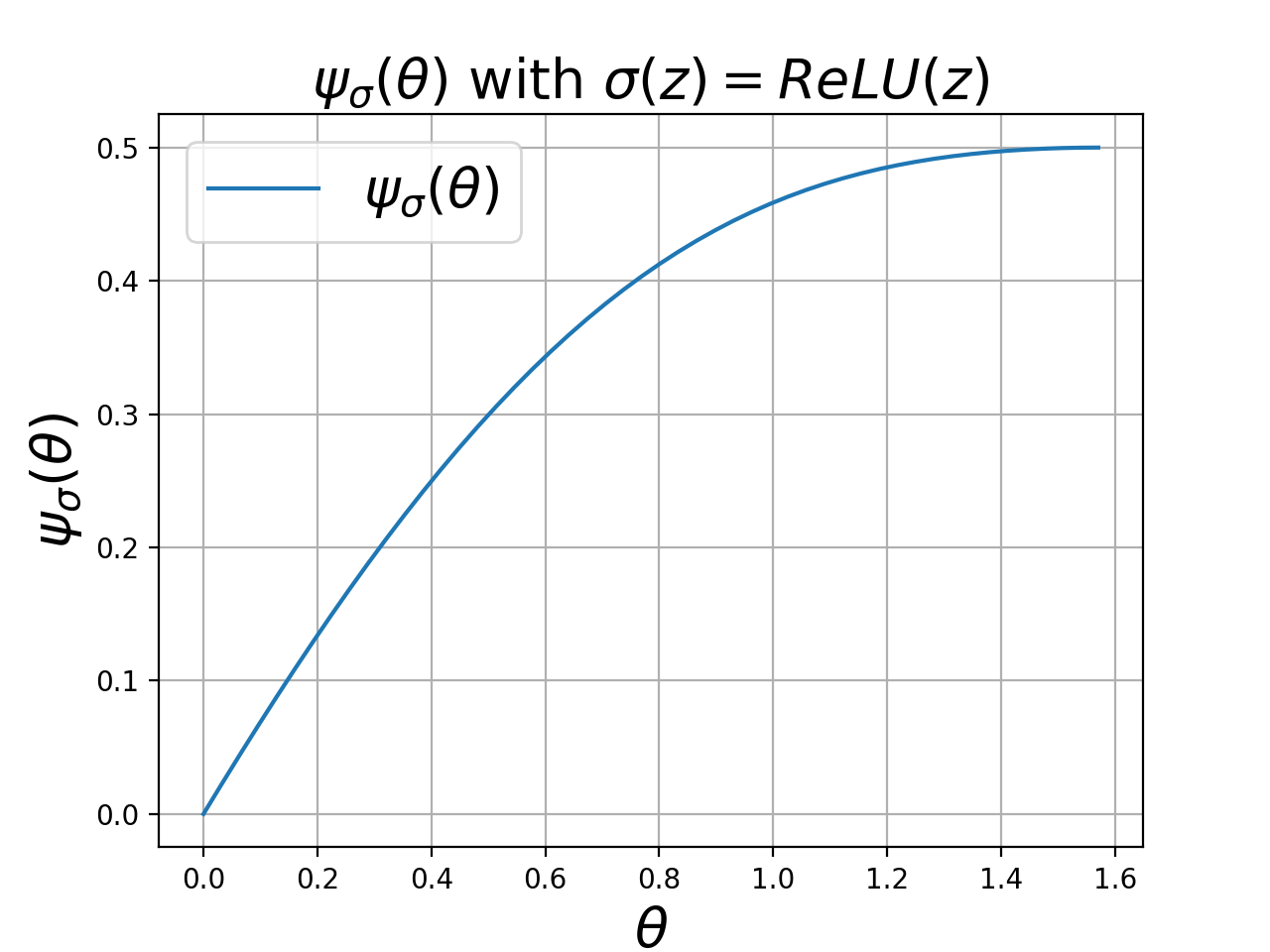}
\end{minipage}\hfill
  \begin{minipage}[b]{0.32\textwidth}
    \centering
    \includegraphics[width=\linewidth]{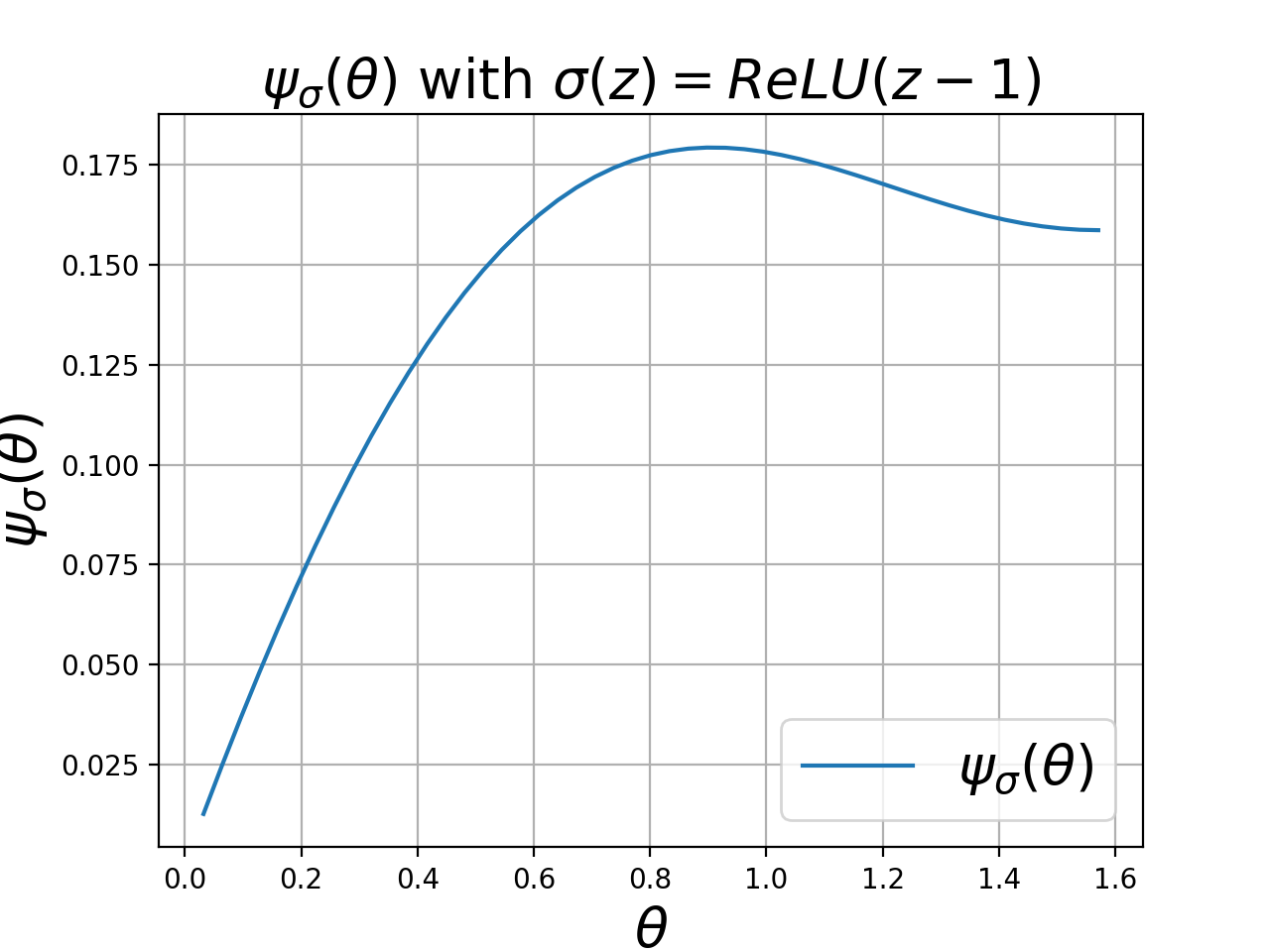}
\end{minipage}\hfill
  \begin{minipage}[b]{0.32\textwidth}
    \centering
    \includegraphics[width=\linewidth]{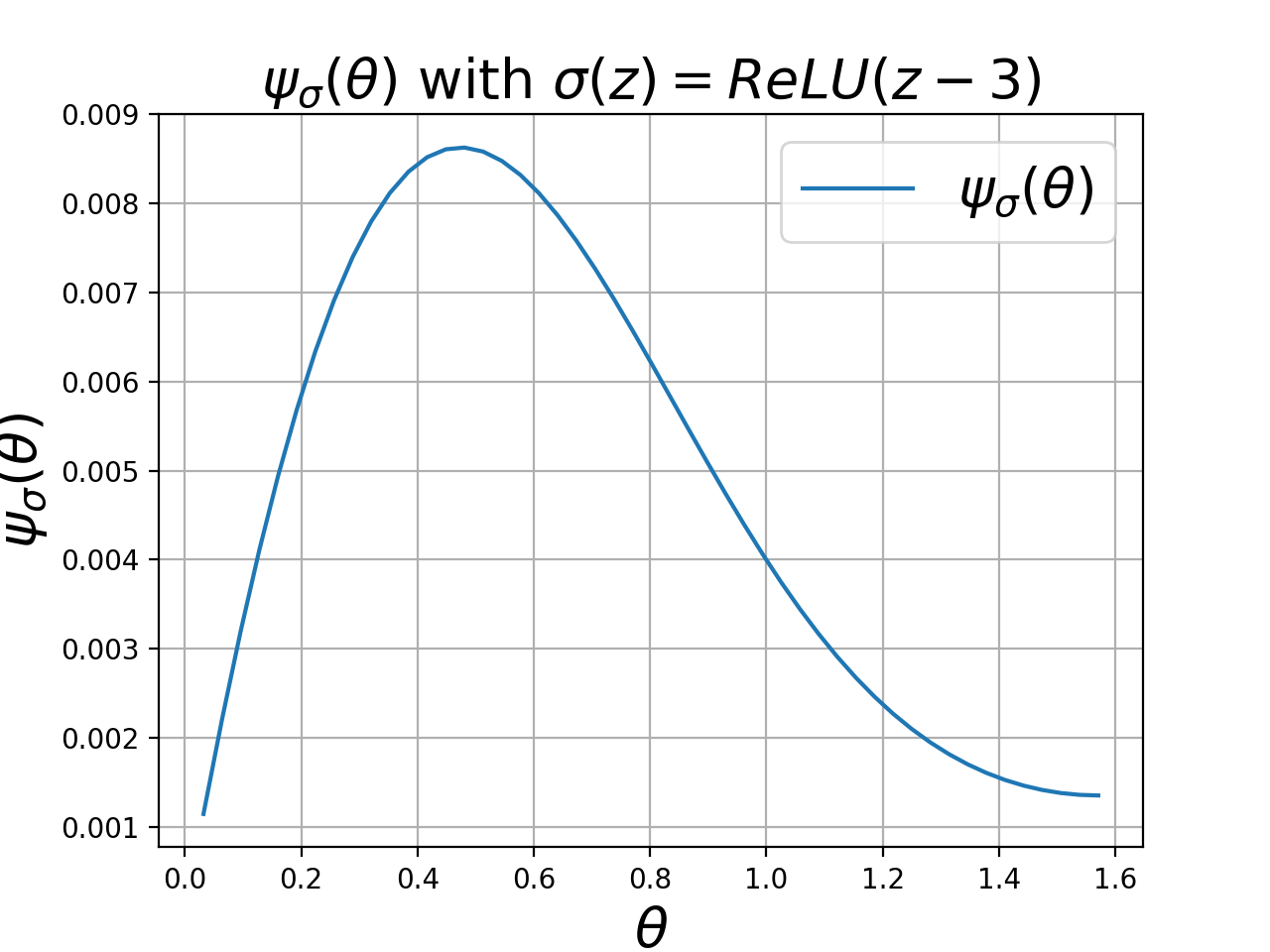}
\end{minipage}
  \caption{{\itshape (Up)}: The plot of $\psi_\sigma(\theta)$, where $\sigma(z) = \he_i(z)$, $i = 2,3,4$. {\itshape (Down)}: The plot of $\psi_\sigma(\theta)$, where $\sigma(z) = \mathrm{ReLU}(z-t)$, $t = 0,1,3$.}
  \label{app:fig:psi_sigma}
\end{figure}

\begin{proposition}[Critical Points and $L_2^2$ Error]\label{app:prop:error}
    Given $\sigma: \R\to\R$, $\sigma\in L_2(\mathcal N)$, and a distribution $\D$ of labeled examples
$(\x,y) \in \R^d \times \R$ such that 
$\D_\x = \mathcal N(\vec 0,\vec I)$, let $\wstar\in \R^d$ be such that $\Exy[(\sigma(\wstar\cdot\x)-y)^2] = \opt$. Then, for any unit vector $\vec w\in \R^d$ with $\theta=\theta(\vec w,\wstar)$ such that $\theta\leq \theta^*$, where $\theta^*$ is the $(\sigma,\theta_0,C\opt)$-Critical Point for some $\theta_0$ and $C>1$ an absolute constant, it holds that
$
        \Ey[(\sigma(\vec w\cdot\x)-y)^2]\leq O(\opt)
   +4\|\Pm{>(1/\theta^*)^2}\sigma\|_{L_2}^2.
$
\end{proposition}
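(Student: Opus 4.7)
The plan is to invoke the Error Decomposition Lemma (\Cref{prop:starpoints-l2error}) and then match its parameters to the Critical Point condition $(\theta^*)^2\|\Tre_{\cos\theta^*}\sigma'\|_{L_2}^2 \leq C\,\opt$ that is guaranteed by the definition of $\theta^*$. Setting $k \eqdef \lfloor 1/(\theta^*)^2\rfloor$, I would split the analysis into two cases depending on whether $k \geq 2$ or $k \in \{0,1\}$, since the Error Decomposition Lemma is stated separately for these two regimes. In either case, the goal is to make the middle term $C\theta^2\|\Tre_\rho\sigma'\|_{L_2}^2$ of the decomposition absorb into $O(\opt)$, leaving only the Hermite-tail residual.

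In the first case, $\theta^* \leq 1/\sqrt{2}$ and therefore $k \geq 2$, I would apply the lemma with the mandated choice $\rho = \sqrt{1 - 1/k}$. The crucial numerical check is that $1 - 1/k \leq 1 - (\theta^*)^2 \leq \cos^2\theta^*$ (using $k \leq 1/(\theta^*)^2$), so $\rho \leq \cos\theta^*$. The monotonicity of $\lambda \mapsto \|\Tre_\lambda\sigma'\|_{L_2}$ from \Cref{fct:semi-group} then yields $\|\Tre_\rho\sigma'\|_{L_2}^2 \leq \|\Tre_{\cos\theta^*}\sigma'\|_{L_2}^2$, and combining with $\theta \leq \theta^*$ and the Critical Point inequality gives $\theta^2\|\Tre_\rho\sigma'\|_{L_2}^2 \leq (\theta^*)^2\|\Tre_{\cos\theta^*}\sigma'\|_{L_2}^2 \leq C\,\opt$. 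Plugging into the decomposition collapses it to $\calL(\w) \leq O(\opt) + 4\|\Pm{>k}\sigma\|_{L_2}^2$.

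In the remaining case, $\theta^* > 1/\sqrt{2}$ and hence $k \in \{0,1\}$, I would instead take $\rho = \cos\theta^* \in (0,1)$, which is permitted by part~(i) of \Cref{prop:starpoints-l2error} since any $\rho \in (0,1)$ is allowed when $k \leq 1$. Then the middle term becomes $\theta^2\|\Tre_{\cos\theta^*}\sigma'\|_{L_2}^2 \leq (\theta^*)^2\|\Tre_{\cos\theta^*}\sigma'\|_{L_2}^2 \leq C\,\opt$ directly from the Critical Point condition, with no monotonicity step required.

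Finally, identifying $\|\Pm{>k}\sigma\|_{L_2}^2$ with $\|\Pm{>1/(\theta^*)^2}\sigma\|_{L_2}^2$---these coincide whenever $1/(\theta^*)^2$ is not an integer, and otherwise differ only by a single Hermite-mode mass, harmless at the level of the stated bound---delivers the claimed inequality $\calL(\w) \leq O(\opt) + 4\|\Pm{>1/(\theta^*)^2}\sigma\|_{L_2}^2$ in both cases. I expect the main subtlety to be the discrete case split and the verification that $\sqrt{1-1/k} \leq \cos\theta^*$, which is exactly what allows the monotonicity of $\rho \mapsto \|\Tr\sigma'\|_{L_2}$ to align the Error Decomposition with the Critical Point bound; everything else amounts to straightforward bookkeeping around the interpretation of $\lfloor \cdot \rfloor$ in the cutoff $k$.
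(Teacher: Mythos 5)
Your proposal is correct and follows essentially the same route as the paper: the paper likewise sets $k=\lfloor 1/(\theta^*)^2\rfloor$, splits on $k\geq 2$ versus $k\in\{0,1\}$, verifies $\sqrt{1-1/k}\leq\cos\theta^*$ (your argument is the $c=1$ specialization of the paper's slightly more general chain $\sqrt{1-c^2/k}\leq\cos(c\theta^*)\leq\cos\theta^*$), and invokes the monotonicity of $\rho\mapsto\|\Tre_{\rho}\sigma'\|_{L_2}$ together with the Critical Point inequality to absorb the middle term into $O(\opt)$. The only cosmetic difference is your final remark about $\Pm{>k}$ versus $\Pm{>1/(\theta^*)^2}$: since Hermite indices are integers these projections in fact always coincide, a point the paper passes over silently.
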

\Cref{app:prop:error} provides a sufficient condition for proving that our algorithm converges to a region with the target approximation error. In particular, if we argue that the iterates of the algorithm we use land in the Critical Region, then \Cref{app:prop:error} gives us the target $L_2^2$ error.

To prove \Cref{app:prop:error}, we first prove the following technical lemma, which decomposes the error into $O(\opt)$ and error terms that depend on the properties of the activation $\sigma.$

\begin{lemma}[Error Decomposition]\label{app:prop:starpoints-l2error}Given $\sigma: \R\to\R$, $\sigma\in L_2(\mathcal N)$, and a distribution $\D$ of labeled examples
$(\x,y) \in \R^d \times \R$ such that 
$\D_\x = \mathcal N(\vec 0,\vec I)$, let $\wstar\in \R^d$ be such that $\Exy[(\sigma(\wstar\cdot\x)-y)^2] = \opt$. Then, for any unit vector $\vec w\in \R^d$ with $\theta=\theta(\vec w,\wstar)$ and any $k\in \Z_+$, it holds that
    \begin{align}\label{app:eq:error-decomposition-1}
        \Ey[(\sigma(\vec w\cdot\x)-y)^2]\leq 2\opt &+4\theta^2\|\Pm{k}\sigma'\|_{L_2}^2+4\|\Pm{>k}\sigma\|_{L_2}^2\;.
\end{align}
Furthermore, if $k\geq 2$, then for any $c \in [1, (k/2)^{1/4}]$,\begin{align}\label{app:eq:error-decomposition-2}
        \Ey[(\sigma(\vec w\cdot\x)-y)^2]\leq 2\opt &+8e^{c^2}\theta^2\|\Tre_{\sqrt{1-c^2/k}}\sigma'\|_{L_2}^2
   +4\|\Pm{>k}\sigma\|_{L_2}^2\;.
\end{align}
Finally, if $k = 0,1$, then for any $\rho\in(0,1)$ it holds
\begin{equation*}
    \Exy[(\sigma(\w\cdot\x) - y)^2]\leq 2\opt + \theta^2\|\Tr\sigma'\|_{L_2}^2  + 4\|\Pm{>k}\sigma\|_{L_2}^2.
\end{equation*}
\end{lemma}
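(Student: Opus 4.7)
The plan is to reduce the loss to an irreducible $\opt$ piece plus the $L_2^2$ distance between $\sigma(\w\cdot\x)$ and $\sigma(\w^*\cdot\x)$, and then to expand the latter in the Hermite basis so that the low-degree part is absorbed by $\theta^2$ times a derivative norm while the high-degree part is absorbed by $\|\Pm{>k}\sigma\|_{L_2}^2$.

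\textbf{Step 1 (triangle-type inequality).} First I would write $\sigma(\w\cdot\x) - y = (\sigma(\w\cdot\x) - \sigma(\w^*\cdot\x)) + (\sigma(\w^*\cdot\x) - y)$ and apply $(a+b)^2 \leq 2a^2 + 2b^2$ to obtain $\calL(\w) \leq 2\opt + 2\|\sigma(\w\cdot\x) - \sigma(\w^*\cdot\x)\|_{L_2}^2$.

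\textbf{Step 2 (Hermite split and per-summand estimates).} Decompose $\sigma = \Pm{k}\sigma + \Pm{>k}\sigma$. Since $(\w\cdot\x, \w^*\cdot\x)$ is a jointly Gaussian pair with correlation $\cos\theta$, the identity $\E[\he_i(\w\cdot\x)\he_j(\w^*\cdot\x)] = \delta_{ij}\cos^i\theta$ forces the cross term to vanish, so
\[
\|\sigma(\w\cdot\x)-\sigma(\w^*\cdot\x)\|_{L_2}^2 = 2\sum_{i=0}^{k}\hat\sigma(i)^2(1-\cos^i\theta) + 2\sum_{i>k}\hat\sigma(i)^2(1-\cos^i\theta).
\]
For the low-frequency part I would use $1-\cos^i\theta \leq i(1-\cos\theta) \leq i\theta^2/2$, yielding $\theta^2\sum_{i=0}^{k}i\hat\sigma(i)^2 \leq \theta^2\|\Pm{k}\sigma'\|_{L_2}^2$ (the inequality holds because $\sigma'(z) = \sum_{j\ge 1}\sqrt{j}\,\hat\sigma(j)\he_{j-1}(z)$ gives $\|\Pm{k}\sigma'\|_{L_2}^2 = \sum_{j=1}^{k+1}j\hat\sigma(j)^2$). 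For the high-frequency part I would bound $1-\cos^i\theta \leq 2$, giving $\leq 4\|\Pm{>k}\sigma\|_{L_2}^2$. Combining with Step 1 yields \eqref{app:eq:error-decomposition-1}.

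\textbf{Step 3 (from partial sum to \OU-semigroup norm).} For $k\geq 2$ and $c\in[1,(k/2)^{1/4}]$, setting $\rho^2 = 1 - c^2/k$ in the Hermite formula gives
\[
\|\Tre_{\sqrt{1-c^2/k}}\sigma'\|_{L_2}^2 = \sum_{j\ge 1}(1-c^2/k)^{j-1}\,j\,\hat\sigma(j)^2 \;\geq\; (1-c^2/k)^{k}\|\Pm{k}\sigma'\|_{L_2}^2.
\]
The elementary bound $\ln(1-x)\geq -x-x^2$ on $[0,1/2]$ then gives $(1-c^2/k)^{k}\geq e^{-c^2-c^4/k}\geq e^{-c^2}/2$, since $c^4/k\leq 1/2\leq \ln 2$ under $c\leq (k/2)^{1/4}$. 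Substituting into the bound from Step 2 yields \eqref{app:eq:error-decomposition-2}.

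\textbf{Step 4 (the cases $k\in\{0,1\}$).} If $k=0$ the low-degree difference vanishes identically and there is nothing to bound. If $k=1$, then $\Pm{1}\sigma(\w\cdot\x)-\Pm{1}\sigma(\w^*\cdot\x) = \hat\sigma(1)(\w-\w^*)\cdot\x$ has $L_2^2$ norm $\hat\sigma(1)^2\|\w-\w^*\|_2^2\leq \hat\sigma(1)^2\theta^2$; since $\|\Tr\sigma'\|_{L_2}^2 = \sum_{j\ge 1}\rho^{2(j-1)}j\hat\sigma(j)^2$ has $j=1$ term equal to $\hat\sigma(1)^2$, we get $\hat\sigma(1)^2\leq \|\Tr\sigma'\|_{L_2}^2$ for every $\rho\in(0,1)$, yielding the third inequality.

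\textbf{Main obstacle.} The most delicate piece is Step 3: one must pick $\rho^2 = 1 - c^2/k$ precisely so that $(1-c^2/k)^{k}\approx e^{-c^2}$, and the restriction $c\le (k/2)^{1/4}$ is exactly what is needed to control the sub-leading $c^4/k$ term in the logarithm expansion; any coarser estimate on $\ln(1-c^2/k)$ would introduce an unbounded factor in $k$ when the smoothing parameter $\rho$ is close to $1$.
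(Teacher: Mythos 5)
Your proposal is correct and follows essentially the same route as the paper: Young's inequality to isolate $2\opt$, the Hermite/Mehler expansion of $\E[\sigma(\w\cdot\x)\sigma(\wstar\cdot\x)]$ with the Bernoulli-type bound $1-\cos^i\theta\le i\theta^2/2$ on low degrees and a trivial bound on the tail, and the estimate $(1-c^2/k)^k\ge e^{-c^2}/2$ (the paper uses $e^{-x}(1-x^2)\le(1-x)$ in place of your $\ln(1-x)\ge -x-x^2$) to pass from $\|\Pm{k}\sigma'\|_{L_2}^2$ to $\|\Tre_{\sqrt{1-c^2/k}}\sigma'\|_{L_2}^2$. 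The only deviations are immaterial constant choices: your bound $1-\cos^i\theta\le 2$ on the tail yields $8\|\Pm{>k}\sigma\|_{L_2}^2$ rather than the stated $4$, which matches the statement once one adopts the paper's standing convention $\theta\in[0,\pi/2]$ (under which $1-\cos^i\theta\le 1$), and your $k=1$ case lands a factor $2$ off the stated middle term, a slip also present in the paper's own derivation and irrelevant to all downstream $O(\opt)$ bounds.
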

\begin{proof}
First, we decompose the error into the minimum error (the one achieved by $\wstar$) and the alignment error (the one from the misalignment of $\vec w$ and $\wstar$). 
    By Young's inequality, we have that 
    \begin{align}\label{app:eq:prop-error-0}
         \Ey[(\sigma(\vec w\cdot\x)-y)^2)]&\leq 2 \Ey[(\sigma(\wstar\cdot\x)-y)^2)]
   +2\Ey[(\sigma(\wstar\cdot\x)-\sigma(\vec w\cdot\x))^2)]
   \nonumber\\&= 2\opt+2 \underbrace{\Exx[(\sigma(\wstar\cdot\x)-\sigma(\vec w\cdot\x))^2]}_{\text{Alignment Error}}\;.    
    \end{align}
In the rest of the proof, we bound above the alignment error.
\begin{claim}[Angle and Alignment Error]\label{app:clm:prop-error-1}
    Let $\vec w,\wstar\in \R^d$ be unit vectors and let $\theta\eqdef \theta(\vec w,\wstar)$. Then, for any $k \in \Z_+,$
    \[
   \Exx[(\sigma(\wstar\cdot\x)-\sigma(\vec w\cdot\x))^2]\leq 2\theta^2\|\Pm{k}\sigma'\|_{L_2}^2+2\Et[(\Pm{>k}\sigma(t))^2].\]
\end{claim}
\begin{proof}
By expanding the square, since $\w, \wstar$ are fixed vectors independent of $\x,$ we have that 
\begin{align}
    \Exx[(\sigma(\wstar\cdot\x)-\sigma(\vec w\cdot\x))^2]&=2 \bigg(\Et[(\sigma(t))^2]-\Exx[\sigma(\wstar\cdot\x)\sigma(\vec w\cdot\x)]\bigg)\notag\\
    &=2\bigg(\Et[(\sigma(t))^2]-\Et[\sigma(t)\Tre_{\cos\theta}\sigma(t)]\bigg), \label{app:eq:prop-error-1}
\end{align}
where in the second inequality we used that $t = \wstar\cdot\x \sim \normal(0, 1)$, decomposed $\w$ into components parallel and orthogonal to $\wstar,$ and applied the definition of $\Tre_{\cos \theta}.$ 

 Let $\sigma(t)\doteq \sum_{i=0}a_i \he_i(t)$ with $a_i = \Ez[\sigma(z)\he_i(z)]$ be the Hermite expansion of $\sigma$. Using the property that $\Tr \he_i(t)=\rho^i\he_i(t)$, for any integer $k\geq 1$ (see \Cref{lem:interchange-Tr-and-differentiation}, Part 3), we have that
 \begin{align}
     \Et[(\sigma(t))^2]-\Et[\sigma(t)\Tre_{\cos\theta}\sigma(t)]
     &= \sum_{i=1}^{+\infty} a_i^2(1-\cos^i\theta) \nonumber\\
     &=a_1^2(1 - \cos\theta) + \sum_{i=2}^{+\infty}a_i^2(1-(1-\sin^2\theta)^{i/2})\nonumber\\&\leq (1/2)a_1^2\theta^2+\theta^2\sum_{i=2}^k(i/2)a_i^2+\sum_{i=k+1}^{+\infty}a_i^2,\label{app:eq:prop-error-2}
 \end{align}
 where for the first term we used that $(1-\cos\theta) = 2\sin^2(\theta/2)\leq \theta^2/2$, and for the terms from $i=2,\ldots,k$, we used the Bernoulli inequality and that $\sin\theta\leq \theta$ for $\theta\geq 0$.

 Furthermore, note that if $\sigma(t)\doteq\sum_{i=0}a_i \he_i(t)$ is the Hermite expansion of $\sigma$, then $\sigma'(t)\doteq\sum_{i=1}a_i\sqrt{i} \he_{i-1}(t)$ is the Hermite expansion of $\sigma'$.
 Therefore, we have that
  \begin{align*}
     \Et[(\sigma(t))^2]-\Et[\sigma(t)\Tre_{\cos\theta}\sigma(t)]\leq \theta^2\|\Pm{k}\sigma'\|_{L_2}^2+\|\Pm{>k}\sigma\|_{L_2}^2,
 \end{align*}
 which, combined with \Cref{app:eq:prop-error-1}, completes the proof.
\end{proof}
To complete the proof, it remains to bound  $\|\Pm{k}\sigma'\|_{L_2}^2$ above, which is done in the following claim. 
 \begin{claim}\label{app:clm:prop-error-2}
When $k\geq 2$, for any $c \in [1, (k/2)^{1/4}]$, \[
     \|\Pm{k}\sigma'(t)\|_{L_2}^2\leq
     2e^{c^2}\|\Tre_{\sqrt{1-c^2/k}}\sigma'(t)\|_{L_2}^2.
     \]
 \end{claim}
 \begin{proof}
      Note that the $\Tre_{\sqrt{1-c^2/k}}\sigma'(t)\doteq \sum_{i=1} (1-c^2/k)^{(i-1)/2}\sqrt{i}a_i\he_{i-1}(t)$.
      Therefore, we have that
      \begin{align*}
          \|\Tre_{\sqrt{1-c^2/k}}\sigma'\|_{L_2}^2&=\sum_{i=1}^{+\infty} i (1-c^2/k)^{i-1} a_i^2\geq \sum_{i=1}^{k} i (1-c^2/k)^{i} a_i^2\;.
      \end{align*}
      By assumption, we have $c^4/k\leq 1/2$. Therefore, for any $i\leq k$, using the inequality 
      $(1-c^2/k)^i\geq (1-c^2/k)^k\geq e^{-c^2}(1-c^4/k)\geq e^{-c^2}/2$, we have that
            \[
\|\Tre_{\sqrt{1-c^2/k}}\sigma'\|_{L_2}^2\geq\sum_{i=1}^{k}ie^{-c^2}(1/2) a_i^2=e^{-c^2}(1/2) \|\Pm{k}\sigma'\|_{L_2}^2\;.
      \]
      This completes the proof.
 \end{proof}

Combining \Cref{app:clm:prop-error-2} with \Cref{app:clm:prop-error-1} and bring back the bounds to \Cref{app:eq:prop-error-0}, we have that when $k\geq 2$ and for any $c\in[1,(k/2)^{1/4}]$, it holds
\begin{equation*}
    \Exy[(\sigma(\w\cdot\x) - y)^2]\leq 2\opt + 8e^{c^2}\theta^2\|\Tre_{\sqrt{1-c^2/k}}\sigma'\|_{L_2}^2
   +4\|\Pm{>k}\sigma\|_{L_2}^2\;.
\end{equation*}

When $k = 0$, then according to \Cref{app:eq:prop-error-2}, for any $\rho\in (0,1)$, we have:
\begin{align*}
    \Exy[(\sigma(\w^*\cdot\x) - \sigma(\w\cdot\x))^2]\leq 2\|\Pm{>0}\sigma\|_{L_2}^2\leq \frac{1}{2}\theta^2\|\Tr\sigma'\|_{L_2}^2 + 2\|\Pm{>0}\sigma\|_{L_2}^2,
\end{align*}
since $\frac{1}{2}\theta^2\|\Tr\sigma'\|_{L_2}^2\geq 0$ for any $\rho\in (0,1)$.
When $k = 1$, similarly according to \Cref{app:eq:prop-error-2}, we have
\begin{equation*}
    \Exy[(\sigma(\w^*\cdot\x) - \sigma(\w\cdot\x))^2]\leq \frac{1}{2}\theta^2 a_1^2 + 2\|\Pm{>1}\sigma\|_{L_2}^2.
\end{equation*}
Observe that for any $\rho\in (0,1)$, we have $\Tr\sigma'(z)\doteq \sum_{i\geq 1} \rho^{i-1}\sqrt{i}a_i\he_{i-1}(z)$, therefore $\|\Tr\sigma'\|_{L_2}^2 = \sum_{i\geq 1} \rho^{2(i-1)} i a_i^2\geq a_1^2$. Thus, we further obtain
\begin{equation*}
    \Exy[(\sigma(\w^*\cdot\x) - \sigma(\w\cdot\x))^2]\leq \frac{1}{2}\theta^2\|\Tr\sigma'\|_{L_2}^2  + 2\|\Pm{>1}\sigma\|_{L_2}^2.
\end{equation*}
Plugging the above bounds  for the cases   $k = 0,1$ back into
\Cref{app:eq:prop-error-0} completes the proof.\end{proof}

Having proved \Cref{app:prop:starpoints-l2error}, it is not hard to see that \Cref{app:prop:error} follows as a direct corollary:

\smallskip

\begin{proof}[Proof of \Cref{app:prop:error}]
Since $\theta^*$ is the $(\sigma,\theta_0,C\opt)$-Critical point, we have $(\theta^*)^2\|\Tre_{\cos(c\theta^*)}\sigma'\|_{L_2}^2\leq C\opt$. 
We apply \Cref{app:prop:starpoints-l2error} with $k = \lfloor 1/(\theta^*)^2\rfloor$. Consider first $\theta^*\leq 1/\sqrt{2}$, which implies that $k\geq 2$. Then for any $c\in[1, (k/2)^{1/4}]$, since $c\theta^*\geq \sin(c\theta^*)$ and $c\geq 1$, it holds
\begin{align*}
    \sqrt{1 - \frac{c^2}{k}}\leq \sqrt{1 - (c\theta^*)^2}\leq \sqrt{1 - \sin^2(c\theta^*)} = \cos(c\theta^*)\leq \cos\theta^*.
\end{align*}
Thus as $\|\Tr\sigma'\|_{L_2}$ is non-decreasing with respect to $\rho$ (\Cref{fct:semi-group}), we further have $\|\Tre_{\sqrt{1 - c^2/k}}\sigma'\|_{L_2}^2\leq \|\Tre_{\cos(\theta^*)}\sigma'\|_{L_2}^2$. 
    Therefore, applying \Cref{app:prop:starpoints-l2error}, for any $\theta\leq \theta^*$, we obtain
    \begin{align*}
        \Ey[(\sigma(\vec w\cdot\x)-y)^2]&\leq 2\opt +8e^{c^2}\theta^2\|\Tre_{\sqrt{1-c^2/k}}\sigma'\|_{L_2}^2 +4\|\Pm{>k}\sigma\|_{L_2}^2\\
        &\leq 2\opt +8e^{c^2}(\theta^*)^2\|\Tre_{\cos(\theta^*)}\sigma'\|_{L_2}^2  +4\|\Pm{>(1/\theta^*)^2}\sigma\|_{L_2}^2\\
        &\leq C'\opt +4\|\Pm{>(1/\theta^*)^2}\sigma\|_{L_2}^2 \;,
    \end{align*}
    where $C'$ is an absolute constant. In particular, when $c = 1$, we have $C' = 2+ 8eC$. When $\theta^* > 1/\sqrt{2}$, then $k = 0, 1$. Choose $\rho = \cos(\theta^*)\in (0,1)$ in \Cref{app:prop:starpoints-l2error}, we have 
    \begin{align*}
        \Exy[(\sigma(\w\cdot\x)-y)^2]&\leq 2\opt + \theta^2\|\Tre_{\cos(\theta^*)}\sigma'\|_{L_2}^2 + 4\|\Pm{>k}\sigma\|_{L_2}^2 \leq (2 + C)\opt + 4\|\Pm{>(1/\theta^*)^2}\sigma\|_{L_2}^2
    \end{align*}
    In summary, for all $\theta^*\in (0,\pi/2)$, we have $\Ey[(\sigma(\vec w\cdot\x)-y)^2]\leq O(\opt)
   +4\|\Pm{>(1/\theta^*)^2}\sigma\|_{L_2}^2$.
\end{proof}

\section{Full Version of \Cref{sec:learn-sim-general}}\label{app:sec:learn-sim-general}
In this section, we present our main algorithm (\Cref{alg:GD-general-activation}) for learning GLMs under Gaussian marginals with adversarial corruptions, as stated in \Cref{def:agnostic-learning}.  
\Cref{alg:GD-general-activation} uses the main structural result of \Cref{sec:augmentation-and-landscape} (\Cref{prop:structural}) to update its iterates $\w^{(t)}$. In particular, 
for $\theta_t = \theta(\w\tth,\w^*)$, we show that 
after one gradient descent-style update, the angle $\theta_{t+1}$ shrinks by a factor $1 - c$, i.e., $\theta_{t+1}\leq (1 - c)\theta_t$, where $0<c<1$ is an absolute constant.  A crucial feature of \Cref{alg:GD-general-activation} is that 
in each iteration it carefully chooses a new value of $\rho_t$. This \textit{variable} update of $\rho_t$ ensures the `signal' of the gradient is present 
until $\w\tth$ reaches a small region centered at $\w^*$. Within this region, the agnostic noise corrupts the signal of the augmented gradient and convergence to $\w^*$ is no longer be guaranteed. However, the region that $\w\tth$ reaches is in fact the  Convergence Region $\mathcal{R}_{\sigma,\theta_0}(O(\opt))$, within which all points are solutions with the target approximation error. 
We will show in \Cref{sec:learn-monotone} that for any \textit{monotone} $(B,L)$-Regular activations, any point $\wh{\w}$ in $\mathcal{R}_{\sigma,\theta_0}(O(\opt))$ is a solution with error $C\opt + \eps$, provided that the initialized angle $\theta_0 = \theta(\w^{(0)},\w^*)$ is suitably small.

We now present our main algorithm.

\begin{algorithm}[h]
   \caption{$\mathrm{SGD-VA}$: SGD  with Variable Augmentation
}
   \label{app:alg:GD-general-activation}
\begin{algorithmic}[1]
\STATE {\bfseries Input:} Parameters $\eps, T$; Sample access to $\D$.
\STATE $[\w^{(0)}, \bar{\theta}] = \textbf{Initialization}[\sigma]$ (\Cref{app:subsec:initialization-for-monotone}); set $\rho_0 = \cos\bar{\theta}$.
\FOR{$t = 0,\dots, T$} 
\STATE Draw $n $ samples $\{(\tx\ith,y\ith)\}_{i=1}^n$ {from $\D_{\rho_t}$ using \Cref{app:alg:aug} and construct the empirical distribution $\widehat{\D}_{\rho_t}$.}
\STATE $\whg(\w^{(t)}) = -(1/\rho_t)\E_{(\tx,y)\sim\widehat{\D}_{\rho_t}}[y\sigma'(\w^{(t)}\cdot\tx)(\tx)^{\perp{\w^{(t)}}}]$ \label{app:line:grad-empirical}
\STATE $\eta_t = \sqrt{(1 - \rho_t)/2}/(4\|\wh{\g}(\w\tth)\|_2)$.
\STATE $\w^{(t+1)} = (\w^{(t)} - \eta_{t}\whg(\w^{(t)}))/\|\w^{(t)} - \eta_{t}\whg(\w^{(t)})\|_{2}$.
\STATE $\rho_{t+1} = 1 - (1 - 1/256)^2(1 - \rho_t)$\label{app:line:rho}
\ENDFOR
\STATE $\wh{\w} = \textbf{Test}[\w^{(0)},\w^{(1)},\dots, \w^{(T)}]$. (\Cref{app:alg:test})
\STATE{\bfseries Return:} $\wh{\w}$
\end{algorithmic}
\end{algorithm}

\begin{figure}[h]
    \centering
    
\begin{minipage}{\textwidth}
        \centering
          \scalebox{0.70}{
        \begin{tikzpicture}

\begin{scope}[xshift=0cm, yshift=0cm]
                \draw[thick] (0,1) rectangle (8.55,2); 

\fill[green!70] (0,1) rectangle (1.9,2);
                \fill[orange!70] (1.9,1) rectangle (6.175,2);
                \fill[black!70] (6.175,1) rectangle (8.55,2);
                
\draw[thick, red] (1.9,1) -- (1.9,2); \draw[thick, blue] (4.25,1) -- (4.25,2); \draw[thick, red] (6.175,1) -- (6.175,2); 

\node[red] at (1.9,2.2) {\(\theta^*\)};
                \node[blue] at (4.25,2.2) {\(\theta_t\)};
                \node[red] at (6.175,2.2) {\(\varphi_t\)};
                
\draw[dashed, blue] (3.775,1) -- (3.775,2);  
                \draw[dashed, red] (5.7,1) -- (5.7,2);      

\draw[->, blue, thick] (4.25,0.8) to[out=180,in=270] (3.775,1);
                \draw[->, red, thick] (6.175,0.8) to[out=180,in=270] (5.7,1);
                
\node[blue] at (3.775,0.6) {\(\theta_{t+1}\)};
                \node[red] at (5.7,0.6) {\(\varphi_{t+1}\)};
            \end{scope}

\begin{scope}[xshift=9cm, yshift=0cm] \draw[thick] (0,1) rectangle (8.55,2); 

\fill[green!70] (0,1) rectangle (1.9,2);
                \fill[orange!70] (1.9,1) rectangle (5.7,2);
                \fill[black!70] (5.7,1) rectangle (8.55,2);
                
\draw[thick, red] (1.9,1) -- (1.9,2); 

\node[red] at (1.9,2.2) {\(\theta^*\)};
                \node[blue] at (3.775,2.2) {\(\theta_{t+1}\)};
                \node[red] at (5.7,2.2) {\(\varphi_{t+1}\)};
                
\draw[thick, blue] (3.775,1) -- (3.775,2);  
                \draw[thick, red] (5.7,1) -- (5.7,2);

\end{scope}

        \end{tikzpicture}}
        \caption{Successfull Update}
    \end{minipage}

\begin{minipage}{\textwidth}
        \centering
         \scalebox{0.70}{
        \begin{tikzpicture}

\begin{scope}[xshift=0cm, yshift=-4cm] \draw[thick] (0,1) rectangle (8.55,2); 

\fill[green!70] (0,1) rectangle (1.9,2);
                \fill[orange!70] (1.9,1) rectangle (6.175,2);
                \fill[black!70] (6.175,1) rectangle (8.55,2);
                
\draw[thick, red] (1.9,1) -- (1.9,2); \draw[dashed, blue] (4.75,1) -- (4.75,2); \draw[thick, red] (6.175,1) -- (6.175,2); 

\node[red] at (1.9,2.2) {\(\theta^*\)};
                \node[blue] at (4.75,0.6) {\(\theta_{t+1}\)};
                \node[red] at (6.175,2.2) {\(\varphi_t\)};
                
\draw[thick, blue] (4.275,1) -- (4.275,2);  
                \draw[dashed, red] (5.7,1) -- (5.7,2);      

\draw[->, blue, thick] (4.275 ,0.8) to[out=180,in=270] (4.75,1);
                \draw[->, red, thick] (6.175,0.8) to[out=180,in=270] (5.7,1);
                
\node[blue] at (4.2,2.2) {\(\theta_{t}\)};
                \node[red] at (5.7,0.6) {\(\varphi_{t+1}\)};
            \end{scope}

\begin{scope}[xshift=9cm, yshift=-4cm] \draw[thick] (0,1) rectangle (8.55,2); 

\fill[green!70] (0,1) rectangle (1.9,2);
                \fill[orange!70] (1.9,1) rectangle (5.7,2);
                \fill[black!70] (5.7,1) rectangle (8.55,2);
                
\draw[thick, red] (1.9,1) -- (1.9,2); 

\node[red] at (1.9,2.2) {\(\theta^*\)};
                \node[blue] at (4.75,2.2) {\(\theta_{t+1}\)};
                \node[red] at (5.7,2.2) {\(\varphi_{t+1}\)};
                
\draw[thick, blue] (4.75,1) -- (4.75,2); \draw[thick, red] (5.7,1) -- (5.7,2);    
            \end{scope}

        \end{tikzpicture}}
        \caption{Wrong Update}
    \end{minipage}

    \caption{Illustration of \(\theta^*\), \(\theta_t\), and \(\varphi_t\) at different stages. The green region represents the Convergence Region, while the black region denotes the area that $\theta_t$ will never enter. Notably, the black region consistently expands, irrespective of whether the update is successful. The parameter $\theta_t$ is always guaranteed to never reach the black region.}\label{app:fig:alg}
\end{figure}

Our main result concerning the general setting of $(B,L)$-Regular activations is summarized in the following theorem.

\begin{theorem}\label{app:thm:main-general}Let $\eps>0$. Let $\sigma$ be a $(B,L)$-Regular activation. \Cref{app:alg:GD-general-activation} given initialization $\vec w^{(0)}$ with $\theta(\vec w^{(0)},\wstar)\leq \bar{\theta}$, runs at most  $T =O(\log(L/\eps))$ iterations, 
    draws $\tilde{\Theta}({dB^2}\log(L/\eps)/{\eps} + B^4\log(L/\eps)/\eps^2)$ samples, 
    and returns a vector $\wh{\w}$ such that with probability at least $2/3$, 
    {$\wh{\w}$ lies in the target region $\mathcal{R}_{\sigma,\theta_0}(O(\opt))$.}
Moreover,  $\calL(\wh{\w}) = O(\opt) + \eps + 4\|\Pm{>1/(\theta^*)^2}\sigma\|_{L_2}^2$.
\end{theorem}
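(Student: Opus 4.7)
The plan is to establish two complementary statements. First, that there exists some iteration $t^* \le T = O(\log(L/\eps))$ at which the event $\evt_{t^*} \eqdef \{|\cos\theta_{t^*}-\rho_{t^*}|\le \sin^2\theta_{t^*},\ \sin\theta_{t^*}\lesssim \zeta(\rho_{t^*})\}$ holds, with $\theta_t \eqdef \theta(\w^{(t)},\w^*)$ and $\zeta(\rho)\eqdef \sqrt{\opt}/\|\Tre_\rho\sigma'\|_{L_2}$. Second, that at such a $t^*$ the iterate $\w^{(t^*)}$ lies in $\mathcal{R}_{\sigma,\theta_0}(O(\opt))$, so that \Cref{app:prop:error} yields the claimed $L_2^2$ bound. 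The second step is the easier of the two: when $\evt_{t^*}$ holds, the inequality $|\cos\theta_{t^*}-\rho_{t^*}|\le \sin^2\theta_{t^*}$, combined with $\rho_{t^*}\le \cos\theta_{t^*}$ (proved inductively below), implies $\rho_{t^*} \gtrsim \cos(2\theta_{t^*})$. Since $\zeta$ is decreasing in $\rho$, we obtain $\sin\theta_{t^*} \lesssim \zeta(\cos(2\theta_{t^*}))$, which rearranges to $\psi_\sigma(2\theta_{t^*}) \lesssim \sqrt{\opt}$, placing $\w^{(t^*)}$ in the target Convergence Region.

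For the first statement, I will argue inductively that $\rho_t \le \cos\theta_t$ for every $t$ preceding the first occurrence of $\evt_{t^*}$. The base case follows from the initialization choice $\rho_0 = \cos\bar\theta$ together with $\theta_0 \le \bar\theta$ from the initialization routine. For the inductive step I split into two sub-cases. In Case (A), the fast-convergence conditions \eqref{eq:condition-for-fast-convergence} hold at step $t$; then \Cref{app:prop:structural} gives both $-\g(\w^{(t)})\cdot\w^* \gtrsim \|\Tre_{\rho_t}\sigma'\|_{L_2}^2 \sin^2\theta_t$ and $\|\g(\w^{(t)})\|_2 \lesssim \|\Tre_{\rho_t}\sigma'\|_{L_2}^2 \sin\theta_t$. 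Since $\g^{(t)}$ is tangent to the sphere at $\w^{(t)}$, combining $\|\w^{(t+1)}-\w^*\|_2^2 \le \|\w^{(t)}-\eta_t\g^{(t)}-\w^*\|_2^2$ with the step-size $\eta_t = \sqrt{(1-\rho_t)/2}/(4\|\whg^{(t)}\|_2)$ produces $\|\w^{(t+1)}-\w^*\|_2^2 \le \|\w^{(t)}-\w^*\|_2^2 - \eta_t |\g^{(t)}\cdot\w^*|$, translating to an angular contraction $\theta_{t+1}\le \theta_t - \xi_t$ for an explicit $\xi_t$. The update rule $\rho_{t+1} = 1-(1-1/256)^2(1-\rho_t)$ is tuned precisely so that $\arccos\rho_{t+1}-\arccos\rho_t < \xi_t$, preserving $\rho_{t+1}\le\cos\theta_{t+1}$. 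In Case (B), where the fast-convergence condition fails but $|\cos\theta_t-\rho_t| \ge \sin^2\theta_t$, the triangle inequality yields $\|\w^{(t+1)}-\w^*\|_2 \le \|\w^{(t)}-\w^*\|_2 + \eta_t\|\g^{(t)}\|_2$; with $\eta_t \lesssim \sqrt{1-\rho_t}/\|\g^{(t)}\|_2$, the angle can grow by at most a small $\xi_t$, and the slack $\sin^2\theta_t$ between $\rho_t$ and $\cos\theta_t$ suffices to absorb both this growth and the $\rho$-update, so $\rho_{t+1}\le\cos\theta_{t+1}$ is maintained.

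To close the loop, observe that the geometric schedule $1-\rho_t = (1-1/256)^{2t}(1-\rho_0)$ forces $\cos\theta_t$ arbitrarily close to $1$ if $\evt_t$ never triggers; but then $\sin\theta_t$ would become smaller than $\zeta(1)$ while $|\cos\theta_t-\rho_t|\le \sin^2\theta_t$ remains in force, activating $\evt_t$ and producing a contradiction. This simultaneously bounds $T = O(\log(L/\eps))$. The main technical obstacle will be the careful calibration of the numeric constants in $\eta_t$ and in the $\rho$-schedule so that the angular contraction in Case (A) strictly dominates the increment of $\arccos\rho_t$, while in Case (B) the slack $\sin^2\theta_t$ simultaneously bounds the angular growth and the $\rho$-update; this requires tracking two coupled monotone schedules whose compatibility must hold uniformly over the range of $\theta_t$ encountered.

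Finally, I will upgrade the argument from population to empirical gradients. Standard vector-valued concentration for $\whg(\w^{(t)})$ on $n = \tilde\Theta(d/\eps + B^2/\eps^2)$ augmented samples (deferred to the concentration lemma in the appendix) shows that $\whg(\w^{(t)})$ approximates $\g(\w^{(t)})$ uniformly well across the $T$ iterations by a union bound, producing the stated total sample complexity $\tilde\Theta(dB^2\log(L/\eps)/\eps + B^4\log(L/\eps)/\eps^2)$. The procedure $\textbf{Test}$ then selects from $\{\w^{(0)},\dots,\w^{(T)}\}$ the iterate with smallest empirical $L_2^2$ error, which must lie in $\mathcal{R}_{\sigma,\theta_0}(O(\opt))$ since $\w^{(t^*)}$ does; applying \Cref{app:prop:error} to the output yields $\calL(\wh\w)\le O(\opt)+\eps+4\|\Pm{>1/(\theta^*)^2}\sigma\|_{L_2}^2$, completing the proof.
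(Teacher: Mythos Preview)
Your proposal follows essentially the same approach as the paper: the event $\evt_t$, the two-case induction maintaining $\rho_t\le\cos\theta_t$, the structural/gradient bounds in Case~(A), the triangle-inequality slack argument in Case~(B), and the geometric $\rho$-schedule forcing termination. The concentration and testing pieces are also handled the same way.

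One small gap: in your termination paragraph you assert that ``$|\cos\theta_t-\rho_t|\le\sin^2\theta_t$ remains in force'' as $\theta_t\to 0$. This is not established by the induction and can fail---indeed, Case~(B) is precisely the regime where $\cos\theta_t-\rho_t>\sin^2\theta_t$, and nothing prevents $\theta_t$ from being much smaller than $\varphi_t$ (you only have the upper bound $\sin(\theta_t/2)\le\varphi_t$, no lower bound). The fix is simpler than what you wrote: once $\sin\theta_T\le 2\varphi_T\le\sqrt{\opt}/\|\sigma'\|_{L_2}$, you directly get $\psi_\sigma(\theta_T)=\sin\theta_T\,\|\Tre_{\cos\theta_T}\sigma'\|_{L_2}\le\sin\theta_T\,\|\sigma'\|_{L_2}\le\sqrt{\opt}$, placing $\w^{(T)}$ in $\mathcal{R}_{\sigma,\theta_0}(\opt)$ without needing $\evt_T$ to fire. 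The paper's own termination claim (\Cref{app:clm:max_steps}) is also slightly loose on this point---it only verifies the second condition of $\evt_{t^*}$---but the same direct argument resolves it.
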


Let us provide a roadmap of the proof.
Suppose for simplicity that we have taken enough samples so that we have access to the population gradient $\g(\w\tth)$. Furthermore, for the convenience of notation, let us use $\zeta(\rho)$ to denote the value $\sqrt{\opt}/\|\Tre_{\rho}\sigma'\|_{L_2}$. Our main tool is the structural result in \Cref{app:prop:structural}, which shows that when 
\begin{equation}\label{app:eq:condition-for-fast-convergence}
    \text{conditions for fast convergence:}\;\sin\theta_t\geq 3\zeta(\rho_t), \,\zeta(\rho_t)\eqdef \sqrt{\opt}/\|\Tre_{\rho_t}\sigma'\|_{L_2}, \rho_t\leq\cos\theta_t
\end{equation}
are satisfied, the gradient $\g(\w\tth)$ correlates strongly with the target vector $\w^*$, hence providing sufficient information about the direction of $\w^*$. This structural result enables us to decrease the angle $\theta_{t+1}$ efficiently so that $\theta_{t+1}\leq (1 - c)\theta_t$.
However, two critical problems arise:
\begin{enumerate}[leftmargin=*]
    \item If $\sin\theta_t\lesssim \zeta(\rho_t)$, then since the conditions in \Cref{app:eq:condition-for-fast-convergence} are not valid, we cannot guarantee that the angle $\theta_t$ contracts. On the other hand, since $\|\Tre_{\rho_t}\sigma'\|_{L_2}\leq \|\Tre_{\cos\theta_t}\sigma'\|_{L_2}$, it is not necessarily the case that $\sin\theta_t\lesssim \zeta(\cos\theta_t)$, therefore we also cannot assert that $\w\tth$ has reached the target region $\mathcal{R}_{\sigma,\theta_0}(C^2\opt)$. \item Suppose that the conditions in \Cref{app:eq:condition-for-fast-convergence} are satisfied, and we have contraction of angle $\theta_{t+1}\leq (1 - c)\theta_t$. Assume that $\w^{(t+1)}$ is still far away from $\w^*$ and $\theta_{t+1}\gtrsim \zeta(\cos\theta_{t+1})$, meaning that we still need to further decrease the angle between $\w^{(t+1)}$ and $\w^*$. However, it is possible that $\zeta(\cos\theta_{t+1}) \lesssim \theta_{t+1}\lesssim \zeta(\rho_t)$, because $\|\Tre_{\cos\theta_{t+1}}\sigma'\|_{L_2}\geq \|\Tre_{\rho_t}\sigma'\|_{L_2}$, as we have $\rho_t\leq \cos\theta_t\leq\cos\theta_{t+1}$ and $\|\Tr\sigma'\|_{L_2}$ is an increasing function of $\rho$ (see \Cref{fct:semi-group}). This implies that the fast convergence conditions (\Cref{app:eq:condition-for-fast-convergence}) might be invalid if we continue using $\rho_t$. Thus, we need to carefully increase $\rho_t$ to $\rho_{t+1}$ so that $\sin\theta_{t+1}\gtrsim \zeta(\rho_{t+1})$, while maintaining the other condition $\rho_{t+1}\leq \cos\theta_{t+1}$. This seems impossible since we do not have any lower bound on $\theta_{t+1}$.
\end{enumerate}
To overcome these hurdles, let us study the event $\evt_t\eqdef\{|\cos\theta_t-\rho_t|\leq \sin^2\theta_t,\, \sin\theta_t \leq C\zeta(\rho_t)\}$. We first observe that when $\evt_t$ is satisfied, then, since $\sin\theta_t \leq C\zeta(\rho_t)$ the algorithm may not be converging anymore, as discussed in Case 1 above. However, since $\evt_t$ also satisfies $|\cos\theta_t-\rho_t|\leq \sin^2\theta_t$, one can show that  $\zeta(\rho_t)\approx\zeta(\cos\theta_t)$, therefore, we have that $\sin\theta_t \leq C\zeta(\cos\theta_t)$. In other words, we can certify that $\w\tth$ lies in the target region $\mathcal{R}_{\sigma,\theta_0}(C^2\opt)$. This solves the first problem above. 

Now suppose $\evt_t$ is not satisfied. We use induction to show that updating $\rho_t$ by Line \ref{app:line:rho}, it always holds $\rho_{t+1}\leq \cos\theta_{t+1}$. To see this, suppose $\rho_{t}\leq \cos\theta_{t}$ holds at iteration $t$ and $\evt_t$ is not satisfied. Then if we have $\sin\theta_t \geq C\zeta(\rho_t)$, the conditions in \Cref{app:eq:condition-for-fast-convergence} are satisfied hence we have control of $\theta_{t+1}$. We can then show that $\rho_{t+1}\leq \cos\theta_{t}$ and $\sin\theta_{t+1}\gtrsim \zeta(\rho_{t+1})$ with $\rho_{t+1}$ defined by Line \ref{app:line:rho}. On the other hand, if $|\cos\theta_t-\rho_t|\geq \sin^2\theta_t$, then since $\rho_{t}\leq \cos\theta_{t}$ we know that $\rho_t$ is much smaller compared to $\cos\theta_t$. Thus, since we are taking  small gradient steps and making very small increments to $\rho_t$, we have that $\rho_{t+1}\leq \cos\theta_{t+1}$ and $\sin\theta_{t+1}\gtrsim \zeta(\rho_{t+1})$ continue to hold. This resolves the second problem.
See \Cref{app:fig:alg} for a visual illustration of the mechanism of \Cref{app:alg:GD-general-activation}.

We can now proceed to the proof of \Cref{app:thm:main-general}.

\smallskip 

\begin{proof}[Proof of \Cref{app:thm:main-general}]
In the proof, we denote the angle between $\w^{(t)}$ and $\w^*$ 
by $\theta_t = \theta(\w^{(t)},\w^*)$ and denote $\wh{\g}(\w\tth)$ by $\wh{\g}\tth$. {After initialization, 
it holds $\theta_0 = \theta(\w^{(0)},\w^*)\leq \bar{\theta}$.} 
Furthermore, the algorithm uses the following parameters: 
$\varphi_t = (1/\sqrt{2})(1 - \beta)^t\sin\bar{\theta}, \; 
\beta = 1/256;\;\rho_t \eqdef 1 - 2\varphi_t^2, \; 
\rho_0 = \cos\bar{\theta};\; 
\eta_t = \varphi_t/(4\|\wh{\g}\tth\|_{2})$.
Note that if $\eps\geq C \opt$, then we can run the algorithm 
with $\eps'=\eps/(2C)$ and assume that we have more 
noise of order $\opt'=2\eps'$. In this case, 
the final error bound will be $C\opt'\leq \eps/2\leq \opt+\eps$. 
So, without loss of generality, we can assume that $\eps\leq \opt$.
The goal of the algorithm is to converge to a vector in the region $\mathcal{R}_{\sigma,\theta_0}(C\opt)$ where $C>0$  is an absolute constant. For this reason, we consider the following event 
\begin{equation*} 
  \evt_t\eqdef\bigg\{|\cos\theta_t-\rho_t|\leq \sin^2\theta_t,\,  \sin\theta_t \leq \frac{C\sqrt{\opt}}{\|\Tre_{\rho_t}\sigma'\|_{L_2}}\bigg\},
\end{equation*}
where $C>0$ is an absolute constant.

We argue that if $\evt_t$ is satisfied at some iteration $t$, then the algorithm converges to a {vector that lies in the $\mathcal{R}_{\sigma,\theta_0}(C\opt)$ region.}
We consider two cases, the first case is if $\rho_t\geq \cos\theta_t$ and the second one if $\rho_t\leq \cos\theta_t$. Assume first that $1>\rho_t\geq \cos\theta_t$. Then, since $\|\Tr\sigma'\|_{L_2}$ is an increasing function with respect to the variable $\rho$ (see \Cref{fct:semi-group}), and $\evt_t$ implies $\sin\theta_t\leq C\sqrt{\opt}/\|\Tre_{\rho_t}\sigma'\|_{L_2}$, we also have that $\sin\theta_t\leq C\sqrt{\opt}/\|\Tre_{\cos\theta_t}\sigma'\|_{L_2}$ and therefore that means that $\w\tth$ is inside the region $\mathcal{R}_{\sigma,\theta_0}(C^2\opt)$. Next, we consider the case where $\rho_t\leq \cos\theta_t$. 
Since $\evt_t$ implies that $|\cos\theta_t - \rho_t|\leq\sin^2\theta_t$, we further have
\begin{align*}
    \rho_t\geq \cos\theta_t - \sin^2\theta_t\geq \cos^2\theta_t - \sin^2\theta_t = \cos(2\theta_t).
\end{align*}
Therefore, we have $\sin\theta_t\leq C\sqrt{\opt}/\|\Tre_{\rho_t}\sigma'\|_{L_2}\leq C\sqrt{\opt}/\|\Tre_{\cos(2\theta_t)}\sigma'\|_{L_2}$, i.e., $\psi_{\sigma}(2\theta_t)\leq 2C\sqrt{\opt}$.  Let $\theta^*$ be the $(\sigma,\theta_0, 4C^2\opt)$-Critical Point, we thus have $2\theta_t\leq \theta^*$ and $\w\tth$ is inside the region $\mathcal{R}_{\sigma,\theta_0}(4C^2\opt)$.
Now since $\theta_t\leq \theta^*$, applying \Cref{app:prop:error} yields
\begin{align*}
    \Exy[(\sigma(\w\tth\cdot\x) - y)^2]\leq O(\opt) + 4\|\Pm{>1/(\theta^*)^2}\sigma\|_{L_2}^2. \end{align*}
 indicating that $\w\tth$ is a solution that achieves the target error.

We proceed to show that the algorithm is guaranteed to generate a solution $\w^{(t^*)}$ that satisfies the event $\evt_{t^*}$ at some iteration $t^*\leq T= O(\log(\|\sigma'\|_{L_2}^2/\opt))$. Our strategy is to prove that in every iteration $t\leq t^*$, it holds that $\rho_t\leq \cos\theta_t$, due to the careful design of the algorithm. Furthermore, we guarantee that $\rho_t$ can grow geometrically; therefore, we obtain an exponentially growing lower bound on $\cos\theta_t$, which implies that $\sin\theta_t$ shrinks at a linear rate and hence the event $\evt_t$ will eventually be satisfied at some iteration $t^*$.

\begin{claim}\label{app:clm:induction}
   Let $t'$ be the maximum $t\in[0,T]$ such that for all $t=0,\ldots,t'$, $\evt_t$ is not satisfied. Then, for all $t\leq t'$, it holds that $\rho_t\leq\cos\theta_t$.
\end{claim}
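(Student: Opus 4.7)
The plan is to prove this by induction on $t$, in the spirit of the proof sketch of \Cref{app:thm:main-general}. The base case $t=0$ is immediate: the initialization guarantees $\theta_0\le\bar\theta$, so $\cos\theta_0\ge\cos\bar\theta=\rho_0$. For the inductive step, assume $\rho_t\le\cos\theta_t$ and that $\evt_t$ fails. Since $\evt_t$ is the conjunction of two conditions, the failure splits into two cases.

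Case A: the alignment condition $\sin\theta_t\ge 3\sqrt{\opt}/\|\Tre_{\rho_t}\sigma'\|_{L_2}$ holds. Together with $\rho_t\le\cos\theta_t$, the hypotheses of \Cref{app:prop:structural} are met, so $\g\tth\cdot\w^*\le -(2/3)\|\Tre_{\sqrt{\rho_t\cos\theta_t}}\sigma'\|_{L_2}^2\sin^2\theta_t$. Because $\g\tth\perp\w\tth$ and the normalization is $1$-Lipschitz at $\w^*\in\mathbb{S}^{d-1}$, expanding
\begin{equation*}
\|\w^{(t+1)}-\w^*\|_2^2\le\|\w\tth-\eta_t\g\tth-\w^*\|_2^2=\|\w\tth-\w^*\|_2^2+2\eta_t\g\tth\cdot\w^*+\eta_t^2\|\g\tth\|_2^2
\end{equation*}
and plugging in $\eta_t=\varphi_t/(4\|\g\tth\|_2)$ yields $2\sin^2(\theta_{t+1}/2)\le 2\sin^2(\theta_t/2)-(\varphi_t/(2\|\g\tth\|_2))\cdot(2/3)\|\Tre_{\sqrt{\rho_t\cos\theta_t}}\sigma'\|_{L_2}^2\sin^2\theta_t+\varphi_t^2/16$. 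The main work in this case is to upper bound $\|\g\tth\|_2=O(\|\Tre_{\rho_t}\sigma'\|_{L_2}^2\sin\theta_t)$ by testing $\g\tth$ against an arbitrary unit vector in $\w\tth{}^\perp$ as in the proof of \Cref{app:prop:structural}, using that in Case A the adversarial term $\sqrt{\opt}\,\|\Tre_{\rho_t}\sigma'\|_{L_2}$ is absorbed by $\|\Tre_{\rho_t}\sigma'\|_{L_2}^2\sin\theta_t/3$. This produces a strict contraction $\theta_t-\theta_{t+1}\ge\xi_t$ for some explicit $\xi_t=\Omega(\varphi_t\sin\theta_t)$. Finally, one compares $\xi_t$ against $\arccos\rho_{t+1}-\arccos\rho_t$; since $1-\rho_t=2\varphi_t^2$ and $1-\rho_{t+1}=(1-\beta)^2(1-\rho_t)$, we have $\arccos\rho_{t+1}-\arccos\rho_t\le C\beta\varphi_t$ for a universal $C$, and the choice $\beta=1/256$ is exactly calibrated so this is dominated by $\xi_t$, forcing $\rho_{t+1}\le\cos\theta_{t+1}$.

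Case B: the gap condition $|\cos\theta_t-\rho_t|\le\sin^2\theta_t$ fails, so (using $\rho_t\le\cos\theta_t$ from the induction hypothesis) $\cos\theta_t\ge\rho_t+\sin^2\theta_t$. Here we cannot rule out that $\theta_{t+1}\ge\theta_t$, but the triangle inequality together with $\eta_t\|\g\tth\|_2=\varphi_t/4$ gives $\|\w^{(t+1)}-\w^*\|_2\le\|\w\tth-\w^*\|_2+\varphi_t/4$, i.e.\ $2\sin(\theta_{t+1}/2)\le 2\sin(\theta_t/2)+\varphi_t/4$. Squaring and using $1-\cos\alpha=2\sin^2(\alpha/2)$ gives $\cos\theta_{t+1}\ge\cos\theta_t-(\varphi_t/2)\sqrt{1-\cos\theta_t}-\varphi_t^2/8$. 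Since $\cos\theta_t\ge\rho_t+\sin^2\theta_t\ge\rho_t+2\varphi_t^2$ (recall $\sin\theta_t\le C\sqrt{\opt}/\|\Tre_{\rho_t}\sigma'\|_{L_2}$ is tiny compared to the gap $2\varphi_t^2$ only if we separately bound it, but the point is that the slack $\sin^2\theta_t$ together with $1-\rho_{t+1}=(1-\beta)^2(1-\rho_t)$ absorbs the drift of $O(\varphi_t^2)$), we get $\cos\theta_{t+1}\ge\rho_{t+1}$.

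The main obstacle is the sharp bookkeeping in Case A: bounding $\|\g\tth\|_2$ tightly enough in terms of $\|\Tre_{\rho_t}\sigma'\|_{L_2}^2\sin\theta_t$ and verifying that the contraction in $\theta_t$ is at least as large as the growth in $\arccos\rho_t$ at every iteration. A secondary (but routine) point is to replace $\g\tth$ by the empirical $\wh\g\tth$; this contributes an additive $O(\|\g\tth-\wh\g\tth\|_2)\cdot\eta_t$ to both the contraction and the drift estimate, and is handled with the same concentration argument deferred to \Cref{app:sec:proof-of-concentration-lemma}, whose sample size is chosen so that this extra error is swallowed by the $(1-\beta)^2$ margin built into the update of $\rho_t$.
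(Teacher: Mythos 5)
Your overall strategy---induction on $t$, splitting on which conjunct of $\evt_t$ fails, a contraction argument when the alignment condition holds and a bounded-drift argument otherwise---is the same as the paper's, but Case A as written has a genuine gap. The one-step inequality you derive has the form
\begin{equation*}
4\sin^2(\theta_{t+1}/2)\;\le\; 4\sin^2(\theta_t/2)+\frac{\varphi_t^2}{16}-\frac{\varphi_t\sin\theta_t}{8},
\end{equation*}
and the failure of $\evt_t$ in this case only gives the lower bound $\sin\theta_t\gtrsim\sqrt{\opt}/\|\Tre_{\rho_t}\sigma'\|_{L_2}$; it does \emph{not} prevent $\sin\theta_t$ from being much smaller than $\varphi_t$. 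In that regime the positive drift term $\varphi_t^2/16$ dominates the negative correlation term, there is no contraction at all, and the claim ``$\theta_t-\theta_{t+1}\ge\xi_t$ with $\xi_t=\Omega(\varphi_t\sin\theta_t)>0$'' is false; consequently the comparison of $\xi_t$ against $\arccos\rho_t-\arccos\rho_{t+1}\approx\beta\varphi_t$ cannot close the induction there. The paper resolves this with a further sub-case split inside this case: if $\sin(\theta_t/2)\ge(3/4)\varphi_t$ the correlation term wins and one gets $\sin(\theta_{t+1}/2)\le(1-1/256)\varphi_t=\varphi_{t+1}$ directly; if $\sin(\theta_t/2)\le(3/4)\varphi_t$ one abandons contraction and uses only the triangle inequality $\sin(\theta_{t+1}/2)\le\sin(\theta_t/2)+\varphi_t/8\le(7/8)\varphi_t\le\varphi_{t+1}$, the slack between $7/8$ and $1-\beta$ absorbing the drift. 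You need to add this second sub-case; your Case B machinery is exactly the right tool, it just must also be deployed inside Case A.

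A second, smaller issue: in Case B the chain ``$\cos\theta_t\ge\rho_t+\sin^2\theta_t\ge\rho_t+2\varphi_t^2$'' goes in the wrong direction. The gap condition does not lower bound $\sin^2\theta_t$ by $2\varphi_t^2$; rather, combined with $\rho_t=1-2\varphi_t^2$ and $\cos\theta_t=1-2\sin^2(\theta_t/2)$ it yields $3\sin^2(\theta_t/2)\le 2\varphi_t^2$, i.e.\ $\sin(\theta_t/2)\le\sqrt{2/3}\,\varphi_t$, and it is this \emph{upper} bound that lets the additive drift $\varphi_t/8$ be absorbed, since $\sqrt{2/3}+1/8\le 1-1/256$. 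With these two repairs your argument matches the paper's proof.
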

\begin{proof}[Proof of \Cref{app:clm:induction}]
    We use induction to show the claim that $\rho_t\leq\cos\theta_t$ for all the iterations $t=0,\ldots,t'$ where the event $\evt_t$ is not satisfied.
    
\noindent\textbf{Base Case $t=0$. } Recall that:
\begin{gather*}
    \varphi_t = (1/\sqrt{2})(1 - \beta)^t\sin\bar{\theta}, \;\beta = 1/256;\;\rho_t \eqdef 1 - 2\varphi_t^2, \;\rho_0 = \cos\bar{\theta};\;\eta_t = \varphi_t/(4\|\wh{\g}\tth\|_{2}).
\end{gather*}
Therefore, since $\theta_0\leq \bar{\theta}$, $\rho_0 = \cos(\bar{\theta})\leq \cos\theta_0$ is satisfied in the base case. 

\noindent\textbf{Induction Step.} For the induction step, suppose that $\evt_t$ is not satisfied, in other words, we have either $|\cos\theta_t - \rho_t|\geq \sin^2\theta_t$ or $\sin\theta_t\geq C\sqrt{\opt}/\|\Tre_{\rho_t}\sigma'\|_{L_2}$. Assume that $\rho_t\leq \cos\theta_t$ for the iterations $0,\dots, t$. We argue that $\rho_{t+1}\leq \cos\theta_{t+1}$ continues to hold after one iteration.

\noindent\textbf{Case I. } Consider first the case where $\sin\theta_t\geq C\sqrt{\opt}/\|\Tre_{\rho_t}\sigma'\|_{L_2}$. We study the distance between $\w\tth$ and $\w^*$ after one iteration from $t$ to $t+1$. {Since $\wh{\g}\tth$ is orthogonal to $\w\tth$, it must be $\|\w\tth - \eta_t\wh{\g}\tth\|_2\geq 1$, therefore, $\w^{(t+1)} = \proj_\B(\w\tth - \eta_t\wh{\g}\tth)$.} By the non-expansiveness of the projection operator, we have
    \begin{align}\label{app:eq:general-activation-main-thm-decrease-0}
        \|\w^{(t+1)} - \w^*\|_{2}^2  &= \|\proj_{\B}(\w\tth - \eta_{t}\wh{\g}\tth) - \w^*\|_{2}^2 \leq \|\w\tth - \eta_{t}\wh{\g}\tth - \w^*\|_{2}^2 \nonumber\\
        &= \|\w\tth - \w^*\|_{2}^2 + \eta_t^2\|\wh{\g}\tth\|_{2}^2 - 2\eta_{t}\wh{\g}\tth(\w\tth - \w^*)\nonumber\\
        &= \|\w\tth - \w^*\|_{2}^2 + \eta_t^2\|\wh{\g}\tth\|_{2}^2 + 2\eta_{t}\wh{\g}\tth\cdot\w^*.
    \end{align} 
    Next, we use the following lemma about the concentration of $\wh{\g}$. 
    \begin{restatable}{lemma}{SampleComplexityAlg}\label{app:lem:sample}
    Suppose $\sigma$ is a $(B,L)$-Regular activation. If $0<\rho\leq \cos\theta<1$ and $\sin\theta\geq 4(\sqrt{\opt})/\|\Tr\sigma'\|_{L_2}$, then using
    \begin{equation*}
        n=\Theta\bigg(\frac{dB^2 }{\sin^2\theta\|\Tr\sigma'\|_{L_2}^2\delta}\bigg)
    \end{equation*}
   samples, with probability at least $1 - \delta$, we have
\begin{gather*}
    \|\wh{\g}(\w)\|_{2}\leq (3/2)\Ex[\Tre_{\cos\theta}\sigma'(\w\cdot\x)\Tr\sigma'(\w\cdot\x)]\sin\theta\;,\\
    \wh{\g}(\w)\cdot\w^*\leq -\frac{1}{2}\Ex[\Tre_{\cos\theta}\sigma'(\w\cdot\x)\Tr\sigma'(\w\cdot\x)]\sin^2\theta.
\end{gather*}
    \end{restatable}
    We defer the proof of \Cref{app:lem:sample} to \Cref{app:sec:proof-of-concentration-lemma}. 
    Using \Cref{app:lem:sample}, we know that 
with a batch size of
    \begin{equation*}
        n = \Theta\bigg(\frac{dB^2 }{\sin^2\theta\|\Tre_{\rho_t}\sigma'\|_{L_2}^2\delta}\bigg)\leq \Theta\bigg(\frac{dB^2 }{\eps\delta}\bigg)\end{equation*}
    and if the following conditions are satisfied
    \begin{gather}
        \rho_{t}\leq \cos\theta_{t}, \label{app:eq:general-activation-main-thm-condition-for-rho} \\
        \;\text{and } \sin\theta_{t}\geq C \sqrt{\opt}/\|\Tre_{\rho_t}\sigma'\|_{L_2}, \label{app:eq:general-activation-main-thm-condition-for-theta}
    \end{gather}
   then, with probability at least $1 - \delta$, we have that
    \begin{gather}
        \wh{\g}\tth\cdot\w^*\leq -(1/2)\Ex[\Tre_{\cos\theta_{t}}\sigma'(\w\tth\cdot\x)\Tre_{\rho_t}\sigma'(\w\tth\cdot\x)]\sin^2\theta_{t},\;\label{app:eq:general-activation-main-thm-sharpness}\\
        \|\wh{\g}\tth\|_{2}\leq 2 \Ex[\Tre_{\cos\theta_{t}}\sigma'(\w\tth\cdot\x)\Tre_{\rho_t}\sigma'(\w\tth\cdot\x)]\sin\theta_{t}. \label{app:eq:general-activation-main-thm-smoothness}
    \end{gather}
Combining \Cref{app:eq:general-activation-main-thm-sharpness} with \Cref{app:eq:general-activation-main-thm-smoothness}we get $\wh{\g}\tth\cdot\w^*\geq (1/4)\|\wh{\g}\tth\|_{2}\sin\theta_t$. Therefore, bringing in our choice of stepsize $\eta_t = \varphi_t/(4\|\wh{\g}\tth\|_{2})$, and noticing $\sin\theta_t\geq \sin(\theta_t/2)$ we obtain:
\begin{align}\label{app:eq:general-activation-main-thm-decrease-1}
    4\sin^2(\theta_{t+1}/2) &= \|\w^{(t+1)} - \w^*\|_{2}^2 \leq \|\w\tth - \w^*\|_{2}^2 + \frac{\varphi_t^2}{16} - \frac{\varphi_t \sin\theta_t}{8} \nonumber\\
    &\leq 4\sin^2(\theta_t/2) + \frac{\varphi_t}{16}(\varphi_t - 2\sin(\theta_t/2)).
\end{align}
Since, by assumption, we have $\rho_t\leq \cos\theta_t$, it holds $2\sin^2(\theta_t/2) = 1 - \cos\theta_t\leq 1 - \rho_t = 2\varphi_t^2$, in other words, $\sin(\theta_t/2)\leq \varphi_t$. Consider first the case that $\varphi_t\geq \sin(\theta_{t}/2)\geq (3/4)\varphi_t$. Then, according to \Cref{app:eq:general-activation-main-thm-decrease-1} we get
    \begin{align*}
        4\sin^2(\theta_{t+1}/2)\leq 4\sin^2(\theta_t/2) - \frac{1}{32}\varphi_t^2\leq 4(1 - 1/128)\varphi_t^2.
    \end{align*}
    Hence, since $1-1/128\leq (1 - 1/256)^2$, we get $\sin(\theta_{t+1}/2)\leq (1 - 1/256)\varphi_t = \varphi_{t+1}$.

    On the other hand, if $\sin(\theta_{t}/2)\leq (3/4)\varphi_t$, then by the triangle inequality and the non-expansiveness of the projection operator, we have
    \begin{align*}
        2\sin(\theta_{t+1}/2) &= \|\w^{(t+1)} - \w^*\|_{2} = \|\proj_{\B}(\w\tth - \eta_t\wh{\g}(\w\tth)) - \w^*\|_2\\
        &\leq \|\w\tth - \w^* - \eta_t\wh{\g}\tth\|_{2}\\
        &\leq \|\w\tth - \w^*\|_{2} + (1/4)\varphi_t = 2\sin(\theta_{t}/2) + (1/4)\varphi_t\\
        &\leq (3/2)(\varphi_t/2) + (1/4)\varphi_t \leq  (7/4)\varphi_t\;.
    \end{align*}
    Therefore, it holds that $\sin(\theta_{t+1}/2)\leq (7/8)\varphi_t\leq \varphi_{t+1}$.

    To conclude, we proved that if $\sin\theta_t\geq C\sqrt{\opt}/\|\Tre_{\rho_t}\sigma'\|_{L_2}$ and $\rho_t\leq \cos\theta_t$, we have $\sin(\theta_{t+1}/2)\leq \varphi_{t+1}$ after one step of the algorithm, which immediately implies that $\cos\theta_{t+1} = 1 - 2\sin^2(\theta_{t+1}/2)\geq 1 - 2\varphi_{t+1}^2 = \rho_{t+1}$. Note that when $\sin\theta_t\geq C\sqrt{\opt}/\|\Tre_{\rho_t}\sigma'\|_{L_2}$, our argument indicates that $\cos\theta_{t+1}\geq \rho_{t+1}$ holds regardless of whether $|\cos\theta_t - \rho_t|\leq \sin^2\theta_t$ or not.

\smallskip 

\noindent\textbf{Case II. }  It remains to consider the case where $|\cos\theta_t - \rho_t|\geq \sin^2\theta_t$. In fact, we consider the setting where $\cos\theta_t - \rho_t \geq \sin^2\theta_t$, because from the induction argument we have $\rho_t\leq \cos\theta_t$. Observe that this case only requires discussing the setting where $\sin\theta_t\leq C\sqrt{\opt}/\|\Tre_{\rho_t}\sigma'\|_{L_2}$, because if $\sin\theta_t\geq C\sqrt{\opt}/\|\Tre_{\rho_t}\sigma'\|_{L_2}$ then our previous argument already implies that $\rho_{t+1}\leq \cos\theta_{t+1}$ after one iteration. Therefore, assuming that $\sin\theta_t\leq C\sqrt{\opt}/\|\Tre_{\rho_t}\sigma'\|_{L_2}$, applying triangle inequality and the non-expansiveness of projection operator, it holds
    \begin{align*}
        2\sin(\theta_{t+1}/2)&=\|\w^{(t+1)}- \w^*\|_{2} = \|\proj_{\B}(\w\tth - \eta_t\wh{\g}(\w\tth)) - \w^*\|_{2}\leq \|\w\tth - \eta_t\wh{\g}\tth - \w^*\|_{2}\\
        &\leq \|\w\tth - \w^*\|_{2} + \eta_2\|\wh{\g}\tth\|_{2} = 2\sin(\theta_t/2) + \varphi_t/4.
    \end{align*}
    Using the assumption $\cos\theta_t - \rho_t\geq \sin^2\theta_t$, we observe that 
    \begin{align*}
        1- \sin^2(\theta_t/2) - (1 - 2\varphi_t^2)\geq \sin^2\theta_t\geq 2\sin^2(\theta_t/2),
    \end{align*}
    in other words, we have $\sin(\theta_t/2)\leq \sqrt{2/3}\varphi_t$. Hence, it holds
    \begin{equation*}
        \sin(\theta_{t+1}/2)\leq \sin(\theta_t/2) + \varphi_t/8\leq (\sqrt{2/3}+1/8)\varphi_t\leq (1 - 1/256)\varphi_t = \varphi_{t+1}.
    \end{equation*}
    Since $\sin(\theta_{t+1}/2)\leq \varphi_{t+1}$, using similar argument we have $\cos\theta_{t+1}\geq 1- 2\varphi_{t+1}^2 = \rho_{t+1}$, therefore the induction argument continues to hold at step $t+1$.
\end{proof}

    In conclusion, from \Cref{app:clm:induction}, we have $\cos\theta_{t}\geq \rho_t$ holds for all the iterations $t=0,\dots,t^*-1$. It remains to show that the event $\evt_{t^*}$ is satisfied at some iteration $t^*$.
    \begin{claim}\label{app:clm:max_steps}
       If $T = c\log(\|\sigma'\|_{L_2}^2/\opt)$, where $c>0$ is a sufficiently large absolute constant, then there exists $t^*\leq T$, so that the event $\evt_{t^*}$ is satisfied.
    \end{claim}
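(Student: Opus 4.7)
The plan is to argue by contradiction. Suppose $\evt_t$ fails for every $t\le T$. Then \Cref{app:clm:induction} guarantees that the induction $\rho_t\le\cos\theta_t$ is maintained throughout, which gives
\[
\sin\theta_t\le\sqrt{1-\rho_t^2}\le 2\varphi_t,
\]
where $\varphi_t=(1/\sqrt{2})(1-\beta)^t\sin\bar\theta$ decays geometrically with $\beta=1/256$.

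The first step is to force condition~(b) of $\evt_T$ to hold. Using the monotonicity $\|\Tre_{\rho_T}\sigma'\|_{L_2}\le\|\sigma'\|_{L_2}$, it suffices to ensure $2\varphi_T\le C\sqrt{\opt}/\|\sigma'\|_{L_2}$. Inverting the geometric decay of $\varphi_t$ gives the target $T=c\log(\|\sigma'\|_{L_2}^2/\opt)$ for a sufficiently large absolute constant $c$, at which point $\sin\theta_T\le 2\varphi_T\le C\sqrt{\opt}/\|\Tre_{\rho_T}\sigma'\|_{L_2}$ and hence condition~(b) of $\evt_T$ is in force. Combined with the assumption that $\evt_T$ fails, this forces condition~(a) to fail at $t=T$.

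It remains to exhibit some earlier $t^*\le T$ at which \emph{both} (a) and (b) hold, contradicting the hypothesis. The plan is to track the normalized quantity $r_t:=\sin(\theta_t/2)/\varphi_t$, which by the induction lies in $[0,1]$; a short algebraic manipulation shows that condition~(a) is equivalent, up to lower-order corrections in $\theta_t$, to $r_t\gtrsim 1/\sqrt{3}$. Inside the ``fast-convergence'' regime (Case~I of \Cref{app:clm:induction}, where (b) currently fails), the reverse triangle inequality $\sin(\theta_{t+1}/2)\ge\sin(\theta_t/2)-\varphi_t/8$ together with $\varphi_{t+1}=(1-\beta)\varphi_t$ yields that if $r_t\in[3/4,1]$ then $r_{t+1}\ge (5/8)/(1-\beta)>1/\sqrt{3}$. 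Hence condition~(a) is preserved across Case~I, and the first iteration at which condition~(b) becomes satisfied provides a valid $t^*$.

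The main obstacle is the ``slow'' regime (Case~II), in which condition~(b) already holds but condition~(a) fails. There the conservative step changes $\sin(\theta_t/2)$ by at most $\varphi_t/8$, while $\rho_t$ gains a fixed fraction $(2\beta-\beta^2)(1-\rho_t)$ of its remaining distance to $1$ per step; a comparison of these two rates shows that the gap $\cos\theta_t-\rho_t$ must drop below $\sin^2\theta_t$ within $O(1/\beta)=O(1)$ further iterations, thereby forcing condition~(a) to hold. Combining the Case~I and Case~II analyses produces an iterate $t^*\le T$ at which $\evt_{t^*}$ is satisfied, contradicting the hypothesis and completing the proof.
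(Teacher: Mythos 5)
Your opening step is exactly the paper's proof: under the standing hypothesis that $\evt_t$ fails for all $t\le T$, \Cref{app:clm:induction} keeps $\rho_t\le\cos\theta_t$, so $\sin\theta_t\le\sqrt{1-\rho_t^2}\le 2\varphi_t$ decays geometrically, and after $T=O(\log(\|\sigma'\|_{L_2}^2/\opt))$ iterations $\sin\theta_T\le C\sqrt{\opt}/\|\sigma'\|_{L_2}\le C\sqrt{\opt}/\|\Tre_{\rho_T}\sigma'\|_{L_2}$; that is all the paper does. Where you diverge is in trying to additionally secure the first condition of $\evt_t$, namely $|\cos\theta_t-\rho_t|\le\sin^2\theta_t$, and this part does not go through. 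First, your Case~I ``preservation'' only closes for a single step: the recursion $r_{t+1}\ge (r_t-1/8)/(1-\beta)$ with $\beta=1/256$ has its lower bound dropping by essentially $1/8$ per iteration (its fixed point is $r=1/(8\beta)=32$), so from $r_t\ge 3/4$ you get $r_{t+1}\gtrsim 0.63$ but already $r_{t+2}\gtrsim 0.50<1/\sqrt{3}$; moreover nothing guarantees $r_0\ge 3/4$ at initialization, since $\theta_0$ may be far smaller than $\bar\theta$. Second, the Case~II rate comparison fails because there is no lower bound on $\theta_t$: the update has fixed length $\eta_t\|\wh{\g}^{(t)}\|_2=\varphi_t/4$ in a direction the adversary influences once the noise dominates the signal, so the iterate can sit at or oscillate near $\theta_t=0$; then $\sin^2\theta_t\le\varphi_t^2/16$ while $\cos\theta_t-\rho_t=2\varphi_t^2-2\sin^2(\theta_t/2)\gtrsim\varphi_t^2$, and condition~(a) fails at every subsequent iteration. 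No rate argument of this kind can force condition~(a).

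The resolution---and what the paper's terse proof implicitly relies on---is that condition~(a) is not needed at $t=T$. Its only downstream role (in the first part of the proof of \Cref{app:thm:main-general}) is to upgrade $\sin\theta_t\le C\zeta(\rho_t)$ to $\sin\theta_t\le C\zeta(\cos 2\theta_t)$, i.e., to a bound against $\|\Tre_{\cos\theta_t}\sigma'\|_{L_2}$. At $t=T$ you already hold the bound in its strongest form, $\sin\theta_T\le C\sqrt{\opt}/\|\sigma'\|_{L_2}$, and since $\|\Tre_{\cos\theta_T}\sigma'\|_{L_2}\le\|\sigma'\|_{L_2}$ this yields $\psi_\sigma(\theta_T)\le C\sqrt{\opt}$ directly, placing $\w^{(T)}$ in the convergence region with no recourse to condition~(a). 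You should therefore prove the claim in this weakened but sufficient form (some $t^*\le T$ satisfies the second condition of $\evt_{t^*}$ with denominator $\|\sigma'\|_{L_2}$), rather than attempting to force the first condition, which can genuinely fail at every iteration.
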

    \begin{proof}[Proof of \Cref{app:clm:max_steps}]
        Since $\sin\theta_t\leq \varphi_t \leq(1 - \beta)^t$ for all the iterations $0,1,\dots,t$ where the event $\evt_t$ is not satisfied, we have $\theta_t \to 0$. After at most $T = (1/\beta)\log(\|\sigma'\|_{L_2}/\sqrt{\opt}) = O(\log(\|\sigma'\|_{L_2}^2/\opt))$ iterations, it must hold that $\sin\theta_t\leq \sqrt{\opt}/\|\sigma'\|_{L_2}$. Note that $\sqrt{\opt}/\|\sigma'\|_{L_2}\leq \sqrt{\opt}/\|\Tr\sigma'\|_{L_2}$ for any $\rho\in(0,1)$ therefore there exists an iteration $t^*$ for which $\evt_{t^*}$ is satisfied.
    \end{proof}
  Therefore, we guarantee that $\evt_t$ will be satisfied in at most $T = O(\log(\|\sigma'\|_{L_2}^2/\opt))=O(\log(L/\eps))$ iterations. Setting $\delta = 2/(3T)$ and using a union bound, we have that at most 
    \begin{equation*}
        N_1 = nT = {\Theta}\bigg(\frac{dB^2 T^2}{\eps}\bigg) = \tilde{\Theta}\bigg(\frac{dB^2\log(L/\eps)}{\eps}\bigg)
    \end{equation*}
    samples suffices to guarantee that the algorithm generates a target solution with probability at least $2/3$. To pick out the target vector $\wh{\w}$ from the list, we can apply a testing procedure with $N_2 = \tilde{\Theta}(B^4/\eps^2)$ samples (see  \Cref{app:alg:test} and \Cref{app:lem:test} in \Cref{app:sec:testing}). Thus, in summary, the sample complexity is $N_1 + N_2 = \Tilde{\Theta}({dB^2\log(L/\eps)}/{\eps} + B^4\log(L/\eps)/\eps^2)$.
    \end{proof}

\subsection{Finding the Best Parameter}\label{app:sec:testing}
As we showed in \Cref{app:thm:main-general}, \Cref{app:alg:GD-general-activation} returns a list of vectors that contains at least one vector $\wh{\w}$ in the target region $\mathcal{R}_{\sigma,\theta_0}(O(\opt) + \eps)$. We now present a simple testing algorithm to identify one of such target vectors.

\begin{algorithm}[h]
   \caption{Testing}
   \label{app:alg:test}
\begin{algorithmic}[1]
\STATE {\bfseries Input:} Vectors $\{\w^{(0)}, \dots, \w^{(T)}\}$;  Number of Samples $m$
\STATE Sample $\{(\vec x^{(1)},y^{(1)}),\ldots,(\vec x^{(m)},y^{(m)})\}$ from $\D$ and construct the empirical distribution $\widehat{\D}$.
\STATE For $t=0,\dots,T$, let $\wh{\calL}(\w\tth) \gets \E_{(\x,y)\sim \widehat{D}}[(\sigma(\w\tth\cdot\x) - y)^2]$ 
\STATE $\wh{\w} = \argmin\{\w\tth,t\in[T]: \wh{\calL}(\w\tth)\}$.
\STATE {\bfseries Output:} $\wh{\w}$.
\end{algorithmic}
\end{algorithm}

\begin{lemma}[Testing]\label{app:lem:test}
    Let $\sigma$ be a $(B,L)$-Regular activation. Let $\{\w\tth\}_{t\in[T]}$ be the list of vectors generated by \Cref{app:alg:GD-general-activation} with $T = O(\log(L/\eps))$. Let $t^*\in[T]$ be the index such that $\calL(\w\tstrth) \in \argmin_{t\in[T]}\calL(\w\tth)$. We have 
    \begin{enumerate}[leftmargin=*]
        \item If $\|\Pm{>1/(\theta_{t^*})^2}\sigma\|_{L_2}^2\leq (\theta_{t^*})^2\|\Tre_{\cos(\theta_{t^*})}\sigma'\|_{L_2}^2$ for an absolute constant $C$, then using 
        \begin{equation*}
            m \leq \Theta\bigg(\frac{B^2\log(L/\eps)}{\eps}\bigg)
        \end{equation*}
        samples, \Cref{app:alg:test} finds a vector $\wh{\w}\in\{\w\tth\}_{t\in[T]}$ such that $\wh{\w}\in\mathcal{R}_{\sigma,\theta_0}(O(\opt) + \eps)$ and $\calL(\wh{\w})\leq O(\opt) + \eps$.
        \item Otherwise, using 
        \begin{equation*}
    m \leq \Theta\bigg(\frac{B^4\log\log(L/\eps)}{\eps^2}\bigg)
\end{equation*}
samples, \Cref{app:alg:test} outputs a vector $\wh{\w}\in\{\w\tth\}_{t\in[T]}$ such that $\wh{\w}\in\mathcal{R}_{\sigma,\theta_0}(O(\opt) + \eps)$ and $\calL(\wh{\w})\leq O(\opt) + \eps + 4\|\Pm{>1/(\theta_{t^*})^2}\sigma\|_{L_2}^2$.
    \end{enumerate}
\end{lemma}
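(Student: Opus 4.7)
The plan is to combine uniform empirical concentration across the list of $T+1 = O(\log(L/\eps))$ candidate vectors with an ERM comparison, and then bound the loss of the true best candidate $\w\tstrth$ via \Cref{app:prop:error}. WLOG assume $|y|\le B$ by \Cref{app:claim:bound-y}, so the per-sample loss $\ell\tth \eqdef (\sigma(\w\tth\cdot\x)-y)^2$ takes values in $[0, 4B^2]$.

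For case~2, I would apply Hoeffding's inequality with a union bound, yielding $\max_t|\wh{\calL}(\w\tth) - \calL(\w\tth)| \le O(B^2\sqrt{\log T/m})$ with probability $5/6$, so setting this error to be at most $\eps$ requires $m = \Theta(B^4 \log T/\eps^2)$. The ERM inequality $\calL(\wh\w) \le \calL(\w\tstrth) + 2\max_t|\wh{\calL}(\w\tth) - \calL(\w\tth)|$, combined with the bound $\calL(\w\tstrth)\le O(\opt) + 4\|\Pm{>1/(\theta_{t^*})^2}\sigma\|_{L_2}^2$ obtained from \Cref{app:prop:error} (applicable because \Cref{app:thm:main-general} places $\w\tstrth$ in $\mathcal{R}_{\sigma,\theta_0}(O(\opt))$), then yields the stated guarantee. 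Membership of $\wh\w$ in $\mathcal{R}_{\sigma,\theta_0}(O(\opt)+\eps)$ follows since $\wh\w$ is chosen from the list by empirical loss minimization.

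For case~1, I would sharpen the concentration via Bernstein's inequality: since $\mathrm{Var}[\ell\tth]\le \E[(\ell\tth)^2] \le 4B^2\calL(\w\tth)$, a union-bounded Bernstein inequality gives
\[
|\wh{\calL}(\w\tth)-\calL(\w\tth)| \;\le\; O\Big(\sqrt{B^2\calL(\w\tth)\log T/m}+B^2\log T/m\Big).
\]
The case~1 hypothesis $\|\Pm{>1/(\theta_{t^*})^2}\sigma\|_{L_2}^2\le (\theta_{t^*})^2\|\Tre_{\cos(\theta_{t^*})}\sigma'\|_{L_2}^2$, together with $\w\tstrth\in\mathcal{R}_{\sigma,\theta_0}(O(\opt))$ (so that $\sin^2\theta_{t^*}\|\Tre_{\cos(\theta_{t^*})}\sigma'\|_{L_2}^2 \lesssim \opt$ and hence $\theta_{t^*}^2\|\Tre_{\cos(\theta_{t^*})}\sigma'\|_{L_2}^2=O(\opt)$), combined with \Cref{app:prop:error}, forces $\calL(\w\tstrth) = O(\opt)$. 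We may assume WLOG $\opt\le\eps$ (otherwise $O(\opt)+\eps$ is already $O(\opt)$ and the tolerance can be slackened), so $m = \Theta(B^2\log T/\eps)$ suffices to control the Bernstein error at $\w\tstrth$ by $O(\eps)$.

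The main technical step, though standard, is that the Bernstein bound for $\wh\w$ involves $\calL(\wh\w)$ itself. Starting from $\wh{\calL}(\wh\w)\le\wh{\calL}(\w\tstrth)\le\calL(\w\tstrth)+O(\eps)=O(\opt)+O(\eps)$ and
\[
\calL(\wh\w)\;\le\;\wh{\calL}(\wh\w)+O\Big(\sqrt{B^2\calL(\wh\w)\log T/m}+B^2\log T/m\Big),
\]
I would view this as a quadratic inequality in $\sqrt{\calL(\wh\w)}$ and solve it, using $m\ge B^2\log T/\eps$, to conclude $\calL(\wh\w)=O(\opt)+\eps$. This localized fixed-point argument is the only nontrivial piece; the remainder of the proof is a routine union bound over the $T+1$ candidates.
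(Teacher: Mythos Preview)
Your Case~2 argument (Hoeffding over bounded losses, union bound over $T$ candidates) is exactly what the paper does.

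For Case~1 you take a genuinely different route. The paper does \emph{not} apply Bernstein to each $\calL(\w\tth)$; instead it works with the pairwise differences $\Delta(\w\tth,\w\tstrth)=\ell(\w\tth)-\ell(\w\tstrth)$. It bounds $\E[\Delta^2]\le 16B^2\|\sigma(\w\tth\cdot\x)-\sigma(\w\tstrth\cdot\x)\|_{L_2}^2$ and shows $\E[\Delta]\ge \tfrac12\|\sigma(\w\tth\cdot\x)-\sigma(\w\tstrth\cdot\x)\|_{L_2}^2-2\calL(\w\tstrth)$, then applies Chebyshev to $\Delta$ and argues a dichotomy: either $\calL(\w\tstrth)<\tfrac18\|\sigma(\w\tth\cdot\x)-\sigma(\w\tstrth\cdot\x)\|_{L_2}^2$, in which case the empirical losses separate the two candidates, or else one shows algebraically that $\calL(\w\tth)\le 18\calL(\w\tstrth)$, so $\w\tth$ is already acceptable. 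Your Bernstein-plus-quadratic-fixed-point argument is a cleaner, textbook localization technique and actually yields a $\log T$ rather than $T$ dependence in $m$ (still within the stated bound since $T=O(\log(L/\eps))$). Both are correct for the loss guarantee.

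Two small caveats. First, your ``WLOG $\opt\le\eps$'' is unnecessary and slightly misleading: with $m=\Theta(B^2\log T/\eps)$ the Bernstein deviation at $\w\tstrth$ is $O(\sqrt{\eps\,\calL(\w\tstrth)}+\eps)=O(\opt+\eps)$ regardless, via $\sqrt{\eps\cdot\opt}\le\opt+\eps$. Second, your one-line justification that $\wh\w\in\mathcal{R}_{\sigma,\theta_0}(O(\opt)+\eps)$ ``follows since $\wh\w$ is chosen by empirical loss minimization'' is not an argument---the region is defined via angles, not losses. The paper is equally terse on this point, so you are not missing something they supply; but be aware that this step is asserted rather than derived in both treatments.
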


\begin{proof}
    Let $t^* \in \argmin_{t\in[T]}\calL(\w\tth)$. Let $\ell(\w;\x,y)\eqdef (\sigma(\w\cdot\x)-y)^2$, and $\Delta(\w^1,\w^2)\eqdef \ell(\w^1;\x,y) - \ell(\w^2;\x,y)$. Given a data set $\{(\x\ith,y\ith)\}_{i=1}^m$, we denote by $\wh{\calL}(\w)$ the empirical version of $\calL(\w)$, i.e., $\wh{\calL}(\w) = (1/m)\sum_{i=1}^m \ell(\w;\x\ith,y\ith)$.

    Consider first the case when $\|\Pm{>1/(\theta_{t^*})^2}\sigma\|_{L_2}^2\leq C(\theta_{t^*})^2\|\Tre_{\cos(\theta_{t^*})}\sigma'\|_{L_2}^2$ for an absolute constant $C$. Our goal is to show that $\wh{\calL}(\w\tstrth)$ can be separated from all $\wh{\calL}(\w\tth)$ for all $\w\tth$ with large $L_2^2$ error.
As shown in \Cref{app:claim:bound-y}, when $\sigma$ is a $(B,L)$-Regular activation, labels $y$ can be assumed to be bounded above by $B$ without loss of generality. Therefore, the variance of $\Delta(\w^1,\w^2)$ is bounded by 
\begin{align*}
 \Exy[\Delta^2(\w^1,\w^2)]& =\Exy[(\sigma(\w^1\cdot\x) - \sigma(\w^2\cdot\x))^2(\sigma(\w^1\cdot\x) + \sigma(\w^2\cdot\x) - 2y)^2] \\
 &\leq 16B^2   \Ex[(\sigma(\w^1\cdot\x) - \sigma(\w^2\cdot\x))^2]\;.
\end{align*}
On the other hand, suppose without loss of generality that $\calL(\w^1)\geq \calL(\w^2)$; then, the expectation of $\Delta(\w^1,\w^2)$ can be bounded below by
\begin{align*}
    &\;\Exy[\Delta(\w^1,\w^2)]\\
    =\; &\Exy[(\sigma(\w^1\cdot\x) - \sigma(\w^2\cdot\x))(\sigma(\w^1\cdot\x) - \sigma(\w^2\cdot\x) + 2\sigma(\w^2\cdot\x) - 2y)]\\
    =\; &\Ex[(\sigma(\w^1\cdot\x) - \sigma(\w^2\cdot\x))^2] + 2\Exy[(\sigma(\w^1\cdot\x) - \sigma(\w^2\cdot\x))(\sigma(\w^2\cdot\x) - y)]\\
    \geq \; & \Ex[(\sigma(\w^1\cdot\x) - \sigma(\w^2\cdot\x))^2] - \frac{1}{2}\Ex[(\sigma(\w^1\cdot\x) - \sigma(\w^2\cdot\x))^2] - 2\calL(\w^2)\\
    =\; &\frac{1}{2}\Ex[(\sigma(\w^1\cdot\x) - \sigma(\w^2\cdot\x))^2] - 2\calL(\w^2).
\end{align*}
where in the last inequality we used Young's inequality $ab\geq -((1/4)a^2 + b^2)$.

Now consider first $\calL(\w^2)\leq (1/8)\Ex[(\sigma(\w^1\cdot\x) - \sigma(\w^2\cdot\x))^2]$. Then, we have
\begin{equation*}
    \Exy[\Delta(\w^1,\w^2)]\geq \frac{1}{4}\Ex[(\sigma(\w^1\cdot\x) - \sigma(\w^2\cdot\x))^2].
\end{equation*}
Therefore, using Markov's inequality, we have
\begin{align*}
    &\pr\bigg[\bigg|\frac{1}{m}\sum_{i=1}^m \bigg(\ell(\w^1;\x\ith,y\ith) - \ell(\w^2;\x\ith,y\ith)\bigg) - \Exy[\Delta(\w^1,\w^2)]\bigg|\geq \frac{1}{3}\Exy[\Delta(\w^1,\w^2)]\bigg]\\
    &\quad \leq \frac{\Exy[\Delta^2(\w^1,\w^2)]}{m ((1/3)\Exy[\Delta(\w^1,\w^2)])^2} \leq \frac{cB^2}{m \Ex[(\sigma(\w^1\cdot\x) - \sigma(\w^2\cdot\x))^2]}\leq  \frac{cB^2}{m \calL(\w^2)}\; .
\end{align*}
Let $\w^2 = \w\tstrth$. With $m \geq (cB^2/(\calL(\w\tstrth)\delta))$, the inequality above implies that when $\w^1 = \w\tth$, $t\in[T]$, such that $\calL(\w\tth) > \calL(\w\tstrth)$ and $\calL(\w\tstrth)\leq (1/8)\Ex[(\sigma(\w\tth\cdot\x) - \sigma(\w\tstrth\cdot\x))^2]$, it holds with probability at least $1 - \delta$:
\begin{equation*}
    |\wh{\calL}(\w\tth) - \wh{\calL}(\w\tstrth) - \Exy[\Delta(\w\tth, \w\tstrth)]|\leq \frac{1}{3}\Exy[\Delta(\w\tth, \w\tstrth)],
\end{equation*}
in other words, we have
\begin{equation*}
    \wh{\calL}(\w\tstrth)\leq \wh{\calL}(\w\tth) - \frac{2}{3}\Exy[\Delta(\w\tth, \w\tstrth)] \;.\end{equation*}
Let $\delta = 1/(3T)$. Applying a union bound to all $\w\tth$, $t\in[T]$, we have that with probability at least  $2/3$, using $m \geq (cB^2 T/\calL(\w\tstrth))$ samples suffices to distinguish $\w\tstrth$ from other vectors $\w\tth$ that have large $L_2^2$ error.

But what if $\calL(\w\tstrth)\geq (1/8)\Ex[(\sigma(\w\tth\cdot\x) - \sigma(\w\tstrth\cdot\x))^2]$? In this case, note that
\begin{align*}
    \calL(\w\tstrth)&\geq \frac{1}{8}\Ex[(\sigma(\w\tth\cdot\x) - \sigma(\w\tstrth\cdot\x))^2]\\
    &=\frac{1}{8}\bigg(\Exy[(\sigma(\w\tth\cdot\x) - y)^2] + \Exy[(\sigma(\w\tstrth\cdot\x)-y)^2] \\
    &\quad -2\Exy[(\sigma(\w\tth\cdot\x) - y)(\sigma(\w\tstrth\cdot\x)-y)]\bigg)\\
    &\geq \frac{1}{8}\bigg(\calL(\w\tth) + \calL(\w\tstrth) - \frac{1}{2}\calL(\w\tth) - 2\calL(\w\tstrth)\bigg),
\end{align*}
where in the last inequality we used Young's inequality $ab\leq (1/4)a^2 + b^2$. The inequality above indicates that $\calL(\w\tth)\leq 18\calL(\w\tstrth)$. 
{Note that according to \Cref{app:thm:main-general}, $\w^{(t^*)}$ is guaranteed to reside in the $\mathcal{R}_{\sigma,\theta_0}(C\opt + \eps)$ region, therefore, by definition of the region $\mathcal{R}_{\sigma,\theta_0}(C\opt + \eps)$, $$(\theta_{t^*})^2\|\Tre_{\cos(\theta_{t^*})}\sigma'\|_{L_2}^2\leq C\opt + \eps.$$
When $\|\Pm{>1/(\theta_{t^*})^2}\sigma\|_{L_2}^2\leq C(\theta_{t^*})^2\|\Tre_{\cos(\theta_{t^*})}\sigma'\|_{L_2}^2$, according to the error bound displayed in \Cref{app:prop:error}, we have $\calL(\w\tstrth)\leq O(\opt) + 4C(\theta_{t^*})^2\|\Tre_{\cos(\theta_{t^*})}\sigma'\|_{L_2}^2\leq O(\opt) + 4C\eps$. 
Therefore, any vector $\w\tth$ that satisfies $\calL(\w\tstrth)\geq (1/8)\Ex[(\sigma(\w\tth\cdot\x) - \sigma(\w\tstrth\cdot\x))^2]$ is in the $\mathcal{R}_{\sigma,\theta_0}(C_1\opt + \eps)$ region and can be output as a constant factor solution.

In summary, when $\|\Pm{>1/(\theta_{t^*})^2}\sigma\|_{L_2}^2\lesssim (\theta_{t^*})^2\|\Tre_{\cos(\theta_{t^*})}\sigma'\|_{L_2}^2$, using
\begin{equation*}
    m \leq \Theta\bigg(\frac{B^2\log(L/\eps)}{\eps}\bigg)
\end{equation*}
samples suffices for the testing algorithm.
}

Now consider the general case where $\|\Pm{>1/(\theta_{t^*})^2}\sigma\|_{L_2}^2\gtrsim (\theta_{t^*})^2\|\Tre_{\cos(\theta_{t^*})}\sigma'\|_{L_2}^2$. We still have $\calL(\w\tstrth)\leq O(\opt) + 4\|\Pm{>1/(\theta_{t^*})^2}\sigma\|_{L_2}^2$ but it is no longer acceptable to output a vector $\wh{\w}$ such that $\calL(\wh{\w}) = C\calL(\w\tstrth)$ for constant $C>1$. This is because in the worst case the $L_2^2$ error of $\wh{\w}$ can be as large as $\calL(\wh{\w}) = O(\opt) + 4C\|\Pm{>1/(\theta_{t^*})^2}\sigma\|_{L_2}^2$. When $\|\Pm{>1/(\theta_{t^*})^2}\sigma\|_{L_2}^2$ is large, this error bound does not imply that $\wh{\w}$ lies in the target region $\mathcal{R}_{\sigma,\theta_0}(O(\opt))$. Therefore, we need a different analysis.

To find a vector $\wh{\w}$ from the set $\{\w\tth\}_{t\in[T]}$ such that $\wh{\w}\in\mathcal{R}_{\sigma,\theta_0}(O(\opt) + \eps)$, we need to approximate $\calL(\w\tth)$ to error at most $O(\opt) + \eps$ for each $\w\tth$, $t\in[T]$. Since $|\ell(\w;\x,y)|\leq 4B^2$, we know that $\ell(\w;\x,y)$ is a sub-Gaussian random variable with $\|\ell(\w;\x,y)\|_{\psi_2}\leq cB^2$ for some absolute constant $c$. Then  using Hoeffding's inequality, we have
\begin{align*}
    \pr\bigg[\bigg|\sum_{i=1}^{m}\frac{1}{m} \ell(\w;\x\ith,y\ith) - \Exy[\ell(\w;\x,y)]\bigg|\geq (\opt + \eps)\bigg]&\leq\exp\bigg(-\frac{c'(\opt+\eps)^2}{m B^4}\bigg).
\end{align*}
Therefore, using $m \leq B^4\log(1/\delta)/\eps^2$ samples suffices to approximate $\calL(\w)$ to error $\opt + \eps$. Using a union bound on all $\w\tth$, $t\in[T]$, and set $\delta = 1/(3T)$, we obtain that with probability at least $2/3$, 
using
\begin{equation*}
    m \leq \Theta\bigg(\frac{B^4\log\log(L/\eps)}{\eps^2}\bigg)
\end{equation*}
samples,
it holds
\begin{equation*}
    |\wh{\calL}(\w\tth) - \calL(\w\tth)|\leq \opt + \eps
\end{equation*}
for all $\w\tth$, $t\in[T]$.
Therefore, since $\calL(\w\tstrth)\leq O(\opt) + \eps + 4\|\Pm{>1/(\theta_{t^*})^2}\sigma\|_{L_2}^2$, by outputting $\wh{\w} = \min\{\w\tth,t\in[T]: \wh{\calL}(\w\tth)\}$ we guarantee that $\calL(\wh{\w})\leq O(\opt) + \eps + 4\|\Pm{>1/(\theta_{t^*})^2}\sigma\|_{L_2}^2$, hence $\wh{\w}\in\mathcal{R}_{\sigma,\theta_0}(O(\opt) + \eps)$.
\end{proof}

\subsection{Proof of \Cref{app:lem:sample}}\label{app:sec:proof-of-concentration-lemma}

In this subsection, we provide technical lemmas that determine the number of samples required for each iteration. We start with \Cref{app:lem:upbd-||g||} that bounds the population gradient $\|\g(\w)\|_{2}$. 
Then in \Cref{app:lem:sample} we provide the sufficient batch size of samples per iteration, utilizing the bounds on $\|\g(\w)\|_{2}$ and the truncated upper bounds on the activation $\sigma(z)$ and labels $y$.

\begin{lemma}\label{app:lem:upbd-||g||}
    Let $\g(\w)\eqdef \Exy[y\Tre_{\rho}\sigma'(\w\cdot\x)\x^{\perp_\w}]$ and $\theta = \theta(\w, \wstar)$. Then, we have
    \begin{equation*}
        \|\g(\w)\|_{2}\leq\sqrt{\opt}\|\Tr\sigma'\|_{L_2} + \|\Tre_{\sqrt{\rho\cos\theta}}\sigma'\|_{L_2}^2\sin\theta.\end{equation*}
    If, in addition, $0<\rho\leq \cos\theta<1$ and $\sin\theta\geq 4\sqrt{\opt}/\|\Tr\sigma'\|_{L_2}$, then
    \begin{equation*}
        \|\g(\w)\|_{2}\leq (5/4)\|\Tre_{\sqrt{\rho\cos\theta}}\sigma'\|_{L_2}^2\sin\theta \end{equation*}
\end{lemma}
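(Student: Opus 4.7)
}
The plan is to decompose $\g(\w)$ into a ``clean'' part driven by $\sigma(\wstar\cdot\x)$ and a ``noise'' part driven by the residual $y-\sigma(\wstar\cdot\x)$, and bound the two norms separately. Concretely, writing $y=\sigma(\wstar\cdot\x)+(y-\sigma(\wstar\cdot\x))$ inside the expectation, I split $\g(\w)=\vec A+\vec N$ with
\[
\vec A\eqdef\Ex[\sigma(\wstar\cdot\x)\Tr\sigma'(\w\cdot\x)\x^{\perp_\w}],\qquad \vec N\eqdef\Exy[(y-\sigma(\wstar\cdot\x))\Tr\sigma'(\w\cdot\x)\x^{\perp_\w}],
\]
both of which lie in $\w^{\perp}$, and will bound $\|\g(\w)\|_{2}\leq\|\vec A\|_{2}+\|\vec N\|_{2}$.

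For $\|\vec N\|_{2}$, I would use the variational characterization: take a unit vector $\bu\in\w^{\perp}$ and apply the Cauchy--Schwarz inequality to $\bu\cdot\vec N$. Since $\bu\cdot\x^{\perp_\w}=\bu\cdot\x$ is independent of $\w\cdot\x$ and has unit variance, the inner expectation factors cleanly into $\|\Tr\sigma'\|_{L_2}$ times an $L_2$ norm in the $\bu$-direction that equals $1$. Combined with $\Exy[(y-\sigma(\wstar\cdot\x))^2]=\opt$, this yields $\|\vec N\|_{2}\leq\sqrt{\opt}\,\|\Tr\sigma'\|_{L_2}$.

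For $\|\vec A\|_{2}$, I would exactly mirror the computation already used in the proof sketch of \Cref{app:prop:structural}. Set $\bv\eqdef(\wstar)^{\perp_\w}/\|(\wstar)^{\perp_\w}\|_{2}$, so $\wstar=\cos\theta\,\w+\sin\theta\,\bv$ and the components of $\x^{\perp_\w}$ orthogonal to $\{\w,\bv\}$ are independent of the integrand and average out. This leaves $\vec A=\alpha\,\bv$ with
\[
\alpha=\Ex[\sigma(\wstar\cdot\x)\,\Tr\sigma'(\w\cdot\x)\,(\bv\cdot\x)].
\]
Applying Stein's lemma (\Cref{fct:stein}) to the independent Gaussian $\bv\cdot\x$ pulls down a factor $\sin\theta$ and converts $\sigma$ into $\sigma'$; rewriting the inner conditional expectation in terms of $\Tre_{\cos\theta}\sigma'(\w\cdot\x)$, then using self-adjointness and the semigroup identity $\Tre_{\cos\theta}\Tr=\Tre_{\rho\cos\theta}=\Tre_{\sqrt{\rho\cos\theta}}\Tre_{\sqrt{\rho\cos\theta}}$ from \Cref{fct:semi-group}, yields $\alpha=\sin\theta\,\|\Tre_{\sqrt{\rho\cos\theta}}\sigma'\|_{L_2}^2$. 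Combining the two bounds gives the first claim. None of these steps looks like a real obstacle; they are a straightforward reassembly of pieces already used in \Cref{app:prop:structural}.

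For the second claim, I would use $\rho\leq\cos\theta<1$, which gives $\sqrt{\rho\cos\theta}\geq\rho$, and the monotonicity of $\lambda\mapsto\|\Tre_{\lambda}\sigma'\|_{L_2}$ (\Cref{fct:semi-group}), to conclude $\|\Tr\sigma'\|_{L_2}\leq\|\Tre_{\sqrt{\rho\cos\theta}}\sigma'\|_{L_2}$. Then the hypothesis $\sin\theta\geq 4\sqrt{\opt}/\|\Tr\sigma'\|_{L_2}$ gives $\sqrt{\opt}\leq(\sin\theta/4)\|\Tr\sigma'\|_{L_2}$, so
\[
\sqrt{\opt}\,\|\Tr\sigma'\|_{L_2}\leq\tfrac{\sin\theta}{4}\|\Tr\sigma'\|_{L_2}^2\leq\tfrac{\sin\theta}{4}\|\Tre_{\sqrt{\rho\cos\theta}}\sigma'\|_{L_2}^2.
\]
Adding this to the $\|\vec A\|_{2}$ bound gives the factor $5/4$. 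The only thing that needs minor care is making sure the monotonicity step applies cleanly (i.e.\ that $\sqrt{\rho\cos\theta}\in(0,1)$), which follows from $0<\rho\leq\cos\theta<1$.
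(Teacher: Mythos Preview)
Your proposal is correct and takes essentially the same approach as the paper: split $\g(\w)$ into a clean part and a noise part, bound the noise part via Cauchy--Schwarz and independence of $\bu\cdot\x$ from $\w\cdot\x$, and compute the clean part exactly via Stein's lemma and the semigroup identity $\Tre_{\cos\theta}\Tr=\Tre_{\sqrt{\rho\cos\theta}}^{\,2}$. The only cosmetic difference is that you decompose the vector first and apply the triangle inequality $\|\vec A+\vec N\|_2\leq\|\vec A\|_2+\|\vec N\|_2$, whereas the paper takes the variational form $\|\g(\w)\|_2=\max_{\|\bu\|_2=1}\bu\cdot\g(\w)$ and splits inside the maximum; these are equivalent, and your version is arguably cleaner since it makes explicit that $\vec A=\alpha\bv$ exactly (so $\|\vec A\|_2$ is computed, not just upper-bounded).
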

\begin{proof}
    By the variational definition of vector norms, we have
    \begin{align}\label{app:eq:gk-norm-1-gen}
        &\quad \|\g(\w)\|_{2} = \max_{\|\bu\|_{2} = 1} \Exy[y\Tre_{\rho}\sigma'(\w\cdot\x)\x^{\perp_\w}\cdot\bu] \nonumber\\
        &=\max_{\|\bu\|_{2} = 1} \Exy[(y - \sigma(\w^*\cdot\x))\Tre_{\rho}\sigma'(\w\cdot\x)\x^{\perp_\w}\cdot\bu] + \Ex[\sigma(\w^*\cdot\x)\Tre_{\rho}\sigma'(\w\cdot\x)\x^{\perp_\w}\cdot\bu].
    \end{align}
Observe here that the maximizing $\bu$ depends on the expectation defining $\g(\w)$ and is thus deterministic. 
    To bound the right-hand side in \Cref{app:eq:gk-norm-1-gen}, we fix an arbitrary unit vector $\bu$ and bound the two summands. 
    Using the Cauchy-Schwarz inequality, the first term in \Cref{app:eq:gk-norm-1-gen} above can be bounded by:
    \begin{align*}
        &\quad \Exy[(y - \sigma(\w^*\cdot\x))\Tre_{\rho}\sigma'(\w\cdot\x)\x^{\perp_\w}\cdot\bu]\\
        &\leq \sqrt{\Exy[(y - \sigma(\w^*\cdot\x))^2]\Ex[(\Tre_{\rho}\sigma'(\w\cdot\x))^2(\x^{\perp_\w}\cdot\bu)^2]}\\
        &\leq \sqrt{\opt}\sqrt{\Ex[(\Tre_{\rho}\sigma'(\w\cdot\x))^2]\Ex[(\x^{\perp_\w}\cdot\bu)^2]}\\
        &=\sqrt{\opt}\sqrt{\Ex[(\Tre_{\rho}\sigma'(\w\cdot\x))^2]},
    \end{align*}
    where in the second inequality we used the fact that $\w\cdot\x$ is independent of $ \x^{\perp_\w}\cdot\bu$, due to the Gaussianity. The last equality uses $\x^{\perp_\w}\cdot\bu \sim \normal(0,1),$ as $\bu$ is independent of $\x$ and $\x\sim \normal(\vec 0, \vec I)$.

    For the second term in \Cref{app:eq:gk-norm-1-gen}, observe that if $\bu\perp\w,\w^*$, then the expectation takes value zero due to the independence between Gaussian random variables $\x^{\perp_\w}\cdot\bu$ and $\w\cdot\x,\w^*\cdot\x$. Therefore, we only need to consider $\bu$ in the span of $\w, \wstar,$ which can be expressed as $\bu = \cos\alpha\w + \sin\alpha(\w^*)^{\perp_\w}/\|(\w^*)^{\perp_\w}\|_{2}$, for some $\alpha\in[0,2\pi]$. Thus, plugging this $\bu$ back into the second term in \Cref{app:eq:gk-norm-1-gen}, and setting $z_1 = \w\cdot\x$, $z_2 = (\w^*)^{\perp_\w}/\|(\w^*)^{\perp_\w}\|_{2}\cdot\x$ which are independent Gaussian random variables, we get
    \begin{align*}
        \Ex[\sigma(\w^*\cdot\x)\Tre_{\rho}\sigma'(\w\cdot\x)\x^{\perp_\w}\cdot\bu]
        &= \E_{z_1,z_2\sim\calN}[\sigma(\cos(\theta) z_1 + \sin(\theta) z_2)\Tre_{\rho}\sigma'(z_1) \sin(\alpha) z_2]\\
        &=\E_{z_1}\big[\,\E_{z_2}[\sigma(\cos(\theta) z_1 + \sin(\theta) z_2)z_2\,| \, z_1]\Tre_{\rho}\sigma'(z_1) \sin(\alpha)\big]\\
        &= \E_{z_1,z_2\sim\calN}[\sigma'(\cos(\theta) z_1 + \sin(\theta) z_2)\Tre_{\rho}\sigma'(z_1) \sin(\alpha) \sin(\theta)],
    \end{align*}
    {where in the last inequality we applied \Cref{fct:stein}. Moreover, recalling the definition of the \OU semigroup, we further have}
    \begin{align*}
    \Ex[\sigma(\w^*\cdot\x)\Tre_{\rho}\sigma'(\w\cdot\x)\x^{\perp_\w}\cdot\bu]
        &=\E_{z_1}[\, \E_{z_2}[\sigma'(\cos\theta z_1 + \sin\theta z_2)\, | \, z_1]\Tre_{\rho}\sigma'(z_1) \sin\alpha \sin\theta]\\
        &\leq \E_{z_1\sim\calN}[\Tre_{\cos\theta}\sigma'(z_1)\Tre_{\rho}\sigma'(z_1)]\sin\theta,
    \end{align*}
    where the last inequality holds since $\E_{z_1\sim\calN}[\Tre_{\cos\theta}\sigma'(z_1)\Tre_{\rho}\sigma'(z_1)]= \|\Tre_{\sqrt{\rho\cos\theta}}\sigma'\|_{L_2}^2\geq 0$. 
    Plugging in these bounds on the first and second terms of \Cref{app:eq:gk-norm-1-gen}, we get
    \begin{equation*}
        \|\g(\w)\|_{2}\leq \sqrt{\opt}\sqrt{\Ex[(\Tre_{\rho}\sigma'(\w\cdot\x))^2]} + \Ex[\Tre_{\cos\theta}\sigma'(\w\cdot\x)\Tr\sigma'(\w\cdot\x)]\sin\theta.
    \end{equation*}
    As we have argued in the proof of \Cref{app:prop:structural}, if $\rho\leq \cos\theta$, then $\Ex[\Tre_{\cos\theta}\sigma'(\w\cdot\x)\Tr\sigma'(\w\cdot\x)]\geq \Ex[(\Tre_{\rho}\sigma'(\w\cdot\x))^2]$, hence we further get that if in addition it holds
    \begin{equation*}
        \sin\theta\geq 4\sqrt{\opt}/\sqrt{\Ex[(\Tre_{\rho}\sigma'(\w\cdot\x))^2]},
    \end{equation*}
    then we obtain
    \begin{align*}
        \|\g(\w)\|_{2}&\leq (\Ex[\Tre_{\cos\theta}\sigma'(\w\cdot\x)\Tr\sigma'(\w\cdot\x)]+ (1/4)\Ex[(\Tre_{\rho}\sigma'(\w\cdot\x))^2])\sin\theta\\
        &\leq (5/4)\Ex[\Tre_{\cos\theta}\sigma'(\w\cdot\x)\Tr\sigma'(\w\cdot\x)]\sin\theta,
    \end{align*}
    completing the proof. 
\end{proof}

We now proceed to determine the sample complexities required to estimate the gradient. \Cref{app:lem:sample} provides the sample complexity to approximate the norm of the population gradient $\|\g(\w)\|_2$ and the inner product between the population gradient and $\w^*$.
{We restate and prove \Cref{app:lem:sample}:}
\SampleComplexityAlg*

\begin{proof}
Observe first that, by Chebyshev inequality,
\begin{align}\label{app:eq:norm-concentration-eq0}
    \pr[\|\wh{\g}(\w) - \g(\w)\|_{2}\geq t] &\leq \frac{\Exy[\|\g(\w;\x,y) - \g(\w)\|_{2}^2]}{nt^2}.
\end{align}
Now we proceed to bound the variance $\Exy[\|\g(\w;\x,y) - \g(\w)\|_{2}^2]$. Let $\e_1,\dots,\e_d$ be the standard basis of $\R^d$; we have
\begin{align*}
    \Exy[\|\g(\w;\x,y) - \g(\w)\|_{2}^2]&\leq\Exy[\|\g(\w;\x,y)\|_{2}^2] =\Exy\bigg[\sum_{j=1}^d(\g(\w;\x,y)\cdot\e_j)^2\bigg]\\
    &=\sum_{j=1}^d\Exy[(y\Tr\sigma'(\w\cdot\x)\x\cdot(\e_j)^{\perp_\w})^2]\\
    &\leq d B^2 \|\Tr\sigma'\|_{L_2}^2 \Ex[(\x\cdot(\e_1)^{\perp_\w})^2]\leq d B^2 \|\Tr\sigma'\|_{L_2}^2.
\end{align*}
In the second inequality above, we used $|y|\leq B$, which is w.l.o.g., as shown in \Cref{app:claim:bound-y}.
Therefore, plugging the upper bound on the variance back into \Cref{app:eq:norm-concentration-eq0}, we get
\begin{align*}
    \pr[\|\wh{\g}(\w) - \g(\w)\|_{2}\geq t]\leq \frac{d B^2 \|\Tr\sigma'\|_{L_2}^2}{nt^2}.
\end{align*}

Now choosing $t = \frac{1}{6}\|\Tre_{\sqrt{\rho\cos\theta}}\sigma'\|_{L_2}^2\sin\theta$ and setting
\begin{equation*}
    n = \Theta\bigg(\frac{dB^2\|\Tr\sigma'\|_{L_2}^2}{\sin^2\theta\|\Tre_{\sqrt{\rho\cos\theta}}\sigma'\|_{L_2}^4 \delta}\bigg),
\end{equation*}
we obtain that with probability at least $1- \delta$, it holds
\begin{equation}\label{app:eq:difference-empirical-and-population-grad}
    \|\wh{\g}(\w) - \g(\w)\|_2\leq \frac{1}{6}\|\Tre_{\sqrt{\rho\cos\theta}}\sigma'\|_{L_2}^2\sin\theta\;,
\end{equation}
and hence
\begin{equation*}
    \|\wh{\g}(\w)\|_{2}\leq \|\g(\w)\|_{2} + (1/6)\|\Tre_{\sqrt{\rho\cos\theta}}\sigma'\|_{L_2}^2\sin\theta.
\end{equation*}

Applying the upper bound on $\|\g(\w)\|_{2}$ we have provided in \Cref{app:lem:upbd-||g||}, we obtain
\begin{equation*}
    \|\wh{\g}(\w)\|_{2}\leq \sqrt{\opt}\|\Tr\sigma'\|_{L_2} + (7/6)\|\Tre_{\sqrt{\rho\cos\theta}}\sigma'\|_{L_2}^2\sin\theta.
\end{equation*}
In particular, if $0<\rho\leq \cos\theta<1$ and $\sin\theta\geq 4\sqrt{\opt}/\|\Tr\sigma'\|_{L_2}$, then we have
\begin{equation*}
    \|\wh{\g}(\w)\|_{2}\leq (3/2)\|\Tre_{\sqrt{\rho\cos\theta}}\sigma'\|_{L_2}^2\sin\theta.
\end{equation*}
Since $0<\rho\leq \cos\theta<1$, it must be $\|\Tr\sigma'\|_{L_2}\leq \|\Tre_{\sqrt{\rho\cos\theta}}\sigma'\|_{L_2}$, and thus using 
\begin{equation}\label{app:eq:batchsize-1}
    n=\Theta\bigg(\frac{dB^2}{\sin^2\theta\|\Tr\sigma'\|_{L_2}^2\delta}\bigg)
\end{equation}
samples suffices to guarantee that $\|\wh{\g}(\w)\|_{2}\leq (3/2)\|\Tre_{\sqrt{\rho\cos\theta}}\sigma'\|_{L_2}^2\sin\theta$.

For the inner product between $\wh{\g}(\w)$ and $\w^*$, let us denote $\bv = (\w^*)^{\perp_\w}/\|(\w^*)^{\perp_\w}\|_{2}$, and $\w^* = \sin\theta\bv + \cos\theta\w$. Then, since $\wh{\g}(\w)$ is orthogonal to $\w$, we have $\wh{\g}(\w)\cdot\w^* =  \wh{\g}(\w)\cdot\bv \sin\theta$. Therefore, using \Cref{app:eq:difference-empirical-and-population-grad}, we obtain that when the batch size $n$ satisfies \Cref{app:eq:batchsize-1}, with probability at least $1 - \delta$, we have
\begin{align}\label{app:eq:batch-size-2}
    \wh{\g}(\w)\cdot\w^* &=  (\wh{\g}(\w) - \g(\w))\cdot\bv \sin\theta + \g(\w)\cdot\bv\sin\theta \nonumber\\
    &\leq \|\wh{\g}(\w) - \g(\w)\|_2\sin\theta + \g(\w)\cdot\w^* \nonumber\\
    &\leq (1/6)\|\Tre_{\sqrt{\rho\cos\theta}}\sigma'\|_{L_2}^2\sin^2\theta + \g(\w)\cdot\w^*.
\end{align}
Now applying \Cref{app:prop:structural} we get
\begin{equation*}
    \wh{\g}(\w)\cdot\w^*\leq -\frac{5}{6}\|\Tre_{\sqrt{\rho\cos\theta}}\sigma'\|_{L_2}^2\sin^2\theta + \sqrt{\opt}\|\Tr\sigma'\|_{L_2}\sin\theta.
\end{equation*}
In particular, when $\sin\theta\geq 3\sqrt{\opt}/\|\Tr\sigma'\|_{L_2}$, in \Cref{app:prop:structural} we showed that 
$$\g(\w)\cdot\w^*\leq -(2/3)\|\Tre_{\sqrt{\rho\cos\theta}}\sigma'\|_{L_2}^2\sin^2\theta.$$ 
Thus, when $\sin\theta\geq 3\sqrt{\opt}/\|\Tr\sigma'\|_{L_2}$, using \Cref{app:eq:batch-size-2} we have that with probability at least $1 - \delta$, 
\begin{equation*}
    \wh{\g}(\w)\cdot\w^*\leq -\frac{1}{2}\|\Tre_{\sqrt{\rho\cos\theta}}\sigma'\|_{L_2}^2\sin^2\theta,
\end{equation*}
completing the proof. 
\end{proof}

\section{Full Version of \Cref{sec:learn-monotone}}\label{app:sec:learn-monotone}

We have shown in \Cref{app:sec:learn-sim-general} that \Cref{app:alg:GD-general-activation} converges to a parameter vector $\vec w$ with an $L_2^2$ error  bounded above by $O(\opt)+\|\Pm{>1/(\theta^*)^2}\sigma\|_{L_2}^2$, where $\theta^*$ is a Critical Point.
 One of the technical difficulties is that in general we cannot bound $\|\Pm{>1/(\theta^*)^2}\sigma\|_{L_2}^2$ by $\opt$. One such example is when $\sigma(t)=\he_{(1/(\theta^*)^2+1)}(t)$; in this case $\|\Pm{>1/(\theta^*)^2}\sigma\|_{L_2}^2=\|\sigma\|_{L_2}^2$, which can be   $\omega(\opt)$. 
In this section, we show that if the activation is also monotone, then given that $\theta^*$ is sufficiently small, we can bound $\|\Pm{>1/(\theta^*)^2}\sigma\|_{L_2}^2$ by the \OU semigroup of $\sigma'$. Specifically, we provide an initialization method that along with \Cref{app:alg:GD-general-activation} gives an algorithm that guarantees error $O(\opt)$. Formally, we show the following. 
\begin{theorem}[Learning Monotone $(B,L)$-Regular Activations]\label{app:thm:main-monotone}
    Let $\eps>0$, and let $\sigma$ be a monotone $(B,L)$-Regular activation. 
    Then, \Cref{app:alg:GD-general-activation} 
    draws $N = \tilde{\Theta}({dB^2\log(L/\eps)}/{\eps} + d/\eps^2)$ samples, 
    runs in $\poly(d, N)$ time 
    and returns a vector $\wh{\w}$ such that with probability at least $2/3$,
    $\wh{\w}\in\mathcal{R}_{\sigma,\theta_0}(O(\opt) + \eps)$, and 
    it holds that $\Exy[(\sigma(\wh{\w}\cdot\x) - y)^2] \leq C\opt + \eps$, where $C$ is an absolute constant independent of $\eps, d, B, L$.
\end{theorem}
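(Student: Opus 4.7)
The plan is to combine the two main ingredients already established: the convergence result for \Cref{app:alg:GD-general-activation} applied to general $(B,L)$-regular activations (\Cref{app:thm:main-general}), and the initialization guarantee for monotone activations (\Cref{prop:initialization}). The former leaves us with a residual Hermite-tail term $4\|\Pm{>1/(\theta^*)^2}\sigma\|_{L_2}^2$ in the error bound, and the whole purpose of the monotone initialization is to control precisely that term by $O(\opt)$.

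First I would run the initialization subroutine of \Cref{prop:initialization}, which uses $\widetilde{O}(d/\eps^2)$ samples and $\poly(d,N)$ time and, with probability at least $2/3$, produces a unit vector $\vec w^{(0)}$ (together with an upper bound $\bar\theta \geq \theta(\vec w^{(0)},\wstar)$ that is passed to the algorithm through \Cref{app:line:rho} as $\rho_0 = \cos\bar\theta$) such that the Hermite-tail bound
\begin{equation*}
    \|\Pm{>1/\theta^2}\sigma\|_{L_2}^2 \;\lesssim\; \sin^2\theta\,\|\Tre_{\cos\theta}\sigma'\|_{L_2}^2 \;=\; \psi_\sigma^2(\theta)
\end{equation*}
holds uniformly in $\theta \in [0,\bar\theta]$. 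Feeding this initialization into \Cref{app:alg:GD-general-activation} and invoking \Cref{app:thm:main-general}, after $T=O(\log(L/\eps))$ iterations and $\widetilde{\Theta}(dB^2\log(L/\eps)/\eps)$ additional samples the algorithm outputs, with probability at least $2/3$, a vector $\wh{\w} \in \mathcal{R}_{\sigma,\bar\theta}(O(\opt))$ satisfying
\begin{equation*}
    \calL(\wh{\w}) \;\leq\; O(\opt) + \eps + 4\|\Pm{>1/(\theta^*)^2}\sigma\|_{L_2}^2,
\end{equation*}
where $\theta^*$ is the $(\sigma,\bar\theta,C\opt)$-Critical Point.

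Now I would use the monotone initialization bound to close the loop. By definition of the Critical Point, $\theta^*\le \bar\theta$, so \Cref{prop:initialization} applies at $\theta=\theta^*$ and yields $\|\Pm{>1/(\theta^*)^2}\sigma\|_{L_2}^2 \lesssim \psi_\sigma^2(\theta^*)$. Because $\theta^*\in\mathcal{R}_{\sigma,\bar\theta}(C\opt)$, we also have $\psi_\sigma^2(\theta^*)\le C\opt$ directly from \Cref{app:def:star-point-regions}. Chaining these two bounds absorbs the Hermite-tail term into the $O(\opt)$ contribution and produces the claimed $\calL(\wh{\w})\le C\opt+\eps$ error. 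For the sample complexity, this control on the Hermite tail also brings us into case~(1) of the testing lemma (\Cref{app:lem:test}), so the test step only needs $\widetilde{O}(B^2\log(L/\eps)/\eps)$ samples rather than the $\widetilde{O}(B^4/\eps^2)$ required in general; combined with the $\widetilde{O}(d/\eps^2)$ samples of the initialization and the $\widetilde{\Theta}(dB^2\log(L/\eps)/\eps)$ samples of the SGD loop, the total is $\widetilde{\Theta}(dB^2\log(L/\eps)/\eps + d/\eps^2)$, matching the statement. A final union bound over the three high-probability events (initialization, convergence, and testing) preserves success probability at least $2/3$.

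The main obstacle is a bookkeeping one rather than a new technical difficulty: ensuring that the $\bar\theta$ produced by the initialization is simultaneously (i) small enough that \Cref{prop:error-bound-smoothing-tails} applies (i.e., $\cos^2\bar\theta \ge 1-C/M^2$, which is why $\bar\theta \le C/M$ is the quantitative guarantee of \Cref{prop:initial2}), (ii) inside the regime where $\psi_\sigma$ is monotone so that $\mathcal{R}_{\sigma,\bar\theta}(\cdot)$ is a single connected interval and $\theta^*$ is well-defined in the sense required by \Cref{app:thm:main-general}, and (iii) valid as the upper bound $\theta_0$ in the Critical Point definition so that $\theta^*\le \bar\theta$ and the monotone initialization bound is usable at $\theta=\theta^*$. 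Once these three are aligned, the rest is a direct composition of the results already in the paper.
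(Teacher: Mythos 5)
Your proposal is correct and follows essentially the same route as the paper: combine \Cref{app:thm:main-general} with the initialization guarantee of \Cref{app:prop:initialization} to absorb the Hermite-tail term $4\|\Pm{>1/(\theta^*)^2}\sigma\|_{L_2}^2$ into $O(\opt)+\eps$ via the definition of the Critical Point, then account for samples using case (1) of the testing lemma. If anything, your step of applying the tail bound explicitly at $\theta=\theta^*$ (rather than at $\theta(\wh{\w},\wstar)$ as the paper loosely does) matches the term appearing in \Cref{app:prop:error} more precisely.
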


The main result of this section is an initialization routine that allows us to bound the higher coefficients of the spectrum, $\|\Pm{>1/(\theta^*)^2}\sigma\|_{L_2}^2$. In particular, we prove the following.

\begin{proposition}[Initialization]\label{app:prop:initialization}
     Let $\sigma:\R\to\R$, $\sigma\in L_2(\mathcal N)$, be a monotone $(B,L)$-Regular activation. Let $\D$ be a distribution of labeled examples
$(\x,y) \in \R^d \times \R$ such that 
$\D_\x=\mathcal N(\vec 0,\vec I)$. Fix a unit vector $\wstar\in \R^d$ such that $\Exy[(\sigma(\wstar\cdot\x)-y)^2] = \opt$. There exists an algorithm that draws $N=\widetilde O(d/\eps^2)$ samples, runs in $\poly(N,d)$ time, and with probability at least $2/3$, returns a unit vector $\w^{(0)}\in \R^d$ such that for any unit $\w'\in \R^d$ with $\theta=\theta(\w',\wstar)\leq \theta(\w^{(0)},\wstar)$, it holds that
\[
\|\Pm{>1/\theta^2}\sigma\|_{L_2}^2\lesssim \sin^2\theta\|\Tre_{\cos\theta}\sigma'\|_{L_2}^2\;.
\]
\end{proposition}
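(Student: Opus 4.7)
The plan is to combine two technical ingredients that are explicitly developed in the preceding subsections: (a) the structural bound of \Cref{prop:error-bound-smoothing-tails}, which for sufficiently small angles $\theta$ controls the higher-order Hermite tail $\|\Pm{>1/\theta^2}\sigma\|_{L_2}^2$ by $\sin^2\theta\,\|\Tre_{\cos\theta}\sigma'\|_{L_2}^2$, and (b) the initialization routine of \Cref{prop:initial2}, which efficiently produces a unit vector $\w^{(0)}$ whose angle to $\wstar$ is at most $C/M$, where $M$ is the support-truncation parameter from \Cref{app:claim:bounded-support-sigma'}.

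Concretely, I would first invoke \Cref{prop:initial2}. This step, which is the substantive part behind the scenes, proceeds by transforming the regression labels via $\tilde y = \1\{y\ge t'\}$ for a suitably chosen threshold $t'$; monotonicity of $\sigma$ is what makes $\sgn(\sigma(\wstar\cdot\x)-t')$ a halfspace in direction $\wstar$, and after this reduction one invokes the robust halfspace learner of \citet{DKTZ22b} to obtain $\w^{(0)}$ with $\theta(\w^{(0)},\wstar)\le C/M$ using $\widetilde O(d/\eps^2)$ samples and $\poly(d,N)$ time, with success probability at least $2/3$.

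Second, for any unit $\w'$ with $\theta=\theta(\w',\wstar)\le \theta(\w^{(0)},\wstar)\le C/M$, we have
\[
\cos^2\theta \;\ge\; 1-\theta^2 \;\ge\; 1-\tfrac{C^2}{M^2}\;.
\]
Choosing the absolute constant in the initialization routine slightly smaller if necessary (or equivalently, using that we may take $M$ larger than any fixed absolute constant by \Cref{app:claim:bounded-support-sigma'}), the hypothesis $1-C'/M^2<\cos^2\theta$ of \Cref{prop:error-bound-smoothing-tails} is satisfied. Applying that proposition directly gives
\[
\|\Pm{>1/\theta^2}\sigma\|_{L_2}^2 \;\lesssim\; \sin^2\theta\,\|\Tre_{\cos\theta}\sigma'\|_{L_2}^2\;,
\]
which is exactly the desired conclusion.

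The main obstacle does not lie in this combination, which is essentially bookkeeping of constants, but rather in the two inputs themselves. \Cref{prop:initial2} requires a delicate reduction of GLM regression to adversarial halfspace learning, where the choice of threshold $t'$ must be calibrated so that the truncated labels correspond to an instance with adversarial noise rate comparable to $\opt$; monotonicity of $\sigma$ is essential here to preserve the halfspace structure. \Cref{prop:error-bound-smoothing-tails} relies on the staircase-approximation of monotone activations together with the smoothing-error bound \Cref{prop:smoothing-error-bound} and the convergence $\Tr\Phi_k'\to\Tr\sigma'$. Once both are in place, \Cref{prop:initialization} is an immediate corollary.
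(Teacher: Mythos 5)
Your proposal is correct and matches the paper's own proof of \Cref{app:prop:initialization}: it combines \Cref{app:prop:initial2} (giving $\theta(\w^{(0)},\wstar)\leq C/M$) with \Cref{app:prop:error-bound-smoothing-tails} via the bound $\cos^2\theta\geq 1-\theta^2\geq 1-C^2/M^2$. Your handling of the constant matching between $C^2$ and the threshold $C'$ in the hypothesis of \Cref{app:prop:error-bound-smoothing-tails} is, if anything, slightly more careful than the paper's.
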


Combining \Cref{app:thm:main-general} with \Cref{app:prop:initialization}, we can the prove  \Cref{app:thm:main-monotone}.

\begin{proof}[Proof of \Cref{app:thm:main-monotone}]
\Cref{app:thm:main-general} implies that \Cref{app:alg:GD-general-activation} generates a vector $\wh{\w}\in\mathcal{R}_{\sigma,\theta_0}(C\opt+\eps)$ where $C$ is an absolute constant. This implies that $\theta^2\|\Tre_{\cos(\theta)}\sigma'\|_{L_2}^2\leq C\opt + \eps$. Since $\theta(\wh{\w},\w^*)\leq \theta_0$, combining with  \Cref{app:prop:initialization}, i.e., $\|\Pm{>1/\theta^2}\sigma\|_{L_2}^2\lesssim \sin^2\theta\|\Tre_{\cos\theta}\sigma'\|_{L_2}^2$, we further have $\|\Pm{>1/\theta^2}\sigma\|_{L_2}^2\lesssim \opt + \eps$. Finally, using the error bound on $\calL(\wh{\w})$ developed in \Cref{app:prop:error}, we get $\calL(\wh{\w})\leq C\opt + \eps$. 
    
    As displayed in \Cref{app:thm:main-general}, the main algorithm uses $N_1 = \tilde{\Theta}(dB^2/\eps + B^2/\eps)$ samples (since according to \Cref{app:lem:test}, when $\|\Pm{>1/\theta^2}\sigma\|_{L_2}^2\lesssim \sin^2\theta\|\Tre_{\cos\theta}\sigma'\|_{L_2}^2$, using $m= \tilde{\Theta}(B^2/\eps)$ samples suffices for testing \Cref{app:alg:test}), and in \Cref{app:prop:initialization} we showed that the initialization procedure requires $\tilde{\Theta}(d/\eps^2)$ samples. Thus, in summary, for monotone $(B,L)$-Regular activations, \Cref{app:alg:GD-general-activation} uses $N = \tilde{\Theta}({dB^2}/{\eps} + d/\eps^2)$ samples and runs in $\poly(d,N)$ times.
\end{proof}

    For monotone $b$-Lipschitz activations $\sigma$, we know from \Cref{app:lem:activation-truncation-lip-and-mon-jump}  that $\sigma$ is an $\eps$-Extended $(b\log^{1/2}(b/\eps), b)$-Regular activation, meaning that there exists a truncated activation $\bar{\sigma}$ that such that $\Ez[(\bar{\sigma}(z) - \sigma(z))^2]\leq \eps$ and $\bar{\sigma}$ is $(b\log^{1/2}(b/\eps), b)$-Regular. Hence applying \Cref{app:thm:main-monotone} to $\bar{\sigma}$, we obtain the following corollary:
    
    \begin{corollary}[Learning Monotone \& Lipschitz Activations]\label{app:cor:main-monotone-lip}
        Let $\eps,b>0$, and let $\sigma$ be a monotone $b$-Lipschitz activation. 
    Then, \Cref{app:alg:GD-general-activation} 
    draws $N = \tilde{\Theta}({db^2}/{\eps} + d/\eps^2)$ samples, 
    runs in $\poly(d, N)$ time, 
    and returns a vector $\wh{\w}$ such that with probability at least $2/3$, 
    it holds that $\calL(\wh{\w}) \leq C\opt + \eps$, where $C$ is an absolute constant independent of $\eps, d, b$.
    \end{corollary}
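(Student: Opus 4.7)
The plan is to reduce the monotone $b$-Lipschitz case to the already-proved \Cref{app:thm:main-monotone} by replacing $\sigma$ with a monotone $(B,L)$-Regular activation $\tilde{\sigma}$ that is $L_2$-close to $\sigma$, running the main algorithm on $\tilde{\sigma}$, and transferring the guarantee back. By Part 3 of \Cref{app:lem:activation-truncation-lip-and-mon-jump}, any $b$-Lipschitz $\sigma$ belongs to $\mathcal{H}_\eps(B, b)$ with $B = c b \log^{1/2}(b/\eps)$, so there exists $\tilde{\sigma} \in \mathcal{H}(B, b)$ with $\|\sigma - \tilde{\sigma}\|_{L_2}^2 \leq \eps$.

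First I would verify that the witnessing $\tilde{\sigma}$ is monotone---this is what makes \Cref{app:thm:main-monotone} applicable. The proof of Part 3 produces $\tilde{\sigma}$ by clipping: $\tilde{\sigma}(z) = \mathrm{sgn}(\sigma(z))\min(|\sigma(z)|, B) = \max(-B, \min(B, \sigma(z)))$. Both the inner and outer clipping maps are monotone non-decreasing, and their composition with a monotone non-decreasing $\sigma$ is again monotone non-decreasing. Hence $\tilde{\sigma}$ is a monotone $(B, b)$-Regular activation, so the hypotheses of \Cref{app:thm:main-monotone} are met.

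Next I would invoke \Cref{app:thm:main-monotone} on $\tilde{\sigma}$ in place of $\sigma$. Substituting $B = c b \log^{1/2}(b/\eps)$ and $L = b$ into the sample bound $\tilde{\Theta}(dB^2\log(L/\eps)/\eps + d/\eps^2)$ absorbs the polylogarithmic factors in $b/\eps$ into the $\tilde{\Theta}$ notation, yielding the claimed sample complexity $\tilde{\Theta}(db^2/\eps + d/\eps^2)$. The algorithm returns a unit vector $\wh{\w}$ satisfying $\E_{(\x,y)\sim\D}[(\tilde{\sigma}(\wh{\w}\cdot\x) - y)^2] \leq C\,\opt_{\tilde{\sigma}} + \eps$, where $\opt_{\tilde{\sigma}} \eqdef \min_{\|\w\|_2 = 1}\E[(\tilde{\sigma}(\w\cdot\x) - y)^2]$.

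Finally I would translate the $L_2^2$ error from $\tilde{\sigma}$ back to $\sigma$ using Young's inequality and the $L_2$-closeness $\|\sigma - \tilde{\sigma}\|_{L_2}^2 \leq \eps$. Plugging $\wstar$ as a test vector into $\opt_{\tilde{\sigma}}$ gives $\opt_{\tilde{\sigma}} \leq 2\opt + 2\eps$, and symmetrically $\calL(\wh{\w}) \leq 2\E[(\tilde{\sigma}(\wh{\w}\cdot\x) - y)^2] + 2\eps \leq 2C\,\opt_{\tilde{\sigma}} + (2C + 2)\eps \leq 4C\,\opt + O(\eps)$. Rescaling $\eps$ by an absolute constant then gives the claimed bound $C'\opt + \eps$ with an absolute constant $C'$ independent of $\eps, d, b$. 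The only non-mechanical step is the monotonicity check for the truncation, which as argued above is immediate; the rest is a direct invocation of \Cref{app:thm:main-monotone} together with elementary triangle-inequality arguments, so no substantive new obstacle arises.
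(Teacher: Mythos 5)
Your proposal is correct and follows essentially the same route as the paper, which derives the corollary by invoking \Cref{app:lem:activation-truncation-lip-and-mon-jump} (Part 3) to get a truncated $\bar{\sigma}\in\mathcal{H}(cb\log^{1/2}(b/\eps),b)$ with $\|\sigma-\bar{\sigma}\|_{L_2}^2\leq\eps$ and then applying \Cref{app:thm:main-monotone} to $\bar{\sigma}$. The details you spell out---monotonicity of the clipping and the Young's-inequality transfer of the error guarantee back to $\sigma$---are exactly the arguments the paper leaves implicit here (they appear explicitly in \Cref{app:claim:bound-y} and \Cref{app:claim:bounded-support-sigma'}).
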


Similarly, if $\sigma$ has bounded $2+\zeta$ moment $\Ez[\sigma^{2+\zeta}(z)] \leq B_{\sigma}$, then according to \Cref{app:lem:activation-truncation-2+zeta} we know that $\sigma$ is an $\eps$-Extended $((B_{\sigma}/\eps)^{1/\zeta},(B_{\sigma}/\eps)^{4/\zeta}/\eps^2)$-Regular activation. Therefore, replacing $B$ with $(B_{\sigma}/\eps)^{1/\zeta}$ and replace $L$ with $(B_{\sigma}/\eps)^{4/\zeta}/\eps^2$ in \Cref{app:thm:main-monotone}, we obtain:
    \begin{corollary}[Learning Monotone Activations With Bounded $2+\zeta$  Moments]\label{app:cor:main-monotone-4mom}
        Let $\eps>0$, and let $\sigma$ be a monotone activation that satisfies $\Ez[\sigma^{2+\zeta}(z)] \leq B_{\sigma}$. 
    Then, \Cref{app:alg:GD-general-activation} 
    draws $N = \tilde{\Theta}(d(B_{\sigma}/\eps)^{2/\zeta}\log(B_{\sigma}/\eps)/{\eps} + d/\eps^2)$ samples, 
    runs in $\poly(d, N)$ time, 
    and returns a vector $\wh{\w}$ such that with probability at least $2/3$, 
    it holds that $\calL(\wh{\w}) \leq C\opt + \eps$, where $C$ is an absolute constant independent of $\eps, d, B_{\sigma}, L$.
    \end{corollary}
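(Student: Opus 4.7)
The plan is to reduce the $(2+\zeta)$-moment setting to the $(B,L)$-regular monotone setting already handled by \Cref{app:thm:main-monotone}, via the extended regularity reduction established in \Cref{app:lem:activation-truncation-2+zeta}. Concretely, I would first invoke \Cref{app:lem:activation-truncation-2+zeta}: since $\sigma$ is monotone with $\Ez[\sigma^{2+\zeta}(z)] \leq B_\sigma$, there exists an activation $\tilde{\sigma} \in \mathcal{H}(B', L')$ with $\|\sigma - \tilde\sigma\|_{L_2}^2 \leq \eps_0$, where $B' = O((B_\sigma/\eps_0)^{1/\zeta})$ and $L' = O((B_\sigma/\eps_0)^{4/\zeta}/\eps_0^2)$, for a parameter $\eps_0$ to be chosen of order $\eps$ up to absolute constants. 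Crucially, the truncation/smoothing construction in \Cref{app:lem:activation-truncation-2+zeta} preserves monotonicity, so $\tilde\sigma$ is a monotone $(B',L')$-regular activation and \Cref{app:thm:main-monotone} applies.

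Next I would run \Cref{alg:GD-general-activation} with the activation $\tilde\sigma$ and target accuracy $\eps_0$. By \Cref{app:thm:main-monotone}, this uses $\tilde{\Theta}(dB'^2 \log(L'/\eps_0)/\eps_0 + d/\eps_0^2)$ samples, runs in $\poly(d,N)$ time, and returns $\wh{\w}$ with $\calL_{\tilde\sigma}(\wh{\w}) \leq C \opt_{\tilde\sigma} + \eps_0$ with probability at least $2/3$, where $\calL_{\tilde\sigma}$ and $\opt_{\tilde\sigma}$ are taken with respect to $\tilde\sigma$. Substituting the values of $B'$ and $L'$ yields sample complexity $\tilde{\Theta}(d(B_\sigma/\eps_0)^{2/\zeta} \log(B_\sigma/\eps_0)/\eps_0 + d/\eps_0^2)$, matching the claimed bound once $\eps_0 = \Theta(\eps)$.

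The remaining step is to translate the error guarantee from $\tilde\sigma$ back to $\sigma$. Using $\|\sigma - \tilde\sigma\|_{L_2}^2 \leq \eps_0$ and Young's inequality:
\begin{align*}
\opt_{\tilde\sigma} &\leq 2\opt_\sigma + 2\Ex[(\sigma(\wstar\cdot\x) - \tilde\sigma(\wstar\cdot\x))^2] \leq 2\opt_\sigma + 2\eps_0,\\
\calL_\sigma(\wh{\w}) &\leq 2\calL_{\tilde\sigma}(\wh{\w}) + 2\Ex[(\sigma(\wh{\w}\cdot\x) - \tilde\sigma(\wh{\w}\cdot\x))^2] \leq 2\calL_{\tilde\sigma}(\wh{\w}) + 2\eps_0.
\end{align*}
Chaining these gives $\calL_\sigma(\wh{\w}) \leq 4C\opt_\sigma + (4C+6)\eps_0$. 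Choosing $\eps_0 = \eps/(4C+6)$ yields the desired bound $\calL_\sigma(\wh{\w}) \leq C'\opt + \eps$ with a new absolute constant $C'$, and only inflates the sample complexity by constants in $\zeta$, preserving the stated $\tilde\Theta$ expression.

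The bulk of the work is already encapsulated in \Cref{app:lem:activation-truncation-2+zeta} and \Cref{app:thm:main-monotone}; the only subtlety I expect is making sure the constant blow-up in the $\eps$-rescaling (from $\eps$ to $\eps/(4C+6)$) and in the reduction (the factor $4$ in $B_\sigma/(4\eps)$ inside the statement of \Cref{app:lem:activation-truncation-2+zeta}) can be absorbed into the $\tilde\Theta(\cdot)$ notation and into the universal constant $C$. Because these are all constant-factor adjustments independent of $d$, $\eps$, and $B_\sigma$, they do not affect either the form of the sample complexity or the constant-factor approximation guarantee, so the corollary follows directly.
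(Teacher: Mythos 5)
Your proposal is correct and follows essentially the same route as the paper: invoke \Cref{app:lem:activation-truncation-2+zeta} to replace $\sigma$ by a monotone $(B',L')$-regular surrogate $\tilde\sigma$ with $B'=O((B_\sigma/\eps)^{1/\zeta})$ and $L'=O((B_\sigma/\eps)^{4/\zeta}/\eps^2)$, then apply \Cref{app:thm:main-monotone} with these parameters. The paper states this substitution without elaboration; your explicit Young's-inequality translation of $\opt$ and $\calL$ between $\sigma$ and $\tilde\sigma$ (and the observation that truncation and \OU smoothing preserve monotonicity) is exactly the implicit content of that step, matching the analogous argument the paper spells out in \Cref{app:claim:bounded-support-sigma'}.
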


The main contents of this section are the following:
To prove \Cref{app:prop:initialization}, we need to combine two main technical pieces: (1) proving that there exists a threshold $\theta_0$ such that for any $\theta\leq \theta_0$,  $\|\Pm{>1/\theta^2}\sigma\|_{L_2}^2\lesssim \sin^2\theta\|\Tre_{\cos\theta}\sigma'\|_{L_2}^2$; (2) proving that there exists an efficient algorithm that finds a parameter $\w^{(0)}$ such that $\theta(\w^{(0)},\w^*)\leq \theta_0$.

\Cref{app:subsec:bound-high-order-coefficients} is devoted to the proof of (1), i.e., that there exists $\theta_0$ such that for $\theta \leq \theta_0,$ $\|\Pm{>1/\theta^2}\sigma\|_{L_2}^2\lesssim \sin^2\theta\|\Tre_{\cos\theta}\sigma'\|_{L_2}^2$ (\Cref{app:prop:error-bound-smoothing-tails}). Unfortunately, it was technically hard to prove this claim directly for all monotone functions due to the versatility of such functions. Hence, the natural idea is that if we can prove (1) for a sequence of simple and `nice' functions $\Phi_k$ that can converge to $\sigma$, then by the convergence theorems the desired claim will also hold true for $\sigma$. In particular, let $\Phi_k$ be a sequence of functions; then, one can show that the higher order coefficients can be bounded by
\begin{equation*}
    \|\Pm{>1/\theta^2}\sigma\|_{L_2}^2\leq 2\|\sigma-\Phi_k\|_{L_2}^2+4\|\Phi_k-\Tr \Phi_k\|_{L_2}^2+4\theta^2\|\Tr \Phi_k'\|_{L_2}^2.
\end{equation*}
If $\Phi_k$ converges to $\sigma$ pointwise, one can show that the first term above goes to $0$ and the second term converges to $\theta^2\|\Tre_{\cos\theta}\sigma'\|_{L_2}^2$, which is the bound we are looking for. Thus, it remains to show that $\|\Phi_k-\Tr \Phi_k\|_{L_2}^2\lesssim \theta^2\|\Tr \Phi_k'\|_{L_2}^2$.

\Cref{app:subsection:bounding-aug-error} proves the claim that $\|\Phi-\Tr \Phi\|_{L_2}^2\lesssim \theta^2\|\Tr \Phi'\|_{L_2}^2$ (\Cref{app:prop:smoothing-error-bound}), for any $\Phi(z)$ that is a {\itshape monotonic staircase function}:
\begin{definition}[Monotonic Staircase Functions]\label{app:def:staircase}
For simplicity, denote the indicator function $\1\{z\geq t\}$ by $\phi(z;t)$. Let $m$ be a positive integer and let $M > 0$. The monotonic 
 staircase functions (of $M$-bounded support) are defined by
    \begin{equation*}
    \calF\eqdef \bigg\{\sum_{i=1}^m A_i\phi(z;t_i) + A_0: A_0\in\R; A_i > 0, |t_i|\leq M, \forall i\in[m]; m<\infty\bigg\}\;.
\end{equation*}
\end{definition}
These staircase functions constitute a dense subset of the monotone function class and have a simple and easy-to-analyze form, therefore they serve well for our purpose. However, though the staircase function $\Phi$ already takes a concise and simple expression, many technical difficulties arise when analyzing $\Tr\Phi(z) - \Phi(z)$, mainly due to the complicated form of $\Tr\Phi(z)$. Our workaround is to introduce a new type of smoothing/augmentation method, which we call centered augmentation, defined by $\Tr(\Phi(z/\rho))$. This recentered augmentation takes a much simpler form compared to $\Tr\Phi(z)$. In particular, we show that when the smoothing parameter $\rho$ is not too small, namely, when $1 - \rho^2\leq O(1/\log(1/\eps))$, then: (i) the $L_2^2$ distance between $\Tr\Phi(z/\rho)$ and $\Phi(z)$ can be bounded above by $(1 - \rho^2)\|\Tr\Phi'(z/\rho)\|_{L_2}^2$ (\Cref{app:lem:upper-bound-centered-smoothed-error}); (ii) the $L_2^2$ distance between $\Tr\Phi(z/\rho)$
 and $\Tr\Phi(z)$ can be bounded above by $(1 - \rho^2)(\|\Tr\Phi'(z/\rho)\|_{L_2}^2 + \|\Tr\Phi'(z)\|_{L_2}^2)$ (\Cref{app:lem:bound-the-differece-TrPhi-TrPhi(z/rho)}); (iii) finally, choosing the smoothing strength $\rho_1$ slightly larger than $\rho$, we have $\|\Tre_{\rho_1}\Phi'(z/\rho_1)\|_{L_2}^2\lesssim \|\Tr\Phi'(z)\|_{L_2}^2$ (\Cref{app:lem:upper-bound-centered-smoothed-derivative-squared}). Combining these 3 results on the relations between $\Tr\Phi(z/\rho)$ and $\Tr\Phi(z)$, we prove \Cref{app:prop:smoothing-error-bound} 
 in \Cref{app:subsec:proof-of-smoothing-error-bound}, completing the last piece of the puzzle in the proof of \Cref{app:prop:error-bound-smoothing-tails}.

Finally, in \Cref{app:subsec:initialization-for-monotone}, we prove (2) by providing an {\itshape SQ} initialization algorithm. The main idea is to transform the labels $y$ to $\tilde{y} = \mathcal{T}(y)\eqdef \1\{y\geq t'\}$ for a carefully chosen threshold $t'$. Then, we show that there exists a halfspace $\phi(\w^*\cdot\x; t) = \1\{\w^*\cdot\x\geq t\}$ such that the transformed labels $\tilde{y}$ can be viewed as the corrupted labels of $\phi(\w^*\cdot\x;t)$. Then, utilizing the algorithm for learning halfspaces~\cite{DKTZ22b}, we can obtain an initial vector $\w^{(0)}$ such that $\theta(\w^{(0)},\w^*)\leq \theta_0$. 

Finally, combining (1) and (2), we prove \Cref{app:prop:initialization}.

\subsection{Bounding Higher Order Hermite Coefficients of Monotone Activations}\label{app:subsec:bound-high-order-coefficients}
The main result of this section is the following:\begin{proposition}[From Hermite Tails to \OU Semigroup]\label{app:prop:error-bound-smoothing-tails}
    Let $\sigma:\R\to\R$ be a monotone activation and $\sigma\in L_2(\mathcal N)$.
    Let $M$ be the upper bound for the support of $\sigma'(z)$,\footnote{In \Cref{app:claim:bounded-support-sigma'}, we show that for any $\sigma\in\mathcal{H}(B,L)$, the support of $\sigma'$ can always be bounded by $M\lesssim \sqrt{\log(B/\eps) - \log\log(B/\eps)}$.} i.e., $\forall z\in\R$ such that $|z|\geq M$, we have $\sigma'(z)= 0$. For any $\theta\in[0,\pi]$ such that $1 - C/M^2<\cos^2\theta$ with $C>0$ an absolute constant, it holds that $\|\Pm{>1/\theta^2}\sigma\|_{L_2}^2\lesssim \sin^2\theta\|\Tre_{\cos\theta}\sigma'\|_{L_2}^2$. \end{proposition}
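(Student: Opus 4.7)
The plan is to reduce the statement about the general monotone activation $\sigma$ to the corresponding statement about monotone staircase functions $\Phi \in \calF$, for which the key augmentation error bound \Cref{app:prop:smoothing-error-bound} (to be proved separately) applies. Concretely, since $\sigma'$ is supported in $[-M,M]$ and $\sigma$ is monotone, it is a bounded monotone function on a compact interval, so it can be approximated pointwise (and, by dominated convergence, in $L_2(\calN)$) by an increasing sequence of monotone staircase functions $\Phi_k \in \calF$. The goal is to fix $\rho = \cos\theta$, prove the desired inequality for each $\Phi_k$ with constants independent of $k$, and then pass to the limit.

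First I would write the telescoping decomposition
\begin{align*}
\|\Pm{>1/\theta^2}\sigma\|_{L_2}^2 \le 4\|\Pm{>1/\theta^2}(\sigma-\Phi_k)\|_{L_2}^2 + 4\|\Pm{>1/\theta^2}(\Phi_k-\Tr\Phi_k)\|_{L_2}^2 + 4\|\Pm{>1/\theta^2}\Tr\Phi_k\|_{L_2}^2
\end{align*}
via two applications of Young's inequality. The first term vanishes as $k\to\infty$ because $\|\Pm{>m} f\|_{L_2}\le\|f\|_{L_2}$ and $\|\sigma-\Phi_k\|_{L_2}\to 0$. For the third term I would use the Hermite-tail bound $\|\Pm{>m} f\|_{L_2}^2 \le (1/m)\|f'\|_{L_2}^2$ (which follows from $(\he_i)' = \sqrt{i}\,\he_{i-1}$) applied to $f = \Tr\Phi_k$, together with $(\Tr\Phi_k)' = \rho \Tr\Phi_k'$ and $|\rho|<1$, to obtain $\|\Pm{>1/\theta^2}\Tr\Phi_k\|_{L_2}^2 \le \theta^2\|\Tr\Phi_k'\|_{L_2}^2$. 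For the middle term I would just bound it by $\|\Phi_k - \Tr\Phi_k\|_{L_2}^2$ and invoke \Cref{app:prop:smoothing-error-bound} (valid because the support of each $\Phi_k'$ lies in $[-M,M]$ and $\rho^2 = \cos^2\theta \ge 1 - C/M^2$) to get $\|\Phi_k-\Tr\Phi_k\|_{L_2}^2 \lesssim (1-\rho^2)\|\Tr\Phi_k'\|_{L_2}^2 = \sin^2\theta\,\|\Tr\Phi_k'\|_{L_2}^2$.

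Combining these, for every sufficiently large $k$,
\begin{equation*}
\|\Pm{>1/\theta^2}\sigma\|_{L_2}^2 \;\le\; o_k(1) + C_1\sin^2\theta\,\|\Tr\Phi_k'\|_{L_2}^2,
\end{equation*}
where the $o_k(1)$ term vanishes as $k\to\infty$ and $C_1$ is an absolute constant. The final (and most delicate) step will be to pass to the limit in $\|\Tr\Phi_k'\|_{L_2}^2$ and show it converges to $\|\Tr\sigma'\|_{L_2}^2$, which is the stated right-hand side. This is the main obstacle: the $\Phi_k'$ are distributions (finite sums of Dirac masses), not $L_2$ functions, so $\|\Tr\Phi_k'\|_{L_2}$ is defined via the explicit integral representation of $\Tr$ applied to signed measures, and one must argue that smoothing by the \OU semigroup regularizes these derivatives into $L_2$ functions that converge to $\Tr\sigma'$ in $L_2(\calN)$. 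I would handle this via an auxiliary lemma (promised as \Cref{app:lem:norm-derivative-sequence}): integrate by parts or use Stein's lemma to rewrite $\Tr\Phi_k'(z)$ as $(\rho/\sqrt{1-\rho^2})\,\E_u[(\Phi_k(\rho z + \sqrt{1-\rho^2}u) - \Phi_k(\rho z))\,u]$ and invoke dominated convergence (the dominating envelope exists because $\Phi_k \to \sigma$ monotonically and $|\sigma| \le B$), concluding $\|\Tr\Phi_k'\|_{L_2}^2 \to \|\Tr\sigma'\|_{L_2}^2$. Taking $k\to\infty$ with $\rho = \cos\theta$ then completes the proof.
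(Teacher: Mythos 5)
Your proposal is correct and follows essentially the same route as the paper: the same Young's-inequality decomposition into the three terms, the same non-expansiveness and Hermite-tail bounds, the same invocation of \Cref{app:prop:smoothing-error-bound} for the staircase approximants, and the same limiting argument for $\|\Tr\Phi_k'\|_{L_2}^2\to\|\Tr\sigma'\|_{L_2}^2$ (the paper's \Cref{app:lem:norm-derivative-sequence} does this via the Stein-lemma representation $\Tr f'(z)=(1-\rho^2)^{-1/2}\E_t[f(\rho z+\sqrt{1-\rho^2}\,t)\,t]$ and Cauchy--Schwarz, yielding the quantitative bound $\|\Tr\Phi_k'-\Tr\sigma'\|_{L_2}^2\le(1-\rho^2)^{-1}\|\Phi_k-\sigma\|_{L_2}^2$ rather than dominated convergence, and your representation carries a spurious factor of $\rho$, but these are cosmetic). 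No substantive gap.
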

\begin{proof}
Instead of proving \Cref{app:prop:error-bound-smoothing-tails} directly  for the activation $\sigma$, we chose another function $\Phi$ that works as a surrogate for $\sigma$ and satisfies certain regularity properties.   Let $\Phi$ be any function in $L_2(\normal)$, then by Young's inequality we have that
    \begin{align*}
        \|\Pm{>1/\theta^2}\sigma\|_{L_2}^2&\leq 2\|\Pm{>1/\theta^2}(\sigma-\Phi)\|_{L_2}^2+2\|\Pm{>1/\theta^2}\Phi\|_{L_2}^2
        \\&\leq 2\|\Pm{>1/\theta^2}(\sigma-\Phi)\|_{L_2}^2+4\|\Pm{>1/\theta^2}(\Phi-\Tr \Phi)\|_{L_2}^2+4\|\Pm{>1/\theta^2}\Tr \Phi\|_{L_2}^2\;.
    \end{align*}
    Observe that $\Pm{>m}$ is a non-expansive operator since for any $f\in L_2(\calN)$, $f(z)\doteq \sum_{i\geq 0} a_i\he_i(z)$ it holds
    \begin{equation*}
        \|\Pm{>m}f\|_{L_2}^2= \sum_{i > m} a_i^2\leq \sum_{i\geq 0} a_i^2 = \|f\|_{L_2}^2.
    \end{equation*}
    Therefore, $\|\Pm{>1/\theta^2}(\sigma-\Phi)\|_{L_2}^2\leq \|\sigma - \Phi\|_{L_2}^2$. In addition, note that we have the following inequality for any $f,f'\in L_2(\calN)$:
    \begin{align*}
        \|\Pm{>m} f\|_2^2 = \sum_{i > m} a_i^2\leq \sum_{i > m} (i/m) a_i^2\leq \sum_{i=1}^m (i/m) a_i^2 +  \sum_{i > m} (i/m) a_i^2 = (1/m)\| f'\|_{L_2}^2, 
    \end{align*}
    therefore $\|\Pm{>1/\theta^2}\Tr\Phi\|_{L_2}^2\leq \theta^2\|(\Tr\Phi)'\|_{L_2}^2$. Finally by \Cref{fct:semi-group} we have $\|(\Tr\Phi)'\|_{L_2}^2 = \|\rho\Tr\Phi'\|_{L_2}^2\leq \|\Tr\Phi'\|_{L_2}^2$ since $\rho < 1$, thus, it holds
    
\begin{align}
      \|\Pm{>1/\theta^2}\sigma\|_{L_2}^2\leq 2\|\sigma-\Phi\|_{L_2}^2+4\|\Phi-\Tr \Phi\|_{L_2}^2+4\theta^2\|\Tr \Phi'\|_{L_2}^2\;.\label{app:eq:limit-theorem}
  \end{align}  
  Let $\Phi_k$ be any sequence of functions such that $\lim_{k \to \infty}\|\Phi_k-\sigma\|_{L_2}=0$. For this sequence we have that \Cref{app:eq:limit-theorem} becomes
    \begin{align}
      \|\Pm{>1/\theta^2}\sigma\|_{L_2}^2\leq 2\|\sigma-\Phi_k\|_{L_2}^2+4\|\Phi_k-\Tr \Phi_k\|_{L_2}^2+4\theta^2\|\Tr \Phi_k'\|_{L_2}^2\;.\label{app:eq:limit-theorem-sequence}
  \end{align} 
    In particular, let $\Phi_k$ be a sequence of staircase monotonic functions (see \Cref{app:def:staircase}) that converges to $\sigma$ uniformly; then, for $\rho^2 \geq 1 - C/M^2$ where $M$ is the upper bound on the support of $\sigma'$ (which is also the upper bound on the support of all $\Phi_k'$'s) and $C$ is an absolute constant, from \Cref{app:prop:smoothing-error-bound}, we conclude that $\|\Phi_k-\Tr\Phi_k\|_{L_2}^2\lesssim (1-\rho^2)\|\Tr \Phi_k'\|_{L_2} $ and therefore we have that
        \begin{align}
      \|\Pm{>1/\theta^2}\sigma\|_{L_2}^2\leq 2\|\sigma-\Phi_k\|_{L_2}^2+4((1-\rho^2)+\theta^2)\|\Tr \Phi_k'\|_{L_2}^2\;.\label{app:eq:limit-theorem-sequence-final}
  \end{align} 
  Our next goal is to show that the sequence of smoothed derivatives $\Tr \Phi_k'$ also converge to $\sigma'$, as stated in the following lemma.

\begin{lemma}[Convergence of Derivatives]\label{app:lem:norm-derivative-sequence} Let $\sigma:\R\to\R$, $\sigma\in L_2(\normal)$, and let $\Phi_k:\R\to\R$ be a sequence of functions such that $\|\sigma-\Phi_k\|_{L_2}\to 0$
as $k\to \infty$. Then, for any $\rho\in(0,1)$, it holds that
\[
\|\Tr\Phi'_k-\Tr \sigma'\|_{L_2}\to 0,\; \text{ as } k\to \infty.
\]
\end{lemma}
\begin{proof}
    For any function $f\in L_2(\normal)$, we have that (by the definition of \OU semigroup and Stein's lemma, stated in  \Cref{fct:stein})\begin{align*}
     \Tr f'(z)=\frac{1}{\sqrt{1-\rho^2}}\E_{t\sim \normal}[f(\rho z+\sqrt{(1-\rho^2)}t)t]\;.
 \end{align*}
 Therefore, we have that
 \begin{align*}
     \|\Tr\Phi'_k-\Tr \sigma'\|_{L_2}^2&=\frac{1}{{1-\rho^2}}\E_{z\sim \normal}\left[ \left(\E_{t\sim \normal}\left[\left(\Phi_k(\rho z+\sqrt{(1-\rho^2)}t)-\sigma(\rho z+\sqrt{(1-\rho^2)}t) \right)t\right]\right)^2\right]
     \\&\leq\frac{1}{{1-\rho^2}}\E_{z\sim \normal}\left[ \E_{t\sim \normal}\left[\left(\Phi_k(\rho z+\sqrt{(1-\rho^2)}t)-\sigma(\rho z+\sqrt{(1-\rho^2)}t) \right)^2\right] \E_{t\sim \normal}\left[t^2\right]\right]
     \\&=\frac{1}{{1-\rho^2}}\E_{z\sim \normal}\left[ \E_{t\sim \normal}\left[\left(\Phi_k(\rho z+\sqrt{(1-\rho^2)}t)-\sigma(\rho z+\sqrt{(1-\rho^2)}t) \right)^2\right]\right]
     \\&=\frac{1}{{1-\rho^2}}\E_{z\sim \normal}\left[ \left(\Phi_k(z)-\sigma(z) \right)^2\right]\;,
 \end{align*}
where the inequality is by the \CS inequality and the last inequality is by $\rho z+\sqrt{(1-\rho^2)}t \sim \normal(0,1)$ for independent $z\sim \normal(0, 1),\, t\sim \normal(0, 1)$. In remains to take the limit with $k\to\infty$.
 \end{proof}
Combining \Cref{app:lem:norm-derivative-sequence} with \Cref{app:eq:limit-theorem-sequence-final}, and letting $\rho = \cos\theta$ now completes the proof of \Cref{app:prop:error-bound-smoothing-tails}.  
\end{proof}

We recall that the assumption that the support of $\sigma'(z)$ is bounded by $M<+\infty$ is without loss of generality, as we have proved in \Cref{app:claim:bounded-support-sigma'}.

\subsection{Bounding the Augmentation Error}\label{app:subsection:bounding-aug-error}

In this subsection, we prove the main technical result, which provides an upper bound on the smoothing error of piecewise staircase functions using the $L_2(\calN)$ norm of the smoothed derivative. We recall the class of the piecewise staircase functions $\calF$ below:
\begin{equation*}
    \calF\eqdef \bigg\{\sum_{i=1}^m A_i\phi(z;t_i) + A_0: A_0\in\R; A_i > 0, |t_i|\leq M, \forall i\in[m]; m<\infty\bigg\}\;.
\end{equation*}
Our result is the following proposition:
\begin{restatable}{proposition}{SmoothgingErrorBound}\label{app:prop:smoothing-error-bound}
    Let $\Phi\in\calF$ be any staircase function that is consists of $m$ indicator functions with thresholds $t_i$, $i\in[m]$, and suppose $|t_i| \leq M$ for all $i\in[m]$, where $1< M<+\infty$. For any $\rho\in(0,1)$ such that $\rho^2\geq 1 - C/M^2$ where $C<M^2/4$ is an absolute constant, we have 
    \begin{equation*}
        \Ez[(\Tr\Phi(z)-\Phi(z))^2]\lesssim (1-\rho^2) \Ez[(\Tr\Phi'(z))^2].
    \end{equation*}
\end{restatable}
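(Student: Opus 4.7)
\textbf{Proof plan for \Cref{app:prop:smoothing-error-bound}.} The direct obstacle is that for a staircase piece $\phi(z;t_i) = \1\{z \ge t_i\}$, the quantity $\Tr\phi(z;t_i) = \Pr_{u \sim \calN}[u \ge (t_i - \rho z)/\sqrt{1-\rho^2}]$ involves a Gaussian tail with argument that is not simply a translation of $z$, so $\Tr\Phi - \Phi$ does not admit a clean closed form. My plan is to circumvent this by introducing the ``centered augmentation'' $\Tre_{\rho}\Phi(z/\rho) = \E_{u \sim \calN}[\Phi(z + (\sqrt{1-\rho^2}/\rho)u)]$, which IS an honest Gaussian convolution of $\Phi$ (a translation-smoothing) and therefore admits the kind of pointwise manipulation that indicator decompositions require. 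The plan is to pass through this proxy at a slightly stronger smoothing level $\rho_1^2 = \rho^2 + C(1-\rho^2)/M^2 \in (\rho^2, 1)$, and then translate all the resulting bounds back in terms of $\|\Tre_\rho \Phi'\|_{L_2}^2$.

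Concretely, I will add and subtract $\Tre_{\rho_1}\Phi(z)$ and $\Tre_{\rho_1}\Phi(z/\rho_1)$ inside $\|\Tr\Phi(z) - \Phi(z)\|_{L_2}$, and apply $(a+b+c)^2 \le 4a^2 + 4b^2 + 2c^2$ to obtain $\Delta \le 4\Delta_1 + 4\Delta_2 + 2\Delta_3$ where
\begin{equation*}
\Delta_1 = \|\Tre_\rho \Phi - \Tre_{\rho_1}\Phi\|_{L_2}^2,\quad \Delta_2 = \|\Tre_{\rho_1}\Phi(z) - \Tre_{\rho_1}\Phi(z/\rho_1)\|_{L_2}^2,\quad \Delta_3 = \|\Tre_{\rho_1}\Phi(z/\rho_1) - \Phi(z)\|_{L_2}^2.
\end{equation*}
For $\Delta_1$, I will use the semigroup identity $\Tr = \Tre_{\rho/\rho_1}\Tre_{\rho_1}$ together with \Cref{clm:difference} applied to the already-smoothed function $f = \Tre_{\rho_1}\Phi$, which is differentiable with gradient $\rho_1 \Tre_{\rho_1}\Phi'$, giving $\Delta_1 \lesssim (1 - \rho/\rho_1)\|\Tre_{\rho_1}\Phi'\|_{L_2}^2$, and then I will collapse $1 - \rho/\rho_1 \lesssim 1-\rho^2$ for the chosen $\rho_1$. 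For $\Delta_3$ and $\Delta_2$, I will invoke the centered-augmentation estimates \eqref{eq:lem:upper-bound-centered-smoothed-error} and \eqref{eq:lem:bound-the-differece-TrPhi-TrPhi(z/rho)} from \Cref{lem:center-aug-all-results}, which give $\Delta_3 \lesssim ((1-\rho_1^2)/\rho_1^2)\|\Tre_{\rho_1}\Phi'(z/\rho_1)\|_{L_2}^2$ and $\Delta_2 \lesssim (1-\rho_1^2)(\|\Tre_{\rho_1}\Phi'(z/\rho_1)\|_{L_2}^2 + \|\Tre_{\rho_1}\Phi'\|_{L_2}^2)$. Finally I will apply \eqref{eq:lem:upper-bound-centered-smoothed-derivative-squared} to replace $\|\Tre_{\rho_1}\Phi'(z/\rho_1)\|_{L_2}^2$ by $O(\|\Tr\Phi'\|_{L_2}^2)$, and use the claim that $\|\Tre_{\rho_1}\Phi'\|_{L_2}^2 \lesssim \|\Tre_\rho \Phi'\|_{L_2}^2$ (provable because $\rho_1^2 - \rho^2 \le C(1-\rho^2)/M^2$ is small relative to $1-\rho^2$, so smoothing further only increases the $L_2$ norm by a bounded multiplicative factor).

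The main obstacle is the simultaneous tension in choosing $\rho_1$: making $\rho_1$ close to $\rho$ keeps $\|\Tre_{\rho_1}\Phi'\|_{L_2} \approx \|\Tre_\rho \Phi'\|_{L_2}$ (needed in $\Delta_1$ and the final tidying step), but making $\rho_1$ noticeably larger than $\rho$ is needed so that $\rho/\rho_1$ is bounded away from $1$ in a controlled quantitative way, and also so that $\Tre_{\rho_1}\Phi$ is smooth enough that Mehler-type / Stein-type estimates in the centered lemma yield the right polynomial-in-$(1-\rho^2)$ rate. The specific prescription $\rho_1^2 = \rho^2 + C(1-\rho^2)/M^2$ is designed so that $1 - \rho_1^2 \asymp 1 - \rho^2$ while the ``gap'' $\rho_1^2 - \rho^2$ is large enough (compared to $1-\rho_1^2$) that Claim~\ref{clm:difference} gives the sharp $(1-\rho^2)$ rate, which is why the assumption $\rho^2 \ge 1 - C/M^2$ with $C < M^2/4$ appears. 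I expect the verification of $\|\Tre_{\rho_1}\Phi'\|_{L_2}^2 \lesssim \|\Tre_\rho \Phi'\|_{L_2}^2$ under this specific $\rho_1$ (presumably stated in the appendix as \Cref{app:clm:||Tr rho1 Phi'||2<= eC ||Tr rho Phi'||2}) to be the only subtle ingredient beyond the mechanical triangle-inequality bookkeeping, since it is what forces all three pieces to collapse cleanly into the single RHS $\|\Tre_\rho \Phi'\|_{L_2}^2$ demanded by the proposition.
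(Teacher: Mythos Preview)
Your proposal is correct and follows essentially the same approach as the paper: the same three-term decomposition via $\Tre_{\rho_1}\Phi$ and $\Tre_{\rho_1}\Phi(z/\rho_1)$ with the same choice $\rho_1^2 = \rho^2 + C(1-\rho^2)/M^2$, the same use of \Cref{clm:difference} for $\Delta_1$, the centered-augmentation lemmas for $\Delta_2,\Delta_3$, and \Cref{app:clm:||Tr rho1 Phi'||2<= eC ||Tr rho Phi'||2} together with \eqref{eq:lem:upper-bound-centered-smoothed-derivative-squared} to collapse everything to $\|\Tre_\rho\Phi'\|_{L_2}^2$. One minor remark: in your discussion paragraph you say you want $\rho/\rho_1$ ``bounded away from $1$,'' but in fact you want (and use) the opposite---$1 - \rho/\rho_1 \lesssim (1-\rho^2)/M^2$ is small, which is exactly what makes $\Delta_1$ small; the real role of the specific $\rho_1$ is to make \eqref{eq:lem:upper-bound-centered-smoothed-derivative-squared} hold while keeping $1-\rho_1^2 \asymp 1-\rho^2$.
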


As we have remarked in the comment after the proof of \Cref{app:lem:activation-truncation-lip-and-mon-jump}, since $M$ is an upper bound on the support of $\sigma'$, we will assume without loss of generality throughout the rest of the paper that $M^2$ is larger than constant $4C$.

Some remarks about the staircase functions are in order. Observe first that according to \Cref{app:claim:bounded-support-sigma'}, when $\sigma$ is $(B,L)$-Regular, we can always bound $M$ by $\sqrt{2\log(4B^2/\eps) - \log\log(4B^2/\eps)}$. Next, for any function $\Phi\in\calF$, its derivative can be written as:
\begin{equation*}
    \Phi'(z) = \sum_{i=1}^m A_i\phi'(z;t_i) = \sum_{i=1}^m A_i\delta(z - t_i),
\end{equation*}
where $\delta(z - t_i)$ is the Dirac delta function. Certainly, when $|z|\geq M$ we have $\Phi'(z) = 0$. Also note that for any non-decreasing function $\sigma$ with the support of its derivative $\sigma'(z)$ bounded by $M$, there exists a sequence of staircase functions $\Phi_k\in\calF$ such that $\Phi_k$ converges to $\sigma$ uniformly. To prove this claim, we note that since for any $|z|\geq M$, $\sigma'(z) = 0$, therefore $\sigma(z)=\sigma(M)$ when $z\geq M$ and $\sigma(z) = \sigma(-M)$ for all $z\leq -M$. Hence, let
\begin{equation*}\begin{split}
    &\Phi_k(z) = \sum_{i = 1}^m \frac{1}{k}\phi(z; t_i) + \sigma(-M), \;\\ &\text{where } \begin{cases}
        m = \lceil{\sigma(M) - \sigma(-M)}/k \rceil + 1,\\
        t_i = \min_{t\in[-M,M]}\{\sigma(t)\geq (i - 1)(1/k) + \sigma(-M)\}, \,i=1,\dots,m-1; \, t_m = M. 
    \end{cases}
\end{split}
\end{equation*}
By construction,  we have $|\Phi_k(z) - \sigma(z)|\leq 1/k$ for all $z\in\R$, therefore $\Phi_k$ converges to $\sigma$ uniformly.

To prove \Cref{app:prop:smoothing-error-bound}, we decompose $\Ez[(\Tr\Phi(z)-\Phi(z))^2]$ into the following terms and provide upper bounds on each term respectively:
\begin{align}\label{app:eq:bound-error-0}
    &\quad \Ez[(\Tr\Phi(z)-\Phi(z))^2] \nonumber\\
    &\leq 2\Ez[(\Tr\Phi(z) - \Tre_{\rho_1}\Phi(z/\rho_1))^2] + 2\Ez[(\Tre_{\rho_1}\Phi(z/\rho_1) - \Phi(z))^2] \nonumber\\
    &\lesssim \Ez[(\Tr\Phi(z) - \Tre_{\rho_1}\Phi(z))^2] + \Ez[(\Tre_{\rho_1}\Phi(z) - \Tre_{\rho_1}\Phi(z/\rho_1))^2]+ \Ez[(\Tre_{\rho_1}\Phi(z/\rho_1) - \Phi(z))^2],
\end{align}
where we repeatedly used the inequality $(a+b)^2\leq 2a^2 + 2b^2$. 
As we have discussed at the beginning of \Cref{app:sec:learn-monotone}, we introduced this `recentered smoothing' operator $\Tr\Phi(z/\rho)$ to overcome the difficulty of analyzing $\Tr\Phi(z)-\Phi(z)$, since  $\Tr\Phi(z/\rho)$ takes a more simple and easy-to-analyze form. Here, $\rho_1\in(0,1)$ is a carefully chosen smoothing parameter that is slightly larger than $\rho$, so that we can bound $\|\Tre_{\rho_1}\Phi'(z/\rho_1)\|_{L_2}^2$ above using $\|\Tr\Phi'(z)\|_{L_2}^2$ (\Cref{app:lem:upper-bound-centered-smoothed-derivative-squared}). 

Coming back to \Cref{app:eq:bound-error-0}, we show that: 
(1) the first term $\Ez[(\Tr\Phi(z) - \Tre_{\rho_1}\Phi(z))^2]$ can be bounded above by $(1 - \rho)\|\Tre_{\rho_1}\Phi'(z)\|_{L_2}^2$, using \Cref{clm:difference}; 
(2) the second term $\Ez[(\Tre_{\rho_1}\Phi(z) - \Tre_{\rho_1}\Phi(z/\rho_1))^2]$ is bounded above by $(1 - \rho)(\|\Tre_{\rho_1}\Phi'(z/\rho_1)\|_{L_2}^2 + \|\Tre_{\rho_1}\Phi'(z)\|_{L_2}^2)$, using \Cref{app:lem:bound-the-differece-TrPhi-TrPhi(z/rho)}; and 
(3) the third term $\Ez[(\Tre_{\rho_1}\Phi(z/\rho_1) - \Phi(z))^2]$  is bounded above by $(1 - \rho)\|\Tre_{\rho_1}\Phi'(z/\rho_1)\|_{L_2}^2$, using \Cref{app:lem:upper-bound-centered-smoothed-error}.

Thus, in summary, we have $\Ez[(\Tr\Phi(z)-\Phi(z))^2]\lesssim (1 - \rho)(\|\Tre_{\rho_1}\Phi'(z/\rho_1)\|_{L_2}^2 + \|\Tre_{\rho_1}\Phi'(z)\|_{L_2}^2)$. Since $\rho_1$ is chosen so that $\|\Tre_{\rho_1}\Phi'(z/\rho_1)\|_{L_2}^2\lesssim \|\Tre_{\rho}\Phi'(z)\|_{L_2}^2$ (see \Cref{app:lem:upper-bound-centered-smoothed-derivative-squared}), and furthermore, since it holds that $\|\Tre_{\rho_1}\Phi'(z)\|_{L_2}^2\lesssim \|\Tr\Phi'(z)\|_{L_2}$,  combining these results we prove that $\Ez[(\Tr\Phi(z)-\Phi(z))^2]\lesssim (1 - \rho)\|\Tre_{\rho}\Phi'(z)\|_{L_2}^2$.

We first derive an explicit expression for $\Ez[(\Tr\Phi'(z))^2]$, for any $\Phi\in\calF$.
\begin{lemma}\label{app:lem:expression-E[(Trho-Phi'(z))**2]}
    For any $\Phi\in\calF$, it holds that
    \begin{equation*}
        \Ez[(\Tr\Phi'(z))^2] = \sum_{i,j = 1}^m \frac{A_iA_j}{2\pi\sqrt{1 - \rho^4}}\exp\bigg(-\frac{t_i^2 + t_j^2}{2(1 - \rho^4)} + \frac{\rho^2 t_it_j}{1 - \rho^4}\bigg).
    \end{equation*}
\end{lemma}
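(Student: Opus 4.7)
The plan is a direct computation exploiting the fact that, in the distributional sense, the derivative of $\Phi \in \calF$ is a weighted sum of Dirac masses. Concretely, since $\Phi(z) = A_0 + \sum_{i=1}^m A_i \mathbf{1}\{z \geq t_i\}$, we have $\Phi'(z) = \sum_{i=1}^m A_i \delta(z - t_i)$. Plugging this into the definition $\Tr\Phi'(z) = \E_{u\sim \calN(0,1)}[\Phi'(\rho z + \sqrt{1-\rho^2}\,u)]$ and applying the change-of-variables rule $\delta(a u + b) = \delta(u + b/a)/|a|$ with $a = \sqrt{1-\rho^2}$ and $b = \rho z - t_i$ reduces each term to the Gaussian density evaluated at $u = (t_i - \rho z)/\sqrt{1-\rho^2}$. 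This yields the closed form
\begin{equation*}
\Tr\Phi'(z) \;=\; \sum_{i=1}^m \frac{A_i}{\sqrt{2\pi(1-\rho^2)}}\,\exp\!\left(-\frac{(t_i - \rho z)^2}{2(1-\rho^2)}\right).
\end{equation*}

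Next, I would square this expression to obtain a double sum and take the expectation over $z \sim \calN(0,1)$. Each term of the resulting sum produces an integrand proportional to three Gaussian exponentials in $z$: the factor $\exp(-z^2/2)$ from the Gaussian measure together with the two exponentials coming from the two copies of $\Tr\Phi'$. I would collect the $z$-dependent part of the exponent into a single quadratic of the form $-[(1+\rho^2)z^2 - 2\rho(t_i+t_j)z]/[2(1-\rho^2)]$, complete the square in $z$ around $c_{ij} = \rho(t_i+t_j)/(1+\rho^2)$, and evaluate the resulting standard Gaussian integral, which contributes a factor $\sqrt{2\pi(1-\rho^2)/(1+\rho^2)}$.

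The only step requiring care is the bookkeeping of the $(1-\rho^2)$ factors. On the one hand, the squared prefactor $1/[2\pi(1-\rho^2)]$ combined with the Gaussian normalizer $1/\sqrt{2\pi}$ and the width $\sqrt{2\pi(1-\rho^2)/(1+\rho^2)}$ coming from completing the square telescopes to exactly $1/[2\pi\sqrt{1-\rho^4}]$, matching the claimed prefactor. On the other hand, the $z$-free portion of the exponent equals $\rho^2(t_i+t_j)^2/[2(1-\rho^4)] - (t_i^2+t_j^2)/[2(1-\rho^2)]$; expanding $(t_i+t_j)^2$ and bringing the second term to the common denominator $2(1-\rho^4) = 2(1-\rho^2)(1+\rho^2)$ yields $[2\rho^2 t_i t_j - (t_i^2+t_j^2)]/[2(1-\rho^4)]$, which is exactly the claimed expression $-(t_i^2+t_j^2)/[2(1-\rho^4)] + \rho^2 t_i t_j/(1-\rho^4)$. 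I do not expect any genuine obstacle beyond this routine algebra; as a sanity check, one could alternatively derive the same formula via $(\Tr f)'(z) = \rho\,\Tr f'(z)$ applied to each indicator $\phi(\,\cdot\,;t_i)$ and Mehler's identity, but the direct delta-function computation is the cleanest route.
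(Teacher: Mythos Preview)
Your proposal is correct and follows essentially the same approach as the paper: both compute $\Tr\Phi'(z)$ explicitly by integrating the Dirac masses against the Gaussian kernel to obtain $\Tr\phi'(z;t_i) = (2\pi(1-\rho^2))^{-1/2}\exp(-(\rho z - t_i)^2/(2(1-\rho^2)))$, then square, expand, and evaluate the resulting Gaussian integral in $z$ (the paper quotes $\int e^{-az^2+bz}\,\mathrm{d}z = \sqrt{\pi/a}\,e^{b^2/4a}$ rather than completing the square, but this is the same computation). Your remark about the alternative Mehler route is also mirrored in the paper, which records the Hermite expansion of $\Tr\Phi'$ as a separate claim immediately after this lemma.
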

\begin{proof}
    By the linearity of the \OU semigroup, we have $\Tr\Phi'(z) = \sum_{i=1}^m A_i\Tr\phi'(z;t_i)$. In fact, each summand in this summation has an explicit expression, which we derive in the following:
    \begin{align}\label{app:eq:expression-Tr-phi'(z;t)}
        \Tr\phi'(z;t_i)&=\int_{-\infty}^{+\infty} \frac{1}{\sqrt{2\pi}} \phi'(\rho z + \sqrt{1 - \rho^2}u;t_i)\exp(-u^2/2)\diff{u} \nonumber\\
        &=\int_{-\infty}^{+\infty} \frac{1}{\sqrt{2\pi}} \delta(\rho z + \sqrt{1 - \rho^2}u - t_i)\exp(-u^2/2)\diff{u} \nonumber\\
        & = \frac{1}{\sqrt{2\pi(1 - \rho^2)}} \exp\bigg(-\frac{(\rho z - t_i)^2}{2(1 - \rho^2)}\bigg),
    \end{align}
    where we have used that $\delta$ is the Dirac delta function, and so  $\delta(u)$ satisfies $\delta(au) = \delta(u)/a$ for any real positive number $a$. 
Therefore, we get that
    \begin{align*}
    &\quad \Ez[(\Tr\Phi'(z))^2]\\
    & = \Ez\bigg[\sum_{i,j = 1}^m \frac{A_iA_j}{2\pi(1 - \rho^2)}\exp\bigg(-\frac{(\rho z - t_i)^2 + (\rho z - t_j)^2}{2(1 - \rho^2)}\bigg)\bigg]\\
        & = \sum_{i,j = 1}^m \frac{A_iA_j}{2\pi(1 - \rho^2)}\exp\bigg(-\frac{t_i^2 + t_j^2}{2(1 - \rho^2)}\bigg)\int_{-\infty}^{+\infty} \frac{1}{\sqrt{2\pi}} \exp\bigg(-\frac{\rho^2}{1 - \rho^2}z^2 + \frac{(t_i + t_j)\rho}{1 - \rho^2}z - \frac{z^2}{2}\bigg)
        \diff{z}\\
        &\overset{(i)}{=}\sum_{i,j = 1}^m \frac{A_iA_j}{2\pi(1 - \rho^2)}\exp\bigg(-\frac{t_i^2 + t_j^2}{2(1 - \rho^2)}\bigg)\sqrt{\frac{1 - \rho^2}{1 + \rho^2}}\exp\bigg(\frac{(t_i + t_j)^2\rho^2}{2(1 - \rho^4)}\bigg)\\
        & = \sum_{i,j = 1}^m \frac{A_iA_j}{2\pi\sqrt{1 - \rho^4}}\exp\bigg(-\frac{t_i^2 + t_j^2}{2(1 - \rho^4)} + \frac{\rho^2 t_it_j}{1 - \rho^4}\bigg),
    \end{align*}
    where in $(i)$ we used the fact that $\int \exp(-az^2 + bz)\diff{z} = \sqrt{\pi/a}\exp(b^2/4a)$.
\end{proof}

A byproduct of the above proof is that: 
\begin{claim}\label{app:claim:hermite-expansion-Trho-Phi'(z)}
    For any $\Phi\in\calF$, it holds 
\begin{equation*}
        \Tr\Phi'(z)=\sum_{k\geq 0}\rho^k\alpha_k\he_k(z),\;\text{where }\, \alpha_k = \sum_{i=1}^m \frac{A_i}{\sqrt{2\pi}}\exp(-t_i^2/2)\he_k(t_i).
    \end{equation*}
    Furthermore, the function $\zeta(\rho)\eqdef \Ez[(\Tr\Phi'(z))^2]$ is a non-decreasing function of $\rho\in(0,1)$.
\end{claim}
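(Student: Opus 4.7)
The plan is to derive the Hermite expansion by applying Mehler's identity (equation \eqref{fct:Mehler} in \Cref{fct:hermite}) to the explicit closed form of $\Tr\phi'(z;t_i)$ already computed in the proof of \Cref{app:lem:expression-E[(Trho-Phi'(z))**2]}. First I would use linearity of the \OU semigroup together with $\Phi'(z)=\sum_{i=1}^m A_i\delta(z-t_i)$ to write $\Tr\Phi'(z)=\sum_{i=1}^m A_i\,\Tr\phi'(z;t_i)$. Then, recalling from \Cref{app:eq:expression-Tr-phi'(z;t)} that
\[
\Tr\phi'(z;t_i)=\frac{1}{\sqrt{2\pi(1-\rho^2)}}\exp\!\bigg(-\frac{(\rho z-t_i)^2}{2(1-\rho^2)}\bigg),
\]
I would pull out a factor $\frac{1}{\sqrt{2\pi}}\exp(-t_i^2/2)$ to obtain
\[
\Tr\phi'(z;t_i)=\frac{\exp(-t_i^2/2)}{\sqrt{2\pi}}\cdot\frac{1}{\sqrt{1-\rho^2}}\exp\!\bigg(-\frac{(\rho z-t_i)^2}{2(1-\rho^2)}+\frac{t_i^2}{2}\bigg).
\]
The right-hand second factor matches the right-hand side of Mehler's identity (with $x=z$ and $y=t_i$), so it equals $\sum_{k\geq 0}\rho^k\he_k(z)\he_k(t_i)$.

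Substituting this back and swapping the finite sum over $i$ with the sum over $k$ (justified because there are only finitely many terms in $i$ and Mehler's series converges in $L_2(\normal)$ for $|\rho|<1$) yields
\[
\Tr\Phi'(z)=\sum_{k\geq 0}\rho^k\he_k(z)\sum_{i=1}^m\frac{A_i}{\sqrt{2\pi}}\exp(-t_i^2/2)\he_k(t_i)=\sum_{k\geq 0}\rho^k\alpha_k\he_k(z),
\]
with $\alpha_k$ as in the claim. Sanity check: Parseval applied to this expansion recovers the explicit formula for $\Ez[(\Tr\Phi'(z))^2]$ obtained in \Cref{app:lem:expression-E[(Trho-Phi'(z))**2]}, since the cross terms $\alpha_k^2$ expand into the double sum over $i,j$ after another application of Mehler's identity at $z=t_it_j$-type exponents; this agreement serves as a consistency check but is not strictly needed.

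For the second part, I would apply Parseval's identity (\Cref{fct:hermite}, part 1) directly to the expansion above to get $\zeta(\rho)=\sum_{k\geq 0}\rho^{2k}\alpha_k^2$. Since each $\alpha_k^2\geq 0$ and $\rho\mapsto\rho^{2k}$ is non-decreasing on $(0,1)$ for every $k\geq 0$, the sum $\zeta(\rho)$ is a non-decreasing function of $\rho\in(0,1)$. I do not anticipate any substantive obstacle here: the statement is essentially a direct combination of two standard identities, and the only care needed is in matching exponential prefactors when invoking Mehler's identity and in justifying the term-by-term manipulation of the Hermite series (which is immediate because the sum over $i$ is finite and Mehler's series converges absolutely in $L_2(\normal)$ for $\rho\in(0,1)$).
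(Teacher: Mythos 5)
Your proposal is correct and follows essentially the same route as the paper: both derive the expansion of $\Tr\phi'(z;t_i)$ by matching its Gaussian closed form from \Cref{app:eq:expression-Tr-phi'(z;t)} against Mehler's identity, sum over $i$ by linearity, and then read off $\zeta(\rho)=\sum_{k\geq 0}\rho^{2k}\alpha_k^2$ via Parseval to get monotonicity. The only difference is that you spell out the prefactor bookkeeping in the Mehler step, which the paper states as immediate.
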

\begin{proof}
It is easy to see that $\Tr\phi'(z;t_i)$ is square-integrable under the Gaussian measure, therefore the Hermite expansion of $\Tr\phi'(z;t_i)$ exists. In particular, using Mehler's formula (\Cref{fct:hermite}), we can derive the Hermite expansion of $\Tr\phi'(z;t_i)$ immediately:
    \begin{equation*}
        \Tr\phi'(z;t_i)= \frac{1}{\sqrt{2\pi}}\sum_{k\geq 0} \rho^k\exp(-t_i^2/2)\he_k(t_i)\he_k(z),
    \end{equation*}
    which then implies that it holds
    \begin{align}\label{app:eq:hermite-expansion-Trho-Phi'(z)}
        \Tr\Phi'(z)&=\sum_{i=1}^m \frac{A_i}{\sqrt{2\pi}} \sum_{k\geq 0} \rho^k\exp(-t_i^2/2)\he_k(t_i)\he_k(z)\nonumber\\
        &= \sum_{k\geq 0}\rho^k\bigg(\sum_{i=1}^m \frac{A_i}{\sqrt{2\pi}}\exp(-t_i^2/2)\he_k(t_i)\bigg)\he_k(z).
    \end{align}
For the monotonicity of $\zeta(\rho)$, observe that by the Hermite expansion of $\Tr\Phi'(z)$, we have 
    \begin{align*}
        \zeta(\rho) = \Ez[(\Tr\Phi'(z))^2] = \sum_{k\geq 0}\rho^{2k}\alpha_k^2,
    \end{align*}
    which is an increasing function of $\rho\in(0,1)$.
\end{proof}
\Cref{app:claim:hermite-expansion-Trho-Phi'(z)} implies that though $\Phi'(z)$ is not in $L_2(\calN)$ (since the square of the Dirac delta function $\delta^2(z)$ is not integrable), $\Tr\Phi'(z)$ is well-defined and is continuous and smooth. Consequently, all the facts presented in \Cref{fct:semi-group} apply to $\Tr\Phi'(z)$ as well.

Proceeding to the analysis of $\Ez[(\Tr\Phi(z)-\Phi(z))^2]$, 
however, technical difficulties arise when we try to relate $\Ez[(\Tr\Phi(z)-\Phi(z))^2]$ with $\Ez[(\Tr\Phi'(z))^2]$. The main obstacle is that it is hard to analyze $\Tr\phi(z;t)-\phi(z;t)$, since 
\begin{align*}
    \Tr\phi(z;t)-\phi(z;t) = \pr_{u\sim\calN}[u\geq (t - \rho z)/\sqrt{1- \rho^2}] - \1\{z\geq t\},
\end{align*}
and the probability term does not have a close form.
The workaround is to study the centered augmentation (centered smoothing), and then translate the upper bound on the centered augmentation error back to the upper bound on the standard augmentation error.
\subsubsection{Centered Augmentation}\label{app:subsub:centered-aug}
We define the centered augmentation as the following:
\begin{equation*}
    \Tr\sigma(z/\rho) = \E_{u\sim\calN}[\sigma(z + (\sqrt{1 - \rho^2}/\rho) u)].
\end{equation*}
Note that for the staircase functions $\Phi\in\calF$, it holds
\begin{align*}
    \Tr\Phi(z/\rho) &= \sum_{i = 1}^m A_i\E_{u\sim\calN}[\1\{z + (\sqrt{1 - \rho^2}/\rho) u\geq t\}] \\
    &= \sum_{i = 1}^m A_i\E_{u\sim\calN}[\1\{\rho z + \sqrt{1 - \rho^2} u\geq \rho t_i\}] =\sum_{i=1}^m A_i\Tr\phi(z;\rho t_i). 
\end{align*}

We first provide explicit expressions for $\Tr\Phi'(z/\rho)$ and $\Ez[(\Tr\Phi'(z/\rho))^2]$.
\begin{lemma}\label{app:lem:expression-E[(Trho-Phi'(z/rho))**2]}
    For any $\Phi(z) = \sum_{i=1}^m A_i\phi(z;t_i) + A_0 \in\calF$, we have 
    \begin{align*}
        \Tr\Phi'(z/\rho) &= \sum_{i=1}^m \frac{\rho A_i}{\sqrt{2\pi(1 - \rho^2)}} \exp\bigg(-\frac{\rho^2(z - t_i)^2}{2(1 - \rho^2)}\bigg), \text{ and }\\
        \Ez[(\Tr\Phi'(z/\rho))^2] &= \sum_{i,j = 1}^m \frac{\rho^2 A_iA_j}{2\pi\sqrt{1 - \rho^4}}\exp\bigg(-\frac{\rho^2(t_i^2 + t_j^2)}{2(1 - \rho^4)} + \frac{\rho^4 t_it_j}{1 - \rho^4}\bigg).
    \end{align*}
\end{lemma}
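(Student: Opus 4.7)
}]
The plan is to follow essentially the same strategy used in \Cref{app:lem:expression-E[(Trho-Phi'(z))**2]}, but adapted to the centered augmentation operator $\Tr f(z/\rho) = \E_{u \sim \normal}[f(z + (\sqrt{1-\rho^2}/\rho) u)]$. By linearity, it suffices to compute $\Tr \phi'(z/\rho; t_i)$ for a single indicator threshold $t_i$, since $\Phi'(z) = \sum_{i=1}^m A_i \delta(z - t_i)$ and the operator $\Tr(\cdot/\rho)$ is linear.

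First, I would evaluate $\Tr\phi'(z/\rho; t_i)$ by writing the integral
\[
\Tr\phi'(z/\rho; t_i) = \int_{-\infty}^{+\infty} \frac{1}{\sqrt{2\pi}} \delta\bigl(z + (\sqrt{1-\rho^2}/\rho) u - t_i\bigr) \exp(-u^2/2) \, \diff u,
\]
and then apply the scaling identity for the Dirac delta, $\delta(a u - b) = \delta(u - b/a)/|a|$, with $a = \sqrt{1-\rho^2}/\rho$ and $b = t_i - z$. This picks out the point $u = (t_i - z)\rho/\sqrt{1-\rho^2}$ and contributes a Jacobian factor of $\rho/\sqrt{1-\rho^2}$, yielding
\[
\Tr\phi'(z/\rho; t_i) = \frac{\rho}{\sqrt{2\pi(1-\rho^2)}} \exp\!\Bigl(-\frac{\rho^2 (z - t_i)^2}{2(1-\rho^2)}\Bigr),
\]
and summing over $i$ with the coefficients $A_i$ gives the first claimed identity.

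For the second identity, I would square the explicit expression, obtaining a double sum indexed by $(i,j)$, and then take the expectation against the standard Gaussian density on $z$. The integrand has exponent $-\frac{\rho^2}{2(1-\rho^2)}\bigl[(z-t_i)^2 + (z-t_j)^2\bigr] - \frac{z^2}{2}$. Expanding and collecting the terms that are quadratic, linear, and constant in $z$ gives a quadratic form in $z$ with leading coefficient $-\frac{1+\rho^2}{2(1-\rho^2)}$ and linear coefficient $\frac{\rho^2(t_i+t_j)}{1-\rho^2}$, plus a constant $-\frac{\rho^2(t_i^2+t_j^2)}{2(1-\rho^2)}$. I would then apply the standard Gaussian integral $\int e^{-a z^2 + b z} \, \diff z = \sqrt{\pi/a}\,\exp(b^2/4a)$, which produces the factor $\sqrt{(1-\rho^2)/(1+\rho^2)}$, combining with the prefactor $\rho^2/(2\pi(1-\rho^2))$ to give $\rho^2/(2\pi\sqrt{1-\rho^4})$.

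The only bookkeeping obstacle is simplifying the exponent after completing the square, namely verifying that
\[
\frac{\rho^4(t_i+t_j)^2}{2(1-\rho^4)} - \frac{\rho^2(t_i^2+t_j^2)}{2(1-\rho^2)} \;=\; \frac{\rho^4 t_i t_j}{1-\rho^4} - \frac{\rho^2(t_i^2+t_j^2)}{2(1-\rho^4)},
\]
which follows by bringing both terms on the left onto the common denominator $2(1-\rho^4) = 2(1-\rho^2)(1+\rho^2)$ and cancelling the $\rho^4(t_i^2+t_j^2)$ contributions. This yields the stated expression. Both formulas are direct computations with no real analytic subtleties beyond the careful handling of the Dirac delta in the first step.
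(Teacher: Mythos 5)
Your proposal is correct and follows essentially the same route as the paper: handle the Dirac delta via its scaling identity to get the closed form for $\Tr\phi'(z/\rho;t_i)$, then compute the Gaussian integral of the squared double sum. The only organizational difference is that the paper obtains both identities by reducing to \Cref{app:lem:expression-E[(Trho-Phi'(z))**2]} via the observation $\Phi'(z/\rho)=\rho\sum_i A_i\phi'(z;\rho t_i)$ (so the second formula is just the earlier one with $t_i\mapsto\rho t_i$ and an extra $\rho^2$), whereas you redo the completion of the square from scratch; your exponent simplification checks out.
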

\begin{proof}
The proof follows similar steps as the proof of \Cref{app:lem:expression-E[(Trho-Phi'(z))**2]}. Observe first that by the definition of $\Phi(z)$, the derivative of $\Phi$ equals
    \begin{equation*}
        \Phi'(z) = \sum_{i=1}^m A_i\phi'(z;t_i) = \sum_{i=1}^m A_i\delta(z - t_i),
    \end{equation*}
    where $\delta$ is the Dirac delta function. As $\delta(u)$ satisfies $\delta(au) = \delta(u)/a$ for any real positive number $a$, 
    \begin{equation*}
        \Phi'(z/\rho) = \sum_{i=1}^m A_i\delta((z - \rho t_i)/\rho) = \sum_{i=1}^m \rho A_i\delta(z - \rho t_i) = \rho\sum_{i=1}^m A_i\phi'(z;\rho t_i).
    \end{equation*}
    This implies that
    \begin{equation*}
        \Tr\Phi'(z/\rho) = \rho\sum_{i=1}^m A_i\Tr\phi'(z;\rho t_i),
    \end{equation*}
    which leads to the first claim in the statement after combining with \Cref{app:eq:expression-Tr-phi'(z;t)}. 
 The second claim now follows from \Cref{app:lem:expression-E[(Trho-Phi'(z))**2]}, by replacing $t_i,t_j$ with $\rho t_i$ and $\rho t_j$. \end{proof}

We now show that the centered augmentation error can be bounded  above by $\Ez[(\Tr\Phi'(z/\rho))^2]$.

\begin{lemma}\label{app:lem:upper-bound-centered-smoothed-error}
    Let $\Phi\in\calF$.  
    Then, for any $\rho \in (0, 1)$,  
    \begin{equation*}
        \Ez[(\Tr\Phi(z/\rho) - \Phi(z))^2]\leq 4((1 - \rho^2)/\rho^2) \Ez[(\Tr\Phi'(z/\rho))^2].
    \end{equation*} 
\end{lemma}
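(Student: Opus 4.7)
Write $\delta\eqdef \sqrt{1-\rho^2}/\rho$, so the right-hand side of the claim equals $4\delta^2\,\Ez[(\Tr\Phi'(z/\rho))^2]$. The plan is to reduce the inequality to a clean pointwise bound on the error contributed by a single indicator, and then exploit positivity of the staircase coefficients $A_i$ to assemble the full bound via the triangle inequality.

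Since $\Phi(z)=A_0+\sum_{i=1}^m A_i\phi(z;t_i)$, linearity gives
\[
\Tr\Phi(z/\rho)-\Phi(z)=\sum_{i=1}^m A_i\,R(z,t_i),\qquad R(z,t)\eqdef \Tr\phi(z/\rho;t)-\phi(z;t),
\]
while \Cref{app:lem:expression-E[(Trho-Phi'(z/rho))**2]} identifies $\Tr\Phi'(z/\rho)=\sum_i A_i f_\delta(z-t_i)$, where $f_\delta(x)\eqdef \tfrac{1}{\sqrt{2\pi}\,\delta}e^{-x^2/(2\delta^2)}$ is the density of $\calN(0,\delta^2)$. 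Because every $A_i>0$, the triangle inequality yields $|\Tr\Phi(z/\rho)-\Phi(z)|\leq \sum_i A_i|R(z,t_i)|$, so it suffices to prove the pointwise estimate $|R(z,t)|\leq 2\delta f_\delta(z-t)$ for all $z,t\in\R$; squaring and integrating against the Gaussian measure then delivers the lemma.

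To establish the pointwise estimate, I compute $R$ explicitly. By definition $\Tr\phi(z/\rho;t)=\pr_{u\sim\calN(0,1)}[u\geq (t-z)/\delta]$, so splitting on the sign of $z-t$ and using the symmetry of the standard Gaussian gives the unified formula
\[
|R(z,t)|\;=\;\pr_{u\sim\calN(0,1)}\!\bigl[u\geq |z-t|/\delta\bigr].
\]
The Gaussian tail bound $\pr[u\geq x]\leq \tfrac12 e^{-x^2/2}$ for $x\geq 0$ (immediate from the substitution $u=x+s$, since $\int_0^\infty e^{-xs-s^2/2}\,ds\leq \int_0^\infty e^{-s^2/2}\,ds$), combined with $\delta f_\delta(z-t)=\tfrac{1}{\sqrt{2\pi}}e^{-(z-t)^2/(2\delta^2)}$, yields
\[
|R(z,t)|\;\leq\;\tfrac12 e^{-(z-t)^2/(2\delta^2)}\;=\;\sqrt{\tfrac{\pi}{2}}\,\delta f_\delta(z-t)\;\leq\;2\delta f_\delta(z-t),
\]
which is exactly the bound required.

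Assembling: $|\Tr\Phi(z/\rho)-\Phi(z)|\leq 2\delta\sum_i A_i f_\delta(z-t_i)=2\delta\,\Tr\Phi'(z/\rho)$, so squaring pointwise and taking $\Ez$ concludes the proof. I do not anticipate a serious obstacle here. The one conceptual point worth emphasizing is that positivity of the $A_i$ (equivalently, monotonicity of $\Phi$) is essential at the triangle-inequality step: without it, signed cancellations between the $R(z,t_i)$ could make the lifting of the pointwise comparison to the $L_2(\calN)$ statement strictly lossy, breaking the argument.
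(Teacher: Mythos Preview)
Your proposal is correct and rests on the same core ingredients as the paper: the Gaussian tail bound $\Pr[u\geq x]\leq \tfrac12 e^{-x^2/2}$ applied to each single-threshold residual, together with positivity of the staircase coefficients $A_i$.

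The packaging differs slightly. The paper expands the square into cross terms $\sum_{i,j}A_iA_j\,\Ez[R(z,t_i)R(z,t_j)]$, splits the $z$-integral into three regions according to the positions of $t_i,t_j$, drops the middle region (where the two residuals carry opposite signs so the cross term is nonpositive), bounds the two outer regions by the same tail estimate, and then compares the resulting integral to the explicit formula for $\Ez[(\Tr\Phi'(z/\rho))^2]$ from \Cref{app:lem:expression-E[(Trho-Phi'(z/rho))**2]}. Your route is more direct: you establish the pointwise inequality $|R(z,t)|\leq 2\delta f_\delta(z-t)$ valid for all $z$, then use the triangle inequality (via $A_i>0$) to obtain a pointwise bound on the full sum before squaring. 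This avoids the three-region case analysis entirely and makes the role of monotonicity more transparent, at no cost in constants.
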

\begin{proof}
Observe that after augmentation, the indicator function $\1\{z\geq t\} = \phi(z;t)$ becomes $\Tr\phi(z/\rho;t) = \Tr\phi(z; \rho t) = \pr_{u\sim\calN}[u\geq \rho(t - z)/\sqrt{1 - \rho^2}]$. Therefore,  $\Tr\phi(z/\rho;t) - \phi(z;t)$ can be expressed as:
    \begin{align*}
        \Tr\phi(z/\rho;t) - \phi(z;t)  = \begin{cases}
            \pr_{u\sim\calN}[u\geq \rho(t - z)/\sqrt{1 - \rho^2}] & z < t,\\
            -\pr_{u\sim\calN}[u\leq \rho(t - z)/\sqrt{1 - \rho^2}] & z\geq t.
        \end{cases}
    \end{align*}
    Hence, $\Ez[(\Tr\Phi(z/\rho) - \Phi(z))^2]$  equals:
    \begin{align*}
        &\quad \Ez[(\Tr\Phi(z/\rho) - \Phi(z))^2]\\
        &=\Ez\bigg[\sum_{i,j=1}^m A_iA_j(\Tr\phi(z/\rho;t_i) - \phi(z;t_i))(\Tr\phi(z/\rho;t_j) - \phi(z;t_j))\bigg]\\
        &=\sum_{i,j=1}^m \frac{A_i A_j}{\sqrt{2\pi}}\int_{-\infty}^{\min\{t_i,t_j\}} \pr_{u\sim\calN}\bigg[u\geq \frac{\rho (t_i - z)}{\sqrt{1 - \rho^2}}\bigg]\pr_{u\sim\calN}\bigg[u\geq \frac{\rho (t_j - z)}{\sqrt{1 - \rho^2}}\bigg]e^{-z^2/2}\diff{z}\\
        &\quad - \sum_{i,j=1}^m \frac{A_i A_j}{\sqrt{2\pi}}\int_{\min\{t_i,t_j\}}^{\max\{t_i,t_j\}} \pr_{u\sim\calN}\bigg[u\geq \frac{\rho (\max\{t_i,t_j\} - z)}{\sqrt{1 - \rho^2}}\bigg]\pr_{u\sim\calN}\bigg[u\leq \frac{\rho (\min\{t_i,t_j\} - z)}{\sqrt{1 - \rho^2}}\bigg]e^{-z^2/2}\diff{z}\\
        &\quad + \sum_{i,j=1}^m \frac{A_i A_j}{\sqrt{2\pi}}\int_{\max\{t_i,t_j\}}^{+\infty} \pr_{u\sim\calN}\bigg[u\leq \frac{\rho (t_i - z)}{\sqrt{1 - \rho^2}}\bigg]\pr_{u\sim\calN}\bigg[u\leq \frac{\rho (t_j - z)}{\sqrt{1 - \rho^2}}\bigg]e^{-z^2/2}\diff{z}.
    \end{align*}
    When $z\leq \min\{t_i,t_j\}$, since both $\rho(t_i - z)$ and $\rho(t_j - z)$ are positive, by standard Gaussian concentration,  
    \begin{equation*}
        \pr_{u\sim\calN}\bigg[u\geq \frac{\rho (t_i - z)}{\sqrt{1 - \rho^2}}\bigg]\leq \frac{1}{2}\exp\bigg(-\frac{\rho^2(t_i - z)^2}{2(1 - \rho^2)}\bigg),\; \pr_{u\sim\calN}\bigg[u\geq \frac{\rho (t_j - z)}{\sqrt{1 - \rho^2}}\bigg]\leq \frac{1}{2}\exp\bigg(-\frac{\rho^2(t_j - z)^2}{2(1 - \rho^2)}\bigg).
    \end{equation*}
 The same inequalities hold for $\pr[u\leq \rho(t_i - z)/\sqrt{1 - \rho^2}]$ and $\pr[u\leq \rho(t_j - z)/\sqrt{1 - \rho^2}]$ when $z\geq \max\{t_i,t_j\}$. Thus, we can further upper bound $\Ez[(\Tr\Phi(z/\rho) - \Phi(z))^2]$ by
    \begin{align*}
       &\quad \Ez[(\Tr\Phi(z/\rho) - \Phi(z))^2]\\
       &\leq \sum_{i,j=1}^m \frac{A_i A_j}{\sqrt{2\pi}}\int_{-\infty}^{\min\{t_i,t_j\}} \pr_{u\sim\calN}\bigg[u\geq \frac{\rho (t_i - z)}{\sqrt{1 - \rho^2}}\bigg]\pr_{u\sim\calN}\bigg[u\geq \frac{\rho (t_j - z)}{\sqrt{1 - \rho^2}}\bigg]e^{-z^2/2}\diff{z}\\
        &\quad + \sum_{i,j=1}^m \frac{A_i A_j}{\sqrt{2\pi}}\int_{\max\{t_i,t_j\}}^{+\infty} \pr_{u\sim\calN}\bigg[u\leq \frac{\rho (t_i - z)}{\sqrt{1 - \rho^2}}\bigg]\pr_{u\sim\calN}\bigg[u\leq \frac{\rho (t_j - z)}{\sqrt{1 - \rho^2}}\bigg]e^{-z^2/2}\diff{z}\\
        &\leq \sum_{i,j = 1}^{m} \frac{1}{2}\frac{A_iA_j}{\sqrt{2\pi}} \bigg(\int_{-\infty}^{\min\{t_i,t_j\}} + \int_{\max\{t_i,t_j\}}^{+\infty}\bigg)\exp\bigg( - \frac{\rho^2((t_i - z)^2 + (t_j - z)^2)}{2(1 - \rho^2)} - \frac{z^2}{2}\bigg)\diff{z}\\
        &\leq \sum_{i,j = 1}^{m} \frac{1}{2}\frac{A_iA_j}{\sqrt{2\pi}} \int_{-\infty}^{+\infty}\exp\bigg( - \frac{\rho^2((t_i - z)^2 + (t_j - z)^2)}{2(1 - \rho^2)} - \frac{z^2}{2}\bigg)\diff{z}\\
        &= \sum_{i,j = 1}^{m} \frac{1}{2} A_iA_j\sqrt{\frac{1 - \rho^2}{1 + \rho^2}}\exp\bigg(-\frac{\rho^2(t_i^2 + t_j^2)}{2(1 - \rho^4)} + \frac{\rho^4 t_it_j}{1 - \rho^4}\bigg),
    \end{align*}
    where in the last inequality we used the definition of Gaussian pdf with variance $\frac{1 - \rho^2}{1 + \rho^2}$ and the fact that its integral over the real line is equal to one. 
    Comparing with the expression for $\Ez[(\Tr\Phi'(z/\rho))^2]$ from \Cref{app:lem:expression-E[(Trho-Phi'(z/rho))**2]}, we immediately get the claimed bound on $\Ez[(\Tr\Phi'(z/\rho))^2]$. 
\end{proof}

Our next result shows that when $\rho$ is close to 1, the centered augmentation $\Tr\Phi(z/\rho)$ does not differ much  from the uncentered augmentation $\Tr\Phi(z)$, as stated below.
\begin{lemma}\label{app:lem:bound-the-differece-TrPhi-TrPhi(z/rho)}
Let $\Phi\in\calF$. Suppose $1>\rho^2\geq 1-C/M^2$ for an absolute constant $C \in (0, M^2/2]$. Then: 
    \begin{equation*}
        \Ez[(\Tr\Phi(z) - \Tr\Phi(z/\rho))^2] \leq C'(1 - \rho^2)(\|\Tr\Phi'(z/\rho)\|_{L_2}^2 + \|\Tr\Phi'(z)\|_{L_2}^2),
    \end{equation*}
    where $C'$ is an absolute constant.
\end{lemma}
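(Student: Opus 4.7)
The plan is to write $\Tr\Phi(z)-\Tr\Phi(z/\rho)$ as an integral over a one-parameter family of staircase functions whose thresholds are scaled by $\tau\in[\rho,1]$, apply Cauchy--Schwarz in $\tau$, and then reduce everything to the explicit Gaussian-kernel formula of \Cref{app:lem:expression-E[(Trho-Phi'(z))**2]}. Concretely, I introduce $\Phi_\tau(z)\eqdef\sum_{i=1}^m A_i\,\phi(z;\tau t_i)+A_0$, so that $\Tr\Phi_1(z)=\Tr\Phi(z)$ and, since $\phi(z/\rho;t_i)=\phi(z;\rho t_i)$, the centered-augmentation identity gives $\Tr\Phi_\rho(z)=\sum_i A_i\,\Tr\phi(z;\rho t_i)+A_0=\Tr\Phi(z/\rho)$. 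Because $\Tr\phi(z;s)=\Psi((\rho z-s)/\sqrt{1-\rho^2})$ is smooth in $s$ with $\partial_s\Tr\phi(z;s)=-\Tr\phi'(z;s)$, where $\Tr\phi'(z;s)$ is the Gaussian density appearing in the proof of \Cref{app:lem:expression-E[(Trho-Phi'(z))**2]}, differentiation under the finite sum yields
\begin{equation*}
\Tr\Phi(z)-\Tr\Phi(z/\rho)\;=\;\int_\rho^1\partial_\tau\Tr\Phi_\tau(z)\,d\tau\;=\;-\int_\rho^1\sum_{i=1}^m A_i\,t_i\,\Tr\phi'(z;\tau t_i)\,d\tau.
\end{equation*}

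\textbf{Cauchy--Schwarz reduction and the key kernel bound.} Squaring, applying Cauchy--Schwarz in $\tau$, and integrating against the Gaussian in $z$ gives
\begin{equation*}
\Ez[(\Tr\Phi(z)-\Tr\Phi(z/\rho))^2]\;\leq\;(1-\rho)\int_\rho^1\Ez\Big[\Big(\sum_i A_i t_i\,\Tr\phi'(z;\tau t_i)\Big)^2\Big]\,d\tau.
\end{equation*}
Since $A_i>0$, $|t_i|\leq M$, and $\Tr\phi'(z;\tau t_i)\geq 0$, the triangle inequality produces $|\sum_i A_i t_i\Tr\phi'(z;\tau t_i)|\leq M\sum_i A_i\Tr\phi'(z;\tau t_i)=M\,\Tr\Phi_\tau'(z)$. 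Applying \Cref{app:lem:expression-E[(Trho-Phi'(z))**2]} with the scaled thresholds $\tau t_i$ expresses
\begin{equation*}
\Ez[(\Tr\Phi_\tau'(z))^2]\;=\;\sum_{i,j}\frac{A_iA_j}{2\pi\sqrt{1-\rho^4}}\,Q_{ij}^{\tau^2},\qquad Q_{ij}\eqdef\exp\!\Big(-\frac{t_i^2+t_j^2-2\rho^2 t_it_j}{2(1-\rho^4)}\Big).
\end{equation*}
The identity $t_i^2+t_j^2-2\rho^2 t_it_j=(t_i-\rho^2 t_j)^2+(1-\rho^4)t_j^2\geq 0$ confirms $Q_{ij}\in(0,1]$, so $\tau^2\geq\rho^2$ yields $Q_{ij}^{\tau^2}\leq Q_{ij}^{\rho^2}$; comparison with the formula in \Cref{app:lem:expression-E[(Trho-Phi'(z/rho))**2]} then gives $\Ez[(\Tr\Phi_\tau'(z))^2]\leq(1/\rho^2)\|\Tr\Phi'(z/\rho)\|_{L_2}^2$ uniformly in $\tau\in[\rho,1]$.

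\textbf{Conclusion and main obstacle.} Chaining the three inequalities gives $\Ez[(\Tr\Phi(z)-\Tr\Phi(z/\rho))^2]\leq (M^2(1-\rho)^2/\rho^2)\,\|\Tr\Phi'(z/\rho)\|_{L_2}^2$. Under the hypothesis $\rho^2\geq 1-C/M^2$ with $C\leq M^2/2$, we have $\rho^2\geq 1/2$ (so $1/\rho^2\leq 2$) and $1-\rho\leq 1-\rho^2\leq C/M^2$, hence $M^2(1-\rho)^2\leq C(1-\rho^2)$; the left-hand side is therefore bounded by $2C(1-\rho^2)\|\Tr\Phi'(z/\rho)\|_{L_2}^2$, which is stronger than (and trivially implies) the stated bound with $C'=2C$, since the $\|\Tr\Phi'(z)\|_{L_2}^2$ term on the right is non-negative. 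The main obstacle is the uniform control of $\Ez[(\Tr\Phi_\tau'(z))^2]$ across $\tau\in[\rho,1]$: this works only because the exact kernel formula exposes a $\tau^2$-scaling of the exponent and because each $Q_{ij}\in(0,1]$, which together furnish the crucial monotonicity $Q_{ij}^{\tau^2}\leq Q_{ij}^{\rho^2}$ that prevents the $\tau$-integral from introducing any $(1-\rho)$-dependent blowup. A secondary subtlety is the Dirac-type nature of $\Phi'$, which is sidestepped by working throughout with the smoothed densities $\Tr\phi'(z;s)$ rather than with the distributional object $\Phi'$ itself.
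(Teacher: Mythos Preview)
Your proof is correct and takes a genuinely different, cleaner route than the paper. The paper bounds $|\Tr\Phi(z)-\Tr\Phi(z/\rho)|$ pointwise: for each threshold $t_i$ it writes $g_i(z)=\Tr\1\{z\in[\rho t_i,t_i]\}$ (or the mirrored interval for $t_i<0$), splits into the cases $z\notin[t_i,t_i/\rho]$ and $z\in[t_i,t_i/\rho]$, and obtains $g_i(z)\leq (1-\rho)|t_i|(\Tr\phi'(z;t_i)+\Tr\phi'(z;\rho t_i))+\frac{(1-\rho)|t_i|}{\sqrt{2\pi(1-\rho^2)}}\1\{z\in[t_i,t_i/\rho]\}$. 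Squaring produces two quadratic forms $(Q_1)$ and $(Q_2)$; the indicator piece $(Q_2)$ then requires a separate algebraic claim (\Cref{app:claim:helper-[TrPhi-TrPhi(z/r)]**2}) comparing exponents term by term. Your approach instead parametrizes the path $\tau\mapsto\Phi_\tau$, writes $\Tr\Phi-\Tr\Phi(\cdot/\rho)=-\int_\rho^1\sum_i A_it_i\Tr\phi'(\,\cdot\,;\tau t_i)\,d\tau$, and applies Cauchy--Schwarz in $\tau$. The key observation---that the formula of \Cref{app:lem:expression-E[(Trho-Phi'(z))**2]} with scaled thresholds $\tau t_i$ depends on $\tau$ only through $Q_{ij}^{\tau^2}$ with $Q_{ij}\in(0,1]$---gives the uniform bound $\Ez[(\Tr\Phi_\tau')^2]\leq\rho^{-2}\|\Tr\Phi'(\cdot/\rho)\|_{L_2}^2$ for all $\tau\in[\rho,1]$ in one line. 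This avoids the case analysis and the auxiliary exponent comparison entirely, and yields the slightly sharper conclusion $\Ez[(\Tr\Phi(z)-\Tr\Phi(z/\rho))^2]\leq 2C(1-\rho^2)\|\Tr\Phi'(\cdot/\rho)\|_{L_2}^2$, so that only one of the two norms in the stated bound is actually needed.
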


\begin{proof}
    We first observe that since $\Tre_{\rho}$ is a linear operator on functionals, we have $\Tre_{\rho}\Phi(z) - \Tre_{\rho}\Phi(z/\rho) =  \Tre_{\rho}(\Phi(z) - \Phi(z/\rho))$. Given a staircase function $\Phi\in\calF$, $\Phi(z) = \sum_{i=1}^m A_i\phi(z;t_i) + A_0$, let $I_+ = \{i:t_i>0\}$ and $I_- = \{i:t_i<0\}$. Expressing $\Tre_{\rho}(\Phi(z) - \Phi(z/\rho))$ in terms of the sum of indicator functions we get
    \begin{align*}
        \big|\Tre_{\rho}(\Phi(z) - \Phi(z/\rho))\big|&= \bigg| \Tre_{\rho}\bigg(\sum_{i=1}^m A_i(\phi(z;t_i) - \phi(z/\rho;t_i)) \bigg) \bigg|\\
        &= \bigg| \Tre_{\rho}\bigg(\sum_{i=1}^m A_i(-\1\{\rho t_i\leq z\leq t_i, t_i\geq 0\} + \1\{t_i\leq z\leq \rho t_i, t_i\leq 0\})\bigg) \bigg|\\
        &\leq \sum_{i=1}^m A_i\bigg|\Tr\bigg(-\1\{\rho t_i\leq z\leq t_i, t_i\geq 0\} + \1\{t_i\leq z\leq \rho t_i, t_i\leq 0\}\bigg)\bigg|\\
        &=\sum_{i\in I_+} A_i\Tr(\1\{\rho t_i\leq  z\leq t_i\}) + \sum_{i\in I_-} A_i\Tr(\1\{t_i\leq z\leq \rho t_i\})\\
\end{align*}
    Suppose first $t_i\geq 0$. Then by the definition of \OU semigroup, we have
    \begin{equation*}
        g_i(z)\eqdef \Tre_{\rho} (\1\{\rho t_i\leq z\leq t_i\})= \E_{u\sim\calN}[\1\{\rho t_i\leq \rho z + \sqrt{1 - \rho^2}u \leq t_i\}] =\int_{\rho (t_i - z)/\sqrt{1 - \rho^2}}^{(t_i - \rho z)/\sqrt{1 - \rho^2}} \frac{e^{-u^2/2}}{\sqrt{2\pi}} \diff{u}.
    \end{equation*}
    When $z\leq t_i$ or $z \geq t_i/\rho$, $t_i - \rho z$ and $\rho(t_i - z)$ are both positive or negative, therefore, when $z\in(-\infty,t_i]\cup[t_i/\rho,+\infty)$, the function $g_i(z)$ can be bounded by
    \begin{align*}
        g_i(z)&=\int_{\rho (t_i - z)/\sqrt{1 - \rho^2}}^{(t_i - \rho z)/\sqrt{1 - \rho^2}} \frac{e^{-u^2/2}}{\sqrt{2\pi}} \diff{u}\\
        &\leq \frac{1}{\sqrt{2\pi}}\bigg(\frac{t_i - \rho z}{\sqrt{1 - \rho^2}} - \frac{\rho(t_i - \rho z)}{\sqrt{1 -\rho^2}}\bigg)\exp\bigg(-\frac{1}{2}\min\bigg\{\frac{(t_i - \rho z)^2}{1 - \rho^2}, \frac{\rho^2 (t_i - z)^2}{1 - \rho^2}\bigg\}\bigg)\\
        &\leq \frac{(1 - \rho)t_i}{\sqrt{2\pi(1 - \rho^2)}}\bigg(\exp\bigg(-\frac{(t_i - \rho z)^2}{2(1 - \rho^2)}\bigg) + \exp\bigg(-\frac{\rho^2(t_i - z)^2}{2(1 - \rho^2)}\bigg)\bigg).
    \end{align*}
    Comparing the right-hand side of the inequality above with the expressions for $\Tre_{\rho}\phi'(z;t_i)$ and $\Tre_{\rho}\phi'(z;\rho t_i)$ displayed in \Cref{app:eq:expression-Tr-phi'(z;t)} and \Cref{app:lem:expression-E[(Trho-Phi'(z/rho))**2]}, 
    \begin{gather*}
        \Tre_{\rho}\phi'(z;t_i) = \frac{1}{\sqrt{2\pi(1 - \rho^2)}} \exp\bigg(-\frac{(\rho z - t_i)^2}{2(1 - \rho^2)}\bigg)\\
        \Tre_{\rho}\phi'(z;\rho t_i) = \frac{1}{\sqrt{2\pi(1 - \rho^2)}} \exp\bigg(-\frac{\rho^2(z - t_i)^2}{2(1 - \rho^2)}\bigg)
    \end{gather*}
    we obtain that 
    \begin{align*}
        g_i(z)\1\{z\leq t_i \text{ or } z\geq t_i/\rho\} \leq {(1-\rho)t_i}(\Tre_{\rho}\phi'(z;t_i) + \Tre_{\rho}\phi'(z;\rho t_i)).
    \end{align*}
    On the other hand, when $z\in[t_i,t_i/\rho]$, since $0\in[\rho (t_i - z)/\sqrt{1 - \rho^2}, (t_i - \rho z)/\sqrt{1 - \rho^2}]$ we can bound $g_i(z)$  above by
    \begin{equation*}
        g(z)\leq \int_{\rho (t_i - z)/\sqrt{1 - \rho^2}}^{(t_i - \rho z)/\sqrt{1 - \rho^2}} \frac{1}{\sqrt{2\pi}} \diff{u}\leq \frac{(1 - \rho)t_i}{\sqrt{2\pi(1 - \rho^2)}}.
    \end{equation*}
    Thus, in summary, $g_i(z)$ is bounded above by
    \begin{align*}
        g_i(z)&= g_i(z)\1\{z\leq t_i \text{ or } z\geq t_i/\rho\} + g_i(z)\1\{t_i\leq z \leq t_i/\rho\}\\
        &\leq {(1-\rho)t_i}(\Tre_{\rho}\phi'(z;t_i) + \Tre_{\rho}\phi'(z;\rho t_i)) + \frac{(1 - \rho)t_i}{\sqrt{2\pi(1 - \rho^2)}}\1\{t_i\leq z \leq t_i/\rho\}.
    \end{align*}
    Similarly, for $i\in I_-$, with the same arguments we obtain that
    \begin{align*}
        g_i(z)\leq {(1-\rho)|t_i|}(\Tre_{\rho}\phi'(z;t_i) + \Tre_{\rho}\phi'(z;\rho t_i)) + \frac{(1 - \rho)|t_i|}{\sqrt{2\pi(1 - \rho^2)}}\1\{t_i/\rho\leq z \leq t_i\}.
    \end{align*}

Therefore, the $L_2^2$ difference between $\Tr\Phi(z)$ and $\Tr\Phi(z/\rho)$ can be bounded by (note that $A_i, g_i(z)>0$ for all $i\in [m]$)
\begin{align}\label{app:eq:bound-Trphi(z)-TrPhi(z/r)-eq-0}
&\quad \Ez[(\Tr\Phi(z) - \Tr\Phi(z/\rho))^2] = \Ez\bigg[\bigg(\sum_{i=1}^m A_i g_i(z)\bigg)^2\bigg] \nonumber\\
&\leq \Ez\bigg[\bigg(\sum_{i=1}^m { A_i(1-\rho)|t_i|}(\Tre_{\rho}\phi'(z;t_i) + \Tre_{\rho}\phi'(z;\rho t_i)) \nonumber\\
&\quad \quad + \frac{ A_i (1 - \rho)|t_i|}{\sqrt{2\pi(1 - \rho^2)}}\sum_{i=1}^m (\1\{z\in[t_i, t_i/\rho]\} + \1\{z\in [t_i/\rho, t_i]\})\bigg)^2\bigg] \nonumber\\
&\leq 2\underbrace{\Ez\bigg[\bigg(\sum_{i=1}^m { A_i (1-\rho)|t_i|}(\Tre_{\rho}\phi'(z;t_i) + \Tre_{\rho}\phi'(z;\rho t_i))\bigg)^2\bigg]}_{(Q_1)} \nonumber\\
&\quad \quad + 2\underbrace{\Ez\bigg[\bigg(\frac{ (1 - \rho)}{\sqrt{2\pi(1 - \rho^2)}}\sum_{i=1}^m A_i |t_i| (\1\{z\in[t_i, t_i/\rho]\} + \1\{z\in [t_i/\rho, t_i]\})\bigg)^2\bigg]}_{(Q_2)}.
\end{align}
Note that in $(Q_2)$ above, we used the convention that if $a>b$ then $[a,b] = \emptyset$ and $\1\{z\in\emptyset\} = 0$. 
For $(Q_1)$, using Young's inequality again yields:
\begin{align*}
    (Q_1) &\leq 2\Ez\bigg[\bigg(\sum_{i=1}^m  A_i (1-\rho)|t_i|\Tre_{\rho}\phi'(z;t_i)\bigg)^2\bigg] + 2\Ez\bigg[\bigg(\sum_{i=1}^m  A_i (1-\rho)|t_i|\Tre_{\rho}\phi'(z;\rho t_i)\bigg)^2\bigg]\\
    &\leq 2(1-\rho)^2\bigg(\max_{i\in[m]}\{|t_i|\}\bigg)^2\bigg(\Ez\bigg[\bigg(\sum_{i=1}^m  A_i \Tre_{\rho}\phi'(z; t_i)\bigg)^2\bigg] + \Ez\bigg[\bigg(\sum_{i=1}^m  A_i \Tre_{\rho}\phi'(z;\rho t_i)\bigg)^2\bigg]\bigg)\\
    &\leq 2(1 - \rho)^2M^2(\|\Tr\Phi'(z)\|_{L_2}^2 + \|\Tr\Phi'(z/\rho)\|_{L_2}^2),
\end{align*}
where in the last inequality, we use the fact that since $\Phi\in\calF$, we have $|t_i|\leq M$ for all $i\in [m]$. Now, by our assumption, $\rho^2\geq 1- C/M^2$, therefore, $(1 - \rho)(1 + \rho)\leq  C/M^2$ and hence $1 - \rho\leq C/M^2$, which implies
\begin{equation*}
(Q_1)\leq 2C(1-\rho)    (\|\Tr\Phi'(z)\|_{L_2}^2 + \|\Tr\Phi'(z/\rho)\|_{L_2}^2).
\end{equation*}

For $(Q_2)$, since $|t_i|\leq M$ for all $i\in [m]$ and $1 - \rho\leq C/M^2$, expanding the square yields
\begin{align*}
    (Q_2)&\leq \frac{M^2(1 - \rho)}{2\pi(1 + \rho)}\Ez\bigg[\bigg(\sum_{i=1}^m A_i (\1\{z\in[t_i, t_i/\rho]\} + \1\{z\in [t_i/\rho, t_i]\})\bigg)^2\bigg]\\
    &\leq \frac{C}{4\pi} \Ez\bigg[\sum_{i,j\in I_+}A_iA_j \1\{z\in[t_i,t_i/\rho]\cap[t_j,t_j/\rho]\} + \sum_{i,j\in I_-}A_iA_j \1\{z\in[t_i/\rho, t_i]\cap[t_j/\rho,t_j]\}\bigg]
\end{align*}
By the symmetry of Gaussian distribution, we have
\begin{equation*}
    \pr[z\in[t_i/\rho, t_i]\cap[t_j/\rho,t_j], t_i,t_j\leq 0] = \pr[z\in[|t_i|,|t_i|/\rho]\cap[|t_j|,|t_j|/\rho]],
\end{equation*}
therefore, it suffices to discuss only the case where $t_i,t_j\in I_+$. Suppose without loss of generality that $0<t_i\leq t_j$. Observe that $\Ez[\1\{z\in[t_i,t_i/\rho]\cap[t_j,t_j/\rho]\}]\neq 0$ if and only if $0<t_i\leq t_j<t_i/\rho\leq t_j\rho$, therefore, the expectation of the indicator is bounded by:
\begin{align}\label{app:eq:bound-Trphi(z)-TrPhi(z/r)-eq-1}
    \Ez[\1\{z\in[t_i,t_i/\rho]\cap[t_j,t_j/\rho]\}]&=\pr[z\in[t_j, t_i/\rho]] = \int_{t_j}^{t_i/\rho} \frac{e^{-u^2/2}}{\sqrt{2\pi}} \diff{u}\nonumber\\
    &\leq \frac{\exp(-t_j^2/2)}{\sqrt{2\pi}}(t_i/\rho - t_j)\leq \frac{(1 - \rho)t_i}{\rho\sqrt{2\pi}}\exp\bigg(-\frac{t_j^2}{2}\bigg).
\end{align}
Recall that in \Cref{app:lem:expression-E[(Trho-Phi'(z/rho))**2]}, we proved
\begin{equation*}
        \Ez[(\Tr\Phi'(z/\rho))^2] = \sum_{i,j = 1}^m \frac{\rho^2 A_iA_j}{2\pi\sqrt{1 - \rho^4}}\exp\bigg(-\frac{\rho^2(t_i^2 + t_j^2)}{2(1 - \rho^4)} + \frac{\rho^4 t_it_j}{1 - \rho^4}\bigg),
    \end{equation*}
hence our strategy is to show that:
\begin{equation}\label{app:eq:bound-Trphi(z)-TrPhi(z/r)-eq-2}
    \exp\bigg(-\frac{t_j^2}{2}\bigg)\leq \exp\bigg(-\frac{\rho^2(t_i^2 + t_j^2)}{2(1 - \rho^4)} + \frac{\rho^4 t_it_j}{1 - \rho^4}\bigg), \;\text{for }0<t_i\leq t_j<t_i/\rho\leq t_j/\rho.
\end{equation}
We show:
\begin{restatable}{claim}{HelperTrPhiMinusTrPhisquare}\label{app:claim:helper-[TrPhi-TrPhi(z/r)]**2}
    Let $t_i,t_j>0$ satisfy $t_i\leq t_j\leq t_i/\rho$. Then, for any $\rho\in(0,1)$, it holds 
    \begin{equation*}
        -\frac{t_j^2}{2} \leq - \frac{\rho^2(t_i^2 + t_j^2)}{2(1 - \rho^4)} + \frac{\rho^4 t_it_j}{1 - \rho^4}.
    \end{equation*}
\end{restatable}
The proof of \Cref{app:claim:helper-[TrPhi-TrPhi(z/r)]**2} is deferred to \Cref{app:subsec:proof-supplementary-claims-in-monotone}.
Therefore, for each $t_i,t_j$, $i,j\in [m]$, the expectation in \Cref{app:eq:bound-Trphi(z)-TrPhi(z/r)-eq-1} is bounded above by
\begin{align*}
    &\quad \Ez[\1\{z\in[t_i,t_i/\rho]\cap[t_j,t_j/\rho], t_i t_j > 0\}]\\
    &\leq \frac{(1 - \rho)\sqrt{2\pi(1 - \rho^4)}|t_i|}{\rho^3}\frac{\rho^2}{2\pi\sqrt{1 - \rho^4}}\exp\bigg(-\frac{\rho^2(t_i^2 + t_j^2)}{2(1 - \rho^4)} + \frac{\rho^4 t_it_j}{1 - \rho^4}\bigg),
\end{align*}
which, combining with the fact that $\sqrt{1 -\rho}\leq \sqrt{C}/M$ and $|t_i|\leq M$, yields
\begin{align*}
    (Q_2)\leq\;&  \frac{C}{4\pi}\bigg(\sum_{i,j\in I_+}A_iA_j\Ez[\1\{z\in[t_i,t_i/\rho]\cap[t_j,t_j/\rho]\}]\\
    &\quad + \sum_{i,j\in I_-} A_iA_j\Ez[\1\{z\in [t_i/\rho, t_i]\cap [t_j\rho,t_j]\}]\bigg)\\
    \leq\; & \sum_{i,j=1}^m \frac{C(1 - \rho)\sqrt{2\pi(1 - \rho^4)}|t_i|}{4\pi\rho^3}\frac{\rho^2}{2\pi\sqrt{1 - \rho^4}}\exp\bigg(-\frac{\rho^2(t_i^2 + t_j^2)}{2(1 - \rho^4)} + \frac{\rho^4 t_it_j}{1 - \rho^4}\bigg)\\
    \leq\; & C'(1 - \rho)\|\Tr\Phi'(z/\rho)\|_{L_2}^2.
\end{align*}

Plugging the bounds on $(Q_1)$, $(Q_2)$ back to \Cref{app:eq:bound-Trphi(z)-TrPhi(z/r)-eq-0}, we finally obtain:
\begin{equation*}
    \Ez[(\Tr\Phi(z) - \Tr\Phi(z/\rho))^2] \leq C''(1 - \rho)(\|\Tr\Phi'(z/\rho)\|_{L_2}^2 + \|\Tr\sigma'(z)\|_{L_2}^2).
\end{equation*}
Since $1 - \rho\leq 1 - \rho^2$, we complete the proof of \Cref{app:lem:bound-the-differece-TrPhi-TrPhi(z/rho)}.
\end{proof}

Our last step is to show that $\Ez[(\Tr\Phi'(z/\rho))^2]$ is not much larger than $\Ez[(\Tr\Phi'(z))^2]$  when $\rho$ is close to 1.
\begin{lemma}\label{app:lem:upper-bound-centered-smoothed-derivative-squared}
    Let $\Phi\in\calF$ be any staircase function that is constructed from $m$ indicator functions with thresholds $t_i$, $i\in[m]$, and suppose that $|t_i| \leq M$, for all $i\in[m]$, where $1< M<+\infty$. For any $\rho\in(0,1)$ such that $\rho^2\geq 1 - C/M^2$ where $C<M^2$ is an absolute constant, let $\rho_1 = \sqrt{\rho^2 + C(1 - \rho^2)/M^2}$. Then, \begin{equation*}
        \Ez[(\Tre_{\rho_1}\Phi'(z/\rho_1))^2]\leq 2e^C  \Ez[(\Tr\Phi'(z))^2].
    \end{equation*}
\end{lemma}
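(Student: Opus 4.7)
The plan is to compare the two quantities term-by-term using the closed-form expressions from \Cref{app:lem:expression-E[(Trho-Phi'(z))**2]} and \Cref{app:lem:expression-E[(Trho-Phi'(z/rho))**2]}. Both $\Ez[(\Tre_{\rho_1}\Phi'(z/\rho_1))^2]$ and $\Ez[(\Tr\Phi'(z))^2]$ are double sums indexed by $(i,j)$ whose coefficients $A_iA_j > 0$ are identical on both sides, so it suffices to prove the pointwise bound
\[
\frac{\rho_1^2 \sqrt{1-\rho^4}}{\sqrt{1-\rho_1^4}}\,\exp\bigl(H_{ij}(\rho_1) - G_{ij}(\rho)\bigr) \;\leq\; 2e^{C},
\]
where $G_{ij}(\rho) = -\tfrac{t_i^2+t_j^2}{2(1-\rho^4)} + \tfrac{\rho^2 t_it_j}{1-\rho^4}$ and $H_{ij}(\rho_1)$ is the analogous expression with the additional factor $\rho_1^2$ inside the exponent.

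For the algebraic prefactor, I will use the identity $1-\rho_1^2 = (1-\rho^2)(1-C/M^2)$ together with $\rho \leq \rho_1 \leq 1$ to rewrite
\[
\frac{\rho_1^2 \sqrt{1-\rho^4}}{\sqrt{1-\rho_1^4}} = \rho_1^2\sqrt{\frac{1+\rho^2}{(1-C/M^2)(1+\rho_1^2)}} \;\leq\; \frac{1}{\sqrt{1-C/M^2}} \;\leq\; \sqrt{2}
\]
in the regime $C \leq M^2/2$ (the only regime relevant for the downstream application). For the exponent, I will diagonalize the quadratic form in $(t_i,t_j)$ via the change of basis $(t_i+t_j, t_i-t_j)$:
\[
H_{ij}(\rho_1) - G_{ij}(\rho) = \frac{(t_i+t_j)^2}{4}(\alpha+\beta) + \frac{(t_i-t_j)^2}{4}(\alpha-\beta),
\]
where $\alpha = \tfrac{1}{1-\rho^4} - \tfrac{\rho_1^2}{1-\rho_1^4}$ and $\beta = \tfrac{\rho_1^4}{1-\rho_1^4} - \tfrac{\rho^2}{1-\rho^4}$.

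A direct calculation gives $\alpha+\beta = \tfrac{1-\rho^2\rho_1^2}{(1+\rho^2)(1+\rho_1^2)}$, and using $\rho_1^2 \geq \rho^2$ together with $1-\rho^4 = (1-\rho^2)(1+\rho^2) \leq 2C/M^2$, we obtain $1-\rho^2\rho_1^2 \leq 1-\rho^4 \leq 2C/M^2$; the denominator is bounded below by $2$ (since $\rho^2, \rho_1^2 \geq 1/2$ in our regime), hence $\alpha+\beta \leq C/M^2$. Together with $(t_i+t_j)^2/4 \leq M^2$, the first summand contributes at most $C$ to the exponent. The main step is showing that the second summand is non-positive, i.e., $\alpha-\beta \leq 0$, which is what the particular choice of $\rho_1$ is engineered for. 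I will compute
\[
\alpha - \beta = \frac{1}{1-\rho^2} - \frac{\rho_1^2}{1-\rho_1^2} = \frac{(1-\rho_1^2)-\rho_1^2(1-\rho^2)}{(1-\rho^2)(1-\rho_1^2)} = \frac{(1-C/M^2)-\rho_1^2}{1-\rho_1^2},
\]
and since $\rho_1^2 \geq \rho^2 \geq 1-C/M^2$, the numerator is non-positive while the denominator is positive, so $\alpha-\beta \leq 0$. Combining with the $u$-direction bound yields $H_{ij}(\rho_1)-G_{ij}(\rho) \leq C$, and multiplying by the prefactor $\sqrt{2}$ gives the claimed ratio $\leq 2e^C$. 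The main obstacle is the sign computation for $\alpha-\beta$: naïvely choosing $\rho_1^2 = \rho^2$ leaves this term exactly zero but forces $\rho_1 = \rho$ (useless elsewhere in the proof of \Cref{app:prop:smoothing-error-bound}), whereas choosing $\rho_1$ much larger would break non-positivity; the precise choice $\rho_1^2 = \rho^2 + C(1-\rho^2)/M^2$ is the maximal value compatible with $\alpha-\beta \leq 0$, and this is exactly what makes the term-by-term comparison succeed.
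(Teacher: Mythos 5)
Your proof is correct and follows the same overall strategy as the paper (a term-by-term comparison of the two explicit Gaussian double-sum formulas), but it organizes the exponent comparison differently. The paper first replaces the coefficient $\rho_1^2/(1-\rho_1^4)$ by $1/(1-\rho^4)$ using the positivity of the quadratic form $t_i^2+t_j^2-2\rho_1^2 t_it_j$ together with a separate monotonicity claim, then isolates the residual term $\tfrac{(\rho_1^2-\rho^2)t_it_j}{1-\rho^4}\le C$, and bounds the prefactor via yet another claim ($\rho_1^4/(1-\rho_1^4)\le 4/(1-\rho^4)$, giving the factor $2$). You instead diagonalize the difference of the two quadratic forms in the basis $(t_i+t_j,\,t_i-t_j)$, showing the $(t_i-t_j)^2$ coefficient is nonpositive and the $(t_i+t_j)^2$ coefficient contributes at most $C$; this is cleaner and in fact yields the sharper prefactor $\sqrt{2}$ in place of $2$. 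Your restriction to $C\le M^2/2$ matches the paper's own standing assumption (its proof also invokes $M^2\ge 2C$), so that is not a gap. All the key computations ($\alpha+\beta$, $\alpha-\beta$, the prefactor bound) check out.

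One caveat: the motivational remarks in your final sentence are incorrect, though they do not affect the proof. Setting $\rho_1=\rho$ gives $\alpha-\beta=\tfrac{1}{1-\rho^2}-\tfrac{\rho^2}{1-\rho^2}=1$, not $0$; and the condition $\alpha-\beta\le 0$, namely $\tfrac{\rho_1^2}{1-\rho_1^2}\ge\tfrac{1}{1-\rho^2}$, is a \emph{lower} bound on $\rho_1$, so increasing $\rho_1$ preserves nonpositivity rather than breaking it. What actually caps $\rho_1$ from above is the prefactor $\rho_1^2\sqrt{(1-\rho^4)/(1-\rho_1^4)}$ (and, elsewhere in the paper, the need to keep $\|\Tre_{\rho_1}\Phi'\|_{L_2}$ comparable to $\|\Tre_{\rho}\Phi'\|_{L_2}$), not the sign of $\alpha-\beta$. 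You may simply delete that sentence; the argument preceding it is complete.
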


\begin{proof}
    Observe first that $1 -\rho_1^2 = (1 - \rho^2)(1 - C/M^2) \in (0, 1)$, hence $\rho_1 \in (0, 1)$ and $\Tre_{\rho_1}\Phi'(z/\rho_1)$ is well-defined. To proceed, we compare each term of $\Ez[(\Tre_{\rho_1}\Phi'(z/\rho_1))]$ and $\Ez[(\Tr\Phi'(z))^2]$ that are given in \Cref{app:lem:expression-E[(Trho-Phi'(z/rho))**2]} and \Cref{app:lem:expression-E[(Trho-Phi'(z))**2]} separately.
    
    Since $\rho_1^2/(1 - \rho_1^4)$ appears in the exponential terms of $\Ez[(\Tre_{\rho_1}\Phi'(z/\rho_1))]$ while the coefficient in the exponential terms of $\Ez[(\Tr\Phi'(z))^2]$ is $1/(1 - \rho^4)$, we first need to compare these two factors. The proof of \Cref{app:claim:helper-rho1-rho-ratio} is deferred to \Cref{app:subsec:proof-supplementary-claims-in-monotone}.
\begin{restatable}{claim}{helperRhoOneRhorationOne}\label{app:claim:helper-rho1-rho-ratio}
        Let $\rho_1^2 = \rho^2 + C(1 - \rho^2)/M^2$. If $1 > \rho^2\geq 1 - C/M^2$, then $\rho_1^2/(1 -\rho_1^4)\geq 1/(1 - \rho^4)$.
    \end{restatable}
Observe that for any $t_i,t_j\in\R$ and $\rho\in(0,1)$, we have $t_i^2 + t_j^2 - 2\rho_1^2 t_i t_j\geq (1 - \rho^2)(t_i^2 + t_j^2)\geq 0$, and recalling the expression for $\Ez[(\Tre_{\rho_1}\Phi'(z/\rho_1))^2]$  given in \Cref{app:lem:expression-E[(Trho-Phi'(z/rho))**2]}, we thus obtain
    \begin{align}\label{app:ineq:upbd-Trho1-Phi-(z/rho1)-1}
        \Ez[(\Tre_{\rho_1}\Phi'(z/\rho_1))^2]&=\sum_{i,j=1}^m \frac{ A_iA_j}{2\pi}\sqrt{\frac{\rho_1^4}{1 - \rho_1^4}}\exp\bigg(-\frac{\rho_1^2(t_i^2 + t_j^2 - 2\rho_1^2 t_i t_j)}{2(1 - \rho_1^4)}\bigg)\nonumber\\
        &\overset{(i)}{\leq} \sum_{i,j=1}^m \frac{ A_iA_j}{2\pi}\sqrt{\frac{\rho_1^4}{1 - \rho_1^4}} \exp\bigg(-\frac{t_i^2 + t_j^2}{2(1 - \rho^4)} + \frac{\rho_1^2 t_i t_j}{1 - \rho^4}\bigg)\nonumber\\
        &\overset{(ii)}{=}\sum_{i,j=1}^m \frac{ A_iA_j}{2\pi}\sqrt{\frac{\rho_1^4}{1 - \rho_1^4}} \exp\bigg(-\frac{t_i^2 + t_j^2}{2(1 - \rho^4)} + \frac{\rho^2 t_i t_j}{1 - \rho^4} + \frac{C(1 - \rho^2) t_i t_j}{(1 - \rho^4)M^2}\bigg)\nonumber\\
        &=\sum_{i,j=1}^m \frac{ A_iA_j}{2\pi}\sqrt{\frac{\rho_1^4}{1 - \rho_1^4}} \exp\bigg(-\frac{t_i^2 + t_j^2}{2(1 - \rho^4)} + \frac{\rho^2 t_i t_j}{1 - \rho^4}\bigg)\exp\bigg(\frac{Ct_i t_j}{(1 + \rho^2)M^2}\bigg),
    \end{align}
    where in $(i)$ we plugged in \Cref{app:claim:helper-rho1-rho-ratio} and in $(ii)$ we used the definition that $\rho_1^2 = \rho^2 + C(1 - \rho^2)/M^2$.

    Since $M$ is an upper bound on $|t_i|, i\in[m]$, we can assume without loss of generality that $M^2\geq 2C$. We next observe that when $M^2\geq 2C$, we have the following inequality, whose proof is relocated to \Cref{app:subsec:proof-supplementary-claims-in-monotone}.
\begin{restatable}{claim}{helperRhoOneRhoRatio}\label{app:claim:helper2-rho1-rho-ratio}
        If $M^2\geq 2C$ then  $\rho_1^4/(1 - \rho_1^4)\leq 4/(1 - \rho^4).$
    \end{restatable}

    Thus, plugging in \Cref{app:claim:helper2-rho1-rho-ratio} into \Cref{app:ineq:upbd-Trho1-Phi-(z/rho1)-1}, and recalling that it is assumed $|t_i|^2\leq M^2$ for any $i\in[m]$, we further get
    \begin{align*}\Ez[(\Tre_{\rho_1}\Phi'(z/\rho_1))^2]
        &\leq \sum_{i,j=1}^m \frac{ A_iA_j}{\pi}\frac{1}{\sqrt{1 - \rho^4}} \exp\bigg(-\frac{t_i^2 + t_j^2}{2(1 - \rho^4)} + \frac{\rho^2 t_i t_j}{1 - \rho^4}\bigg)\exp\bigg(\frac{C t_i t_j}{(1 + \rho^2)M^2}\bigg) \nonumber\\
        &\leq 2e^C\sum_{i,j=1}^m \frac{A_iA_j}{2\pi\sqrt{1 - \rho^4}} \exp\bigg(-\frac{t_i^2 + t_j^2}{2(1 - \rho^4)} + \frac{\rho^2 t_i t_j}{1 - \rho^4}\bigg) \nonumber\\
        &=2e^C \Ez[(\Tr\Phi'(z))^2],
    \end{align*}
    which completes the proof.
\end{proof}

\subsubsection{Proof of \Cref{app:prop:smoothing-error-bound}}\label{app:subsec:proof-of-smoothing-error-bound}
We can now restate \Cref{app:prop:smoothing-error-bound} and present its proof.
\SmoothgingErrorBound*
\begin{proof}[Proof of \Cref{app:prop:smoothing-error-bound}] 
Observe  that $\Ez[(\Tr\Phi(z) - \Phi(z))^2]$ can be bounded as 
    \begin{align}\label{app:eq:upper-bound-smoothed-error-0}
        \Ez[(\Tr\Phi(z) - \Phi(z))^2]& \leq 2\Ez[(\Tr\Phi(z) - \Tre_{\rho_1}\Phi(z/\rho_1))^2] + 2\Ez[(\Tre_{\rho_1}\Phi(z/\rho_1) - \Phi(z))^2].
    \end{align}
    We first bound the second term in \Cref{app:eq:upper-bound-smoothed-error-0}.
    Since we have assumed that $\rho^2\geq 1 - C/M^2$, where $C$ is an absolute constant, using \Cref{app:lem:upper-bound-centered-smoothed-error} and \Cref{app:lem:upper-bound-centered-smoothed-derivative-squared} and plugging in $\rho_1 = \sqrt{\rho^2 + C(1 - \rho^2)/M^2}$, we get
    \begin{align}\label{app:eq:upper-bound-smoothed-error-1}
        \Ez[(\Tre_{\rho_1}\Phi(z/\rho_1) -\Phi(z))^2]&\leq \frac{4(1 - \rho_1^2)}{\rho_1^2}\Ez[(\Tre_{\rho_1}\Phi'(z/\rho_1))^2] \nonumber\\
        &\leq \frac{8e^C (1 - \rho^2 - C(1 - \rho^2)/M^2)}{(\rho^2 + C(1 - \rho^2)/M^2)}\Ez[(\Tr\Phi'(z))^2] \nonumber\\
        &\lesssim e^C (1 - \rho^2)\Ez[(\Tr\Phi'(z))^2].
    \end{align}

    Now we turn to bounding the first term in \Cref{app:eq:upper-bound-smoothed-error-0}. First, we add and subtract $\Tre_{\rho_1}\Phi'(z)$ in the squared parentheses to obtain:
    \begin{equation}\label{app:eq:upper-bound-smoothed-error-2}
        \Ez[(\Tr\Phi(z) - \Tre_{\rho_1}\Phi(z/\rho_1))^2]\leq 2\underbrace{\Ez[(\Tr\Phi(z) - \Tre_{\rho_1}\Phi(z))^2]}_{Q_1} + 2\underbrace{\Ez[(\Tre_{\rho_1}\Phi(z) - \Tre_{\rho_1}\Phi(z/\rho_1))^2]}_{Q_2}.
    \end{equation}
    For $Q_1$, observe that since $\rho<\rho_1<1$, using the property of \OU semigroup presented in \Cref{fct:semi-group}, we have $\Tr\Phi(z) = \Tre_{\rho_1(\rho/\rho_1)}\Phi(z) = \Tre_{\rho/\rho_1}(\Tre_{\rho_1}\Phi(z))$. Therefore, using \Cref{clm:difference} with $f(z) = \Tre_{\rho_1}\Phi(z)$ we have
    \begin{align}\label{app:eq:upper-bound-smoothed-error-3}
        \Ez[(\Tre_{\rho/\rho_1}(\Tre_{\rho_1}\Phi(z)) - \Tre_{\rho_1}\Phi(z))^2]&\leq 3 (1 - \rho/\rho_1)\Ez\bigg[\bigg(\frac{\diff{}}{\diff{z}}\Tre_{\rho_1}\Phi(z)\bigg)^2\bigg] \nonumber\\
        &\overset{(i)}{=}\frac{3(\rho_1 - \rho)}{\rho_1}\rho_1^2\Ez[(\Tre_{\rho_1}\Phi'(z))^2] \nonumber\\
        &=\frac{3(\rho_1^2 - \rho^2)\rho_1}{\rho_1 + \rho}\Ez[(\Tre_{\rho_1}\Phi'(z))^2] \nonumber\\
        &\overset{(ii)}{=} \frac{3C(1 - \rho^2)\rho_1}{M^2(\rho_1 +\rho)}\Ez[(\Tre_{\rho_1}\Phi'(z))^2],
    \end{align}
   note that in $(i)$ we applied \Cref{lem:interchange-Tr-and-differentiation}, Part 2(g), and in $(ii)$ we brought in the definition of $\rho_1^2 = \rho^2 + C(1 -\rho^2)/M^2$. It remains to bound $\Ez[(\Tre_{\rho_1}\Phi'(z))^2]$ above by $\Ez[(\Tre_{\rho}\Phi'(z))^2]$. The proof of \Cref{app:clm:||Tr rho1 Phi'||2<= eC ||Tr rho Phi'||2} is deferred to \Cref{app:subsec:proof-supplementary-claims-in-monotone}.
   
   \begin{restatable}{claim}{HelperTrPhiSquare}\label{app:clm:||Tr rho1 Phi'||2<= eC ||Tr rho Phi'||2}
       Let $\rho^2\geq 1 - C/M^2$ and $\rho_1^2 = \rho^2 + C(1 - \rho^2)/M^2$. Then, $\|\Tre_{\rho_1}\Phi'(z)\|_{L_2}^2\leq e^C\|\Tre_{\rho}\Phi'(z)\|_{L_2}^2$.
   \end{restatable}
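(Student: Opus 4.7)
My plan is to bound the ratio $\|\Tre_{\rho_1}\Phi'\|_{L_2}^2/\|\Tre_{\rho}\Phi'\|_{L_2}^2$ via a term-by-term comparison, using the closed-form expression from \Cref{app:lem:expression-E[(Trho-Phi'(z))**2]}. Recall this formula writes
\[
\|\Tre_{\rho}\Phi'\|_{L_2}^2 \;=\; \sum_{i,j=1}^{m} T_{ij}(\rho), \qquad T_{ij}(\rho)\eqdef\frac{A_iA_j}{2\pi\sqrt{1-\rho^4}}\exp\!\Big(-\tfrac{t_i^2+t_j^2-2\rho^2 t_i t_j}{2(1-\rho^4)}\Big),
\]
and the analogous expression holds for $\rho_1$. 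Crucially, since $A_i,A_j>0$ for $i,j\in[m]$, every $T_{ij}(\rho)$ is non-negative, so it suffices to prove the pointwise bound $T_{ij}(\rho_1)\le e^C\,T_{ij}(\rho)$ for every pair $(i,j)$, and then sum.

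\textbf{Key decomposition.} To separate the contributions, I will use the identity
\[
t_i^2+t_j^2-2\rho^2 t_i t_j \;=\; (1-\rho^2)(t_i^2+t_j^2) + \rho^2(t_i-t_j)^2,
\]
which rewrites the exponent as $-\tfrac{t_i^2+t_j^2}{2(1+\rho^2)}-\tfrac{\rho^2(t_i-t_j)^2}{2(1-\rho^4)}$. Then $\log(T_{ij}(\rho_1)/T_{ij}(\rho))$ splits into three pieces: (i) the prefactor $\tfrac{1}{2}\log\tfrac{1-\rho^4}{1-\rho_1^4}$; (ii) a ``separation'' term proportional to $(t_i^2+t_j^2)\big(\tfrac{1}{1+\rho^2}-\tfrac{1}{1+\rho_1^2}\big)$; and (iii) a ``gap'' term proportional to $(t_i-t_j)^2$ with the sign $\tfrac{\rho^2}{1-\rho^4}-\tfrac{\rho_1^2}{1-\rho_1^4}$. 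The function $x\mapsto x/(1-x^2)$ is strictly increasing on $(0,1)$, so since $\rho_1>\rho$, piece (iii) is $\le 0$ and can be dropped. For piece (ii), using the identity $\tfrac{1}{1+\rho^2}-\tfrac{1}{1+\rho_1^2}=\tfrac{\rho_1^2-\rho^2}{(1+\rho^2)(1+\rho_1^2)}$, the bound $t_i^2+t_j^2\le 2M^2$, and the defining relation $\rho_1^2-\rho^2=C(1-\rho^2)/M^2$, the exponent is bounded by
\[
\frac{(t_i^2+t_j^2)(\rho_1^2-\rho^2)}{2(1+\rho^2)(1+\rho_1^2)} \;\le\; \frac{2M^2\cdot C(1-\rho^2)/M^2}{2(1+\rho^2)(1+\rho_1^2)} \;\le\; C(1-\rho^2)\;\le\; C.
\]
For the prefactor (i), using $1-\rho_1^2=(1-\rho^2)(1-C/M^2)$ gives $\tfrac{1-\rho^4}{1-\rho_1^4}=\tfrac{1+\rho^2}{(1-C/M^2)(1+\rho_1^2)}\le\tfrac{1}{1-C/M^2}$. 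Combining (i)--(iii) yields $T_{ij}(\rho_1)\le\tfrac{1}{\sqrt{1-C/M^2}}\,e^{C(1-\rho^2)}\,T_{ij}(\rho)$; under the standing assumption $C\le M^2/2$ (or, if needed, a mild tightening absorbing the prefactor into the exponent using $(1-x)^{-1/2}\le e^x$ for $x\le 1/2$), this gives $T_{ij}(\rho_1)\le e^{C}T_{ij}(\rho)$, up to an absolute multiplicative constant that the paper absorbs into the $e^C$ factor.

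\textbf{Main obstacle.} The main subtlety is that the naive Hermite-expansion approach fails: writing $\|\Tre_\rho\Phi'\|_{L_2}^2=\sum_k\rho^{2k}\alpha_k^2$ (from \Cref{app:claim:hermite-expansion-Trho-Phi'(z)}), the ratio of individual Hermite terms is $(\rho_1/\rho)^{2k}$, which grows without bound in $k$. So one must exploit a structural feature beyond term-wise Hermite comparison; the decomposition above does exactly that by showing the $(t_i-t_j)^2$ contribution actually \emph{decreases} under $\rho\mapsto\rho_1$, while the diagonal-type contribution is controlled by the boundedness $|t_i|\le M$, which is precisely what the choice $\rho_1^2-\rho^2=C(1-\rho^2)/M^2$ is tuned to exploit. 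The boundedness of the support of $\Phi'$ (ensured by \Cref{app:claim:bounded-support-sigma'} for the monotone activations of interest) is therefore essential for the argument to go through.
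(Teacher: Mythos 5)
Your proof is correct and follows essentially the same route as the paper's: both start from the closed-form Gaussian expression of \Cref{app:lem:expression-E[(Trho-Phi'(z))**2]} and compare the double sum term by term, using $A_iA_j>0$, $|t_i|\le M$, and the relations $\rho_1^2-\rho^2=C(1-\rho^2)/M^2$ and $1-\rho_1^2=(1-\rho^2)(1-C/M^2)$. The only difference is in the bookkeeping of the exponent---you write it via $(1-\rho^2)(t_i^2+t_j^2)+\rho^2(t_i-t_j)^2$ and drop the $(t_i-t_j)^2$ piece by monotonicity of $x\mapsto x/(1-x^2)$, whereas the paper keeps the form $t_i^2+t_j^2-2\rho_1^2 t_it_j$ and uses its nonnegativity to swap denominators---and both arguments end with the same harmless constant-factor slack (your $\sqrt{2}\,e^C$ versus the paper's own $2e^C$, each stated as $e^C$), which is absorbed by the $\lesssim$ in its downstream uses.
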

    
    Plugging the upper bound on $\Ez[(\Tre_{\rho_1}\Phi'(z))^2]$ from \Cref{app:clm:||Tr rho1 Phi'||2<= eC ||Tr rho Phi'||2} back into \Cref{app:eq:upper-bound-smoothed-error-3} yields
    \begin{align*}
        Q_1 = 2\Ez[(\Tre_{\rho/\rho_1}(\Tre_{\rho_1}\Phi(z)) - \Tre_{\rho_1}\Phi(z))^2]&\leq \frac{C(1 - \rho^2)\rho_1}{M^2(\rho_1 +\rho)}4e^C\Ez[(\Tre_{\rho}\Phi'(z))^2].
    \end{align*}
    Since $C/M^2\leq 1/4$ we have $1>\rho_1>\rho\geq 1/2$, thus we finally get
    \begin{equation*}
        Q_1\leq 2e^C (1 - \rho^2)\|\Tr\Phi'\|_{L_2}^2.
    \end{equation*}

    We now turn to bounding the term $Q_2$ in \Cref{app:eq:upper-bound-smoothed-error-1}. Applying \Cref{app:lem:bound-the-differece-TrPhi-TrPhi(z/rho)} with $\rho_1$, we obtain
    \begin{equation*}
        Q_2\leq C''(1 -\rho^2)(\|\Tre_{\rho_1}\Phi'(z)\|_{L_2}^2 + \|\Tre_{\rho_1}\Phi'(z/\rho_1)\|_{L_2}^2).
    \end{equation*}
    Applying \Cref{app:clm:||Tr rho1 Phi'||2<= eC ||Tr rho Phi'||2} and \Cref{app:lem:upper-bound-centered-smoothed-derivative-squared} again, we obtain
    \begin{equation*}
        Q_2\leq 4C''(1 -\rho^2)e^C\|\Tr\Phi'\|_{L_2}^2.
    \end{equation*}
    
    Plugging the upper bounds on $Q_1$ and $Q_2$ back into \Cref{app:eq:upper-bound-smoothed-error-2} yields $\Ez[(\Tr\Phi(z) - \Tre_{\rho_1}\Phi(z/\rho_1))^2]\lesssim e^C(1 - \rho^2)\|\Tr\Phi'(z)\|_{L_2}^2$. Finally, combining with \Cref{app:eq:upper-bound-smoothed-error-1}, we obtain
    \begin{equation*}
        \Ez[(\Tr\Phi(z) - \Phi(z))^2]\lesssim e^C (1 - \rho^2)\|\Tr\Phi'\|_{L_2}^2.
    \end{equation*}
    Since $e^C$ is an absolute constant, this completes the proof of \Cref{app:prop:smoothing-error-bound}.
\end{proof}

\subsubsection{Proof of Supplementary Claims}\label{app:subsec:proof-supplementary-claims-in-monotone}
Below, we provide proofs for the supplementary claims appeared in \Cref{app:sec:learn-monotone}. 
\HelperTrPhiMinusTrPhisquare*
\begin{proof}[Proof of \Cref{app:claim:helper-[TrPhi-TrPhi(z/r)]**2}]
Simple algebraic calculation yields
\begin{align*}
    -\frac{t_j^2}{2} + \frac{\rho^2(t_i^2 + t_j^2)}{2(1 - \rho^4)} - \frac{\rho^4 t_it_j}{1 - \rho^4} & =\frac{1}{2(1 - \rho^4)}(-t_j^2  + \rho^2 t_i^2 + \rho^2 t_j^2 + \rho^4 t_j^2 - 2\rho^4t_it_j)\\
    &=\frac{1}{2(1 - \rho^4)}(-(1 - \rho^2)t_j^2 + \rho^2 t_i^2 +  \rho^4 (t_j - t_i)^2 - \rho^4 t_i^2)\\
    &= \frac{1}{2(1 - \rho^4)}(-(1 - \rho^2)t_j^2 + \rho^2 (1 - \rho^2) t_i^2 +  \rho^4 (t_j - t_i)^2).
\end{align*}
Now since $0< t_j-t_i < t_i/\rho - t_i = (1-\rho)t_i/\rho$, we further have
\begin{align*}
    &\quad -\frac{t_j^2}{2} + \frac{\rho^2(t_i^2 + t_j^2)}{2(1 - \rho^4)} - \frac{\rho^4 t_it_j}{1 - \rho^4} \\
    &\leq \frac{1}{2(1 - \rho^4)}(-(1 - \rho^2)t_j^2 + \rho^2 (1 - \rho^2) t_i^2 +  \rho^2 (1 - \rho)^2 t_i^2)\\
    &\leq \frac{1}{2(1 - \rho^4)}(-(1 - \rho^2) + \rho^2 (1 - \rho^2)  +  \rho^2 (1 - \rho)^2 )t_j^2\\
    &=\frac{t_j^2}{2(1 + \rho)(1 + \rho^2)}(-(1 + \rho) + \rho^2 (1 + \rho)  +  \rho^2 (1 - \rho) ) = \frac{-t_j^2(1-\rho)(1+2\rho)}{2(1 + \rho)(1 + \rho^2)}<0. 
\end{align*}
Thus, indeed we have $-t_j^2/2\leq -\rho^2(t_i^2+t_j^2-2\rho^2t_it_j)/(2(1-\rho^4))$. \end{proof}

\helperRhoOneRhorationOne*
\begin{proof}[Proof of \Cref{app:claim:helper-rho1-rho-ratio}]
    Since $\rho_1,\rho<1$, we only need to show that $\rho_1^2(1- \rho^4)\geq 1 - \rho_1^4$. Plugging in the value of $\rho_1$, we have
    \begin{gather*}
        \rho_1^2(1 - \rho^4) = (\rho^2 + C(1 - \rho^2)/M^2)(1 - \rho^4) = (\rho^2 + C(1 - \rho^2)/M^2)(1 - \rho^2)(1 + \rho^2);\\
         1 - \rho_1^4 = 1 - (\rho^2 + C(1 - \rho^2)/M^2)^2 =(1 + \rho^2 + C(1 - \rho^2)/M^2)(1- \rho^2)(1 - C/M^2).
    \end{gather*}
    Therefore, our goal is to prove that
    \begin{equation*}
        (\rho^2 + C(1 - \rho^2)/M^2)(1 - \rho^2)(1 + \rho^2)\geq (1 + \rho^2 + C(1 - \rho^2)/M^2)(1- \rho^2)(1 - C/M^2).
    \end{equation*}
    Dividing both sides of the inequality above by $1- \rho^2 > 0$ yields that it is sufficient to show the following inequality:
    \begin{align*}
        \rho^2 + C(1 - \rho^2)/M^2 + (\rho^2 + C(1 - \rho^2)/M^2)\rho^2 \geq (1 - C/M^2) + (\rho^2 + C(1 - \rho^2)/M^2)(1 - C/M^2).
    \end{align*}
    Since $\rho^2\geq 1 - C/M^2$, we have $\rho^2 + C(1 - \rho^2)/M^2\geq 1 - C/M^2$ and 
    \begin{equation*}
        (\rho^2 + C(1 - \rho^2)/M^2)\rho^2 \geq (\rho^2 + C(1 - \rho^2)/M^2)(1 - C/M^2).
    \end{equation*}
    Thus, it holds that $\rho_1^2(1 - \rho^2)\geq 1 - \rho_1^4$.
\end{proof}

\helperRhoOneRhoRatio*
\begin{proof}[Proof of \Cref{app:claim:helper2-rho1-rho-ratio}]
    For any fixed $\rho\in(0,1)$, let us define
    \begin{equation*}
        h(M) = \frac{(\rho^2 + C(1 - \rho^2)/M^2)(1 - \rho^4)}{1 - (\rho^2 + C(1 - \rho^2)/M^2)^2}.
    \end{equation*}
    It is easy to see that $h(M)$ is a decreasing function with respect to $M>0$, therefore, for any fixed $\rho\in(0,1)$ and any $M^2\geq 2C$, we have
    \begin{equation*}
        h(M) = \frac{(\rho^2 + C(1 - \rho^2)/M^2)(1 - \rho^4)}{1 - (\rho^2 + C(1 - \rho^2)/M^2)^2}\leq h(\sqrt{2C}) = \frac{((1 + \rho^2)/2)^2(1- \rho^4)}{1 - ((1 + \rho^2)/2)^2}\leq 4.
    \end{equation*}
    Therefore, for any $\rho\in(0,1)$, it holds $\rho_1^4/(1 - \rho_1^4)\leq 4/(1 - \rho^2)$.
\end{proof}

\HelperTrPhiSquare*
\begin{proof}[Proof of \Cref{app:clm:||Tr rho1 Phi'||2<= eC ||Tr rho Phi'||2}]
   To prove this claim, we recall the explicit expression of $\Ez[(\Tre_{\rho_1}\Phi'(z))^2]$ using the formula displayed in \Cref{app:lem:expression-E[(Trho-Phi'(z))**2]}:
   \begin{align*}
       &\quad \Ez[(\Tre_{\rho_1}\Phi'(z))^2] = \sum_{i,j = 1}^m \frac{A_iA_j}{2\pi\sqrt{1 - \rho_1^4}}\exp\bigg(-\frac{t_i^2 + t_j^2}{2(1 - \rho_1^4)} + \frac{\rho_1^2 t_it_j}{1 - \rho_1^4}\bigg)\\
       &=\sum_{i,j = 1}^m \frac{A_iA_j}{2\pi\sqrt{1 - \rho_1^4}}\exp\bigg(-\frac{(t_i^2 + t_j^2-2\rho_1^2 t_i t_j)}{2(1 - \rho_1^4)}\bigg)\\
       &\overset{(i)}{\leq} \sum_{i,j = 1}^m \frac{A_iA_j}{2\pi\sqrt{(1 - \rho_1^2)(1 + \rho_1^2)}}\exp\bigg(-\frac{t_i^2 + t_j^2}{2(1 - \rho^4)} + \frac{\rho_1^2 t_i t_j}{1 - \rho^4}\bigg)\\
       &\overset{(ii)}{\leq} \sum_{i,j = 1}^m \frac{A_iA_j}{2\pi\sqrt{(1 - \rho^2)(1 - C/M^2)(1 + \rho_1^2)}}\exp\bigg(-\frac{t_i^2 + t_j^2}{2(1 - \rho^4)} + \frac{\rho^2 t_i t_j}{1 - \rho^4} + \frac{(\rho_1^2 - \rho^2) t_i t_j}{1 - \rho^4} \bigg)\\
       &\overset{(iii)}{\leq} 2\sum_{i,j = 1}^m \frac{A_iA_j}{2\pi\sqrt{1 - \rho^4}}\exp\bigg(-\frac{t_i^2 + t_j^2}{2(1 - \rho^4)} + \frac{\rho_1^2 t_i t_j}{1 - \rho^4}\bigg)\exp\bigg(\frac{(\rho_1^2 - \rho^2) t_i t_j}{1 - \rho^4}\bigg)\\
       &\leq  2\Ex[(\Tr\Phi'(z))^2]\exp\bigg(\frac{(\rho_1^2 - \rho^2) M^2}{1 - \rho^4}\bigg).
   \end{align*}
    Inequality $(i)$ is due to the facts that $(t_i^2 + t_j^2-2\rho_1^2 t_i t_j)\geq 0$ for any $t_i,t_j\in\R$ and that $1/(1 - \rho_1^4)\geq 1/(1 - \rho^2)$ since $\rho<\rho_1<1$; in $(ii)$ we plugged in the definition $\rho_1$; $(iii)$ comes from the assumption that $C/M^2\leq 1/4$ and the fact that $1 + \rho_1^2\geq 1+ \rho^2$ since $\rho_1>\rho$. Finally, for the term $\exp((\rho_1^2 - \rho^2) t_i t_j/(1 - \rho^4))$, bringing in the definition of $\rho_1^2$ and the fact that $|t_i|\leq M,i\in[m]$ we get
    \begin{align*}
        \exp\bigg(\frac{(\rho_1^2 - \rho^2) M^2}{1 - \rho^4}\bigg)&=\exp\bigg(\frac{(1 - \rho^2)(C/M^2) M^2}{(1 - \rho^2)(1 + \rho^2)}\bigg)\leq e^C.
    \end{align*}
    Thus, in summary, we have
    $\Ez[(\Tre_{\rho_1}\Phi'(z))^2]\leq e^C \Ez[(\Tre_{\rho}\Phi'(z))^2].
    $
    \end{proof}

\subsection{Initialization Algorithm for Monotone Activations}\label{app:subsec:initialization-for-monotone}

In this section, we provide an initialization algorithm for $\sigma$  that is an $\eps$-Extended monotone $(B,L)$-Regular activation. The algorithm generates a vector $\w^{(0)}$ satisfying $\theta(\w^{(0)},\w^*)\leq C/M$, where $C$ is an absolute constant and $M$ is at most $\sqrt{\log(B/\eps)-\log\log(B/\eps)}$. Our key idea is to convert the regression problem to the problem of robustly learning halfspaces via {\itshape data transformation}. In particular, we transform $y$ to $\tilde{y}\in\{0,1\}$ by truncating the labels $y$ to $\tilde{y} = \1\{y\geq t'\}$, where this $t'$ is a  carefully chosen threshold. Then we utilize a previous algorithm from~\cite{DKTZ22b} to robustly learn $\w^*$. 

As the main result of this subsection, we prove 
the following proposition, which suffices to 
establish \Cref{app:prop:initialization}. 

\begin{proposition}\label{app:prop:initial2}
Let $\sigma$ be a non-decreasing $(B,L)$-Regular function. Let $M$ be defined as in \Cref{app:claim:bounded-support-sigma'}. Then there exists an algorithm that draws $O(d/\eps^2\log(B/\delta))$ samples, it runs in $\poly(d,N)$ time, and, with probability at least $1-\delta$, it outputs a vector $\vec w$ such that $\theta(\vec w,\wstar)\leq \min\{\pi/6, C/M\}$, where $C>0$ is a universal constant, independent of any problem parameters.
\end{proposition}

The proof of \Cref{app:prop:initialization}
follows from \Cref{app:prop:initial2} and \Cref{app:prop:error-bound-smoothing-tails}.
\begin{proof}[Proof of \Cref{app:prop:initialization}]
    \Cref{app:prop:initial2} implies that there exists an algorithm using $O(d/\eps^2)$ samples and outputs a vector $\w^{(0)}$ such that $\theta_0 = \theta(\w^{(0)},\w^*)\leq C/M$. Now for any $\theta\leq \theta_0$, it holds $\cos\theta^2\geq 1 - \theta_0^2\geq 1 - C^2/M^2$. Thus, using \Cref{app:prop:error-bound-smoothing-tails} we know that $\|\Pm{>1/\theta^2}\sigma\|_{L_2}^2\lesssim \sin^2\theta\|\Tre_{\cos\theta}\sigma'\|_{L_2}^2$. This finishes the proof of \Cref{app:prop:initialization}.
\end{proof}

Since we are only aiming for a constant factor approximate solution, it 
is sufficient to truncate the activation $\sigma$ to $\tilde{\sigma}$ so 
that $\|\sigma - \tilde{\sigma}\|_{L_2}^2\leq C_1\opt$ for some absolute 
constant $C_1$ in \Cref{app:claim:bounded-support-sigma'}. Hence, given 
an activation $\sigma\in\mathcal{H}(B,L)$, the parameter $M$ is defined as follows. 
Fix an absolute constant $C_1\geq 1$. 
There exists a function $\tilde{\sigma}\in\mathcal{H}(B,L)$ 
satisfying $\|\tilde{\sigma} - \sigma\|_{L_2}^2\leq 2C_1\eps$ 
such that $\sigma'(z) = 0$ for all $|z|\geq M$. 
In fact, in the proof of \Cref{app:claim:bounded-support-sigma'}, 
we chose $\tilde{\sigma}(z) = \sigma(z)\1\{-M_-\leq z \leq M_+\} + 
\sigma(M_+)\1\{z\geq M_+\} + \sigma(-M_-)\1\{z\leq -M_-\}$, 
such that $\Ez[(\sigma(z)- \sigma(M_+))^2\1\{z\geq M_+\}] \leq C_1\eps$ 
and $\Ez[(\sigma(z)- \sigma(-M_-))\1\{z\leq -M_-\}] \leq C_1\eps$. 
Then the upper bound $M$ on the support of $\sigma'$ is chosen as $M = \max\{M_+,M_-\}\leq \sqrt{\log(B/\eps)-\log\log(B/\eps)}$. 
In the following, let us assume without loss of generality 
that $M = M_+ \geq M_-$, since if $M_+\leq M_-$ 
we can instead consider $-\sigma(-z)$. 

Our goal is to show that there exists $M^*\geq M$ such that 
the following holds: 
$$\pr[\tilde{y}\neq \1\{\w^*\cdot\x\geq M^*\}]\leq (4/\sqrt{C_1})\pr[\w^*\cdot\x\geq M^*] \;,$$ 
where $\tilde{y}=\mathcal{T}(y) = \1\{y\geq \sigma(M^*)\}$. 
Then, we will use the following fact from~\cite{DKTZ22b}, which states:  
\begin{fact}[\cite{DKTZ22b}, Corollary of Lemma C.3 and Theorem C.1] \label{app:fct:agnostic-general-halfspaces} 
There is an algorithm that for any halfspace 
$\phi(\w^*\cdot\x;t)$ and sample access to a 
distribution $(\x,\ty)\sim\D$ 
of labeled examples with standard Gaussian $\x$ 
and $\opt'$-adversarial noise---meaning 
that $\pr[\phi(\w^*\cdot\x;t)\neq \ty]\leq \opt'$---it 
draws $O(d/\eps^2\log(1/\delta))$ samples from 
$\D$, it runs in polynomial in time, and 
with probability at least $1 - \delta$ 
has the following performance guarantee: 
if $\exp(-t^2/2)/t\geq C_2\opt'$, where $C_2>1$ is a 
large universal constant, the algorithm 
returns $\vec w$ such that 
$\theta(\vec w,\wstar)\exp(-t^2/2)\leq C_3\opt'$ 
and $\theta(\vec w,\wstar)\leq \pi/6$, 
where $C_3$ is a universal constant.
\end{fact}

With the error $\opt' \leq \pr[z\geq M^*]$ and $t = M^*$ in \Cref{app:fct:agnostic-general-halfspaces}, we obtain a vector $\w$ that satisfies $\theta(\w,\w^*)\leq C_3\exp((M^*)^2/2)\pr[z\geq M^*]\leq C_3/M^*\leq C_3/M$, where we used the fact that $\pr[z\geq M^*]\approx \exp(-(M^*)^2/2)/M^*$. This will complete our initialization argument.

To proceed, we prove the following key lemma:
\begin{lemma}\label{app:lem:initialziation-error-small-than-bias}
    Fix $C>1$. Let $f$ be a monotone function such that $f\geq 0$ and $\|f-y\|_{L_2}^2\leq \eps$. Assume that for all $q > 0$ it holds that  $\E[|\1\{f(z)\geq q\}-\1\{y\geq q\}|]\geq \pr[f(z)\geq q]/C$. Then, it holds that
    $\|f\|_{L_2}^2\leq 5C^2\eps$.
\end{lemma}
\begin{proof}
    Let $T(q)=\pr[f(z)\geq q]$ and  $\Delta(q) = \E[|\1\{f(z) \geq q\} - \1\{y \geq q\}|]$. From the assumption we have that $T(q)\leq C \Delta(q)$. Therefore, we have that

\begin{align*}
\E[f^2] &= \int_0^\infty 2q T(q) dq \leq \int_0^\infty 2q (C \Delta(q)) dq \\
&= 2C \int_0^\infty q (\pr[f \geq q, y < q] + \pr[f < q, y \geq q]) dq \\
&= 2C \left( \E\left[ \frac{f^2 - y^2}{2} \1\{f > y\} \right] + \E\left[ \frac{y^2 - f^2}{2} \1\{y > f\} \right] \right) \\
&= 2C \E\left[ \frac{|f^2 - y^2|}{2} \right] = C \E[|f - y| |f + y|] \\
&\le C \sqrt{\E[(f - y)^2] \E[(f + y)^2]} \le C \sqrt{\eps \E[(f + y)^2]} . 
\end{align*}

Note that $\E[(f + y)^2]\leq 4\E[f^2]+4\eps$. 
Therefore, we have that 
\[
\E[f^2]\leq  C \sqrt{4\eps (\E[f^2]+\eps)}\;.
\]
Letting $\tau=\E[f^2]$, the above becomes
\[
\tau^2\leq 4C^2\eps \tau + 4C^2\eps^2\;.
\]
Maximizing over $\tau$, we have that 
$\tau\leq 5 C^2 \epsilon$ provided that $C>1$. 
Therefore, $\E[f^2]\leq 5C^2\eps$.
\end{proof}

We can now prove \Cref{app:prop:initial2}.
\begin{proof}[Proof of \Cref{app:prop:initial2}]
Let $\sigma\in\mathcal{H}(B,L)$.
Throughout the proof, we make the following assumptions that are without loss of generality:
\begin{enumerate}
    \item There exists $\bar{M}<\infty$ such that $\sigma(z) = \sigma(\bar{M})$ when $z\geq \bar{M}$ and $\sigma(z) = \sigma(-\bar{M})$ when $z\leq -\bar{M}$. This is without loss of generality, as follows from \Cref{app:claim:bounded-support-sigma'}.
    \item $B = \max\{|\sigma(\bar{M})|,|\sigma(-\bar{M})|\} = \sigma(\bar{M})$, since we can always shift $\sigma(z)$ to $\sigma(z) + |\sigma(-\bar{M})|$ without affecting any of the results. 
    \item By \Cref{app:claim:bound-y}, it holds $|y|\leq B = \sigma(\bar{M})$ without loss of generality.
    \item $\Exy[(y - \sigma(\w^*\cdot\x))^2]\leq \eps$, and $h(z) = \sigma(0)$ is not an approximate solution, i.e., for any absolute constant $C$ we have $\Exy[(y-h(\w^*\cdot\x))^2]\geq C\eps$.
     \item It holds that $\Exy[(\sigma(z) - \sigma(0))^2\1\{z\geq 0\}]\geq \Exy[(\sigma(z) - \sigma(0))^2\1\{z\leq 0\}]$, because if this does not hold, we can use $\tilde{\sigma}(z) = -\sigma(-z)$.
    \item There exists $M\in[0,\bar{M}]$ such that $\Ez[(\sigma(z) - \sigma(M))^2\1\{z\geq M\}] \geq C_1\eps$, where $C_1>1$ is a large absolute constant. In the rest of the proof, 
    we will denote by $M$ the smallest value in $[0,\bar{M}]$ such that 
    $\Ez[(\sigma(z) - \sigma(M))^2\1\{z\geq M\}] \geq C_1\eps$.
   Such an $M$ exists. To see this, we first observed that $\Ez[(\sigma(z) - \sigma(0))^2\1\{z\leq 0\}]\geq C_1\eps$, because otherwise, according to assumption 5 above, we will have $\Ez[(\sigma(z) - \sigma(0))^2]\leq 2C_1\eps$, indicating that $\Ez[(y - h(z))^2]\leq 2\Ez[(y-\sigma(z))^2] + 2\Ez[(\sigma(z)-h(z))^2]\leq (2C_1 +2)\eps$. This implies $h(z) = \sigma(0)$ is a constant factor solution, contradicting to assumption 4.
    Let $g(t)\eqdef \Ez[(\sigma(z) - \sigma(t))^2\1\{z\geq t\}]$, which is a decreasing function from $t=0$ to $t=+\infty$. Since $g(0)\geq C_1\eps$ and $g(\bar{M}) = 0$, we know there must exists a minimum real number $M$ such that $g(M)\geq C_1\eps$.
\end{enumerate}

Given the assumptions above, we claim that there must exist $q'\in[0,B - \sigma(M)]$ such that
\begin{equation}\label{eq:small-error}
    \pr[\1\{y\geq \sigma(M) + q'\}\neq \1\{\sigma(z)\geq \sigma(M) + q'\}]\leq 4/\sqrt{C_1}\pr[\sigma(z)\geq \sigma(M) + q'].
\end{equation}
Suppose for the sake of contradiction that for any $q\in[0,B - \sigma(M)]$ it holds $\pr[\1\{y\geq \sigma(M) + q\}\neq \1\{\sigma(z)\geq \sigma(M) + q\}]\geq 4/\sqrt{C_1}\pr[\sigma(z)\geq \sigma(M) + q]$. 
Note that for $q\geq B-\sigma(M)$, since $\sigma(z)\leq B$ and $y\leq B$, we have $\1\{y\geq \sigma(M) + q\} = \1\{\sigma(z)\geq \sigma(M) + q\} = 0$. Thus, we have $\pr[\1\{y\geq \sigma(M) + q\}\neq \1\{\sigma(z)\geq \sigma(M) + q\}]\geq 4/\sqrt{C_1}\pr[\sigma(z)\geq \sigma(M) + q]$ for all $q\geq 0$ under the assumption. 

Now let $f(z)= (\sigma(z) - \sigma(M))\1\{z\geq M\}$, $y' = (y-\sigma(M))\1\{y\geq \sigma(M)\}$. Then, for any $q\geq 0$,
\begin{align*}
    4/\sqrt{C_1}\pr[\sigma(z)\geq \sigma(M) + q]&=4/\sqrt{C_1}\pr[f(z)\geq q]\\
    &\leq \pr[\1\{y\geq \sigma(M) + q\}\neq \1\{\sigma(z)\geq \sigma(M) + q\}]\\
    &=\pr[\1\{y'\geq q\}\neq\1\{f(z)\geq q\}].
\end{align*}
Furthermore, we have
\begin{align*}
    \Ez[(f(z) - y')^2] &= \Ez[((\sigma(z)-\sigma(M))\1\{z\geq M\} - (y-\sigma(M))\1\{y\geq \sigma(M)\})^2]\\
    &\leq 2\Ez[(\sigma(z) - y)^2\1\{z\geq M\}] + 2\Ez[(y - \sigma(M))^2(\1\{y\geq \sigma(M)\} - \1\{z\geq M\})^2]\\
    &\leq 2\eps + 2\Ez[(y - \sigma(M))^2(\1\{y\geq \sigma(M)\} - \1\{z\geq M\})^2].
\end{align*}
Note that it holds $0\leq (y - \sigma(M))\1\{y\geq \sigma(M), z<M\}\leq (y-\sigma(z))\1\{y\geq \sigma(M), z<M\}$ and $0\leq (\sigma(M) - y)\1\{y<\sigma(M),z\geq M\}\leq (\sigma(z) - y)\1\{y<\sigma(M),z\geq M\}$.
Therefore,
\begin{align*}
    &\quad \Ez[(y - \sigma(M))^2(\1\{y\geq \sigma(M)\} - \1\{z\geq M\})^2] \\
    &= \Ez[(y - \sigma(z))^2\1\{y\geq \sigma(M), z<M\}] + \Ez[(y - \sigma(z))^2\1\{y< \sigma(M), z\geq M\}]\leq 2\eps.
\end{align*}
Combining with the upper bound on $\Ez[(f(z)-y')^2]$ yields $\Ez[(f(z)-y')^2]\leq 6\eps$.
Hence the conditions of \Cref{app:lem:initialziation-error-small-than-bias} are satisfied, and applying \Cref{app:lem:initialziation-error-small-than-bias} we obtain $\Ez[f^2] = \Ez[(\sigma(z) - \sigma(M))^2\1\{z\geq M\}]\leq 2(C_1/16)(6\eps)\leq (3/4)C_1\eps$. However, recall that $M$ is chosen such that $\Ez[(\sigma(z) - \sigma(M))^2\1\{z\geq M\}] \geq C_1\eps$, therefore we have reached a contradiction.

Now let $M^* = \argmin\{0\leq M'\leq \bar{M}:\sigma(M') = \sigma(M) + q'\}$, $q'$ satisfying \Cref{eq:small-error}, we have $M^*\in [M,\bar{M}]$. Note that this $q'$ can be found via a grid search on the interval $[0, B-\sigma(M)]$, as we can discretize the label $y$ and activation $\sigma$ using a $\sqrt{\eps}$ grid, therefore, there will only be $\poly(1/\sqrt{\eps},B)$ number of possible choices of $q'$. The procedure is standard and we omit it here. The argument above implies that it hold $\opt' = \pr[\1\{y\geq \sigma(M) + q'\}\neq \1\{\sigma(z)\geq \sigma(M) + q'\}]\leq 4/\sqrt{C_1}\pr[z\geq M^*] = (1/C_2)\exp(-(M^*)^2/2)/M^*$ for $C_2$ being a large absolute constant. Hence the conditions of \Cref{app:fct:agnostic-general-halfspaces} are satisfied, which then implies that there exists an algorithm that given labels $\tilde{y} = \1\{y\geq \sigma(M^*)\}$ and a target halfspace $\phi(\w^*\cdot\x;M^*)$, returns a vector $\w$ such that $\theta(\w,\w^*)\leq \min\{\pi/6, C_3\exp((M^*)^2/2)\opt'\} = \min\{\pi/6, C_3/M^*\}\leq \min\{\pi/6, C_3/M\}$. This completes the proof of \Cref{app:prop:initial2}.
\end{proof}

\end{document}